\DeclareSymbolFont{fouriersymbols}{FMS}{futm}{m}{n} 
\DeclareSymbolFont{fourierlargesymbols}{FMX}{futm}{m}{n}
\DeclareMathDelimiter{\VERT}{\mathord}{fouriersymbols}{152}{fourierlargesymbols}{147}
\definecolor{darkred}{RGB}{100,0,0}
\definecolor{darkgreen}{RGB}{0,100,0}
\definecolor{darkblue}{RGB}{0,0,150}
\definecolor{red}{RGB}{255,0,0}
\newtheorem{theorem}{Theorem}[section]
\newtheorem{lemma}[theorem]{Lemma}
\newtheorem{claim}[theorem]{Claim}
\newcommand{\sgn}{\textrm{sgn}}
\newcommand{\vct}[1]{\bm{#1}}
\newcommand{\mtx}[1]{\bm{#1}}
\newcommand{\colspan}{\operatorname{colspan}}
\newcommand{\rank}{\operatorname{rank}}
\numberwithin{equation}{section}
\numberwithin{equation}{section}
\DeclareMathOperator*{\argmin}{\arg\!\min}
\newcommand*\circled[1]{\tikz[baseline=(char.base)]{\node[shape=circle,draw,inner sep=2pt] (char) {#1};}}
\begin{document}
\title{Nonconvex Rectangular Matrix Completion via Gradient Descent without $\ell_{2,\infty}$ Regularization}
\author{
Ji Chen\thanks{Department of Mathematics, UC Davis, Davis, CA, 95616, USA; Email: ljichen@ucdavis.edu} \and Dekai Liu\thanks{Department of Mathematics, Zhejiang University, Hangzhou, Zhejiang, 310027, China; Email: dekailiu@zju.edu.cn}  \and Xiaodong Li\thanks{Department of Statistics, UC Davis, Davis, CA, 95616, USA; Email: xdgli@ucdavis.edu} 
}
\date{}     
\maketitle  

\begin{abstract}
	The analysis of nonconvex matrix completion has recently attracted much attention in the community of machine learning thanks to its computational convenience. Existing analysis on this problem, however, usually relies on $\ell_{2,\infty}$ projection or regularization that involves unknown model parameters, although they are observed to be unnecessary in numerical simulations, see, e.g., \cite{zheng2016convergence}. In this paper, we extend the analysis of the vanilla gradient descent for positive semidefinite matrix completion proposed in \cite{ma2017implicit} to the rectangular case, and more significantly, improve the required sampling rate from $O(\operatorname{poly}(\kappa)\mu^3 r^3 \log^3 n/n )$ to $O(\mu^2 r^2 \kappa^{14} \log n/n )$. Our technical ideas and contributions are potentially useful in improving the leave-one-out analysis in other related problems.
\end{abstract}

\section{Introduction}
% The very first letter is a 2 line initial drop letter followed
% by the rest of the first word in caps.
% 
% form to use if the first word consists of a single letter:
% \IEEEPARstart{A}{demo} file is ....
% 
% form to use if you need the single drop letter followed by
% normal text (unknown if ever used by the IEEE):
% \IEEEPARstart{A}{}demo file is ....
% 
% Some journals put the first two words in caps:
% \IEEEPARstart{T}{his demo} file is ....
% 
% Here we have the typical use of a "T" for an initial drop letter
% and "HIS" in caps to complete the first word.
Matrix completion techniques have found applications in a variety of modern machine learning problems thanks to the common incompleteness in big datasets. Examples include collaborative filtering, which predicts unobserved user-item scores based on a highly incomplete matrix of user-item ratings, and pairwise ranking, in which a key step is to complete the matrix of item-item aggregated comparison scores \citep{Gleich2011rank}. Sometimes a high-dimensional matrix may be intentionally generated as a highly incomplete one due to memory and computational issues. Examples include fast kernel matrix approximation via matrix completion \citep{Graepel2002kernel, Paisley2010kernel} and memory-efficient kernel PCA only with partial entries \citep{chen2017memory}.

The problem can be simply put as follows: Given an $n_1 \times n_2$ data matrix $\mtx{M}$ that is known to be of low rank, suppose we only observe a small portion of its entries on the index set $\Omega \subset [n_1] \times [n_2]$, can we recover $\mtx{M}$ accurately or even exactly from the available entries $M_{i,j}$ for $(i, j) \in \Omega$? Which algorithms are able to achieve the accurate recovery? Under what conditions on the low-rank matrix $\mtx{M}$ and the sampling index set $\Omega$ is the exact recovery guaranteed in theory?

Theoretical analysis of convex optimization methods for matrix completion has been well-investigated. For example, it was shown in \cite{candes2009exact} that linearly constrained nuclear norm minimization is guaranteed to complete low-rank matrices exactly as long as the sample complexity is large enough in comparison with the rank, dimensions and incoherence parameter of $\mtx{M}$. Their result in the required sampling complexity was later improved in the literature, e.g. \cite{candes2010power, gross2011recovering, recht2011simpler}.

In spite of the theoretical advantages of convex optimization, nonconvex optimization methods \citep{rennie2005fast} based on low-rank factorization can reduce memory and computation costs and avoid iterative singular value decompositions, thereby much more scalable to large datasets than convex optimization. The successes of nonconvex optimization in matrix completion suggest that inconsistent local minima can be bypassed or even just do not exist, but it was unclear under what conditions on the sampling complexity and the low-rank matrix the global minimum is attainable by a vanilla gradient descent method with theoretical guarantees. In \cite{keshavan2010matrix, keshavan2010matrixnoise}, a nonconvex optimization has been proposed, in which the constraint is the Cartesian product of two Grassmann manifolds. Under certain requirements on the sampling complexity in comparison with the rank, incoherence and condition number of the matrix to complete, a method of alternating gradient descent with initialization is proven to converge to the global minimum and recover the low rank matrix accurately. Alternating minimization via the low-rank factorization $\mtx{M}\approx \mtx{X}\mtx{Y}^\top$ was analyzed in \cite{jain2013low} provided independent samples are used to update $\mtx{X}$ and $\mtx{Y}$ in each step of the iteration. Their theoretical results were later improved and extended in \cite{Hardt14, pmlr-v35-hardt14a, Zhao2015nonconvex}. 
 
Matrix completion algorithms with brand new samples in each iteration may be impractical given the observed entries are usually highly limited. Instead, gradient descent for an $\ell_{2,\infty}$-norm regularized nonconvex optimization was shown in \cite{sun2016guaranteed} to converge to the global minimum and thereby recover the low-rank matrix, provided there hold some assumptions on the sampling complexity and the low-rank matrix. The $\ell_{2,\infty}$-norm regularization or projection has become a standard assumption for nonconvex matrix completion ever since, given they can explicitly control the $\ell_{2, \infty}$ norms of $\mtx{X}$ and $\mtx{Y}$, which is crucial in the theoretical analysis. However it has also been observed that $\ell_{2,\infty}$-norm regularization is numerically inactive in general, and vanilla methods without such regularization has almost the same effects.

Let's consider \cite{zheng2016convergence} as an example. By assuming that $\rank(\mtx{M})=r$ is known and that $\Omega$ satisfies an i.i.d. Bernoulli model with parameter $p$ (i.e., all entries are independently sampled with probability $p$), the nonconvex optimization 
\begin{equation}
\label{eq:mc_noncvx}
\begin{split}
 \min_{\mtx{X} \in \mathbb{R}^{n_1 \times r}, \mtx{Y} \in \mathbb{R}^{n_2 \times r}}	f(\boldsymbol{X},\boldsymbol{Y})  \coloneqq & \frac{1}{2p} \left\| \mathcal{P}_{\Omega}\left( \boldsymbol{X}\boldsymbol{Y}^\top - \boldsymbol{M} \right) \right\|_F^2 + \frac{1}{8}\left\| \boldsymbol{X}^\top\boldsymbol{X}-\boldsymbol{Y}^\top\boldsymbol{Y} \right\|_F^2
\end{split}
\end{equation}
was proposed there to recover $\mtx{M}$ through $\widehat{\mtx{X}}\widehat{\mtx{Y}}^\top$. Here $\mathcal{P}_{\Omega}:\mathbb{R}^{n_1\times n_2}\rightarrow \mathbb{R}^{n_1\times n_2}$ is a projector such that 
\begin{equation}
\label{eq:POmega}
(\mathcal{P}_{\Omega}(\boldsymbol{M}))_{i,j} = 
\begin{cases} M_{i,j} \quad \text{~if~} (i,j) \in \Omega \\   0 \quad \quad \quad \text{~otherwise}.\end{cases}
\end{equation}
The sampling rate $p$ is usually unknown but is almost identical to its empirical version $|\Omega|/(n_1n_2)$. In order to show that \prettyref{eq:mc_noncvx} is able to recover $\mtx{M}$ exactly, a projected gradient descent algorithm was proposed in \cite{zheng2016convergence} where the projection depending on unknown parameters is intended to control the $\ell_{2, \infty}$ norms of the updates of $\mtx{X}$ and $\mtx{Y}$. It was shown that with spectral initialization, projected gradient decent is guaranteed to converge to the global minimum and recover $\mtx{M}$ exactly, provided the sampling rate satisfies $p \geqslant C_0 \mu r^2 \kappa^2 \max(\mu, \log (n_1 \vee n_2))/(n_1 \wedge n_2)$. Here $\mu$ is the incoherence parameter introduced in \cite{candes2009exact}, $\kappa$ is the condition number of the rank-$r$ matrix $\mtx{M}$, i.e., the ratio between the largest and smallest nonzero singular values of $\mtx{M}$, and $C_0$ is an absolute constant. On the other hand, it has also been pointed out in \cite{zheng2016convergence} that the vanilla gradient descent without $\ell_{2, \infty}$-norm projection is observed to recover $\mtx{M}$ exactly in simulations.

Similar $\ell_{2, \infty}$-norm regularizations have also been used in other related works, see, e.g., \cite{chen2015fast, yi2016fast, pmlr-v54-wang17b}, and a crucial question is how to control the $\ell_{2, \infty}$-norms of the updates of $\mtx{X}$ and $\mtx{Y}$ without explicit regularization that involves extra tuning parameters. This issue has been initiatively addressed in \cite{ma2017implicit}, in which the matrix to complete is assumed to be symmetric and positive semidefinite, and the nonconvex optimization \prettyref{eq:mc_noncvx} is thereby reduced to 
\begin{equation}
\label{eq:mc_noncvx_psd}
\min_{\mtx{X} \in \mathbb{R}^{n \times r}} \frac{1}{2p} \left\| \mathcal{P}_{\Omega}\left( \boldsymbol{X}\boldsymbol{X}^\top - \boldsymbol{M} \right) \right\|_F^2.
\end{equation}
The work is focused on analyzing the convergence of vanilla gradient descent for \eqref{eq:mc_noncvx_psd}. In particular, the leave-one-out technique well known in the regression analysis \citep{El_Karoui2013leaveoneout} is employed in order to control the $\ell_{2, \infty}$-norms of the updates of $\mtx{X}$ in each step of iteration without explicit regularization or projection. \cite{ma2017implicit} shows that vanilla gradient descent is guaranteed to recover $\mtx{M}$, provided the sampling rate satisfies $p\geqslant C\operatorname{poly}(\kappa) \mu^3 r^3\log^3 n/n$, which is somehow inferior to that in \cite{zheng2016convergence}. This naturally raises several questions: Can we improve the required sampling rate from $O(\operatorname{poly}(\mu,\kappa,\log n)r^3/n)$ to $O(\operatorname{poly}(\mu,\kappa,\log n)r^2/ n)$ for vanilla gradient descent without $\ell_{2, \infty}$-norm regularization? Or is explicit $\ell_{2, \infty}$-norm regularization/projection avoidable for achieving the $O(\operatorname{poly}(\mu,\kappa,\log n)r^2 /n)$ sampling rate? Also, can we extend the nonconvex analysis in \cite{ma2017implicit} to the rectangular case discussed in \cite{zheng2016convergence}? This work is intended to answer these questions.

\subsection{Our contributions}
\label{sec:contributions}
As aforementioned, this paper aims to establish the assumptions on the sampling complexity and the low-rank matrix $\mtx{M}$, under which $\mtx{M}$ can be recovered by the nonconvex optimization \prettyref{eq:mc_noncvx} via vanilla gradient descent. Roughly speaking, our main result says that as long as $ p \geqslant C_S {\mu^2 r^2 \kappa^{14}\log (n_1\vee n_2)}/({n_1\wedge n_2})$ with some absolute constant $C_S$, vanilla gradient descent for \prettyref{eq:mc_noncvx} with spectral initialization is guaranteed to recover $\mtx{M}$ accurately. Compared to \cite{ma2017implicit} we have made several technical contributions including the following:

\begin{itemize}
\item By assuming the incoherence parameter $\mu = O(1)$ and the condition number $\kappa = O(1)$, regardless of the logarithms, the sampling rate $\widetilde{O}( r^3/n)$ in \cite{ma2017implicit} is improved to $\widetilde{O}(r^2/(n_1\wedge n_2))$, which is consistent with the result in \cite{zheng2016convergence} where $\ell_{2,\infty}$-norm projected gradient descent is employed;
\item The leave-one-out analysis for positive semidefinite matrix completion in \cite{ma2017implicit} is extended to the rectangular case in our paper; 
\item In the case $\mu=O(1)$, $\kappa = O(1)$ and $r=O(1)$, the sampling rate $O(\log^3 n/n)$ in \cite{ma2017implicit} is improved to $O(\log (n_1 \vee n_2)/(n_1\wedge n_2))$ in our work, which is consistent with the result in \cite{zheng2016convergence} where $\ell_{2,\infty}$-norm projected gradient descent is used.
\end{itemize}

To achieve these theoretical improvements and extensions, we need to make a series of modifications for the proof framework in \cite{ma2017implicit}. The following technical novelties are worth highlighting, and the details are deferred to the remaining sections in this paper:

\begin{itemize}
\item In order to reduce the sampling rate $\widetilde{O}(r^3/n)$ in \cite{ma2017implicit} to $\widetilde{O}(r^2/n)$ (assuming $\mu=O(1)$, $\kappa = O(1)$), a series of technical novelties are required. First, in the analysis of the spectral initialization for the gradient descent sequences and those for the leave-one-out sequences, $\|\frac{1}{p}\mathcal{P}_{\Omega}(\boldsymbol{M}) - \boldsymbol{M}\|$ is bounded in \cite{ma2017implicit} basically based on Lemma 39 therein. Instead, we give tighter bounds by applying \citet[Lemma 2]{chen2015incoherence} (Lemma \ref{chen_lemma2} in this paper), and the difference is a factor of $\sqrt{r}$. 
Second, two pillar lemmas, Lemma 37 in \cite{ma2017implicit} (restated as Lemma \ref{ma_lemma37} in our paper) and a result in \cite{mathias1993perturbation} (restated as Lemma \ref{ma_lemma36} in our paper), are repeatedly used in the leave-one-out analysis of \cite{ma2017implicit}. We find that applying a concentration result introduced in \cite{bhojanapalli2014universal} and \cite{li2016recovery} (restated as Lemma \ref{lemma_spectral_gap} in our paper) to verify the conditions in these lemmas could lead to sharper error bounds for the leave-one-out sequences. 
Third, also in the leave-one-out analysis, we need to modify the application of matrix Bernstein inequality in \cite{ma2017implicit} in order to achieve sharper error bounds.

\item In order to improve the orders of logarithms, we must improve the Hessian analysis in \cite{ma2017implicit}, i.e., Lemma 7 therein, and it turns out that Lemma 4.4 from \cite{chen2017memory} (restated as Lemma \ref{chen_lemma4.4} in this paper) and Lemma 9 from \cite{zheng2016convergence} (Lemma \ref{zheng_lemma9} in this paper) are effective to achieve this goal. These two lemmas are also effective in simplifying the proof in the Hessian analysis.
\end{itemize}
 
\subsection{Other related work}
We have already introduced a series of related works in the previous sections, and this section is intended to introduce other related works on nonconvex matrix completion, particularly on the theoretical side.

Besides algorithmic analysis for nonconvex matrix completion, \cite{ge2016matrix} and following works \cite{ge2017no, chen2017memory} have been dedicated to the geometric analysis: deriving the sampling rate conditions under which certain regularized nonconvex objective functions have no spurious local minima. That is, any local minimum is the global minimum, and thereby recovers the underlying low-rank matrix. 

It is also noteworthy that besides matrix completion, algorithmic and geometric nonconvex analyses have also been conducted for other low-rank recovery problems, such as phase retrieval \citep{candes2015phase, sun2018geometric, cai2016, chen2018gradient},  matrix sensing \citep{NIPS2015_5830, Tu2016procrustes, li2018algorithmic}, blind deconvolution \citep{LI2018}, etc.

Leave-one-out analysis has been employed in \cite{El_Karoui2013leaveoneout} to establish the asymptotic sampling distribution for robust estimators in high/moderate dimensional regression. This technique has also been utilized in \cite{abbe2017entrywise} to control $\ell_\infty$ estimation errors for eigenvectors in stochastic spectral problems, with applications in exact spectral clustering in community detection without cleaning or regularization. 
As aforementioned, in \cite{ma2017implicit}, the authors have employed the leave-one-out technique to control $\ell_{2, \infty}$ estimation errors for the updates of low-rank factors in each step of gradient descent that solves \prettyref{eq:mc_noncvx_psd}. Besides matrix completion, they also show that similar techniques can be utilized to show the convergence of vanilla gradient descent in other low-rank recovery problems such as phase retrieval and blind deconvolution. Leave-one-out analysis has also been successfully employed in the study of Singular Value Projection (SVP) for matrix completion \citep{ding2018leave} and gradient descent with random initialization for phase retrieval \citep{chen2018gradient}.

Implicit regularization for gradient descent has also been studied in matrix sensing with over-parameterization. When the sampling matrices satisfying certain commutative assumptions, it has been shown in \cite{gunasekar2017implicit} that gradient descent algorithm with near-origin starting point is guaranteed to recover the underlying low-rank matrix even under over-parameterized factorization. The result was later extended to the case in which the sensing operators satisfy certain RIP properties \citep{li2018algorithmic}.

\subsection{Notations}

Throughout the paper, matrices and vectors are denoted as bold uppercase and lowercase letters,  and all the vectors without the symbol of transpose are column vectors. Fixed absolute constants are defined as $C_0,C_1,C_2,\cdots,C_I,\cdots$ (their values are fixed and thereby not allowed to be changed from line to line). For two real numbers $x$ and $y$, we denote $x\wedge y \coloneqq \min\{x,y\}$ and $x\vee y \coloneqq \max\{x,y\}$. We denote the matrix with all $1$'s as $\boldsymbol{J}$, whose dimensions depend on the context. Some other notations used throughout the paper are listed in Table \ref{tab:notations} with matrices $\boldsymbol{A}$ and $\boldsymbol{B}$.
\begin{table}[ht]
\caption{Notations Used Throughout the Paper}
	\center
	\begin{tabular}{ll} 
	$\sigma_{i}(\boldsymbol{A})$ &    the $i$-th largest singular value of $\boldsymbol{A}$ \\
	$\lambda_i(\boldsymbol{A})$ & the $i$-th largest eigenvalue of the symmetric matrix $\boldsymbol{A}$\\
	$A_{i,j}$ &  the $(i,j)$-th entry of $\boldsymbol{A}$  \\
	$\boldsymbol{A}_{i,\cdot}$ & the $i$-th row vector of $\boldsymbol{A}$, taken as a column vector   \\
	$\boldsymbol{A}_{\cdot,j}$ &  the $j$-th column vector of $\boldsymbol{A}$, taken as a column vector  \\
	%$\boldsymbol{A}^\top$ & the transpose of $\boldsymbol{A}$\\
	$\|\boldsymbol{A}\|$ & the spectral norm of $\boldsymbol{A}$\\
	$\|\boldsymbol{A}\|_F$ & the Frobenius norm of $\boldsymbol{A}$\\
	$\|\boldsymbol{A}\|_{2,\infty}$ &  the $\ell_{2,\infty}$ norm of $\boldsymbol{A}$, i.e., $\|\boldsymbol{A}\|_{2,\infty}\coloneqq\max_{i}\|\boldsymbol{A}_{i,\cdot}\|_2$\\
%	$\|\boldsymbol{A}\|_{\infty,2}$ & the $\ell_{\infty,2}$ norm of $\boldsymbol{A}$, i.e., $\|\boldsymbol{A}\|_{\infty,2}\coloneqq\max\{\max_{i}\|\boldsymbol{A}_{i,\cdot}\|_2,\max_{j}\|\boldsymbol{A}_{\cdot,j}\|_2\}$\\
	$\|\boldsymbol{A}\|_{\ell_{\infty}}$ & the largest absolute value of entries of $\boldsymbol{A}$, i.e., $\|\boldsymbol{A}\|_{\ell_{\infty}}\coloneqq \max_{i,j}|A_{i,j}|$\\
	$\langle \boldsymbol{A}, \boldsymbol{B} \rangle$ & the inner product of matrices $\boldsymbol{A}$ and $\boldsymbol{B}$ with the same dimensions,\\
	& i.e., $\langle \boldsymbol{A}, \boldsymbol{B} \rangle \coloneqq \sum_{i,j} A_{i,j}B_{i,j}$\\
	$\operatorname{sgn}(\boldsymbol{A})$ & the sign matrix of $\boldsymbol{A}$, i.e., if $\boldsymbol{A}$ has an singular value decomposition $\boldsymbol{U}\boldsymbol{\Lambda}\boldsymbol{V}^\top$,\\
	& then $\operatorname{sgn}(\boldsymbol{A})\coloneqq \boldsymbol{U}\boldsymbol{V}^\top$\\
	$\nabla f(\boldsymbol{A},\boldsymbol{B})$ & the gradient of $f(\boldsymbol{A},\boldsymbol{B})$\\
	$\nabla^2 f(\boldsymbol{A},\boldsymbol{B})$ & the Hessian of $f(\boldsymbol{A}, \mtx{B})$\\
	\end{tabular}
	\label{tab:notations}
\end{table}

% \section{Algorithm and the Leave-one-out Sequences}
\section{Algorithm and Main Results}

Recall that our setup for the nonconvex optimization \eqref{eq:mc_noncvx} is the same as that in \cite{zheng2016convergence}: the matrix $\mtx{M}$ is of rank-$r$; the sampling scheme $\Omega$ satisfies the i.i.d. Bernoulli model with parameter $p$, i.e., each entry is observed independently with probability $p$; the operator $\mathcal{P}_{\Omega}$ is defined as in \eqref{eq:POmega}. In Section \ref{sub_sec:vanilla_gd}, we give the formula of the gradient descent. And in Section \ref{sub_sec:main}, we present the main result.

% {\color{red}Prior to explaining why gradient descent with spectral initialization converges to the global minimum of \eqref{eq:mc_noncvx}, in Section \ref{sub_sec:vanilla_gd}, we give the formula for gradient descent. Moreover, in Section \ref{sub_sec:leave_one_out_gd}, we also derive formula for the associated leave-one-out sequences. The leave-one-out sequences introduced in Section \ref{sub_sec:leave_one_out_gd} are only used for analysis.}

\subsection{Gradient descent and spectral initialization} \label{sub_sec:vanilla_gd}
We consider the initialization through a simple singular value decomposition: Let 
\begin{equation}
\label{eq:M0}
\mtx{M}^0  := \frac{1}{p}\mathcal{P}_{\Omega}(\boldsymbol{M}) \approx  \widetilde{\boldsymbol{X}}^0 \boldsymbol{\Sigma}^0 (\widetilde{\boldsymbol{Y}}^0)^\top
\end{equation}
be the top-$r$ partial singular value decomposition of $\mtx{M}^0$. In other words, the columns of $\widetilde{\boldsymbol{X}}^0 \in \mathbb{R}^{n_1 \times r}$ consist of the leading $r$ left singular vectors of $\mtx{M}^0 $; the diagonal entries of the diagonal matrix $\boldsymbol{\Sigma}^0\in\mathbb{R}^{r\times r}$ consist of the corresponding leading $r$ singular values; and the columns of $\widetilde{\boldsymbol{Y}}^0 \in \mathbb{R}^{n_2 \times r}$ consist of the corresponding leading $r$ right singular vectors. Let
\begin{equation}
\label{eq:initialization}
\boldsymbol{X}^0 = \widetilde{\boldsymbol{X}}^0 (\boldsymbol{\Sigma}^0)^{1/2}, \quad \boldsymbol{Y}^0 = \widetilde{\boldsymbol{Y}}^0 (\boldsymbol{\Sigma}^0)^{1/2}.
\end{equation}
We choose $(\mtx{X}^0, \mtx{Y}^0)$ as the initialization for the gradient descent.

The nonconvex optimization \prettyref{eq:mc_noncvx} yields the following formula for gradients:
\begin{equation*}
\begin{split}
		 \nabla_{X}f(\mtx{X}, \mtx{Y}) =& \frac{1}{p}\mathcal{P}_{\Omega}\left( \boldsymbol{X} \boldsymbol{Y}^\top - \boldsymbol{M} \right)\boldsymbol{Y} + \frac{1}{2}\boldsymbol{X} \left(\boldsymbol{X}^\top\boldsymbol{X} - \boldsymbol{Y}^\top\boldsymbol{Y} \right),\\
		 ~\\
		\nabla_{Y}f(\mtx{X}, \mtx{Y})  =& \frac{1}{p}\left[\mathcal{P}_{\Omega}\left( \boldsymbol{X} \boldsymbol{Y}^\top - \boldsymbol{M} \right)\right]^\top \boldsymbol{X} + \frac{1}{2}\boldsymbol{Y} \left(\boldsymbol{Y}^\top\boldsymbol{Y} - \boldsymbol{X}^\top\boldsymbol{X} \right).
\end{split}
\end{equation*}
Then the gradient descent algorithm solving \prettyref{eq:mc_noncvx} with some fixed step size $\eta$ can be explicitly stated as follows:
\begin{equation}
\label{eq:grad_desc}
\begin{split}
		\boldsymbol{X}^{t+1} =& \boldsymbol{X}^t - \frac{\eta}{p}\mathcal{P}_{\Omega}\left( \boldsymbol{X}^t\left( \boldsymbol{Y}^t \right)^\top - \boldsymbol{M} \right)\boldsymbol{Y}^t  -\frac{\eta}{2}\boldsymbol{X}^t\left( \left( \boldsymbol{X}^t \right)^\top\boldsymbol{X}^t - \left( \boldsymbol{Y}^t \right)^\top\boldsymbol{Y}^t \right),\\
		~\\
		\boldsymbol{Y}^{t+1} =& \boldsymbol{Y}^t - \frac{\eta}{p}\left[\mathcal{P}_{\Omega}\left( \boldsymbol{X}^t \left( \boldsymbol{Y}^t \right)^\top - \boldsymbol{M} \right)\right]^\top \boldsymbol{X}^t - \frac{\eta}{2}\boldsymbol{Y}^t \left( \left( \boldsymbol{Y}^t \right)^\top\boldsymbol{Y}^t - \left(\boldsymbol{X}^t\right)^\top\boldsymbol{X}^t \right).
\end{split}
\end{equation}
For any $m$, we obtain an estimate of $\mtx{M}$ after $m$ iterations as $\widehat{\mtx{M}}^m = \mtx{X}^m (\mtx{Y}^m)^\top$. We aim to study how close the estimate $\widehat{\mtx{M}}^m$ is from the ground truth $\mtx{M}$ under certain assumptions of the sampling complexity.

\subsection{Main results}\label{sub_sec:main}
In this section, we specify the conditions for $\mtx{M}$ and $\Omega$ to guarantee the convergence of the vanilla gradient descent \eqref{eq:grad_desc} with the spectral initialization \eqref{eq:initialization}. To begin with, we list some necessary assumptions and notations as follows:
First, $\rank(\mtx{M})=r$ is assumed to be known and thereby used in the nonconvex optimization \prettyref{eq:mc_noncvx}. The singular value decomposition of $\mtx{M}$ is $\boldsymbol{M} =  \widetilde{\boldsymbol{U}}  \boldsymbol{\Sigma}  \widetilde{\boldsymbol{V}}^\top  = \boldsymbol{U}\boldsymbol{V}^\top$
where 
\[
\boldsymbol{U} = \widetilde{\boldsymbol{U}} (\boldsymbol{\Sigma})^{1/2} \in \mathbb{R}^{n_1 \times r}, \quad \text{and} \quad \boldsymbol{V} = \widetilde{\boldsymbol{V}} (\boldsymbol{\Sigma})^{1/2} \in \mathbb{R}^{n_2 \times r}.
\]
 Second, denote by $\mu$ the subspace incoherence parameter of the rank-$r$ matrix $\mtx{M}$ as in \cite{candes2009exact}, i.e., 
 \[
 \mu\coloneqq\max(\mu(\colspan(\mtx{U})), \mu(\colspan(\mtx{V}))). 
 \]
 Here for any $r$-dimensional subspace $\mathcal{U}$ of $\mathbb{R}^n$, its incoherence parameter is defined as $\mu(\mathcal{U}) \coloneqq \frac{n}{r}\max\limits_{1\leqslant i\leqslant n}\|\mathcal{P}_{\mathcal{U}}\boldsymbol{e}_i\|_2^2$ with $\vct{e}_1, \ldots, \vct{e}_n$ being the standard orthogonal basis. Third, denote the condition number of $\mtx{M}$ as $\kappa = {\sigma_1(\boldsymbol{M})}/{\sigma_r(\boldsymbol{M})}$, where $\sigma_1(\boldsymbol{M})$ and $\sigma_r(\boldsymbol{M})$ are the first and the $r$-th singular value of $\mtx{M}$. Finally, assume that there is some absolute constant $C_0>1$ such that $1/C_0 < n_1/n_2 <C_0$. With these assumptions and notations, our main result is stated as follows:

\begin{theorem}
\label{thm:main}
Let $\Omega$ be sampled according to the i.i.d. Bernoulli model with the parameter $p$. If $p \geqslant C_S \frac{\mu^2 r^2 \kappa^{14}\log (n_1\vee n_2)}{n_1\wedge n_2}$ for some absolute constant $C_S$, then, as long as the gradient descent step size $\eta$ in \eqref{eq:grad_desc} satisfies $\eta \leqslant \frac{\sigma_r(\boldsymbol{M})}{200\sigma_1^2(\boldsymbol{M})}$, in an event $E$ with probability $\mathbb{P}[E] \geqslant 1- (n_1+n_2)^{-3}$, the gradient descent iteration \eqref{eq:grad_desc} starting from the spectral initialization \eqref{eq:initialization} converges linearly for at least the first $(n_1+n_2)^3$ steps:
\[ 
	\min_{\boldsymbol{R}\in\mathsf{O}(r)}\left\|\left[ \begin{array}{c} \boldsymbol{X}^t\\\boldsymbol{Y}^t\end{array} \right] \boldsymbol{R} - \left[\begin{array}{c} \boldsymbol{U}\\\boldsymbol{V} \end{array}\right]\right\|_F \leqslant \rho^t \sqrt{\sigma_r(\boldsymbol{M})},  
\]
$0\leqslant t\leqslant (n_1+n_2)^3$. Here $\mathsf{O}(r)$ denotes the set of $r\times r$ orthogonal matrices, and $\rho \coloneqq 1-0.05\eta\sigma_r(\boldsymbol{M})$ satisfies $0<\rho<1$. If additionally assume $\eta\geqslant \frac{\sigma_r(\boldsymbol{M})}{1000\sigma_1^2(\boldsymbol{M})}$, the above inequality implies
\[
	\min_{\boldsymbol{R}\in\mathsf{O}(r)}\left\|\left[ \begin{array}{c} \boldsymbol{X}^T \\\boldsymbol{Y}^T \end{array} \right] \boldsymbol{R} - \left[\begin{array}{c} \boldsymbol{U}\\\boldsymbol{V} \end{array}\right]\right\|_F \leqslant e^{-(n_1+n_2)^3/C_R} \sqrt{\sigma_r(\boldsymbol{M})}
\]
for $T \coloneqq (n_1+n_2)^{3}$ and an absolute constant $C_R>0$. 
\end{theorem}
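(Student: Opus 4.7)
The plan is to prove Theorem \ref{thm:main} by induction on $t$, tracking a coupled family of quantitative invariants for the main iterates $(\boldsymbol{X}^t, \boldsymbol{Y}^t)$ together with a collection of leave-one-out auxiliary sequences $(\boldsymbol{X}^{t,(m)}, \boldsymbol{Y}^{t,(m)})$, one for each row index $m \in [n_1]$ (which zeros out the $m$-th row of $\mathcal{P}_\Omega$ in the data-fitting term) and a symmetric family for each column index $l \in [n_2]$. For each $t$ I would introduce the optimal orthogonal alignment $\boldsymbol{R}^t := \argmin_{\boldsymbol{R}\in\mathsf{O}(r)} \|[\boldsymbol{X}^t;\boldsymbol{Y}^t]\boldsymbol{R} - [\boldsymbol{U};\boldsymbol{V}]\|_F$, and analogously $\boldsymbol{R}^{t,(m)}$ for the leave-one-out iterates, and carry three invariants simultaneously: (i) Frobenius contraction $\|[\boldsymbol{X}^t;\boldsymbol{Y}^t]\boldsymbol{R}^t - [\boldsymbol{U};\boldsymbol{V}]\|_F \leqslant \rho^t \sqrt{\sigma_r(\boldsymbol{M})}\cdot \alpha_1$; (ii) leave-one-out proximity $\|[\boldsymbol{X}^{t,(m)};\boldsymbol{Y}^{t,(m)}]\boldsymbol{R}^{t,(m)} - [\boldsymbol{X}^t;\boldsymbol{Y}^t]\boldsymbol{R}^t\|_F \leqslant \rho^t \alpha_2 \sqrt{\mu r/(n_1\wedge n_2)}\cdot\sqrt{\sigma_r(\boldsymbol{M})}$; and (iii) $\ell_{2,\infty}$ incoherence $\|\boldsymbol{X}^t\boldsymbol{R}^t - \boldsymbol{U}\|_{2,\infty}$ and $\|\boldsymbol{Y}^t\boldsymbol{R}^t - \boldsymbol{V}\|_{2,\infty}$ bounded by $\rho^t\alpha_3\sqrt{\mu r \kappa^c\log(n_1\vee n_2)/(n_1\wedge n_2)}\cdot\sqrt{\sigma_r(\boldsymbol{M})}$. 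The point of (ii) combined with the statistical independence between the $m$-th row of $\boldsymbol{X}^{t,(m)}$ and $\mathcal{P}_\Omega$ restricted to row $m$ is that it yields (iii) via a Bernstein-type bound, exactly as in \cite{ma2017implicit} but now in rectangular form.

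The base case $t=0$ requires analyzing the spectral initialization \eqref{eq:initialization}. Here I would invoke \citet[Lemma 2]{chen2015incoherence} (Lemma \ref{chen_lemma2}) rather than a crude operator-norm bound on $\frac{1}{p}\mathcal{P}_\Omega(\boldsymbol{M})-\boldsymbol{M}$; this saves a $\sqrt{r}$ factor and is what drives the sampling-rate improvement from $r^3$ to $r^2$. A Wedin-type perturbation argument combined with Lemma \ref{ma_lemma36} (from \cite{mathias1993perturbation}) controls the rotated factors, and the spectral-gap concentration of Lemma \ref{lemma_spectral_gap} (from \cite{bhojanapalli2014universal,li2016recovery}) ensures that the tangent-space perturbation remains within the regime where these perturbation lemmas apply. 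One runs the same initialization analysis for each leave-one-out matrix $\frac{1}{p}\mathcal{P}_{\Omega^{(m)}}(\boldsymbol{M})$ and then compares, using again Lemma \ref{ma_lemma37}, to obtain invariant (ii) at $t=0$; invariant (iii) at $t=0$ then follows by triangle inequality plus a matrix-Bernstein argument that crucially exploits the row-independence of $\boldsymbol{X}^{0,(m)}$.

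For the inductive step from $t$ to $t+1$ on invariant (i), I would establish a restricted strong convexity / restricted smoothness pair for the Hessian $\nabla^2 f$ evaluated along the trajectory. This is where the paper's proposed improvement comes in: instead of bounding every quadratic form of the Hessian by Lemma 7 of \cite{ma2017implicit}, I would plug in Lemma 4.4 of \cite{chen2017memory} (Lemma \ref{chen_lemma4.4}) together with Lemma 9 of \cite{zheng2016convergence} (Lemma \ref{zheng_lemma9}); these decouple the $\mathcal{P}_\Omega$ randomness from the low-rank-structured quadratic form, giving sharper constants that shave off the extra logarithmic factors. The balancing regularizer $\frac{1}{8}\|\boldsymbol{X}^\top\boldsymbol{X}-\boldsymbol{Y}^\top\boldsymbol{Y}\|_F^2$ ensures the Hessian is coercive in directions orthogonal to the symmetry group, so one step of gradient descent contracts the Frobenius error by the factor $\rho = 1 - 0.05\eta\sigma_r(\boldsymbol{M})$. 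The inductive step for (ii) is parallel but analyzes the difference of one gradient-descent step for the main and leave-one-out sequences; here a refined application of matrix Bernstein, replacing the one in \cite{ma2017implicit} by a tighter decomposition of the perturbation, is essential to match the target order of $\sqrt{\mu r/(n_1\wedge n_2)}$. Invariant (iii) is then re-derived from (ii), (i), and independence, as in the base case.

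The main obstacle I anticipate is the propagation of the $\ell_{2,\infty}$ invariant in the rectangular setting: unlike the symmetric PSD case, $\boldsymbol{X}^t$ and $\boldsymbol{Y}^t$ live in different ambient spaces and are tied together only through the product $\boldsymbol{X}^t(\boldsymbol{Y}^t)^\top$ and the balancing term, so one must construct and analyze two separate families of leave-one-out sequences ($n_1$ for rows, $n_2$ for columns) and track cross-interactions in the rotation matrices $\boldsymbol{R}^t$ vs. $\boldsymbol{R}^{t,(m)}$. Carefully handling the two-sided alignment — i.e., proving that $\boldsymbol{R}^{t,(m)}$ stays close to $\boldsymbol{R}^t$ so that $\ell_{2,\infty}$ bounds transfer without a blow-up — and doing so with constants tight enough to meet the $\mu^2 r^2$ dependence, is the delicate part. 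Once all three invariants are closed at step $t+1$, a union bound over $t \leqslant (n_1+n_2)^3$ and over the $O(n_1+n_2)$ leave-one-out indices (each contributing a $(n_1+n_2)^{-4}$-level event) yields the claimed high-probability statement; the final exponential-accuracy conclusion follows by iterating $\rho^T$ with $T=(n_1+n_2)^3$ and $\rho \leqslant 1 - \sigma_r(\boldsymbol{M})^2/(200\sigma_1^2(\boldsymbol{M})) \cdot 0.05$, whence $-\log \rho \gtrsim 1/\kappa^2$.
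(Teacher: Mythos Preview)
Your high-level plan matches the paper's: leave-one-out sequences (one per row and per column), induction on $t$ with a coupled family of invariants, and the same collection of sharpening lemmas (Lemma~\ref{chen_lemma2} for initialization, Lemma~\ref{lemma_spectral_gap} for propagation, Lemmas~\ref{chen_lemma4.4} and~\ref{zheng_lemma9} for the Hessian). Two structural choices differ from the paper but are harmless. You propose to use the Hessian lemma on the main-iterate Frobenius error; the paper instead reserves Lemma~\ref{lemma_hessian} for the leave-one-out \emph{proximity} \eqref{eq_ind3} and handles the main iterate by a direct \emph{spectral}-norm analysis \eqref{eq_ind2} via the decomposition $\alpha_1+\alpha_2+\alpha_3$, recovering the Frobenius bound at the end through $\|\cdot\|_F\le\sqrt{2r}\,\|\cdot\|$. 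Also, the paper aligns the leave-one-out iterate to the \emph{main iterate} via $\boldsymbol{T}^{t,(l)}$ in \eqref{eq:Rt} rather than to the truth via $\boldsymbol{R}^{t,(l)}$, precisely so that the difference sits in the admissible direction set \eqref{lemma_hessian2} of the Hessian lemma.

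There is one genuine gap in your invariant list. The paper tracks \emph{four} invariants, not three: besides your (i)--(iii) it carries the row-wise leave-one-out error \eqref{eq_ind4}, namely $\bigl\|\bigl([\boldsymbol{X}^{t,(l)};\boldsymbol{Y}^{t,(l)}]\boldsymbol{R}^{t,(l)}-[\boldsymbol{U};\boldsymbol{V}]\bigr)_{l,\cdot}\bigr\|_2$, and propagates it inductively by analyzing the $l$-th row of the leave-one-out gradient step (this row is \emph{deterministic}, since the leave-one-out objective uses the exact $l$-th row of $\boldsymbol{M}$). Your claim that the $\ell_{2,\infty}$ bound (iii) follows from (ii) plus a ``Bernstein-type bound exploiting independence'' is not how the argument closes: in the paper \eqref{eq_ind5} is obtained by triangle inequality from \eqref{eq_ind3} and \eqref{eq_ind4} together with Lemma~\ref{ma_lemma37} to mediate the rotation change (Section~\ref{sec_proof_eq_ind5}); the matrix-Bernstein step appears instead \emph{inside} the proof of the proximity \eqref{eq_ind3}, controlling the stochastic terms $\boldsymbol{b}_1,\boldsymbol{b}_2$ coming from the single row that differs between the two objectives. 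Without separately tracking \eqref{eq_ind4} the inductive loop does not close.
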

The comparison between our result and that in \cite{ma2017implicit} (Theorem 2 therein) has already been summarized in Section \ref{sec:contributions}, so we don't repeat the details here.

% \section{Main Results and the Roadmap of Proof}
\section{The Leave-one-out Sequences and the Roadmap of Proof}
The proof framework of Theorem \ref{thm:main} relies crucially on extending the leave-one-out sequences in \cite{ma2017implicit} from positive definite matrix completion \prettyref{eq:mc_noncvx_psd} to rectangular matrix completion \prettyref{eq:mc_noncvx}. Roughly speaking, the proof consists of three major parts: some local properties for the Hessian of the nonconvex objective function $f(\mtx{X}, \mtx{Y})$ defined in \eqref{eq:mc_noncvx}, error bounds for the initialization $(\boldsymbol{X}^0,\boldsymbol{Y}^0)$ and those of the leave-one-out sequences $(\boldsymbol{X}^{0,(l)},\boldsymbol{Y}^{0,(l)})$, error bounds for the gradient sequence $(\mtx{X}^t, \mtx{Y}^t)$ and the leave-one-out sequences $(\mtx{X}^{t,(l)}, \mtx{Y}^{t,(l)})$. We first give the definition of the leave-one-out sequences rigorously. 

%The key technical idea to prove our main result is the leave-one-out technique that has been employed in \cite{ma2017implicit} to establish the convergence of vanilla gradient descent for positive definite matrix completion \prettyref{eq:mc_noncvx_psd}. In this subsection we introduce how to apply this technique to complete rectangular matrices \prettyref{eq:mc_noncvx}. 
\subsection{Leave-one-out sequences}  
Let's start with the following notations:
\begin{itemize}
\item Denote by $\Omega_{-i,\cdot}\coloneqq \{(k, l) \in \Omega: k \neq i\}$ the subset of $\Omega$ where entries in the $i$-th row are removed; %\XL{Very bad notation. The notation $r_i$ usually represents something, such as an index. Here $r_i$ means nothing! This notation is too amateur and must be changed. Perhaps $\Omega_{-i, :}$?}

\item Denote by $\Omega_{\cdot,-j}\coloneqq\{(k, l) \in \Omega: l \neq j\}$ the subset of $\Omega$ where entries in the $j$-th column are removed; %\XL{The notation should be changed. Accordingly, I suggest to use $\Omega_{:, -j}$.}

\item Denote by $\Omega_{i,\cdot}\coloneqq \{(i,k)\in \Omega\}$ the subset of $\Omega$ where only entries in the $i$-th row are kept;

\item Denote by $\Omega_{\cdot,j}\coloneqq \{(k,j)\in \Omega\}$ the subset of $\Omega$ where only entries in the $j$-th column are kept;

\item The definitions of the projectors $\mathcal{P}_{\Omega_{-i,\cdot}}$, $\mathcal{P}_{\Omega_{\cdot,-j}}$, $\mathcal{P}_{\Omega_{i,\cdot}}$ and $\mathcal{P}_{\Omega_{\cdot,j}}$ are similar to that of $\mathcal{P}_\Omega$ as in \prettyref{eq:POmega};

\item Denote by $\mathcal{P}_{i,\cdot}(\cdot)$/$\mathcal{P}_{\cdot,j}(\cdot): \mathbb{R}^{n_1 \times n_2} \rightarrow \mathbb{R}^{n_1 \times n_2}$ the orthogonal projector that transforms a matrix by keeping its $i$-th row/$j$-th column and setting all other entries into zeros: %\XL{Again, I don't like these notations. I think $\mathcal{P}_{i, :}$ and $\mathcal{P}_{:, j}$ are better choices.}
\begin{equation*} 
\begin{split}
&(\mathcal{P}_{i,\cdot}(\boldsymbol{M}))_{k,l} =  \begin{cases} M_{k, l} \quad \text{~if~} k=i \\   0 \quad \quad  \text{~otherwise},\end{cases} \\
&(\mathcal{P}_{\cdot,j}(\boldsymbol{M}))_{k,l} =  \begin{cases} M_{k,l} \quad \text{~if~} l=j \\   0 \quad \quad  \text{~otherwise}.\end{cases}
\end{split}
\end{equation*}
\end{itemize}

These notations facilitate the leave-one-out analysis in rectangular matrix completion, in which each row/column is associated with a separate ``leave-one-out" sequence. The initialization for the ``leave-one-out" sequences are defined similarly to the initialization $(\mtx{X}^0, \mtx{Y}^0)$ for the gradient descent flow. To be concrete, for the $i$-th row, define
\[
\boldsymbol{M}^{0,(i)}\coloneqq
\frac{1}{p}\mathcal{P}_{\Omega_{-i,\cdot}}(\boldsymbol{M})+\mathcal{P}_{i,\cdot}(\boldsymbol{M}),
\]
i.e., the $i$-th row of $\frac{1}{p}\mathcal{P}_{\Omega}(\boldsymbol{M})$ is replaced with the complete $i$-th row of $\mtx{M}$. Similarly, for the $j$-th column, define
\[
\boldsymbol{M}^{0,(n_1+j)}\coloneqq
\frac{1}{p}\mathcal{P}_{\Omega_{\cdot,-j}}(\boldsymbol{M})+\mathcal{P}_{\cdot, j}(\boldsymbol{M}),
\]
i.e., the $j$-th column of $\frac{1}{p}\mathcal{P}_{\Omega}(\boldsymbol{M})$ is replaced with the complete $j$-th column of $\mtx{M}$. In short, we write
\begin{equation}
\label{eq:M0l}
\begin{split}
	& \boldsymbol{M}^{0,(l)}\coloneqq \left\{\begin{array}{ll}
\left(\frac{1}{p}\mathcal{P}_{\Omega_{-l,\cdot}}+\mathcal{P}_{l,\cdot}\right)(\boldsymbol{M}) & 1\leqslant l\leqslant n_1
\\
 \left(\frac{1}{p}\mathcal{P}_{\Omega_{\cdot,-(l-n_1)}} +\mathcal{P}_{\cdot,l-n_1}\right)(\boldsymbol{M}) & n_1+1\leqslant l\leqslant n_1+n_2.
\end{array}\right.
\end{split}
\end{equation}
For $1 \leqslant l \leqslant n_1+n_2$, as with the spectral initialization for gradient descent, let $\widetilde{\boldsymbol{X}}^{0, (l)} \boldsymbol{\Sigma}^{0, (l)}\left(\widetilde{\boldsymbol{Y}}^{0, (l)}\right)^\top$ be top-$r$ partial singular value decomposition of $\boldsymbol{M}^{0,(l)}$. Further, as with the definition of $(\mtx{X}^0, \mtx{Y}^0)$ in \eqref{eq:initialization}, we define the initialization for the $l$-th leave-one-out sequence as
\begin{equation}
\label{eq:init_leave_one_out}
\begin{split}
\boldsymbol{X}^{0, (l)} = \widetilde{\boldsymbol{X}}^{0, (l)} \left(\boldsymbol{\Sigma}^{0, (l)}\right)^{1/2}, 
\\
\boldsymbol{Y}^{0, (l)} = \widetilde{\boldsymbol{Y}}^{0, (l)} \left(\boldsymbol{\Sigma}^{0, (l)}\right)^{1/2}.
\end{split}
\end{equation}
It is clear that if $1\leqslant l \leqslant n_1$, $(\boldsymbol{X}^{0, (l)}, \boldsymbol{Y}^{0, (l)})$ is the initialization for the leave-one-out sequence associated with the $l$-th row, while if $n_1 + 1 \leqslant l \leqslant n_1 + n_2$, $(\boldsymbol{X}^{0, (l)}, \boldsymbol{Y}^{0, (l)})$ is associated with the $(l-n_1)$-th column.\\

Starting with $(\boldsymbol{X}^{0,(l)}, \boldsymbol{Y}^{0,(l)})$, we define the $l$-th leave-one-out sequence by considering the corresponding modification of the nonconvex optimization \eqref{eq:mc_noncvx}. For $1\leqslant l\leqslant n_1$, the nonconvex optimization \eqref{eq:mc_noncvx} is modified as 
\begin{equation*}
\begin{split}
 \min_{\substack{\mtx{X} \in \mathbb{R}^{n_1 \times r} \\ \mtx{Y} \in \mathbb{R}^{n_2 \times r}}}	 f(\boldsymbol{X},\boldsymbol{Y})  \coloneqq & \frac{1}{2p} \left\| \left(\mathcal{P}_{\Omega_{-l,\cdot}}+ p\mathcal{P}_{l,\cdot}\right)\left( \boldsymbol{X}\boldsymbol{Y}^\top - \boldsymbol{M} \right) \right\|_F^2  + \frac{1}{8}\left\| \boldsymbol{X}^\top\boldsymbol{X}-\boldsymbol{Y}^\top\boldsymbol{Y} \right\|_F^2.
\end{split}
\end{equation*}
The leave-one-out sequence associated with the $l$-th row is defined as the corresponding gradient descent sequence with the same step size $\eta$:
\begin{equation}
\label{eq:leaveoneout1}
	\begin{split}
		 \boldsymbol{X}^{t+1,(l)}  =& \boldsymbol{X}^{t,(l)} - \frac{\eta}{p}\mathcal{P}_{\Omega_{-l,\cdot}}\left( \boldsymbol{X}^{t,(l)} (\boldsymbol{Y}^{t,(l)})^\top- \mtx{M} \right) \boldsymbol{Y}^{t,(l)} - \eta \mathcal{P}_{l,\cdot}\left( \boldsymbol{X}^{t,(l)} (\boldsymbol{Y}^{t,(l)})^\top- \mtx{M} \right)\boldsymbol{Y}^{t,(l)} \\
		&-\frac{\eta}{2}\boldsymbol{X}^{t,(l)}\left( (\boldsymbol{X}^{t,(l)})^\top\boldsymbol{X}^{t,(l)} - (\boldsymbol{Y}^{t,(l)})^\top\boldsymbol{Y}^{t,(l)} \right)
	\end{split}
\end{equation}
and
\begin{equation}
\label{eq:leaveoneout2}
\begin{split}
		 \boldsymbol{Y}^{t+1,(l)} =&  \boldsymbol{Y}^{t,(l)} -\frac{\eta}{p} \left[ \mathcal{P}_{\Omega_{-l,\cdot}}\left( \boldsymbol{X}^{t,(l)} (\boldsymbol{Y}^{t,(l)})^\top- \mtx{M} \right) \right]^\top \boldsymbol{X}^{t,(l)}  - \eta \left[ \mathcal{P}_{l,\cdot}\left( \boldsymbol{X}^{t,(l)} (\boldsymbol{Y}^{t,(l)})^\top- \mtx{M} \right) \right]^\top \boldsymbol{X}^{t,(l)}  
		\\
		& -\frac{\eta}{2}\boldsymbol{Y}^{t,(l)}\left( (\boldsymbol{Y}^{t,(l)})^\top\boldsymbol{Y}^{t,(l)} - (\boldsymbol{X}^{t,(l)})^\top\boldsymbol{X}^{t,(l)} \right)
\end{split}
\end{equation}
Similarly, for $n_1+1\leqslant l \leqslant n_1+n_2$, consider the nonconvex optimization
\begin{equation*}
\begin{split}
\min_{\substack{\mtx{X} \in \mathbb{R}^{n_1 \times r} \\ \mtx{Y} \in \mathbb{R}^{n_2 \times r}}}	 f(\boldsymbol{X},\boldsymbol{Y}) \coloneqq &\frac{1}{2p} \left\| \left(\mathcal{P}_{\Omega_{\cdot,-(l-n_1)}} + p \mathcal{P}_{\cdot,l-n_1}\right)\left( \boldsymbol{X}\boldsymbol{Y}^\top - \boldsymbol{M} \right) \right\|_F^2  + \frac{1}{8}\left\| \boldsymbol{X}^\top\boldsymbol{X}-\boldsymbol{Y}^\top\boldsymbol{Y} \right\|_F^2.
\end{split}
\end{equation*}
Subsequently, the leave-one-out sequence associated with the $(l-n_1)$-th column is defined as the sequence:
\begin{equation}
\label{eq:leaveoneout3}
\begin{split}
		 \boldsymbol{X}^{t+1,(l)} =& \boldsymbol{X}^{t,(l)} - \frac{\eta}{p}\mathcal{P}_{\Omega_{\cdot,-(l-n_1)}}\left( \boldsymbol{X}^{t,(l)} (\boldsymbol{Y}^{t,(l)})^\top- \mtx{M} \right) \boldsymbol{Y}^{t,(l)} - \eta \mathcal{P}_{\cdot,l-n_1}\left( \boldsymbol{X}^{t,(l)} (\boldsymbol{Y}^{t,(l)})^\top- \mtx{M} \right)\boldsymbol{Y}^{t,(l)} 
		\\
		& -\frac{\eta}{2}\boldsymbol{X}^{t,(l)}\left( (\boldsymbol{X}^{t,(l)})^\top\boldsymbol{X}^{t,(l)} - (\boldsymbol{Y}^{t,(l)})^\top\boldsymbol{Y}^{t,(l)} \right)
\end{split}
\end{equation}
and
\begin{equation}
\label{eq:leaveoneout4}
\begin{split}
		 \boldsymbol{Y}^{t+1,(l)}  =&  \boldsymbol{Y}^{t,(l)} -\frac{\eta}{p} \left[ \mathcal{P}_{\Omega_{\cdot,-(l-n_1)}}\left( \boldsymbol{X}^{t,(l)} (\boldsymbol{Y}^{t,(l)})^\top- \mtx{M} \right) \right]^\top \boldsymbol{X}^{t,(l)} \\
		&- \eta \left[ \mathcal{P}_{\cdot,l-n_1}\left( \boldsymbol{X}^{t,(l)} (\boldsymbol{Y}^{t,(l)})^\top- \mtx{M} \right) \right]^\top \boldsymbol{X}^{t,(l)} 
		\\
		& -\frac{\eta}{2}\boldsymbol{Y}^{t,(l)}\left( (\boldsymbol{Y}^{t,(l)})^\top\boldsymbol{Y}^{t,(l)} - (\boldsymbol{X}^{t,(l)})^\top\boldsymbol{X}^{t,(l)} \right).
\end{split}
\end{equation}
These $n_1 + n_2$ leave-one-out sequences will be employed to prove the convergence of vanilla gradient descent \eqref{eq:grad_desc} as with \cite{ma2017implicit} as will be detailed in next few sections.

\subsection{Local properties of the Hessian} 
As with \citet[Lemma 7]{ma2017implicit}, we characterize some local properties of the Hessian of the objective function $f(\mtx{X}, \mtx{Y})$:
\begin{lemma}
\label{lemma_hessian}
If the sampling rate satisfies 
\[
p \geqslant C_{S1} \frac{\mu r \kappa \log(n_1\vee n_2)}{n_1\wedge n_2} 
\]
for some absolute constant $C_{S1}$, then on an event $E_H$ with probability $\mathbb{P}[E_H]\geqslant 1-3(n_1+n_2)^{-11}$, we have
\begin{equation}\label{eq_hessian1}
\begin{split}
 \operatorname{vec}\left( \left[ \begin{array}{c}\boldsymbol{D}_{\boldsymbol{X}}\\ \boldsymbol{D}_{\boldsymbol{Y}} \end{array}\right] \right)^\top \nabla^2f(\boldsymbol{X},\boldsymbol{Y})\operatorname{vec}\left( \left[ \begin{array}{c}\boldsymbol{D}_{\boldsymbol{X}}\\ \boldsymbol{D}_{\boldsymbol{Y}} \end{array}\right] \right) \geqslant & \frac{1}{5}\sigma_r(\boldsymbol{M})\left\| \left[ \begin{array}{c}\boldsymbol{D}_{\boldsymbol{X}}\\ \boldsymbol{D}_{\boldsymbol{Y}} \end{array}\right]  \right\|_F^2
\end{split}
\end{equation}
and
\begin{equation}\label{eq_hessian2}
		\|\nabla^2f(\boldsymbol{X},\boldsymbol{Y})\|\leqslant 5\sigma_1(\boldsymbol{M}),
\end{equation}
uniformly for all $\boldsymbol{X} \in \mathbb{R}^{n_1\times r},\boldsymbol{Y} \in\mathbb{R}^{n_2\times r}$ satisfying
	\begin{equation}\label{lemma_hessian1}
		\left\| \left[ \begin{array}{c}
		\boldsymbol{X}-\boldsymbol{U}\\
		\boldsymbol{Y}-\boldsymbol{V}
		 \end{array}\right]\right\|_{2,\infty} \leqslant \frac{1}{500 \kappa\sqrt{n_1+n_2}}\sqrt{\sigma_1(\boldsymbol{M})}
	\end{equation}
	and all $\boldsymbol{D}_{\boldsymbol{X}} \in \mathbb{R}^{n_1\times r},~\boldsymbol{D}_{\boldsymbol{Y}} \in \mathbb{R}^{n_2\times r}$ such that $\left[ \begin{array}{c} \boldsymbol{D}_{\boldsymbol{X}} \\ \boldsymbol{D}_{\boldsymbol{Y}} \end{array} \right]$ is in the set 
%\left[ \begin{array}{c} \boldsymbol{D}_{\boldsymbol{X}} \\ \boldsymbol{D}_{\boldsymbol{Y}} \end{array} \right] \coloneqq 	
\begin{equation}
\label{lemma_hessian2}
\begin{split}
& \left\{\left[ \begin{array}{c} \boldsymbol{X}_1 \\\boldsymbol{Y}_1 \end{array}\right]\widehat{\boldsymbol{R}} - \left[ \begin{array}{c} \boldsymbol{X}_2\\\boldsymbol{Y}_2 \end{array} \right]: \left\| \left[ \begin{array}{c} \boldsymbol{X}_2-\boldsymbol{U}\\ \boldsymbol{Y}_2-\boldsymbol{V} \end{array}\right] \right\| \leqslant  \frac{\sqrt{\sigma_1(\boldsymbol{M})}}{500 \kappa},  \widehat{\boldsymbol{R}}\coloneqq \argmin_{\boldsymbol{R}\in\mathsf{O}(r)} \left\| \left[ \begin{array}{c} \boldsymbol{X}_1 \\\boldsymbol{Y}_1 \end{array}\right]\boldsymbol{R} - \left[ \begin{array}{c} \boldsymbol{X}_2\\\boldsymbol{Y}_2 \end{array} \right] \right\|_F \right\}.
\end{split}
\end{equation} 
\end{lemma}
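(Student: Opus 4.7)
\textbf{Proof plan for Lemma~\ref{lemma_hessian}.} The plan is to start by writing the Hessian quadratic form explicitly. A direct computation of $\nabla^2 f(\boldsymbol{X},\boldsymbol{Y})$ for the objective in \eqref{eq:mc_noncvx} yields, for any $\boldsymbol{D}_{\boldsymbol{X}}\in\mathbb{R}^{n_1\times r},\boldsymbol{D}_{\boldsymbol{Y}}\in\mathbb{R}^{n_2\times r}$,
\[
\operatorname{vec}(\boldsymbol{D})^\top \nabla^2 f(\boldsymbol{X},\boldsymbol{Y}) \operatorname{vec}(\boldsymbol{D}) \;=\; \tfrac{1}{p}\|\mathcal{P}_{\Omega}(\boldsymbol{D}_{\boldsymbol{X}}\boldsymbol{Y}^\top+\boldsymbol{X}\boldsymbol{D}_{\boldsymbol{Y}}^\top)\|_F^2 + \tfrac{2}{p}\langle \mathcal{P}_{\Omega}(\boldsymbol{X}\boldsymbol{Y}^\top-\boldsymbol{M}), \boldsymbol{D}_{\boldsymbol{X}}\boldsymbol{D}_{\boldsymbol{Y}}^\top\rangle + Q_{\mathrm{reg}}(\boldsymbol{D}),
\]
where $Q_{\mathrm{reg}}(\boldsymbol{D})$ collects the contributions of the balancing regularizer $\tfrac{1}{8}\|\boldsymbol{X}^\top\boldsymbol{X}-\boldsymbol{Y}^\top\boldsymbol{Y}\|_F^2$. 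The idea is then to compare this quantity to its ``population'' counterpart obtained by replacing $\tfrac{1}{p}\mathcal{P}_{\Omega}$ by the identity and $(\boldsymbol{X},\boldsymbol{Y})$ by $(\boldsymbol{U},\boldsymbol{V})$, and to argue the comparison errors are negligible under \eqref{lemma_hessian1}.

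For the lower bound \eqref{eq_hessian1}, the first step is to split $\boldsymbol{D}_{\boldsymbol{X}}\boldsymbol{Y}^\top+\boldsymbol{X}\boldsymbol{D}_{\boldsymbol{Y}}^\top$ into the main piece $\boldsymbol{D}_{\boldsymbol{X}}\boldsymbol{V}^\top+\boldsymbol{U}\boldsymbol{D}_{\boldsymbol{Y}}^\top$ and residuals involving $(\boldsymbol{X}-\boldsymbol{U},\boldsymbol{Y}-\boldsymbol{V})$. The main piece has rank at most $2r$ and, crucially, lies in an incoherent tangent-type subspace, so the restricted isometry-style bound in Lemma~\ref{chen_lemma4.4} (from \cite{chen2017memory}) gives
\[
\tfrac{1}{p}\|\mathcal{P}_{\Omega}(\boldsymbol{D}_{\boldsymbol{X}}\boldsymbol{V}^\top+\boldsymbol{U}\boldsymbol{D}_{\boldsymbol{Y}}^\top)\|_F^2 \;\geqslant\; (1-\delta)\|\boldsymbol{D}_{\boldsymbol{X}}\boldsymbol{V}^\top+\boldsymbol{U}\boldsymbol{D}_{\boldsymbol{Y}}^\top\|_F^2
\]
with small $\delta$ provided $p\gtrsim \mu r\log(n_1\vee n_2)/(n_1\wedge n_2)$. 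The residual cross terms are handled by Lemma~\ref{zheng_lemma9} from \cite{zheng2016convergence} together with the $\ell_{2,\infty}$ constraint \eqref{lemma_hessian1}, which forces $\|\boldsymbol{X}\|_{2,\infty}$ and $\|\boldsymbol{Y}\|_{2,\infty}$ to remain $O(\sqrt{\mu r\sigma_1(\boldsymbol{M})/(n_1\wedge n_2)})$. The middle cross term is bounded by using $\|\tfrac{1}{p}\mathcal{P}_{\Omega}(\boldsymbol{X}\boldsymbol{Y}^\top-\boldsymbol{M})-(\boldsymbol{X}\boldsymbol{Y}^\top-\boldsymbol{M})\|$ and the smallness of $\boldsymbol{X}\boldsymbol{Y}^\top-\boldsymbol{M}$ near the ground truth. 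Finally, using the Procrustes structure \eqref{lemma_hessian2}, one shows that when the ``tangent'' part of $\boldsymbol{D}$ is small then $Q_{\mathrm{reg}}(\boldsymbol{D})$ is large, so a convex combination of the two bounds yields the $\tfrac{1}{5}\sigma_r(\boldsymbol{M})\|\boldsymbol{D}\|_F^2$ lower bound; this is the classical Tu--Zheng type argument adapted to the rectangular setting.

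The upper bound \eqref{eq_hessian2} is handled in parallel but is considerably easier: applying Lemma~\ref{chen_lemma4.4} in the opposite direction controls $\tfrac{1}{p}\|\mathcal{P}_{\Omega}(\boldsymbol{D}_{\boldsymbol{X}}\boldsymbol{Y}^\top+\boldsymbol{X}\boldsymbol{D}_{\boldsymbol{Y}}^\top)\|_F^2$ by a constant multiple of $\sigma_1(\boldsymbol{M})\|\boldsymbol{D}\|_F^2$, the cross term is dominated by $\|\boldsymbol{X}\boldsymbol{Y}^\top-\boldsymbol{M}\|\cdot\|\boldsymbol{D}\|_F^2$ (which is much smaller than $\sigma_1(\boldsymbol{M})\|\boldsymbol{D}\|_F^2$ in the neighborhood \eqref{lemma_hessian1}), and the regularizer contribution is easily seen to be $O(\sigma_1(\boldsymbol{M})\|\boldsymbol{D}\|_F^2)$ once $\|\boldsymbol{X}\|,\|\boldsymbol{Y}\|\lesssim \sqrt{\sigma_1(\boldsymbol{M})}$. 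A union bound over the two high-probability events from Lemma~\ref{chen_lemma4.4} and Lemma~\ref{zheng_lemma9} yields $\mathbb{P}[E_H]\geqslant 1-3(n_1+n_2)^{-11}$.

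The hard part, and the place where our sampling-rate improvement over \cite{ma2017implicit} actually enters, is the lower-bound argument: one must carry out the decomposition of $\boldsymbol{D}_{\boldsymbol{X}}\boldsymbol{Y}^\top+\boldsymbol{X}\boldsymbol{D}_{\boldsymbol{Y}}^\top$ in a way that never appeals to the naive spectral-norm bound $\|\tfrac{1}{p}\mathcal{P}_{\Omega}(\boldsymbol{M})-\boldsymbol{M}\|$ (which would introduce an extra $\sqrt{r}$ factor), but instead uses Lemma~\ref{chen_lemma4.4} at the level of the bilinear form together with the tighter spectral bound based on Lemma~\ref{chen_lemma2}. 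Balancing the resulting error terms against $\sigma_r(\boldsymbol{M})\|\boldsymbol{D}\|_F^2$ is what ultimately produces the sharper requirement $p\gtrsim \mu r\kappa\log(n_1\vee n_2)/(n_1\wedge n_2)$ claimed in the lemma, and this is also what allows the proof to avoid any explicit $\ell_{2,\infty}$ projection in the gradient iteration.
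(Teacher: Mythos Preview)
Your plan is close to the paper's but has one genuine gap and one point of divergence worth flagging.

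\textbf{The gap.} You attribute the restricted-isometry step on the main piece $\boldsymbol{D}_{\boldsymbol{X}}\boldsymbol{V}^\top+\boldsymbol{U}\boldsymbol{D}_{\boldsymbol{Y}}^\top$ to Lemma~\ref{chen_lemma4.4}. That lemma, however, bounds the bilinear defect $|D(\boldsymbol{A}\boldsymbol{C}^\top,\boldsymbol{B}\boldsymbol{D}^\top)|$ by a product that always carries an $\ell_{2,\infty}$ norm of one factor from each side; applied to the diagonal term $D(\boldsymbol{D}_{\boldsymbol{X}}\boldsymbol{V}^\top,\boldsymbol{D}_{\boldsymbol{X}}\boldsymbol{V}^\top)$ it forces $\|\boldsymbol{D}_{\boldsymbol{X}}\|_{2,\infty}$ into the estimate, and you have no control of that for an arbitrary direction. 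The paper closes this exactly by invoking the Cand\`es--Recht tangent-space RIP (Lemma~\ref{candes_thm4.1}), which gives $|D(\boldsymbol{Z},\boldsymbol{Z})|\leqslant 0.1\|\boldsymbol{Z}\|_F^2$ for all $\boldsymbol{Z}\in\mathcal{T}$ using only the incoherence of $\boldsymbol{U},\boldsymbol{V}$. In the paper's decomposition, Lemma~\ref{chen_lemma4.4} is used only for the cross terms involving $\boldsymbol{\Delta}_{\boldsymbol{X}},\boldsymbol{\Delta}_{\boldsymbol{Y}}$ (the block labeled \circled{1}), where the needed $\ell_{2,\infty}$ control comes from the hypothesis~\eqref{lemma_hessian1}. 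This is also why $E_H=E_S\cap E_{Ca}\cap E_Z$ is the intersection of three events rather than two. (Your reference to Lemma~\ref{chen_lemma2} is also misplaced: that lemma enters only the initialization analysis in Appendix~\ref{sec:initialization_proof} and plays no role in the Hessian bound.)

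\textbf{The divergence.} Your Tu--Zheng-style convex-combination argument could be pushed through, but the paper takes a shorter algebraic route. After substituting $\boldsymbol{X}=\boldsymbol{U}+\boldsymbol{\Delta}_{\boldsymbol{X}}$, $\boldsymbol{Y}=\boldsymbol{V}+\boldsymbol{\Delta}_{\boldsymbol{Y}}$, the population quadratic form is shown (up to small errors $\mathcal{E}_1+\mathcal{E}_2$) to equal the explicit sum of squares
\[
\|\boldsymbol{D}_{\boldsymbol{X}}\boldsymbol{V}^\top\|_F^2+\|\boldsymbol{U}\boldsymbol{D}_{\boldsymbol{Y}}^\top\|_F^2+\tfrac12\|\boldsymbol{D}_{\boldsymbol{X}}^\top\boldsymbol{U}-\boldsymbol{D}_{\boldsymbol{Y}}^\top\boldsymbol{V}\|_F^2+\tfrac12\|\boldsymbol{X}_2^\top\boldsymbol{D}_{\boldsymbol{X}}+\boldsymbol{Y}_2^\top\boldsymbol{D}_{\boldsymbol{Y}}\|_F^2,
\]
the last term becoming a genuine square precisely because the Procrustes condition in~\eqref{lemma_hessian2} forces $\boldsymbol{X}_2^\top\boldsymbol{D}_{\boldsymbol{X}}+\boldsymbol{Y}_2^\top\boldsymbol{D}_{\boldsymbol{Y}}$ to be symmetric. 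One then drops the last two nonnegative terms and uses $\|\boldsymbol{D}_{\boldsymbol{X}}\boldsymbol{V}^\top\|_F^2\geqslant\sigma_r(\boldsymbol{M})\|\boldsymbol{D}_{\boldsymbol{X}}\|_F^2$, $\|\boldsymbol{U}\boldsymbol{D}_{\boldsymbol{Y}}^\top\|_F^2\geqslant\sigma_r(\boldsymbol{M})\|\boldsymbol{D}_{\boldsymbol{Y}}\|_F^2$. No case split on ``tangent'' versus ``normal'' directions is needed.
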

The proof is similar to \citet[Lemma 7]{ma2017implicit}, but as mentioned in Section \ref{sec:contributions}, we apply Lemma 4.4 from \cite{chen2017memory} and Lemma 9 from \cite{zheng2016convergence} to improve the order of logarithms. The details are relegated to \prettyref{sec:hess_proof} in the appendix.

\subsection{Analysis of the initializations for the Leave-one-out sequences}
As with \citet[Lemma 13]{ma2017implicit}, we now specify how close the spectral initialization $(\mtx{X}^0, \mtx{Y}^0)$ in \eqref{eq:initialization} and its leave-one-out counterparts $(\mtx{X}^{0, (l)}, \mtx{Y}^{0, (l)})$ in \eqref{eq:init_leave_one_out} are from the ground truth $(\mtx{U}, \mtx{V})$ (recall that $\mtx{M} = \mtx{U}\mtx{V}^\top$). To begin with, we list some convenient notations for several orthogonal matrices that relate $(\mtx{X}^0, \mtx{Y}^0)$, $(\mtx{X}^{0, (l)}, \mtx{Y}^{0, (l)})$ and $(\mtx{U}, \mtx{V})$:
%{\color{blue}For the purpose of stating our following lemma, we define the following three .}
\[
\boldsymbol{R}^0 \coloneqq \argmin_{\boldsymbol{R}\in\mathsf{O}(r)}\left\| \left[\begin{array}{c}\boldsymbol{X}^0\\ \boldsymbol{Y}^0\end{array}\right]\boldsymbol{R} - \left[\begin{array}{c}\boldsymbol{U}\\ \boldsymbol{V}\end{array}\right] \right\|_F,
\]

\[
 \boldsymbol{R}^{0,(l)} \coloneqq \argmin_{\boldsymbol{R}\in\mathsf{O}(r)}\left\|  \left[\begin{array}{c}\boldsymbol{X}^{0,(l)}\\ \boldsymbol{Y}^{0,(l)} \end{array}\right] \boldsymbol{R} - \left[\begin{array}{c}\boldsymbol{U}\\ \boldsymbol{V}\end{array}\right] \right\|_F
\]  
 and 
\begin{equation}\label{eq:T0l}
\boldsymbol{T}^{0,(l)} \coloneqq \argmin_{\boldsymbol{R}\in \mathsf{O}(r)} \left\| \left[\begin{array}{c}\boldsymbol{X}^0\\ \boldsymbol{Y}^0\end{array}\right] \boldsymbol{R}^0-\left[\begin{array}{c}\boldsymbol{X}^{0,(l)}\\ \boldsymbol{Y}^{0,(l)} \end{array}\right] \boldsymbol{R} \right\|_F.
\end{equation}

\begin{lemma}
\label{lemma_initialization}
If 
\[
p \geqslant C_{S2} \frac{\mu^2 r^2\kappa^6\log (n_1\vee n_2)}{n_1\wedge n_2}, 
\]
then on an event $E_{init}\subset E_H$ (defined in Lemma \ref{lemma_hessian}) with probability $\mathbb{P}[E_{init}]\geqslant 1-(n_1+n_2)^{-10}$, there hold the following inequalities
\begin{align}
	 \left\| \left[\begin{array}{c}\boldsymbol{X}^0\\ \boldsymbol{Y}^0\end{array}\right]\boldsymbol{R}^0 - \left[\begin{array}{c}\boldsymbol{U}\\ \boldsymbol{V}\end{array}\right] \right\| \leqslant& C_I\sqrt{\frac{\mu r \kappa^6\log(n_1\vee n_2)}{(n_1\wedge n_2)p}}\sqrt{\sigma_1(\boldsymbol{M})}, \label{eq:ini1}\\
		  \left\| \left( \left[\begin{array}{c}\boldsymbol{X}^{0,(l)}\\ \boldsymbol{Y}^{0,(l)} \end{array}\right] \boldsymbol{R}^{0,(l)} - \left[\begin{array}{c}\boldsymbol{U}\\ \boldsymbol{V}\end{array}\right]\right)_{l,\cdot} \right\|_2 \leqslant& 100 C_I \sqrt{ \frac{\mu^2 r^2 \kappa^7 \log (n_1\vee n_2)}{(n_1\wedge n_2)^2p}} \sqrt{\sigma_1(\boldsymbol{M})},\label{eq:ini2}\\
		 \left\| \left[ \begin{array}{c}\boldsymbol{X}^0\\\boldsymbol{Y}^0\end{array}\right]\boldsymbol{R}^0 -\left[ \begin{array}{c}\boldsymbol{X}^{0,(l)}\\\boldsymbol{Y}^{0,(l)}\end{array}\right]\boldsymbol{T}^{0,(l)}  \right\|_F  \leqslant&  C_I \sqrt{\frac{\mu^2 r^2 \kappa^{10} \log (n_1\vee n_2)}{(n_1\wedge n_2)^2p}}\sqrt{\sigma_1(\boldsymbol{M})}, \label{eq:ini3} 
	\end{align}
	% \begin{equation}\label{eq:ini1}
	% \begin{split}
	% 	  \left\| \left[\begin{array}{c}\boldsymbol{X}^0\\ \boldsymbol{Y}^0\end{array}\right]\boldsymbol{R}^0 - \left[\begin{array}{c}\boldsymbol{U}\\ \boldsymbol{V}\end{array}\right] \right\| \leqslant& C_I\sqrt{\frac{\mu r \kappa^6\log(n_1\vee n_2)}{(n_1\wedge n_2)p}}\sqrt{\sigma_1(\boldsymbol{M})}, 
	% 	\end{split} 
	% 	\end{equation}
	% 	%\left\| \left[\begin{array}{c}\boldsymbol{X}^0\\ \boldsymbol{Y}^0\end{array}\right]\boldsymbol{R}^0 - \left[\begin{array}{c}\boldsymbol{U}\\ \boldsymbol{V}\end{array}\right] \right\|_F \leqslant C_2\sqrt{\frac{\mu r^2 \kappa^6\log(n_1\vee n_2)}{(n_1\wedge n_2)p}}\sqrt{\sigma_1(\boldsymbol{M})} \label{eq:ini2}\\
	% 	\begin{equation} \label{eq:ini2}
	% 	\begin{split}
	% 	 \left\| \left( \left[\begin{array}{c}\boldsymbol{X}^{0,(l)}\\ \boldsymbol{Y}^{0,(l)} \end{array}\right] \boldsymbol{R}^{0,(l)} - \left[\begin{array}{c}\boldsymbol{U}\\ \boldsymbol{V}\end{array}\right]\right)_{l,\cdot} \right\|_2 \leqslant& 100 C_I \sqrt{ \frac{\mu^2 r^2 \kappa^7 \log (n_1\vee n_2)}{(n_1\wedge n_2)^2p}} \sqrt{\sigma_1(\boldsymbol{M})},  
	% 	\end{split}
	% 	\end{equation}
	% 	\begin{equation}\label{eq:ini3}
	% 	\begin{split}
	% 	 \left\| \left[ \begin{array}{c}\boldsymbol{X}^0\\\boldsymbol{Y}^0\end{array}\right]\boldsymbol{R}^0 -\left[ \begin{array}{c}\boldsymbol{X}^{0,(l)}\\\boldsymbol{Y}^{0,(l)}\end{array}\right]\boldsymbol{T}^{0,(l)}  \right\|_F  \leqslant&  C_I \sqrt{\frac{\mu^2 r^2 \kappa^{10} \log (n_1\vee n_2)}{(n_1\wedge n_2)^2p}}\sqrt{\sigma_1(\boldsymbol{M})},  
	% 	\end{split}
	% \end{equation}
	for all $1\leqslant l\leqslant n_1+n_2$. Here $C_I$ and $C_{S2}$ are two fixed absolute constants.
\end{lemma}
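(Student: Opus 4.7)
\emph{Proof plan.} All three bounds share a common backbone: first establish sharp spectral-norm control over $\mtx{M}^0 - \mtx{M}$ and $\mtx{M}^{0,(l)} - \mtx{M}$, then convert these matrix-level bounds into factor-level bounds via the Mathias-type square-root perturbation (Lemma \ref{ma_lemma36}), Lemma \ref{ma_lemma37}, and Wedin-type singular-vector perturbation. The central technical upgrade over \cite{ma2017implicit} is to replace their Lemma 39 with Lemma \ref{chen_lemma2} from \cite{chen2015incoherence}, which exploits the incoherence and low-rank structure of $\mtx{M}$ to yield
\[
\bigl\|\tfrac{1}{p}\mathcal{P}_{\Omega}(\mtx{M}) - \mtx{M}\bigr\| \;\lesssim\; \sqrt{\frac{\mu r \log(n_1\vee n_2)}{(n_1\wedge n_2)\,p}}\,\sigma_1(\mtx{M})
\]
with high probability under the stated sampling rate, saving a factor of $\sqrt{r}$. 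The same bound (up to constants) is inherited by $\|\mtx{M}^{0,(l)} - \mtx{M}\|$ since removing one row or column of stochastic noise can only improve the spectral estimate. Finally, Lemma \ref{lemma_spectral_gap} applied to $\mtx{M}^0$ and each $\mtx{M}^{0,(l)}$ certifies $\sigma_r \gtrsim \sigma_r(\mtx{M})$ with a correspondingly small $\sigma_{r+1}$, which is needed to verify the spectral-gap hypotheses of Lemmas \ref{ma_lemma36} and \ref{ma_lemma37} sharply enough to preserve the target rate.

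\emph{Inequality \eqref{eq:ini1}.} Optimality of the best rank-$r$ approximation gives $\|\mtx{X}^0(\mtx{Y}^0)^{\top} - \mtx{M}\| \leq 2\|\mtx{M}^0 - \mtx{M}\|$. Since $(\mtx{X}^0, \mtx{Y}^0)$ and $(\mtx{U}, \mtx{V})$ are both balanced factorizations (the first by construction, the second because $\mtx{U}^\top\mtx{U} = \mtx{V}^\top\mtx{V} = \mtx{\Sigma}$), Lemma \ref{ma_lemma36} converts this into $\|[\mtx{X}^0;\mtx{Y}^0]\mtx{R}^0 - [\mtx{U};\mtx{V}]\| \lesssim \|\mtx{X}^0(\mtx{Y}^0)^{\top} - \mtx{M}\|/\sqrt{\sigma_r(\mtx{M})}$, which combined with the spectral bound above produces \eqref{eq:ini1} after absorbing $\kappa$ powers from the spectral-gap constants.

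\emph{Inequality \eqref{eq:ini3}.} The perturbation $\mtx{M}^0 - \mtx{M}^{0,(l)}$ is supported entirely on a single row (for $l \leq n_1$) or single column, so its spectral and Frobenius norms both equal the $\ell_2$ norm of the nonzero row/column. A scalar Bernstein calculation (the ``modified matrix-Bernstein'' sharpening noted in Section \ref{sec:contributions}) combined with the row-incoherence estimate $\|\mtx{M}_{l,\cdot}\|_2 \lesssim \sqrt{\mu r/n_2}\,\sigma_1(\mtx{M})$ delivers a high-probability bound on this perturbation that is significantly tighter than what a direct Frobenius estimate on $\mtx{M}^0-\mtx{M}$ would produce. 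Feeding this into Lemma \ref{ma_lemma37}, together with the factor closeness \eqref{eq:ini1} and the spectral-gap certification from Lemma \ref{lemma_spectral_gap}, yields \eqref{eq:ini3}.

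\emph{Inequality \eqref{eq:ini2}.} This is the most delicate bound. Specialize to $l \leq n_1$; the column case is symmetric. Using the singular-vector identity $\mtx{M}^{0,(l)}\widetilde{\mtx{Y}}^{0,(l)} = \widetilde{\mtx{X}}^{0,(l)}\mtx{\Sigma}^{0,(l)}$, extracting the $l$-th row, and crucially using the defining equality $(\mtx{M}^{0,(l)})_{l,\cdot} = \mtx{M}_{l,\cdot}$, we obtain the closed forms
\[
(\mtx{X}^{0,(l)})_{l,\cdot} \;=\; (\mtx{\Sigma}^{0,(l)})^{-1/2}(\widetilde{\mtx{Y}}^{0,(l)})^{\top}\mtx{M}_{l,\cdot},
\qquad
\mtx{U}_{l,\cdot} \;=\; \mtx{\Sigma}^{-1/2}\widetilde{\mtx{V}}^{\top}\mtx{M}_{l,\cdot}.
\]
After inserting the rotation $\mtx{R}^{0,(l)}$ and subtracting, the $l$-th row of $\mtx{X}^{0,(l)}\mtx{R}^{0,(l)} - \mtx{U}$ splits into a singular-vector perturbation piece controlled by a Wedin $\sin\Theta$ bound on $\|\widetilde{\mtx{Y}}^{0,(l)}\mtx{R}^{0,(l)} - \widetilde{\mtx{V}}\|$ (applied to $\mtx{M}^{0,(l)}$ versus $\mtx{M}$) and a singular-value perturbation piece controlled by Weyl's inequality on $\mtx{\Sigma}^{0,(l)}$ versus $\mtx{\Sigma}$; each is then multiplied by the incoherence factor $\|\mtx{M}_{l,\cdot}\|_2 \lesssim \sqrt{\mu r/n_2}\,\sigma_1(\mtx{M})$. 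An analogous decomposition handles $(\mtx{Y}^{0,(l)})_{l,\cdot} - \mtx{V}_{l,\cdot}$. The hard part is achieving the target $\mu^2 r^2/(n_1\wedge n_2)^2$ rate without incurring an extra $\sqrt{r}$ loss when combining the row-wise incoherence factor with the spectral-norm perturbation; this is precisely where the $\sqrt{r}$ gain from Lemma \ref{chen_lemma2} is essential, and careful bookkeeping of the $\kappa^7$ power accumulated through repeated inversions of $\mtx{\Sigma}$ and spectral-gap inflations then completes the argument.
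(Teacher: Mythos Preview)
Your high-level strategy for \eqref{eq:ini1} and \eqref{eq:ini2} is in the right spirit, though several lemma attributions are off: Lemma~\ref{ma_lemma36} is the Mathias sign-matrix perturbation and does \emph{not} convert $\|\mtx{X}^0(\mtx{Y}^0)^\top-\mtx{M}\|$ into a factor bound; Lemma~\ref{lemma_spectral_gap} plays no role in the initialization analysis. The paper instead passes to the symmetric dilation $\overline{\mtx{M}}=\left[\begin{smallmatrix}\mtx{0}&\mtx{M}\\ \mtx{M}^\top&\mtx{0}\end{smallmatrix}\right]$ and applies the eigenvector-perturbation Lemmas~\ref{ma_lemma45}--\ref{ma_lemma47} (which live on symmetric matrices) together with Lemma~\ref{chen_lemma2}. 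Your direct rectangular/Wedin route could be made to work for \eqref{eq:ini1} and \eqref{eq:ini2}, but the dilation is what the paper actually does.

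The genuine gap is in your argument for \eqref{eq:ini3}. Bounding $\|\mtx{M}^0-\mtx{M}^{0,(l)}\|$ (or its Frobenius norm) by Bernstein on the single nonzero row gives only
\[
\|\mtx{M}^0-\mtx{M}^{0,(l)}\|\;\lesssim\;\sqrt{\tfrac{\mu r\log(n_1\vee n_2)}{(n_1\wedge n_2)p}}\,\sigma_1(\mtx{M}),
\]
and any factor-perturbation lemma (Wedin plus Lemma~\ref{ma_lemma37}) applied to this yields a factor Frobenius error of order $\sqrt{\mu r\log n/((n_1\wedge n_2)p)}\,\sqrt{\sigma_1(\mtx{M})}$ up to $\kappa$ powers --- too large by a factor $\sqrt{(n_1\wedge n_2)/(\mu r)}$ compared with the target $\sqrt{\mu^2 r^2\log n/((n_1\wedge n_2)^2 p)}$. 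The paper recovers this missing $\sqrt{\mu r/(n_1\wedge n_2)}$ by a different route: it bounds $\bigl\|(\overline{\mtx{M}}^0-\overline{\mtx{M}}^{0,(l)})\widetilde{\mtx{Z}}^{0,(l)}\bigr\|_F$ directly (this is what Davis--Kahan actually needs), applies matrix Bernstein termwise, and crucially uses that the leave-one-out eigenvectors $\widetilde{\mtx{Z}}^{0,(l)}$ are themselves $\ell_{2,\infty}$-incoherent, i.e.\ $\|\widetilde{\mtx{Z}}^{0,(l)}\|_{2,\infty}\lesssim \kappa^2\|\widetilde{\mtx{W}}\|_{2,\infty}\lesssim\kappa^2\sqrt{\mu r/(n_1\wedge n_2)}$. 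Establishing that $\ell_{2,\infty}$ bound is a separate nontrivial step (Claim~\ref{claim_01} in the paper), which in turn relies on the entrywise eigenvector perturbation results of \cite{abbe2017entrywise} (Lemma~\ref{abbe_lemma4}). Your sketch omits this entire mechanism, and without it the rate in \eqref{eq:ini3} cannot be reached.
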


The detailed proof of Lemma \ref{lemma_initialization} is deferred to Appendix \ref{sec:initialization_proof}, while we here highlight some key ideas in the proof. 
First, in order to transform the problem of rectangular matrix completion into symmetric matrix completion, the trick of ``symmetric dilation" introduced in \cite{paulsen2002completely,abbe2017entrywise} is employed.  Moreover, a major technical novelty in our proof is to replace \citet[Lemma 39]{ma2017implicit} with \citet[Lemma 2]{chen2015incoherence} to obtain sharper error bounds as shown in \eqref{eq:ini1}, \eqref{eq:ini2} and \eqref{eq:ini3}. We restate that lemma here:

\begin{lemma}[Modification of {\citet[Lemma 2]{chen2015incoherence}}]\label{chen_lemma2}
	Let $\boldsymbol{A}$ be any fixed $n_1\times n_2$ matrix, and let the index set $\Omega \in [n_1] \times [n_2]$ satisfy the i.i.d. Bernoulli model with parameter $p$. Denote
	\[
		\overline{\boldsymbol{A}} \coloneqq \left[ \begin{array}{cc}
			\boldsymbol{0} & \boldsymbol{A}\\
			\boldsymbol{A}^\top & \boldsymbol{0}
		\end{array}\right],
	\]
	\[
	\begin{split} 
		 \overline{\Omega} \coloneqq \{(i,j)\mid& 1\leqslant i,j\leqslant n_1+n_2, (i,j-n_1)\in \Omega\;\textrm{or}\; (j,i-n_1)\in\Omega\}.
	\end{split}	
	\]
	There is an absolute constant $C_4$ and an event $E_{Ch}$ with probability $\mathbb{P}[E_{Ch}]\geqslant 1-(n_1+n_2)^{-11}$, such that for all $1\leqslant l\leqslant n_1+n_2$, there holds
		\begin{equation}\label{eq_chen01}
		\begin{split}
			& \left\|\frac{1}{p}\mathcal{P}_{\overline{\Omega}_{-l}}(\overline{\boldsymbol{A}}) + \mathcal{P}_{l}(\overline{\boldsymbol{A}}) - \overline{\boldsymbol{A}}\right\| \\
			\leqslant & \left\|\frac{1}{p}\mathcal{P}_{\overline{\Omega}}(\overline{\boldsymbol{A}}) - \overline{\boldsymbol{A}}\right\|\\
			\leqslant& C_4\left( \frac{\log (n_1\vee n_2)}{p}\|\overline{\boldsymbol{A}}\|_{\ell_{\infty}} +\sqrt{\frac{\log (n_1\vee n_2)}{p}}\|\overline{\boldsymbol{A}}\|_{2,\infty} \right).
			\end{split}
		\end{equation}
Here
\[
			\mathcal{P}_{\overline{\Omega}_{-l}}(\overline{\boldsymbol{A}}) := \sum_{(i,j)\in\overline{\Omega},i\neq l,j\neq l}  \overline{A}_{i,j}\boldsymbol{e}_i\boldsymbol{e}_j^\top,\]
			\[\mathcal{P}_l(\overline{\boldsymbol{A}}) := \sum_{(i,j)\in[n_1+n_2]\times [n_1+n_2],i= l\;\textrm{or}\;j = l}  \overline{A}_{i,j}\boldsymbol{e}_i\boldsymbol{e}_j^\top,
\]
and $\vct{e}_1, \ldots \vct{e}_{n_1 + n_2}$ are the standard basis of $\mathbb{R}^{n_1 + n_2}$.
\end{lemma}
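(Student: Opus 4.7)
The plan is to establish the two inequalities in \eqref{eq_chen01} separately. The right-hand inequality is a matrix Bernstein bound on a sum of independent symmetric rank-two random matrices; the left-hand inequality then follows by a deterministic compression argument that requires no additional probability.

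For the right-hand bound, I would decompose the residual as
\[
\frac{1}{p}\mathcal{P}_{\overline{\Omega}}(\overline{\boldsymbol{A}}) - \overline{\boldsymbol{A}} = \sum_{i=1}^{n_1}\sum_{j=1}^{n_2}\boldsymbol{Z}_{ij}, \qquad \boldsymbol{Z}_{ij} := \frac{\delta_{ij}-p}{p}A_{ij}\bigl(\vct{e}_i\vct{e}_{n_1+j}^\top + \vct{e}_{n_1+j}\vct{e}_i^\top\bigr),
\]
where $\delta_{ij}$ is the Bernoulli indicator for $(i,j)\in\Omega$; the block structure of $\overline{\boldsymbol{A}}$ (diagonal blocks are zero) ensures no other terms appear. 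Each $\boldsymbol{Z}_{ij}$ is symmetric, mean-zero, and independent across $(i,j)$. Using $(\vct{e}_i\vct{e}_{n_1+j}^\top+\vct{e}_{n_1+j}\vct{e}_i^\top)^2 = \vct{e}_i\vct{e}_i^\top + \vct{e}_{n_1+j}\vct{e}_{n_1+j}^\top$, a short calculation yields the uniform bound $\|\boldsymbol{Z}_{ij}\| \leqslant \|\overline{\boldsymbol{A}}\|_{\ell_\infty}/p$ and the variance proxy $\|\sum_{ij}\E[\boldsymbol{Z}_{ij}^2]\| \leqslant \|\overline{\boldsymbol{A}}\|_{2,\infty}^2/p$. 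Feeding these into the matrix Bernstein inequality for a deviation parameter proportional to $\sqrt{\log(n_1\vee n_2)/p}\,\|\overline{\boldsymbol{A}}\|_{2,\infty} + (\log(n_1\vee n_2)/p)\,\|\overline{\boldsymbol{A}}\|_{\ell_\infty}$ and choosing the absolute constant large enough produces the stated right-hand bound on an event $E_{Ch}$ of probability at least $1-(n_1+n_2)^{-11}$.

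For the left-hand inequality, I would observe that on the $l$-th row and column the two pieces $\frac{1}{p}\mathcal{P}_{\overline{\Omega}_{-l}}(\overline{\boldsymbol{A}})$ and $\mathcal{P}_l(\overline{\boldsymbol{A}})$ together restore exactly the $l$-th row and column of $\overline{\boldsymbol{A}}$, so those entries cancel against the $-\overline{\boldsymbol{A}}$ term. The residual matrix is therefore supported on the complement of the $l$-th row and column, which, with $\boldsymbol{P}_l := \boldsymbol{I}-\vct{e}_l\vct{e}_l^\top$, rewrites as
\[
\frac{1}{p}\mathcal{P}_{\overline{\Omega}_{-l}}(\overline{\boldsymbol{A}}) + \mathcal{P}_l(\overline{\boldsymbol{A}}) - \overline{\boldsymbol{A}} \;=\; \boldsymbol{P}_l\!\left(\frac{1}{p}\mathcal{P}_{\overline{\Omega}}(\overline{\boldsymbol{A}}) - \overline{\boldsymbol{A}}\right)\!\boldsymbol{P}_l.
\]
Since $\|\boldsymbol{P}_l\| = 1$, the spectral norm on the left is dominated by the one in the middle of \eqref{eq_chen01}. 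This step is purely deterministic, so the same event $E_{Ch}$ delivers the bound for every $l\in\{1,\dots,n_1+n_2\}$ simultaneously without any additional union bound.

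I do not anticipate a conceptually hard step: the main obstacle is merely bookkeeping. One must verify that the matrix Bernstein constants line up to give the tail $(n_1+n_2)^{-11}$ after accounting for the dimension factor $2(n_1+n_2)$ in front of the exponential, and that the double sum really covers all off-diagonal entries of $\overline{\boldsymbol{A}}$ exactly once. Both are routine once the symmetric dilation and the projector identity above are set up.
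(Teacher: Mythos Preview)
Your proposal is correct and matches the paper's argument in spirit. For the second inequality the paper simply cites \cite{chen2015incoherence} for the rectangular bound and transfers it to the dilation via $\|\frac{1}{p}\mathcal{P}_{\overline{\Omega}}(\overline{\boldsymbol{A}}) - \overline{\boldsymbol{A}}\| = \|\frac{1}{p}\mathcal{P}_{\Omega}(\boldsymbol{A}) - \boldsymbol{A}\|$, whereas you re-derive the same matrix Bernstein estimate directly on the symmetric dilation; for the first inequality the paper says the left-hand side is the right-hand side with the $l$-th row and column zeroed out, which is exactly your projector identity $\boldsymbol{P}_l(\cdot)\boldsymbol{P}_l$.
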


The second inequality in \eqref{eq_chen01} is directly implied by \cite{chen2015incoherence}. In fact, \cite{chen2015incoherence} yields the bound for $\left\|\frac{1}{p}\mathcal{P}_{\Omega}(\boldsymbol{A}) - \boldsymbol{A}\right\|$. On the other hand, the equalities
	\[
	\begin{split}
		& \left\|\frac{1}{p}\mathcal{P}_{\overline{\Omega}}(\overline{\boldsymbol{A}}) - \overline{\boldsymbol{A}}\right\| \\
		=& \left\|\left[\begin{array}{cc}
			\boldsymbol{0} & \frac{1}{p}\mathcal{P}_{\Omega}(\boldsymbol{A}) - \boldsymbol{A}\\
			\left(\frac{1}{p}\mathcal{P}_{\Omega}(\boldsymbol{A}) - \boldsymbol{A}\right)^\top & \boldsymbol{0}
		\end{array}\right]\right\| \\
		=& \left\| \frac{1}{p}\mathcal{P}_{\Omega}(\boldsymbol{A}) - \boldsymbol{A}\right\|
	\end{split}
	\]	
as well as	$\|\overline{\boldsymbol{A}}\|_{\ell_{\infty}} = \|\boldsymbol{A}\|_{\ell_{\infty}} $ and $\|\overline{\boldsymbol{A}}\|_{2,\infty} = \max\{ \|\boldsymbol{A}\|_{2,\infty},\|\boldsymbol{A}^\top\|_{2,\infty}\}$ translate the bound in \cite{chen2015incoherence} to our result. As to the first inequality in \eqref{eq_chen01}, it holds due simply to the fact that $\frac{1}{p}\mathcal{P}_{\overline{\Omega}_{-l}}(\overline{\boldsymbol{A}}) + \mathcal{P}_{l}(\overline{\boldsymbol{A}}) - \overline{\boldsymbol{A}}$ is essentially a submatrix of $\frac{1}{p}\mathcal{P}_{\overline{\Omega}}(\overline{\boldsymbol{A}}) - \overline{\boldsymbol{A}}$ (the $l$-th column and $l$-th row are changed to zeros.)

\subsection{Analysis for the leave-one-out sequences}
In this section we are about to introduce the lemma that guarantees the convergence of the gradient descent for the nonconvex optimization \eqref{eq:mc_noncvx} with the leave-one-out technique. To be concrete, we are going to control certain distances between the gradient descent sequence $(\mtx{X}^t, \mtx{Y}^t)$ in \eqref{eq:grad_desc}, the leave-one-out sequences $(\mtx{X}^{t, (l)}, \mtx{Y}^{t, (l)})$ in \eqref{eq:leaveoneout1}, \eqref{eq:leaveoneout2}, \eqref{eq:leaveoneout3} and \eqref{eq:leaveoneout4}, and the low-rank factors $(\mtx{U}, \mtx{V})$. Again, we denote some orthogonal matrices that relate $(\mtx{X}^t, \mtx{Y}^t)$, $(\mtx{X}^{t, (l)}, \mtx{Y}^{t, (l)})$ and $(\mtx{U}, \mtx{V})$ for $1\leqslant l\leqslant n_1+n_2$:
\begin{equation}\label{eq:Rt}
	\begin{split}
		\boldsymbol{R}^t \coloneqq & \argmin_{\boldsymbol{R}\in\mathsf{O}(r)} \left\|\left[ \begin{array}{c} \boldsymbol{X}^t \\ \boldsymbol{Y}^t \end{array} \right] \boldsymbol{R}-\left[ \begin{array}{c} \boldsymbol{U}  \\ \boldsymbol{V}  \end{array} \right]\right\|_F,\\
		\boldsymbol{R}^{t,(l)} \coloneqq & \argmin_{\boldsymbol{R}\in\mathsf{O}(r)} \left\|\left[ \begin{array}{c} \boldsymbol{X}^{t,(l)} \\ \boldsymbol{Y}^{t,(l)} \end{array} \right] \boldsymbol{R}-\left[ \begin{array}{c} \boldsymbol{U}  \\ \boldsymbol{V}  \end{array} \right]\right\|_F,\\
		\boldsymbol{T}^{t,(l)} \coloneqq & \argmin_{\boldsymbol{R}\in\mathsf{O}(r)} \left\|\left[ \begin{array}{c} \boldsymbol{X}^t \\ \boldsymbol{Y}^t \end{array} \right] \boldsymbol{R}^t- \left[ \begin{array}{c} \boldsymbol{X}^{t,(l)} \\ \boldsymbol{Y}^{t,(l)} \end{array} \right] \boldsymbol{R}  \right\|_F.
	\end{split}
\end{equation}

\begin{lemma}
\label{lemma_induction}
Suppose that the the step size satisfies 
\[
\eta \leqslant \frac{ \sigma_r(\boldsymbol{M})}{200\sigma_1^2(\boldsymbol{M})},
\]
and that the sampling rate satisfies 
\[
p \geqslant C_{S3}\frac{\mu^2 r^2 \kappa^{14}\log (n_1\vee n_2)}{n_1\wedge n_2} 
\]
for some absolute constant $C_{S3}$.

For any fixed $t\geqslant 0$, if on an event $E_{gd}^t\subset E_H$ (defined in Lemma \ref{lemma_hessian}) there hold
\begin{align}
		  \left\| \left[ \begin{array}{c} \boldsymbol{X}^t \\\boldsymbol{Y}^t \end{array}\right]\boldsymbol{R}^t - \left[ \begin{array}{c} \boldsymbol{U}\\ \boldsymbol{V}\end{array}\right] \right\|  \leqslant & C_I \rho^t \sqrt{\frac{\mu r \kappa^6 \log (n_1\vee n_2)}{(n_1\wedge n_2)p}}\sqrt{\sigma_1(\boldsymbol{M})},\label{eq_ind2}  \\  
		 \left\| \left( \left[ \begin{array}{c} \boldsymbol{X}^{t,(l)}\\\boldsymbol{Y}^{t,(l)} \end{array}\right]\boldsymbol{R}^{t,(l)} - \left[ \begin{array}{c} \boldsymbol{U}\\\boldsymbol{V} \end{array} \right] \right)_{l,\cdot} \right\|_2  \leqslant & 100C_I \rho^t \sqrt{\frac{\mu^2 r^2 \kappa^{10} \log (n_1\vee n_2)}{(n_1\wedge n_2)^2 p}}\sqrt{\sigma_1(\boldsymbol{M})},\label{eq_ind4}\\ 
		 \left\| \left[ \begin{array}{c} \boldsymbol{X}^t \\\boldsymbol{Y}^t \end{array}\right]\boldsymbol{R}^t - \left[\begin{array}{c} \boldsymbol{X}^{t,(l)} \\ \boldsymbol{Y}^{t,(l)} \end{array}\right] \boldsymbol{T}^{t,(l)} \right\|_F \leqslant & C_I \rho^t \sqrt{\frac{\mu^2 r^2 \kappa^{10}\log (n_1\vee n_2)}{(n_1\wedge n_2)^2 p}}\sqrt{\sigma_1(\boldsymbol{M})}, \label{eq_ind3}\\
		 \left\| \left[ \begin{array}{c} \boldsymbol{X}^t \\\boldsymbol{Y}^t \end{array}\right]\boldsymbol{R}^t - \left[ \begin{array}{c} \boldsymbol{U}\\ \boldsymbol{V}\end{array}\right] \right\|_{2,\infty} \leqslant & 110 C_I \rho^t\sqrt{\frac{\mu^2 r^2 \kappa^{12} \log (n_1\vee n_2)}{(n_1\wedge n_2)^2p}}\sqrt{\sigma_1(\boldsymbol{M})},\label{eq_ind5}
\end{align}

for all $1\leqslant l\leqslant n_1+n_2$, where $C_I$ is the absolute constant defined in Lemma \ref{lemma_initialization} and  $\rho := 1-0.05 \eta \sigma_r(\boldsymbol{M})$, then on an event $E_{gd}^{t+1}\subset E_{gd}^t$ satisfying $\mathbb{P}[E_{gd}^t \backslash E_{gd}^{t+1}]\leqslant (n_1+n_2)^{-10}$, the above inequalities \eqref{eq_ind2}, \eqref{eq_ind4}, \eqref{eq_ind3} and \eqref{eq_ind5} also hold for $t+1$.
	% \begin{align*}
	% 	\left\| \left[ \begin{array}{c} \boldsymbol{X}^{t+1} \\\boldsymbol{Y}^{t+1} \end{array}\right]\boldsymbol{R}^{t+1} - \left[ \begin{array}{c} \boldsymbol{U}\\ \boldsymbol{V}\end{array}\right] \right\| \leqslant & C_1\rho^{t+1} \sqrt{\frac{\mu r \kappa^6 \log (n_1\vee n_2)}{(n_1\wedge n_2)p}}\sqrt{\sigma_1(\boldsymbol{M})}, \\
	% 	\left\| \left( \left[ \begin{array}{c} \boldsymbol{X}^{t+1,(l)}\\\boldsymbol{Y}^{t+1,(l)} \end{array}\right]\boldsymbol{R}^{t+1,(l)} - \left[ \begin{array}{c} \boldsymbol{U}\\\boldsymbol{V} \end{array} \right] \right)_{l,\cdot} \right\|_2 \leqslant & 100C_1 \rho^{t+1} \sqrt{\frac{\mu^2 r^2 \kappa^{10} \log (n_1\vee n_2)}{(n_1\wedge n_2)^2 p}}\sqrt{\sigma_1(\boldsymbol{M})}, \\
	% 	\left\| \left[ \begin{array}{c} \boldsymbol{X}^{t+1} \\\boldsymbol{Y}^{t+1} \end{array}\right]\boldsymbol{R}^{t+1} - \left[\begin{array}{c} \boldsymbol{X}^{t+1,(l)} \\ \boldsymbol{Y}^{t+1,(l)} \end{array}\right] \boldsymbol{T}^{t+1,(l)} \right\|_F\leqslant & C_2\rho^{t+1} \sqrt{\frac{\mu^2 r^2 \kappa^{10}\log (n_1\vee n_2)}{(n_1\wedge n_2)^2 p}}\sqrt{\sigma_1(\boldsymbol{M})}.
	% \end{align*}
\end{lemma}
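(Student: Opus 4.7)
The plan is a joint induction that propagates \eqref{eq_ind2}--\eqref{eq_ind5} together from step $t$ to step $t+1$: each of the four bounds at step $t+1$ will use all four hypotheses at step $t$ and, within step $t+1$, \eqref{eq_ind5} will be derived from \eqref{eq_ind4} and \eqref{eq_ind3}. Before invoking Lemma \ref{lemma_hessian}, I would first check that the inductive $\ell_{2,\infty}$ bound \eqref{eq_ind5}, combined with $\|\mtx{U}\|_{2,\infty},\|\mtx{V}\|_{2,\infty}\leqslant \sqrt{\mu r \sigma_1(\mtx{M})/(n_1\wedge n_2)}$, places the rotated iterate $\bigl[\begin{smallmatrix}\mtx{X}^t\\ \mtx{Y}^t\end{smallmatrix}\bigr]\mtx{R}^t$ (and, through \eqref{eq_ind3}, each leave-one-out iterate) inside the ball \eqref{lemma_hessian1}; this is precisely where the sampling condition $p\gtrsim \mu^2 r^2 \kappa^{14}\log(n_1\vee n_2)/(n_1\wedge n_2)$ is consumed, since the bracketed factor involves $\kappa^{12}$.

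\textbf{Step 1: proving \eqref{eq_ind2}.} Since $(\mtx{U},\mtx{V})$ is a critical point of the restricted objective on the invariant manifold, I would write the GD update as
\[
\left[\begin{array}{c}\mtx{X}^{t+1}\\ \mtx{Y}^{t+1}\end{array}\right]\mtx{R}^{t}-\left[\begin{array}{c}\mtx{U}\\ \mtx{V}\end{array}\right]=\left[\begin{array}{c}\mtx{X}^{t}\\ \mtx{Y}^{t}\end{array}\right]\mtx{R}^{t}-\left[\begin{array}{c}\mtx{U}\\ \mtx{V}\end{array}\right]-\eta\,\nabla f(\mtx{X}^t,\mtx{Y}^t)\mtx{R}^t,
\]
expand $\nabla f$ as an integral of $\nabla^2 f$ along the segment from $(\mtx{U},\mtx{V})$ to $(\mtx{X}^t,\mtx{Y}^t)$, and use the restricted strong convexity \eqref{eq_hessian1} and the smoothness \eqref{eq_hessian2} to obtain the standard quadratic contraction factor $1-\tfrac{2}{5}\eta\sigma_r(\mtx{M})+25\eta^2\sigma_1^2(\mtx{M})\leqslant \rho^2$ under $\eta\leqslant \sigma_r(\mtx{M})/(200\sigma_1^2(\mtx{M}))$. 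The left-hand side with $\mtx{R}^{t+1}$ in place of $\mtx{R}^t$ is only smaller, and the spectral-norm statement follows from the Frobenius version by $\|\cdot\|\leqslant\|\cdot\|_F$.

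\textbf{Step 2: proving \eqref{eq_ind3}.} Subtracting the GD recursion from the leave-one-out recursion \eqref{eq:leaveoneout1}--\eqref{eq:leaveoneout2} (resp.\ \eqref{eq:leaveoneout3}--\eqref{eq:leaveoneout4}) produces two pieces: (i) a common-Hessian term that contracts the previous Frobenius distance at rate $\rho$ by the argument of Step 1, and (ii) a leave-one-out residual of the form $\eta\bigl(\tfrac{1}{p}\mathcal{P}_{\Omega_{l,\cdot}}-\mathcal{P}_{l,\cdot}\bigr)(\mtx{X}^{t,(l)}(\mtx{Y}^{t,(l)})^\top-\mtx{M})\mtx{Y}^{t,(l)}$ (and its transpose counterpart). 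Because $(\mtx{X}^{t,(l)},\mtx{Y}^{t,(l)})$ is independent of the randomness in $\mathcal{P}_{\Omega_{l,\cdot}}$, a matrix Bernstein estimate conditional on $(\mtx{X}^{t,(l)},\mtx{Y}^{t,(l)})$ controls this residual by $\eta\sqrt{\log(n_1\vee n_2)/p}$ times the $l$-th row/column $\ell_2$-norm of $\mtx{X}^{t,(l)}(\mtx{Y}^{t,(l)})^\top-\mtx{M}$ times $\|\mtx{Y}^{t,(l)}\|_{2,\infty}$; the row norm is bounded via \eqref{eq_ind4} at step $t$ and the $\ell_{2,\infty}$ factor via \eqref{eq_ind5} at step $t$ (plus incoherence of $\mtx{V}$). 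The slack $\rho^t-\rho^{t+1}=0.05\eta\sigma_r(\mtx{M})\rho^t$ absorbs the extra $\eta$ and closes \eqref{eq_ind3} with the same constant $C_I$.

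\textbf{Step 3: proving \eqref{eq_ind4} and \eqref{eq_ind5}.} Restricting \eqref{eq:leaveoneout1}--\eqref{eq:leaveoneout2} (or \eqref{eq:leaveoneout3}--\eqref{eq:leaveoneout4}) to the $l$-th row (or $(l-n_1)$-th column) makes $\mathcal{P}_{\Omega_{-l,\cdot}}$ disappear, so the $l$-th row of $\mtx{X}^{t+1,(l)}$ is an \emph{explicit deterministic} affine map of $(\mtx{X}^{t,(l)}_{l,\cdot},\mtx{Y}^{t,(l)})$ with exactly the structure of the full GD update. Tracking this one-row recursion and bounding it in terms of $(\mtx{X}^{t,(l)}\mtx{R}^{t,(l)}-\mtx{U})_{l,\cdot}$ and the Frobenius bound on $\mtx{Y}^{t,(l)}\mtx{R}^{t,(l)}-\mtx{V}$, together with a small Davis--Kahan correction for the rotation change $\mtx{R}^{t+1,(l)}(\mtx{R}^{t,(l)})^\top-\mtx{I}$ (itself controlled by \eqref{eq_ind3} at $t+1$), gives \eqref{eq_ind4} at $t+1$. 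Finally, \eqref{eq_ind5} follows from the triangle inequality
\[
\|(\text{GD iterate}-[\mtx{U};\mtx{V}])_{l,\cdot}\|_2 \leqslant \|(\text{LOO iterate}-[\mtx{U};\mtx{V}])_{l,\cdot}\|_2+\|\text{GD iterate}-\text{LOO iterate}\|_F,
\]
applied with the GD and leave-one-out iterates rotated by $\mtx{R}^{t+1}$ and $\mtx{R}^{t+1,(l)}$ respectively, using $\mtx{T}^{t+1,(l)}\approx \mtx{R}^{t+1,(l)}(\mtx{R}^{t+1})^\top$.

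\textbf{Main obstacle.} The sharp bookkeeping in Step 2 is the crux: the absolute constants in the Bernstein application must re-close the induction with the \emph{same} $C_I$ from Lemma \ref{lemma_initialization}, and the arithmetic of $\kappa$-powers has to end up with $\kappa^{10}$ in \eqref{eq_ind3}/\eqref{eq_ind4} and $\kappa^{12}$ in \eqref{eq_ind5}, starting from only the $\kappa^{14}\log(n_1\vee n_2)/(n_1\wedge n_2)$ budget in $p$. This is precisely where the technical improvements from Section \ref{sec:contributions} pay off: using Lemma \ref{chen_lemma2} in place of cruder operator-norm estimates saves the crucial $\sqrt r$ that turns an $\widetilde{O}(r^3/n)$ rate into the $\widetilde{O}(r^2/(n_1\wedge n_2))$ rate claimed in Theorem \ref{thm:main}.
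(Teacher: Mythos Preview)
Your overall induction scheme and Steps 2--3 are on target: the Hessian integral with Lemma~\ref{lemma_hessian} is exactly how the paper handles the contracting piece of \eqref{eq_ind3} (the term $\boldsymbol{A}_1$ in \eqref{eq_067}--\eqref{eq_085}), the Bernstein argument for the leave-one-out residual matches the treatment of $\boldsymbol{b}_1,\boldsymbol{b}_2$ in \eqref{eq_020}, and the row-wise recursion for \eqref{eq_ind4} together with the triangle-inequality derivation of \eqref{eq_ind5} are essentially Sections~4.3 and~\ref{sec_proof_eq_ind5}.

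The real gap is Step~1. The Hessian route yields only a \emph{Frobenius} contraction
\[
\left\|\left[\begin{array}{c}\boldsymbol{X}^{t+1}\\\boldsymbol{Y}^{t+1}\end{array}\right]\boldsymbol{R}^{t+1}-\left[\begin{array}{c}\boldsymbol{U}\\\boldsymbol{V}\end{array}\right]\right\|_F\leqslant \rho\left\|\left[\begin{array}{c}\boldsymbol{X}^{t}\\\boldsymbol{Y}^{t}\end{array}\right]\boldsymbol{R}^{t}-\left[\begin{array}{c}\boldsymbol{U}\\\boldsymbol{V}\end{array}\right]\right\|_F,
\]
because both \eqref{eq_hessian1} and \eqref{eq_hessian2} are Frobenius-type bounds on a quadratic form. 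Applying $\|\cdot\|\leqslant\|\cdot\|_F$ on the left does not close the induction on \eqref{eq_ind2}: the hypothesis you carry is the spectral bound $\|\boldsymbol{\Delta}^t\|\leqslant C_I\rho^t\sqrt{\ldots}$, not a Frobenius bound with the same right-hand side, and promoting it to Frobenius costs a factor $\sqrt{2r}$ (cf.\ \eqref{eq_050}; note also that Lemma~\ref{lemma_initialization} supplies \eqref{eq:ini1} only in spectral norm). The paper therefore does \emph{not} use Lemma~\ref{lemma_hessian} for \eqref{eq_ind2}. Instead it introduces auxiliary iterates $(\widetilde{\boldsymbol{X}}^{t+1},\widetilde{\boldsymbol{Y}}^{t+1})$ in \eqref{eq_066} and splits $\alpha_1+\alpha_2+\alpha_3$ as in \eqref{eq_046}: $\alpha_2$ is contracted \emph{directly in spectral norm} via the algebraic identity \eqref{eq_076} (using the symmetry of $(\boldsymbol{\Delta}_{\boldsymbol{X}}^t)^\top\boldsymbol{U}+(\boldsymbol{\Delta}_{\boldsymbol{Y}}^t)^\top\boldsymbol{V}$ and $\|\boldsymbol{I}-2\eta\boldsymbol{U}^\top\boldsymbol{U}\|\leqslant 1-2\eta\sigma_r(\boldsymbol{M})$), $\alpha_1$ is the sampling error controlled by Lemma~\ref{lemma_spectral_gap} plus Lemma~\ref{vu_lemma2.2}, and $\alpha_3$ is the rotation correction via Lemma~\ref{ma_lemma37} (Claim~\ref{claim_02}). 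A minor point: the $\sqrt{r}$ saving in this inductive step comes from Lemma~\ref{lemma_spectral_gap}, not Lemma~\ref{chen_lemma2}; the latter is used only in the initialization analysis of Appendix~\ref{sec:initialization_proof}.
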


 If we translate the inequalities (70) in \cite{ma2017implicit} in terms of $\sqrt{\sigma_1(\boldsymbol{M})}$, a straightforward comparison shows that our bounds are $O(\sqrt{r})$ tighter. Our key technical novelty for this improvement has been summarized in Section \ref{sec:contributions} and is thereby omitted here. The detailed proof is deferred to Section \ref{proof:induction}.

\subsection{Proof of the main theorem}
We are now ready to give a proof for the main theorem based upon the above lemmas:
\begin{proof}[Proof of Theorem \ref{thm:main}]
We choose $C_S = C_{S2}+C_{S3}+2C_I^2$ where $C_{S2}$, $C_{S3}$ and $C_I$ are defined in Lemma \ref{lemma_initialization} and \ref{lemma_induction}. Then the requirements on the sampling rate $p$ in both Lemma \ref{lemma_initialization} and \ref{lemma_induction} are satisfied. By Lemma \ref{lemma_initialization}, the inequalities \eqref{eq:ini1}, \eqref{eq:ini2} and \eqref{eq:ini3} hold on the event $E_{init}$ defined there, which implies that the inequalities \eqref{eq_ind2}, \eqref{eq_ind4} and \eqref{eq_ind3} hold for $t = 0$ on $E_{init}$. Moreover, \eqref{eq_ind5} can be straightforwardly implied by \eqref{eq_ind2}, \eqref{eq_ind4} and \eqref{eq_ind3} (the proof is deferred to Section \ref{sec_proof_eq_ind5}), and thereby also holds for $t=0$. Let $E_{gd}^0 = E_{init}$. By applying Lemma \ref{lemma_induction} iteratively for $t=1, 2, \ldots, (n_1+n_2)^3$, we know on an event $E:=E_{gd}^{(n_1+n_2)^3}\subset \cdots\subset E_{gd}^0 = E_{init}$ there holds
\[
\begin{split}
 \left\|\left[ \begin{array}{c} \boldsymbol{X}^t\\\boldsymbol{Y}^t\end{array} \right] \boldsymbol{R}^t - \left[\begin{array}{c} \boldsymbol{U}\\\boldsymbol{V} \end{array}\right]\right\| \leqslant & C_I \rho^t \sqrt{\frac{\mu r \kappa^6 \log (n_1\vee n_2)}{(n_1\wedge n_2)p}}\sqrt{\sigma_1(\boldsymbol{M})}
\end{split}
\]
for all $t$ satisfying $0\leqslant t\leqslant (n_1+n_2)^3$ and $\rho = 1-0.05 \eta \sigma_r(\boldsymbol{M})$. This further implies that
\begin{equation}
\label{eq_050}
\begin{split}
 \left\|\left[ \begin{array}{c} \boldsymbol{X}^t\\\boldsymbol{Y}^t\end{array} \right] \boldsymbol{R}^t - \left[\begin{array}{c} \boldsymbol{U}\\\boldsymbol{V} \end{array}\right]\right\|_F \leqslant & \sqrt{2r} \left\|\left[ \begin{array}{c} \boldsymbol{X}^t\\\boldsymbol{Y}^t\end{array} \right] \boldsymbol{R}^t - \left[\begin{array}{c} \boldsymbol{U}\\\boldsymbol{V} \end{array}\right]\right\| \\
 \leqslant&   \sqrt{2r}  C_I \rho^t \sqrt{\frac{\mu r \kappa^6 \log (n_1\vee n_2)}{(n_1\wedge n_2)p}}\sqrt{\sigma_1(\boldsymbol{M})}\\
  \leqslant &  \rho^t \sqrt{\sigma_r(\boldsymbol{M})},
\end{split} 
\end{equation}
where the last inequality is due to our assumption 
\[
p\geqslant 2C_I^2 \frac{\mu r^2\kappa^7 \log(n_1\vee n_2)}{n_1\wedge n_2}.
\]
Lemma \ref{lemma_induction} also implies that 
\[
\begin{split}
\mathbb{P}[E_{gd}^{(n_1+n_2)^3}]\geqslant& 1-\left(1+(n_1+n_2)^3\right)(n_1+n_2)^{-10}\\
\geqslant & 1-(n_1+n_2)^{-3},
\end{split}
\]
which gives the proof of the first part of Theorem \ref{thm:main}. If we assume additionally that $\eta\geqslant \frac{\sigma_r(\boldsymbol{M})}{1000\sigma_1^2(\boldsymbol{M})}$, which directly gives $0<\rho\leqslant 1-5\times 10^{-5}$. This implies that 
\[
\begin{split}
\rho^{(n_1+n_2)^3}\leqslant& \exp(\log(1-5\times 10^{-5})(n_1+n_2)^3)\\
\leqslant &\exp(-(n_1+n_2)^3/C_R)	.
\end{split}
\]
for some absolute constant $C_R$.	
%Choosing $C_R = -1/\log(1-5\times 10^{-5})\approx 46050$ finishes the proof.
\end{proof}

\section{Proof of Lemma \ref{lemma_induction}}
\label{proof:induction}
In this section, we give the proof of Lemma \ref{lemma_induction}. Within the proof, we will mainly follow the proof structure introduced in \cite{ma2017implicit}, and useful lemmas from \cite{ma2017implicit} such as Lemma \ref{ma_lemma37} and Lemma \ref{ma_lemma36} are intensively used. Moreover, we use Lemma \ref{lemma_spectral_gap} throughout this section to simplify the proof, and we also conduct a more meticulous application of the matrix Bernstein inequality. These efforts result an $O(\sqrt{r})$ tighter on our error bounds.

\subsection{Key Lemmas}
 
In this subsection, we list some useful lemmas which will be used to prove Lemma \ref{lemma_induction}.

First, we need a lemma from \cite{ma2017implicit}:
\begin{lemma}[{\citealt[Lemma 37]{ma2017implicit}}]\label{ma_lemma37}
	Suppose $\boldsymbol{X}_0,\boldsymbol{X}_1,\boldsymbol{X}_2\in\mathbb{R}^{n\times r}$ are matrices such that 
	\begin{equation}\label{eq_011}
			\|\boldsymbol{X}_1-\boldsymbol{X}_0\|\|\boldsymbol{X}_0\| \leqslant  \frac{\sigma_r^2(\boldsymbol{X}_0)}{2},\quad  \|\boldsymbol{X}_1-\boldsymbol{X}_2\|\|\boldsymbol{X}_0\| \leqslant  \frac{\sigma_r^2(\boldsymbol{X}_0)}{4}.
	\end{equation}
	Denote
	\[
		\boldsymbol{R}_1 \coloneqq \argmin_{\boldsymbol{R}\in\mathsf{O}(r)} \|\boldsymbol{X}_1\boldsymbol{R} -  \boldsymbol{X}_0\|_F,
		\]
	\[
	 \boldsymbol{R}_2 \coloneqq \argmin_{\boldsymbol{R}\in\mathsf{O}(r)} \|\boldsymbol{X}_2\boldsymbol{R} -  \boldsymbol{X}_0\|_F.
	\]
	Then the following two inequalities hold true:
	\[
	\begin{split}
		\|\boldsymbol{X}_1\boldsymbol{R}_1 - \boldsymbol{X}_2\boldsymbol{R}_2\| \leqslant & 5\frac{\sigma_1^2(\boldsymbol{X}_0)}{\sigma_r^2(\boldsymbol{X}_0)} \|\boldsymbol{X}_1-\boldsymbol{X}_2\|, \\
		\|\boldsymbol{X}_1\boldsymbol{R}_1 - \boldsymbol{X}_2\boldsymbol{R}_2\|_F \leqslant & 5\frac{\sigma_1^2(\boldsymbol{X}_0)}{\sigma_r^2(\boldsymbol{X}_0)} \|\boldsymbol{X}_1-\boldsymbol{X}_2\|_F.
	\end{split}
	\]
\end{lemma}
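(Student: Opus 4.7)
The plan is to reduce the claim to a perturbation bound for the orthogonal Procrustes aligner and then apply the Mathias-type bound already recorded (as Lemma \ref{ma_lemma36}) in the paper. Writing the triangle decomposition
\[
\boldsymbol{X}_1 \boldsymbol{R}_1 - \boldsymbol{X}_2 \boldsymbol{R}_2 = (\boldsymbol{X}_1 - \boldsymbol{X}_2)\boldsymbol{R}_1 + \boldsymbol{X}_1(\boldsymbol{R}_1 - \boldsymbol{R}_2) - (\boldsymbol{X}_1-\boldsymbol{X}_2)(\boldsymbol{R}_1 - \boldsymbol{R}_2),
\]
or more simply $\boldsymbol{X}_1 \boldsymbol{R}_1 - \boldsymbol{X}_2 \boldsymbol{R}_2 = (\boldsymbol{X}_1-\boldsymbol{X}_2)\boldsymbol{R}_2 + \boldsymbol{X}_1(\boldsymbol{R}_1-\boldsymbol{R}_2)$, the whole task is to control $\|\boldsymbol{R}_1 - \boldsymbol{R}_2\|$ (resp.\ the Frobenius norm) in terms of $\|\boldsymbol{X}_1 - \boldsymbol{X}_2\|$. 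Since $\boldsymbol{R}_i$ solves the orthogonal Procrustes problem against $\boldsymbol{X}_0$, it equals $\operatorname{sgn}(\boldsymbol{X}_i^\top\boldsymbol{X}_0)$, i.e.\ the orthogonal polar factor of $\boldsymbol{X}_i^\top \boldsymbol{X}_0$, so controlling $\boldsymbol{R}_1 - \boldsymbol{R}_2$ is exactly a polar-factor perturbation question.

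The first key step is to lower-bound $\sigma_r(\boldsymbol{X}_i^\top \boldsymbol{X}_0)$ so the polar factor is uniquely defined and Mathias's bound is meaningful. By Weyl's inequality,
\[
\sigma_r(\boldsymbol{X}_1^\top \boldsymbol{X}_0) \geqslant \sigma_r(\boldsymbol{X}_0^\top \boldsymbol{X}_0) - \|(\boldsymbol{X}_1 - \boldsymbol{X}_0)^\top \boldsymbol{X}_0\| \geqslant \sigma_r^2(\boldsymbol{X}_0) - \|\boldsymbol{X}_1-\boldsymbol{X}_0\|\|\boldsymbol{X}_0\| \geqslant \tfrac{1}{2}\sigma_r^2(\boldsymbol{X}_0),
\]
using the first hypothesis in \eqref{eq_011}. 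For $\boldsymbol{X}_2$, the triangle inequality and the two hypotheses in \eqref{eq_011} give $\|\boldsymbol{X}_2 - \boldsymbol{X}_0\|\|\boldsymbol{X}_0\| \leqslant \tfrac{3}{4}\sigma_r^2(\boldsymbol{X}_0)$, and the same Weyl argument yields $\sigma_r(\boldsymbol{X}_2^\top \boldsymbol{X}_0) \geqslant \tfrac{1}{4}\sigma_r^2(\boldsymbol{X}_0)$.

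Next I invoke Lemma \ref{ma_lemma36} (Mathias's perturbation bound for polar factors) with $\boldsymbol{A}_i := \boldsymbol{X}_i^\top \boldsymbol{X}_0$, whose sum-of-$\sigma_r$ denominator is at least $\tfrac{3}{4}\sigma_r^2(\boldsymbol{X}_0)$, to get
\[
\|\boldsymbol{R}_1 - \boldsymbol{R}_2\| \leqslant \frac{2\,\|(\boldsymbol{X}_1-\boldsymbol{X}_2)^\top \boldsymbol{X}_0\|}{\sigma_r(\boldsymbol{X}_1^\top \boldsymbol{X}_0) + \sigma_r(\boldsymbol{X}_2^\top \boldsymbol{X}_0)} \leqslant \frac{8\,\sigma_1(\boldsymbol{X}_0)}{3\,\sigma_r^2(\boldsymbol{X}_0)}\|\boldsymbol{X}_1-\boldsymbol{X}_2\|,
\]
with the identical inequality in Frobenius norm. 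Combined with $\|\boldsymbol{X}_1\| \leqslant \|\boldsymbol{X}_0\| + \|\boldsymbol{X}_1 - \boldsymbol{X}_0\| \leqslant \tfrac{3}{2}\sigma_1(\boldsymbol{X}_0)$ (since $\|\boldsymbol{X}_1-\boldsymbol{X}_0\|\leqslant \sigma_r^2(\boldsymbol{X}_0)/(2\|\boldsymbol{X}_0\|)\leqslant \|\boldsymbol{X}_0\|/2$) and the triangle decomposition
\[
\|\boldsymbol{X}_1 \boldsymbol{R}_1 - \boldsymbol{X}_2 \boldsymbol{R}_2\| \leqslant \|\boldsymbol{X}_1 - \boldsymbol{X}_2\| + \|\boldsymbol{X}_1\|\,\|\boldsymbol{R}_1 - \boldsymbol{R}_2\| \leqslant \left(1 + \frac{4\sigma_1^2(\boldsymbol{X}_0)}{\sigma_r^2(\boldsymbol{X}_0)}\right)\|\boldsymbol{X}_1-\boldsymbol{X}_2\|,
\]
and the Frobenius version arises by applying Frobenius norm to $\boldsymbol{X}_1 - \boldsymbol{X}_2$ and to $\boldsymbol{R}_1 - \boldsymbol{R}_2$ in the same decomposition (Mathias's bound being valid in both norms). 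Using $\sigma_1(\boldsymbol{X}_0) \geqslant \sigma_r(\boldsymbol{X}_0)$ absorbs the $1$ into a total factor of $5\,\sigma_1^2(\boldsymbol{X}_0)/\sigma_r^2(\boldsymbol{X}_0)$.

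The main obstacle is purely bookkeeping of constants: every factor $2$, $3/2$, $3/4$ above has to propagate in the right place to land on the clean constant $5$, and one must take care to perform the triangle split with $\boldsymbol{X}_1$ (whose norm enjoys the cleaner bound $\tfrac{3}{2}\sigma_1(\boldsymbol{X}_0)$) rather than with $\boldsymbol{X}_2$. Because Mathias's bound is unitarily invariant and holds verbatim in both spectral and Frobenius norms, the two conclusions of the lemma are proven in parallel with no additional work.
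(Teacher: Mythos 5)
The paper does not reproduce a proof of this lemma; it is simply cited from \citet[Lemma~37]{ma2017implicit}. Your argument is, however, mathematically correct, and it is the standard route to this kind of Procrustes stability bound: identify $\boldsymbol{R}_i = \sgn(\boldsymbol{X}_i^\top\boldsymbol{X}_0)$, lower-bound $\sigma_r(\boldsymbol{X}_i^\top\boldsymbol{X}_0)$ via Weyl using \eqref{eq_011}, invoke a polar-factor perturbation bound, and close with the split $\boldsymbol{X}_1\boldsymbol{R}_1 - \boldsymbol{X}_2\boldsymbol{R}_2 = (\boldsymbol{X}_1-\boldsymbol{X}_2)\boldsymbol{R}_2 + \boldsymbol{X}_1(\boldsymbol{R}_1-\boldsymbol{R}_2)$. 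Your constant tracking ($\tfrac32 \cdot \tfrac83 = 4$, then $1 + 4\sigma_1^2/\sigma_r^2 \leqslant 5\sigma_1^2/\sigma_r^2$) checks out.

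One point worth flagging: the polar-factor bound you actually invoke, with denominator $\sigma_r(\boldsymbol{X}_1^\top\boldsymbol{X}_0) + \sigma_r(\boldsymbol{X}_2^\top\boldsymbol{X}_0)$, is a (correct and classical) Li-type version that does not require a smallness hypothesis on the perturbation, but it is \emph{not} literally the statement recorded as Lemma~\ref{ma_lemma36} in this paper, which has denominator $\sigma_{r-1}(\boldsymbol{C}) + \sigma_r(\boldsymbol{C})$ and requires $\|\boldsymbol{E}\| \leqslant \sigma_r(\boldsymbol{C})$. If you prefer to cite Lemma~\ref{ma_lemma36} verbatim, take $\boldsymbol{C} = \boldsymbol{X}_1^\top\boldsymbol{X}_0$ and $\boldsymbol{E} = (\boldsymbol{X}_2-\boldsymbol{X}_1)^\top\boldsymbol{X}_0$; the hypothesis $\|\boldsymbol{E}\| \leqslant \sigma_r(\boldsymbol{C})$ holds since $\|\boldsymbol{E}\| \leqslant \tfrac14\sigma_r^2(\boldsymbol{X}_0) \leqslant \tfrac12 \sigma_r(\boldsymbol{C})$, and $\sigma_{r-1}(\boldsymbol{C}) + \sigma_r(\boldsymbol{C}) \geqslant 2\sigma_r(\boldsymbol{C}) \geqslant \sigma_r^2(\boldsymbol{X}_0)$ then gives $\|\boldsymbol{R}_1-\boldsymbol{R}_2\| \leqslant 2\sigma_1(\boldsymbol{X}_0)\|\boldsymbol{X}_1-\boldsymbol{X}_2\|/\sigma_r^2(\boldsymbol{X}_0)$, slightly sharper than your $8/3$ factor. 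Either route lands comfortably inside the stated constant $5$.
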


In order to control $\|\mathcal{P}_{\Omega}(\boldsymbol{A}\boldsymbol{B}^\top) - p \boldsymbol{A}\boldsymbol{B}^\top\|$, \cite{bhojanapalli2014universal} and \cite{li2016recovery} introduced the following spectral lemma: 
\begin{lemma}[{\citealt{bhojanapalli2014universal,li2016recovery}}]
\label{lemma_spectral_gap}
Let $\Omega\subset[n_1]\times [n_2]$ be set of indices of revealed entries, and $\boldsymbol{\Omega}$ be the matrix such that $\Omega_{i,j} = 1$ if $(i,j)\in\Omega$, $\Omega_{i,j} = 0$ otherwise. For any matrix $\boldsymbol{A},\boldsymbol{B}$ with suitable shape, we have
\[
\|\mathcal{P}_{\Omega}(\boldsymbol{A}\boldsymbol{B}^\top) - p\boldsymbol{A}\boldsymbol{B}^\top\|\leqslant \|\boldsymbol{\Omega}-p\boldsymbol{J}\| \|\boldsymbol{A}\|_{2,\infty}\|\boldsymbol{B}\|_{2,\infty}.
\]
\end{lemma}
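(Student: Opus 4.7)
The plan is to reduce the spectral-norm inequality to the variational characterization of the operator norm combined with Cauchy--Schwarz; the statement is purely deterministic, with no randomness used anywhere. The starting observation is that $\mathcal{P}_\Omega$ acts as entrywise multiplication by the indicator matrix $\boldsymbol{\Omega}$, so, writing $\boldsymbol{E} := \boldsymbol{\Omega} - p\boldsymbol{J}$, one has $\mathcal{P}_\Omega(\boldsymbol{A}\boldsymbol{B}^\top) - p\,\boldsymbol{A}\boldsymbol{B}^\top = \boldsymbol{E}\odot(\boldsymbol{A}\boldsymbol{B}^\top)$, where $\odot$ denotes the Hadamard product. The task reduces to bounding $\|\boldsymbol{E}\odot(\boldsymbol{A}\boldsymbol{B}^\top)\|$ by $\|\boldsymbol{E}\|\cdot\|\boldsymbol{A}\|_{2,\infty}\cdot\|\boldsymbol{B}\|_{2,\infty}$.

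Next, I would pick arbitrary unit vectors $\boldsymbol{u}\in\mathbb{R}^{n_1}$ and $\boldsymbol{v}\in\mathbb{R}^{n_2}$ and expand the resulting bilinear form using the rank factorization $(\boldsymbol{A}\boldsymbol{B}^\top)_{ij}=\sum_{k}A_{ik}B_{jk}$:
\[
\boldsymbol{u}^\top\bigl[\boldsymbol{E}\odot(\boldsymbol{A}\boldsymbol{B}^\top)\bigr]\boldsymbol{v}=\sum_{k}\sum_{i,j}(u_i A_{ik})\,E_{ij}\,(v_j B_{jk})=\sum_{k}\bigl(\boldsymbol{\alpha}^{(k)}\bigr)^\top\boldsymbol{E}\,\boldsymbol{\beta}^{(k)},
\]
where $\boldsymbol{\alpha}^{(k)}$ and $\boldsymbol{\beta}^{(k)}$ are the vectors with entries $\alpha^{(k)}_i=u_i A_{ik}$ and $\beta^{(k)}_j=v_j B_{jk}$. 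Each summand of the outer sum is at most $\|\boldsymbol{E}\|\cdot\|\boldsymbol{\alpha}^{(k)}\|_2\cdot\|\boldsymbol{\beta}^{(k)}\|_2$ by the definition of the operator norm.

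The final step is Cauchy--Schwarz across the column index $k$,
\[
\sum_{k}\|\boldsymbol{\alpha}^{(k)}\|_2\|\boldsymbol{\beta}^{(k)}\|_2\leqslant\Bigl(\sum_{k}\|\boldsymbol{\alpha}^{(k)}\|_2^2\Bigr)^{1/2}\Bigl(\sum_{k}\|\boldsymbol{\beta}^{(k)}\|_2^2\Bigr)^{1/2},
\]
followed by interchanging the order of summation to identify $\sum_{k}\|\boldsymbol{\alpha}^{(k)}\|_2^2=\sum_{i}u_i^2\,\|\boldsymbol{A}_{i,\cdot}\|_2^2\leqslant\|\boldsymbol{A}\|_{2,\infty}^2$ (using $\|\boldsymbol{u}\|_2=1$), and analogously $\sum_{k}\|\boldsymbol{\beta}^{(k)}\|_2^2\leqslant\|\boldsymbol{B}\|_{2,\infty}^2$. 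Taking the supremum over unit $\boldsymbol{u},\boldsymbol{v}$ via $\|\boldsymbol{M}\|=\sup_{\|\boldsymbol{u}\|=\|\boldsymbol{v}\|=1}\boldsymbol{u}^\top\boldsymbol{M}\boldsymbol{v}$ then delivers the claimed bound.

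This is a short argument; there is no real obstacle. The only point that deserves some care is to perform the Cauchy--Schwarz split along the column index $k$ of the factorization $\boldsymbol{A}\boldsymbol{B}^\top=\sum_{k}\boldsymbol{A}_{\cdot,k}\boldsymbol{B}_{\cdot,k}^\top$, rather than across $(i,j)$, so that the $\ell_{2,\infty}$ norms of $\boldsymbol{A},\boldsymbol{B}$ emerge naturally instead of, say, their Frobenius norms; a naive splitting would lose a factor of $\sqrt{r}$, which is precisely what this lemma is designed to avoid when it is applied later in the leave-one-out analysis.
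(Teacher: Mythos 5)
Your argument is correct and complete. Note, however, that the paper does not actually prove this lemma: it is cited from \cite{bhojanapalli2014universal} and \cite{li2016recovery} and used as a black box in the leave-one-out analysis, so there is no ``paper proof'' to compare against line by line. What you have written is the standard proof of that cited result, and it is the right one: pass to the Hadamard-product form $\boldsymbol{E}\odot(\boldsymbol{A}\boldsymbol{B}^\top)$ with $\boldsymbol{E}=\boldsymbol{\Omega}-p\boldsymbol{J}$, test against unit vectors, expand the bilinear form along the inner (rank) index $k$, bound each rank-one piece by $\|\boldsymbol{E}\|\,\|\boldsymbol{\alpha}^{(k)}\|_2\,\|\boldsymbol{\beta}^{(k)}\|_2$, and apply Cauchy--Schwarz in $k$ so that $\sum_k\|\boldsymbol{\alpha}^{(k)}\|_2^2=\sum_i u_i^2\|\boldsymbol{A}_{i,\cdot}\|_2^2\leqslant\|\boldsymbol{A}\|_{2,\infty}^2$ and similarly for $\boldsymbol{B}$. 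Your remark at the end is exactly the key point: the Cauchy--Schwarz split must be taken over $k$ rather than over the matrix entries, which is what yields the $\ell_{2,\infty}$ norms instead of Frobenius norms and avoids the spurious $\sqrt{r}$ that the authors are working to eliminate throughout. One cosmetic nit: to be fully rigorous you should keep absolute values, i.e.\ use $\|\boldsymbol{M}\|=\sup_{\|\boldsymbol{u}\|_2=\|\boldsymbol{v}\|_2=1}|\boldsymbol{u}^\top\boldsymbol{M}\boldsymbol{v}|$ and bound $\bigl|\sum_k(\boldsymbol{\alpha}^{(k)})^\top\boldsymbol{E}\boldsymbol{\beta}^{(k)}\bigr|\leqslant\sum_k\|\boldsymbol{E}\|\,\|\boldsymbol{\alpha}^{(k)}\|_2\,\|\boldsymbol{\beta}^{(k)}\|_2$, but this changes nothing substantive.
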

%This spectral lemma is a deterministic result, and does not need any assumption on $\Omega$.

In order to proceed, we also need a control of $\|\boldsymbol{\Omega}-p\boldsymbol{J}\|$, which has been discussed in the literature; see, e.g., \cite{bandeira2016sharp} and \cite{vu2018simple}:
\begin{lemma}
\label{vu_lemma2.2}
There is a constant $C_3>0$ such that if $p\geqslant C_3\frac{\log (n_1\vee n_2)}{n_1\wedge n_2}$, then on an event $E_S$ with probability $\mathbb{P}[E_S]\geqslant 1- (n_1+n_2)^{-11}$, we have 
\[
  \|\boldsymbol{\Omega}-p\boldsymbol{J}\|\leqslant C_3\sqrt{(n_1\wedge n_2)p}.
\]
\end{lemma}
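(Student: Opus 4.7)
\textbf{Proof plan for Lemma \ref{vu_lemma2.2}.}
The plan is to invoke a sharp spectral norm concentration inequality for random matrices with independent centered Bernoulli entries. The matrix $\boldsymbol{\Omega}-p\boldsymbol{J}$ has i.i.d.\ entries, each equal to $1-p$ with probability $p$ and to $-p$ with probability $1-p$; hence the entries are mean zero, bounded in absolute value by $1$, and have variance $p(1-p)\leqslant p$. The key fact we want is that the spectral norm of such a matrix concentrates sharply around $\sqrt{(n_1\vee n_2)\,p}$ without any spurious $\sqrt{\log n}$ inflation, and under the standing aspect-ratio assumption $1/C_0 < n_1/n_2 < C_0$ this matches $\sqrt{(n_1\wedge n_2)\,p}$ up to an absolute constant.

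First, I would pass to the $(n_1+n_2)\times(n_1+n_2)$ Hermitian dilation
\[
\widetilde{\boldsymbol{Z}} \coloneqq \left[\begin{array}{cc} \boldsymbol{0} & \boldsymbol{\Omega}-p\boldsymbol{J} \\ (\boldsymbol{\Omega}-p\boldsymbol{J})^\top & \boldsymbol{0} \end{array}\right],
\]
so that $\|\widetilde{\boldsymbol{Z}}\|=\|\boldsymbol{\Omega}-p\boldsymbol{J}\|$ and $\widetilde{\boldsymbol{Z}}$ is symmetric with on- and above-diagonal entries that are independent, centered, of magnitude at most $1$, and of variance at most $p$. Next, I would apply the Bandeira--Van Handel sharp bound \cite{bandeira2016sharp} with parameters $\sigma\coloneqq \max_i\sqrt{\sum_j \E\widetilde{Z}_{ij}^2}\leqslant \sqrt{(n_1\vee n_2)\,p}$ and $\sigma_*\coloneqq \max_{i,j}|\widetilde{Z}_{ij}|\leqslant 1$ to conclude
\[
\E\|\widetilde{\boldsymbol{Z}}\|\;\leqslant\;C\bigl(\sqrt{(n_1\vee n_2)\,p}+\sqrt{\log(n_1+n_2)}\bigr).
\]
The sampling-rate hypothesis forces $\sqrt{\log(n_1\vee n_2)}\leqslant \sqrt{(n_1\wedge n_2)\,p/C_3}$, and the bounded aspect ratio gives $\sqrt{(n_1\vee n_2)\,p}\leqslant \sqrt{C_0(n_1\wedge n_2)\,p}$, so both terms collapse into a constant multiple of $\sqrt{(n_1\wedge n_2)\,p}$. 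Finally, I would upgrade the expectation bound to a high-probability statement via Talagrand's concentration inequality for convex $1$-Lipschitz functions of independent bounded variables (the spectral norm is $1$-Lipschitz in Frobenius distance), tuning the deviation so that the tail probability is at most $(n_1+n_2)^{-11}$.

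The only real obstacle is bookkeeping: the same constant $C_3$ must simultaneously dominate the sampling-rate threshold and the final spectral norm bound, and the Talagrand tail exponent must outrun $11\log(n_1+n_2)$, which is why $C_3$ needs to be chosen sufficiently large. Both are routine given the sharpness of the underlying concentration inequality. A cleaner alternative is to cite \cite{vu2018simple} directly, where essentially this statement is already recorded for Bernoulli sampling matrices, in which case the lemma reduces to a quotation.
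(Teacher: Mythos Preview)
Your proposal is correct and matches the paper's approach exactly: the paper does not give a proof at all but simply cites \cite{bandeira2016sharp} and \cite{vu2018simple}, adding only the remark that the aspect-ratio assumption $1/C_0<n_1/n_2<C_0$ is used and that $C_3$ depends on $C_0$. Your plan spells out precisely how one would extract the statement from \cite{bandeira2016sharp} (dilation, the $\sigma,\sigma_*$ bound, then Talagrand for the tail), which is more detail than the paper provides, and your closing observation that one could instead quote \cite{vu2018simple} directly is exactly what the paper does.
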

Here we use the assumption that $1/C_0<n_1/n_2<C_0$ and $C_3$ is dependent on $C_0$. 

~\\
Finally, we need a lemma to control the norm of $\sgn(\boldsymbol{C}+\boldsymbol{E}) - \sgn(\boldsymbol{C})$ by the norm of $\boldsymbol{E}$:
\begin{lemma}[{\citealt{mathias1993perturbation,ma2017implicit}}]
\label{ma_lemma36}
Let $\boldsymbol{C}\in\mathbb{R}^{r\times r}$ be a nonsingular matrix. Then for any matrix $\boldsymbol{E}\in\mathbb{R}^{r\times r}$ with $\|\boldsymbol{E}\|\leqslant \sigma_r(\boldsymbol{C})$ and any unitarily invariant norm $\VERT\cdot\VERT$, one have
\[
\VERT \sgn(\boldsymbol{C}+\boldsymbol{E}) - \sgn(\boldsymbol{C}) \VERT \leqslant \frac{2}{\sigma_{r-1}(\boldsymbol{C})+\sigma_r(\boldsymbol{C})}\VERT \boldsymbol{E}\VERT.
\] 
\end{lemma}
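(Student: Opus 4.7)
This is the classical Mathias perturbation bound for the unitary polar factor (equivalently, the matrix sign function), so my plan is to reproduce his argument. Introduce the polar decompositions $\boldsymbol{C} = \boldsymbol{Q}\boldsymbol{H}$ with $\boldsymbol{Q} := \sgn(\boldsymbol{C})$ unitary and $\boldsymbol{H} := (\boldsymbol{C}^*\boldsymbol{C})^{1/2}$ positive semidefinite having eigenvalues $\sigma_i(\boldsymbol{C})$, and likewise $\boldsymbol{C}+\boldsymbol{E} = \widetilde{\boldsymbol{Q}}\widetilde{\boldsymbol{H}}$ with $\widetilde{\boldsymbol{H}} := ((\boldsymbol{C}+\boldsymbol{E})^*(\boldsymbol{C}+\boldsymbol{E}))^{1/2}$. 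Since $\boldsymbol{C}$ is nonsingular and $\|\boldsymbol{E}\|\leqslant\sigma_r(\boldsymbol{C})$, Weyl's inequality gives $\sigma_r(\boldsymbol{C}+\boldsymbol{E})\geqslant 0$, so $\widetilde{\boldsymbol{Q}}$ is a well-defined unitary (the possible ambiguity on a null subspace is resolved by completing $\widetilde{\boldsymbol{Q}}$ so as to minimize $\VERT\widetilde{\boldsymbol{Q}}-\boldsymbol{Q}\VERT$, which is the convention implicit in the bound).

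The first step is to derive a Sylvester-type identity for $\widetilde{\boldsymbol{Q}}-\boldsymbol{Q}$. From $\widetilde{\boldsymbol{Q}}\widetilde{\boldsymbol{H}} = \boldsymbol{Q}\boldsymbol{H}+\boldsymbol{E}$, subtracting $\boldsymbol{Q}\widetilde{\boldsymbol{H}}$ and $\widetilde{\boldsymbol{Q}}\boldsymbol{H}$ respectively produces
\[
(\widetilde{\boldsymbol{Q}}-\boldsymbol{Q})\widetilde{\boldsymbol{H}} \;=\; \boldsymbol{E} + \boldsymbol{Q}(\boldsymbol{H}-\widetilde{\boldsymbol{H}}), \qquad (\widetilde{\boldsymbol{Q}}-\boldsymbol{Q})\boldsymbol{H} \;=\; \boldsymbol{E} + \widetilde{\boldsymbol{Q}}(\boldsymbol{H}-\widetilde{\boldsymbol{H}}).
\]
Adding these, together with the analogous pair obtained by repeating the derivation from the left polar decompositions $\boldsymbol{C} = (\boldsymbol{Q}\boldsymbol{H}\boldsymbol{Q}^*)\boldsymbol{Q}$ and $\boldsymbol{C}+\boldsymbol{E} = (\widetilde{\boldsymbol{Q}}\widetilde{\boldsymbol{H}}\widetilde{\boldsymbol{Q}}^*)\widetilde{\boldsymbol{Q}}$, and then taking Hermitian and antihermitian symmetrizations, eliminates the Hermitian $\boldsymbol{H}-\widetilde{\boldsymbol{H}}$ contributions and leaves a clean Sylvester equation $\boldsymbol{H}\boldsymbol{Y}+\boldsymbol{Y}\widetilde{\boldsymbol{H}}=\boldsymbol{S}$, where $\boldsymbol{Y}$ satisfies $\VERT\boldsymbol{Y}\VERT=\VERT\widetilde{\boldsymbol{Q}}-\boldsymbol{Q}\VERT$ (by unitary invariance applied to $\boldsymbol{Y}=\boldsymbol{Q}^*(\widetilde{\boldsymbol{Q}}-\boldsymbol{Q})$) and $\VERT\boldsymbol{S}\VERT\leqslant 2\VERT\boldsymbol{E}\VERT$.

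The final step inverts the Sylvester operator $\mathcal{L}(\boldsymbol{Y})=\boldsymbol{H}\boldsymbol{Y}+\boldsymbol{Y}\widetilde{\boldsymbol{H}}$ in the unitarily invariant norm. Unitarily diagonalizing $\boldsymbol{H}$ and $\widetilde{\boldsymbol{H}}$ (which preserves $\VERT\cdot\VERT$) turns the inversion into entrywise division by $\sigma_i(\boldsymbol{C})+\sigma_j(\boldsymbol{C}+\boldsymbol{E})$, and Bhatia's entrywise multiplier lemma for positive divisor matrices yields $\VERT\boldsymbol{Y}\VERT\leqslant\VERT\boldsymbol{S}\VERT/d$ where $d$ is the smallest relevant divisor. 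The main obstacle is sharpening $d$ from the naive $\sigma_r(\boldsymbol{C})+\sigma_r(\boldsymbol{C}+\boldsymbol{E})$ (which could be as small as $\sigma_r(\boldsymbol{C})$ under the hypothesis) to the claimed $\sigma_{r-1}(\boldsymbol{C})+\sigma_r(\boldsymbol{C})$. This refinement uses the unitarity constraint on $\boldsymbol{W}:=\boldsymbol{Q}^*\widetilde{\boldsymbol{Q}}$---which forces $\boldsymbol{W}-\boldsymbol{I}$ to have vanishing Hermitian diagonal component to first order in the $\boldsymbol{H}$-eigenbasis---so that the entry of $\boldsymbol{Y}$ corresponding to the worst $(r,r)$ divisor is absent; combining this structural fact with the Weyl bound $\sigma_j(\boldsymbol{C}+\boldsymbol{E})\geqslant\sigma_j(\boldsymbol{C})-\sigma_r(\boldsymbol{C})$ then shows $d\geqslant\sigma_{r-1}(\boldsymbol{C})+\sigma_r(\boldsymbol{C})$. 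Tracking this cancellation carefully through the Ky Fan norms is the delicate part of Mathias's original argument that I would transcribe here.
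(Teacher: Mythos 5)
The paper does not prove this lemma; it is cited verbatim from \cite{mathias1993perturbation} (Theorem~2.4 there) via \cite{ma2017implicit}, so there is no in-paper proof to compare against. Your sketch correctly sets up the polar decompositions and correctly arrives at a Sylvester-type equation $\boldsymbol{H}\boldsymbol{Y}+\boldsymbol{Y}\widetilde{\boldsymbol{H}}=\boldsymbol{S}$ with $\VERT\boldsymbol{S}\VERT\leqslant 2\VERT\boldsymbol{E}\VERT$ (in fact the cleaner derivation is to multiply $\widetilde{\boldsymbol{Q}}\widetilde{\boldsymbol{H}}-\boldsymbol{Q}\boldsymbol{H}=\boldsymbol{E}$ on the left by $\boldsymbol{Q}^*$, take the Hermitian conjugate of the original identity, multiply on the right by $\widetilde{\boldsymbol{Q}}$, and subtract); this yields Mathias's Theorem~2.3 bound $\frac{2\VERT\boldsymbol{E}\VERT}{\sigma_r(\boldsymbol{C})+\sigma_r(\boldsymbol{C}+\boldsymbol{E})}$, which is genuinely incomparable to the target.

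The proposed refinement to the stated denominator $\sigma_{r-1}(\boldsymbol{C})+\sigma_r(\boldsymbol{C})$ has a real gap. Inverting the Sylvester operator $\boldsymbol{Y}\mapsto\boldsymbol{H}\boldsymbol{Y}+\boldsymbol{Y}\widetilde{\boldsymbol{H}}$ requires diagonalizing $\boldsymbol{H}$ on the left and $\widetilde{\boldsymbol{H}}$ on the right, i.e., two \emph{different} orthogonal bases. The unitarity constraint $\boldsymbol{Y}+\boldsymbol{Y}^*=-\boldsymbol{Y}^*\boldsymbol{Y}$ lives in the original coordinates; it does not translate into the vanishing of the $(r,r)$ entry of $\boldsymbol{V}_{\boldsymbol{H}}^*\boldsymbol{Y}\boldsymbol{V}_{\widetilde{\boldsymbol{H}}}$, because that entry mixes the left singular vector of $\boldsymbol{C}$ with the left singular vector of $\boldsymbol{C}+\boldsymbol{E}$, which are in general unrelated. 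The ``to first order'' language is also not legitimate here, since the lemma is a finite perturbation result with $\VERT\boldsymbol{E}\VERT$ allowed to be as large as $\sigma_r(\boldsymbol{C})$. Finally, even granting the exclusion of the $(r,r)$ divisor, the arithmetic $\sigma_j(\boldsymbol{C}+\boldsymbol{E})\geqslant\sigma_j(\boldsymbol{C})-\sigma_r(\boldsymbol{C})$ applied to the remaining entries only yields $\sigma_{r-1}(\boldsymbol{C})$ as a lower bound on the divisor (e.g., from the $(r,r-1)$ or $(r-1,r)$ entries), not $\sigma_{r-1}(\boldsymbol{C})+\sigma_r(\boldsymbol{C})$. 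A route that does produce a single-basis Lyapunov equation is to combine $\boldsymbol{H}\boldsymbol{Y}-\boldsymbol{Y}^*\boldsymbol{H}$ and $\boldsymbol{Y}\boldsymbol{H}-\boldsymbol{H}\boldsymbol{Y}^*$ identities to get $\boldsymbol{H}(\boldsymbol{Y}-\boldsymbol{Y}^*)+(\boldsymbol{Y}-\boldsymbol{Y}^*)\boldsymbol{H}$ equal to a quantity of norm $\leqslant 4\VERT\boldsymbol{E}\VERT$; now the anti-Hermitian part $\boldsymbol{Y}_a$ is genuinely anti-Hermitian in the single $\boldsymbol{H}$-eigenbasis so the diagonal divisors drop out and one gets $\VERT\boldsymbol{Y}_a\VERT\leqslant\frac{2\VERT\boldsymbol{E}\VERT}{\sigma_{r-1}(\boldsymbol{C})+\sigma_r(\boldsymbol{C})}$. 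But this bounds only $\boldsymbol{Y}_a$, and passing to $\VERT\boldsymbol{Y}\VERT$ via $\boldsymbol{Y}_h=-\tfrac12\boldsymbol{Y}^*\boldsymbol{Y}$ produces the self-bounding inequality $t-t^2/2\leqslant b$ for $t=\VERT\boldsymbol{Y}\VERT$, which does not by itself imply $t\leqslant b$. You need to supply the missing control on the Hermitian component, which is the substance of Mathias's actual argument and is not present in your sketch.
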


\subsection{Proof of \eqref{eq_ind2}}
For the spectral norm, first consider the auxiliary iterates defined as following:
		\begin{equation}\label{eq_066}
			\begin{split}
				\widetilde{\boldsymbol{X}}^{t+1} \coloneqq& \boldsymbol{X}^t\boldsymbol{R}^t - \frac{\eta}{p}\mathcal{P}_{\Omega}\left( \boldsymbol{X}^t\left( \boldsymbol{Y}^t \right)^\top - \boldsymbol{U}\boldsymbol{V}^\top \right)\boldsymbol{V}  -\frac{\eta}{2}\boldsymbol{U}(\boldsymbol{R}^t)^\top\left( \left( \boldsymbol{X}^t \right)^\top\boldsymbol{X}^t - \left( \boldsymbol{Y}^t \right)^\top\boldsymbol{Y}^t \right)\boldsymbol{R}^t,\\
				~\\
		\widetilde{\boldsymbol{Y}}^{t+1} \coloneqq & \boldsymbol{Y}^t\boldsymbol{R}^t - \frac{\eta}{p}\left[\mathcal{P}_{\Omega}\left( \boldsymbol{X}^t \left( \boldsymbol{Y}^t \right)^\top - \boldsymbol{U}\boldsymbol{V}^\top \right)\right]^\top \boldsymbol{U} - \frac{\eta}{2}\boldsymbol{V}(\boldsymbol{R}^t)^\top\left( \left( \boldsymbol{Y}^t \right)^\top\boldsymbol{Y}^t - \left(\boldsymbol{X}^t\right)^\top\boldsymbol{X}^t \right)\boldsymbol{R}^t.
			\end{split}
		\end{equation}
		Denote
		\[
		\begin{split}
			\widetilde{\mathbb{E}}\widetilde{\boldsymbol{X}}^{t+1} \coloneqq& \boldsymbol{X}^t\boldsymbol{R}^t - \eta \left( \boldsymbol{X}^t\left( \boldsymbol{Y}^t \right)^\top - \boldsymbol{U}\boldsymbol{V}^\top \right)\boldsymbol{V}  -\frac{\eta}{2}\boldsymbol{U}(\boldsymbol{R}^t)^\top\left( \left( \boldsymbol{X}^t \right)^\top\boldsymbol{X}^t - \left( \boldsymbol{Y}^t \right)^\top\boldsymbol{Y}^t \right)\boldsymbol{R}^t
		\end{split}
		\]
		and 
		\[
		\begin{split}
			\widetilde{\mathbb{E}}\widetilde{\boldsymbol{Y}}^{t+1} \coloneqq & \boldsymbol{Y}^t\boldsymbol{R}^t - \eta  \left( \boldsymbol{X}^t \left( \boldsymbol{Y}^t \right)^\top - \boldsymbol{U}\boldsymbol{V}^\top \right) ^\top \boldsymbol{U} - \frac{\eta}{2}\boldsymbol{V}(\boldsymbol{R}^t)^\top\left( \left( \boldsymbol{Y}^t \right)^\top\boldsymbol{Y}^t - \left(\boldsymbol{X}^t\right)^\top\boldsymbol{X}^t \right)\boldsymbol{R}^t.
		\end{split}
		\]
%		be the ``expectation" of $\widetilde{\boldsymbol{X}}^{t+1}$ and $\widetilde{\boldsymbol{Y}}^{t+1}$ considering $\boldsymbol{X}^t,\boldsymbol{Y}^t,\boldsymbol{R}^t$ as fixed matrices. Recall $\boldsymbol{X}^t,\boldsymbol{Y}^t,\boldsymbol{R}^t$ are actually random matrices. 
		
		Then by triangle inequality, we have the following decomposition:
		\begin{equation}\label{eq_046}
			\begin{split}
				& \left\|  \left[ \begin{array}{c} \boldsymbol{X}^{t+1}\\\boldsymbol{Y}^{t+1}\end{array} \right] \boldsymbol{R}^{t+1} - \left[ \begin{array}{c} \boldsymbol{U}\\\boldsymbol{V} \end{array}\right] \right\| \\
				\leqslant &  \left\|  \left[ \begin{array}{c} \widetilde{\boldsymbol{X}}^{t+1}\\ \widetilde{\boldsymbol{Y}}^{t+1}\end{array} \right]  - \left[ \begin{array}{c} \boldsymbol{U}\\\boldsymbol{V} \end{array}\right] \right\| +\left\|  \left[ \begin{array}{c} \boldsymbol{X}^{t+1}\\\boldsymbol{Y}^{t+1}\end{array} \right] \boldsymbol{R}^{t+1} -  \left[ \begin{array}{c} \widetilde{\boldsymbol{X}}^{t+1}\\ \widetilde{\boldsymbol{Y}}^{t+1}\end{array} \right]  \right\|\\
				\leqslant &  \underbrace{\left\|  \left[ \begin{array}{c} \widetilde{\mathbb{E}}\widetilde{\boldsymbol{X}}^{t+1}\\ \widetilde{\mathbb{E}}\widetilde{\boldsymbol{Y}}^{t+1}\end{array} \right] -  \left[ \begin{array}{c} \widetilde{\boldsymbol{X}}^{t+1}\\ \widetilde{\boldsymbol{Y}}^{t+1}\end{array} \right]\right\|}_{\alpha_1} + \underbrace{\left\|  \left[ \begin{array}{c} \widetilde{\mathbb{E}}\widetilde{\boldsymbol{X}}^{t+1}\\ \widetilde{\mathbb{E}}\widetilde{\boldsymbol{Y}}^{t+1}\end{array} \right]  -\left[ \begin{array}{c} \boldsymbol{U}\\\boldsymbol{V} \end{array}\right]  \right\|}_{\alpha_2}  +\underbrace{\left\| \left[ \begin{array}{c} \boldsymbol{X}^{t+1}\\\boldsymbol{Y}^{t+1}\end{array} \right] \boldsymbol{R}^{t+1} -  \left[ \begin{array}{c} \widetilde{\boldsymbol{X}}^{t+1}\\ \widetilde{\boldsymbol{Y}}^{t+1}\end{array} \right] \right\|}_{\alpha_3} .
			\end{split}
		\end{equation}
		
		\subsubsection{Analysis of $\alpha_1$}
		
		First for $\alpha_1$, since
		 \[ 
		\begin{split}
			&  \left[ \begin{array}{c} \widetilde{\mathbb{E}}\widetilde{\boldsymbol{X}}^{t+1}\\ \widetilde{\mathbb{E}}\widetilde{\boldsymbol{Y}}^{t+1}\end{array} \right] -  \left[ \begin{array}{c} \widetilde{\boldsymbol{X}}^{t+1}\\ \widetilde{\boldsymbol{Y}}^{t+1}\end{array} \right]  \\
			=& \eta \left[ \begin{array}{c} \frac{1}{p}\mathcal{P}_{\Omega}\left( \boldsymbol{X}^t (\boldsymbol{Y}^t)^\top - \boldsymbol{U}\boldsymbol{V}^\top \right)\boldsymbol{V}  \\
			\frac{1}{p}\left[\mathcal{P}_{\Omega}\left( \boldsymbol{X}^t (\boldsymbol{Y}^t)^\top - \boldsymbol{U}\boldsymbol{V}^\top \right)\right]^\top\boldsymbol{U}  
			  \end{array} \right] -\eta \left[ \begin{array}{c}  \left( \boldsymbol{X}^t (\boldsymbol{Y}^t)^\top - \boldsymbol{U}\boldsymbol{V}^\top \right)\boldsymbol{V}\\
			\left( \boldsymbol{X}^t (\boldsymbol{Y}^t)^\top - \boldsymbol{U}\boldsymbol{V}^\top \right)^\top \boldsymbol{U}
			  \end{array} \right],
		\end{split} 
		\]
		and using the facts $\left\| \left[ \begin{array}{c} \boldsymbol{A}\\\boldsymbol{B} \end{array}\right] \right\| \leqslant \|\boldsymbol{A}\|+\|\boldsymbol{B}\|$ and $\|\boldsymbol{U}\| = \|\boldsymbol{V}\|$, we have
		\[  
			\begin{split}
				&\alpha_1 \\
				=& \eta \left\| \left[ \begin{array}{c}
					\left(\frac{1}{p}\mathcal{P}_{\Omega}\left( \boldsymbol{X}^t (\boldsymbol{Y}^t)^\top - \boldsymbol{U}\boldsymbol{V}^\top \right) - \left( \boldsymbol{X}^t (\boldsymbol{Y}^t)^\top - \boldsymbol{U}\boldsymbol{V}^\top \right)\right) \boldsymbol{V} \\
					\left(\frac{1}{p}\mathcal{P}_{\Omega}\left( \boldsymbol{X}^t (\boldsymbol{Y}^t)^\top - \boldsymbol{U}\boldsymbol{V}^\top \right) - \left( \boldsymbol{X}^t (\boldsymbol{Y}^t)^\top - \boldsymbol{U}\boldsymbol{V}^\top \right)\right)^\top \boldsymbol{U}
				\end{array} \right] \right\| \\
%				&  ~~~~\left. \left. \vphantom{\begin{array}{c}
%					\left(\frac{1}{p}\mathcal{P}_{\Omega}\left( \boldsymbol{X}^t (\boldsymbol{Y}^t)^\top - \boldsymbol{U}\boldsymbol{V}^\top \right) - \left( \boldsymbol{X}^t (\boldsymbol{Y}^t)^\top - \boldsymbol{U}\boldsymbol{V}^\top \right)\right) \\
%					\left(\frac{1}{p}\mathcal{P}_{\Omega}\left( \boldsymbol{X}^t (\boldsymbol{Y}^t)^\top - \boldsymbol{U}\boldsymbol{V}^\top \right) - \left( \boldsymbol{X}^t (\boldsymbol{Y}^t)^\top - \boldsymbol{U}\boldsymbol{V}^\top \right)\right)^\top 
%				\end{array} }\begin{array}{c}
%					 \cdot \boldsymbol{V}\\
%					 \cdot\boldsymbol{U}
%				\end{array} \right] \right\|\\
				\leqslant & 2\eta \|\boldsymbol{U}\| \left\|  \frac{1}{p}\mathcal{P}_{\Omega}\left( \boldsymbol{X}^t (\boldsymbol{Y}^t)^\top - \boldsymbol{U}\boldsymbol{V}^\top \right)  - \left( \boldsymbol{X}^t (\boldsymbol{Y}^t)^\top - \boldsymbol{U}\boldsymbol{V}^\top \right) \right\|\\
				\leqslant & 2\eta  \|\boldsymbol{U}\| \left( \left\|\frac{1}{p} \mathcal{P}_{\Omega}(\boldsymbol{\Delta}_{\boldsymbol{X}}^t\boldsymbol{V}^\top) - \boldsymbol{\Delta}_{\boldsymbol{X}}^t\boldsymbol{V}^\top\right\|  + \left\| \frac{1}{p}\mathcal{P}_{\Omega}\left( \boldsymbol{U}(\boldsymbol{\Delta}_{\boldsymbol{Y}}^t)^\top \right) - \boldsymbol{U}(\boldsymbol{\Delta}_{\boldsymbol{Y}}^t)^\top \right\| \right)\\
				& + 2\eta  \|\boldsymbol{U}\|\left\| \frac{1}{p}\mathcal{P}_{\Omega}\left( \boldsymbol{\Delta}_{\boldsymbol{X}}^t (\boldsymbol{\Delta}_{\boldsymbol{Y}}^t)^\top\right) -\boldsymbol{\Delta}_{\boldsymbol{X}}^t (\boldsymbol{\Delta}_{\boldsymbol{Y}}^t)^\top \right\| .
			\end{split} 
		\]
		Here we denote $\boldsymbol{\Delta}_{\boldsymbol{X}}^t \coloneqq \boldsymbol{X}^t\boldsymbol{R}^t - \boldsymbol{U},\boldsymbol{\Delta}_{\boldsymbol{Y}}^t \coloneqq \boldsymbol{Y}^t\boldsymbol{R}^t - \boldsymbol{V}$, and $\boldsymbol{\Delta}^t \coloneqq \left[ \begin{array}{c}  \boldsymbol{\Delta}_{\boldsymbol{X}}^t\\\boldsymbol{\Delta}_{\boldsymbol{Y}}^t\end{array} \right]$. The last inequality uses the fact that
		\begin{equation}\label{eq_a001}
			\begin{split}
				 \boldsymbol{X}^t(\boldsymbol{Y}^t)^\top - \boldsymbol{U}\boldsymbol{V}^\top  =&  \boldsymbol{X}^t\boldsymbol{R}^t (\boldsymbol{R}^t)^\top (\boldsymbol{Y}^t)^\top - \boldsymbol{U}\boldsymbol{V}^\top\\
				=& (\boldsymbol{\Delta}_{\boldsymbol{X}}^t + \boldsymbol{U})(\boldsymbol{\Delta}_{\boldsymbol{Y}}^t + \boldsymbol{V})^\top - \boldsymbol{U}\boldsymbol{V}^\top\\
				=& \boldsymbol{\Delta}_{\boldsymbol{X}}^t \boldsymbol{V}^\top + \boldsymbol{U}(\boldsymbol{\Delta}_{\boldsymbol{Y}}^t )^\top +  \boldsymbol{\Delta}_{\boldsymbol{X}}^t(\boldsymbol{\Delta}_{\boldsymbol{Y}}^t )^\top.
			\end{split}
		\end{equation} 
		 Using Lemma \ref{lemma_spectral_gap}, we can show that 
		\[
		 \begin{split}
			\alpha_1\leqslant \frac{2\eta}{p}\|\boldsymbol{U}\|\|\boldsymbol{\Omega}-p\boldsymbol{J}\| ( &\|\boldsymbol{\Delta}_{\boldsymbol{X}}^t\|_{2,\infty}\|\boldsymbol{V}\|_{2,\infty} + \|\boldsymbol{U}\|_{2,\infty}\|\boldsymbol{\Delta}_{\boldsymbol{Y}}^t\|_{2,\infty} +\|\boldsymbol{\Delta}_{\boldsymbol{X}}^t\|_{2,\infty}\|\boldsymbol{\Delta}_{\boldsymbol{Y}}^t\|_{2,\infty}).
		\end{split}
		\]

		From \eqref{eq_ind5}, if 
		\[
			p\geqslant 110^2 C_I^2\frac{\mu r \kappa^{11}\log (n_1\vee n_2)}{n_1\wedge n_2},	
		\] 
		then 
		\[
			\|\boldsymbol{\Delta}^t\|_{2,\infty}\leqslant \sqrt{\frac{\mu r \kappa}{n_1\wedge n_2}}\sqrt{\sigma_1(\boldsymbol{M})}.	
		\]
		Here we also use the fact that $\rho<1$. Recall that $\|\boldsymbol{U}\|_{2,\infty}, \|\boldsymbol{V}\|_{2,\infty} \leqslant \sqrt{\frac{\mu r \kappa}{n_1\wedge n_2}}\sqrt{\sigma_1(\boldsymbol{M})}$, and by \eqref{eq_ind5},
		\[
			\begin{split}
				\alpha_1 \leqslant & \frac{2\eta}{p}\sqrt{\sigma_1(\boldsymbol{M})} \|\boldsymbol{\Omega}-p\boldsymbol{J}\| \times 3\left( 110C_I \rho^t\sqrt{\frac{\mu^2 r^2 \kappa^{12}\log(n_1\vee n_2)}{(n_1\wedge n_2)^2p} }\sqrt{\sigma_1(\boldsymbol{M})}  \sqrt{\frac{\mu r \kappa}{n_1\wedge n_2}}\sqrt{\sigma_1(\boldsymbol{M})}\right).
			\end{split}
		\] 
		Moreover, using Lemma \ref{vu_lemma2.2}, if in addtion 
		\[
			p\geqslant (C_3+16\times 660^2)\frac{\mu^2 r^2\kappa^{9}\log (n_1\vee n_2)}{n_1\wedge n_2},
		\]
		then on the event $E_{gd}^t\subset E_H \subset E_S$, we have
		\begin{equation}\label{eq_010}
			\begin{split}
				\alpha_1 \leqslant & \frac{2\eta}{p}\sqrt{\sigma_1(\boldsymbol{M})}\sqrt{(n_1\wedge n_2)p} \times 3\left( 110C_I \rho^t\sqrt{\frac{\mu^2 r^2 \kappa^{12}\log(n_1\vee n_2)}{(n_1\wedge n_2)^2p} }\sqrt{\sigma_1(\boldsymbol{M})}   \sqrt{\frac{\mu r \kappa}{n_1\wedge n_2}}\sqrt{\sigma_1(\boldsymbol{M})}\right)\\
				= & 660\eta C_I \rho^t \sqrt{\frac{\mu^3 r^3\kappa^{13}\log(n_1\vee n_2)}{(n_1\wedge n_2)^2p^2}}\sqrt{\sigma_1(\boldsymbol{M})}^3 \\
				\leqslant & 0.25\eta  \sigma_r(\boldsymbol{M}) C_I\rho^t \sqrt{\frac{\mu r \kappa^6 \log (n_1\vee n_2)}{(n_1\wedge n_2)p}}\sqrt{\sigma_1(\boldsymbol{M})}.
			\end{split}
		\end{equation}
%		\begin{equation}\label{eq_010}
%			\begin{split}
%				\alpha_1 \leqslant & \frac{2\eta}{p}\sqrt{\sigma_1(\boldsymbol{M})}\sqrt{(n_1\wedge n_2)p}\\ 
%				&\times 3\left( 110C_I \rho^t\sqrt{\frac{\mu^2 r^2 \kappa^{12}\log(n_1\vee n_2)}{(n_1\wedge n_2)^2p} }\sqrt{\sigma_1(\boldsymbol{M})} \right.\\
%				& ~~~~~~~~\left. \vphantom{110C_I \rho^t\sqrt{\frac{\mu^2 r^2 \kappa^{12}\log(n_1\vee n_2)}{(n_1\wedge n_2)^2p} }\sqrt{\sigma_1(\boldsymbol{M})}} \times \sqrt{\frac{\mu r \kappa}{n_1\wedge n_2}}\sqrt{\sigma_1(\boldsymbol{M})}\right)\\
%				= & 660\eta C_I \rho^t \sqrt{\frac{\mu^3 r^3\kappa^{13}\log(n_1\vee n_2)}{(n_1\wedge n_2)^2p^2}}\sqrt{\sigma_1(\boldsymbol{M})}^3 \\
%				\leqslant & 0.25\eta  \sigma_r(\boldsymbol{M}) C_I\rho^t \sqrt{\frac{\mu r \kappa^6 \log (n_1\vee n_2)}{(n_1\wedge n_2)p}}\sqrt{\sigma_1(\boldsymbol{M})}.
%			\end{split}
%		\end{equation}
		
		\subsubsection{Analysis of $\alpha_2$}
		
Since $\boldsymbol{\Delta}_{\boldsymbol{X}}^t = \boldsymbol{X}^t\boldsymbol{R}^t - \boldsymbol{U}$ and $\boldsymbol{\Delta}_{\boldsymbol{Y}}^t =\boldsymbol{Y}^t\boldsymbol{R}^t - \boldsymbol{V}$, we have  
		\begin{equation}\label{eq_a002}
			\begin{split}
				& (\boldsymbol{R}^t)^\top \left[\left( \boldsymbol{X}^t \right)^\top\boldsymbol{X}^t - \left( \boldsymbol{Y}^t \right)^\top\boldsymbol{Y}^t\right] \boldsymbol{R}^t\\
				 =&  \left(\boldsymbol{\Delta}_{\boldsymbol{X}}^t + \boldsymbol{U}\right)^\top \left(\boldsymbol{\Delta}_{\boldsymbol{X}}^t + \boldsymbol{U}\right) -\left(\boldsymbol{\Delta}_{\boldsymbol{Y}}^t +  \boldsymbol{V}\right)^\top\left(\boldsymbol{\Delta}_{\boldsymbol{Y}}^t +  \boldsymbol{V}\right) \\
				=& \left(\boldsymbol{\Delta}_{\boldsymbol{X}}^t  \right)^\top  \boldsymbol{\Delta}_{\boldsymbol{X}}^t + \left(\boldsymbol{\Delta}_{\boldsymbol{X}}^t  \right)^\top\boldsymbol{U} + \boldsymbol{U}^\top \boldsymbol{\Delta}_{\boldsymbol{X}}^t +\boldsymbol{U}^\top \boldsymbol{U}  -\left[ \left(\boldsymbol{\Delta}_{\boldsymbol{Y}}^t  \right)^\top \boldsymbol{\Delta}_{\boldsymbol{Y}}^t +   \left(\boldsymbol{\Delta}_{\boldsymbol{Y}}^t  \right)^\top \boldsymbol{V}+ \boldsymbol{V}^\top   \left(\boldsymbol{\Delta}_{\boldsymbol{Y}}^t  \right)+\boldsymbol{V}^\top \boldsymbol{V}\right] \\
				=& \left(\boldsymbol{\Delta}_{\boldsymbol{X}}^t  \right)^\top  \boldsymbol{\Delta}_{\boldsymbol{X}}^t + \left(\boldsymbol{\Delta}_{\boldsymbol{X}}^t  \right)^\top\boldsymbol{U} + \boldsymbol{U}^\top \boldsymbol{\Delta}_{\boldsymbol{X}}^t - \left(\boldsymbol{\Delta}_{\boldsymbol{Y}}^t  \right)^\top \boldsymbol{\Delta}_{\boldsymbol{Y}}^t -  \left(\boldsymbol{\Delta}_{\boldsymbol{Y}}^t  \right)^\top \boldsymbol{V}- \boldsymbol{V}^\top   \boldsymbol{\Delta}_{\boldsymbol{Y}}^t   . 
			\end{split} 
		\end{equation}
		Therefore, for $\alpha_2$, 
		\begin{equation}\label{eq_076}  
			\begin{split}
				& \left[ \begin{array}{c} \widetilde{\mathbb{E}}\widetilde{\boldsymbol{X}}^{t+1}\\ \widetilde{\mathbb{E}}\widetilde{\boldsymbol{Y}}^{t+1}\end{array} \right]  -\left[ \begin{array}{c} \boldsymbol{U}\\\boldsymbol{V} \end{array}\right] \\
				=& \left[      \begin{array}{c}
				 \boldsymbol{X}^t\boldsymbol{R}^t - \eta\left(\boldsymbol{X}^t(\boldsymbol{Y}^t)^\top - \boldsymbol{U}\boldsymbol{V}^\top \right)\boldsymbol{V}  - \frac{\eta}{2}\boldsymbol{U}(\boldsymbol{R}^t)^\top\left( \left( \boldsymbol{X}^t \right)^\top\boldsymbol{X}^t - \left( \boldsymbol{Y}^t \right)^\top\boldsymbol{Y}^t \right)\boldsymbol{R}^t -\boldsymbol{U}\\
				\boldsymbol{Y}^t\boldsymbol{R}^t - \eta\left(\boldsymbol{X}^t(\boldsymbol{Y}^t)^\top - \boldsymbol{U}\boldsymbol{V}^\top \right)^\top \boldsymbol{U} - \frac{\eta}{2}\boldsymbol{V}(\boldsymbol{R}^t)^\top\left(\left( \boldsymbol{Y}^t \right)^\top\boldsymbol{Y}^t -  \left( \boldsymbol{X}^t \right)^\top\boldsymbol{X}^t   \right)\boldsymbol{R}^t -\boldsymbol{V}  \end{array}\right]\\
				% &+  \left[ \begin{array}{c}
				%   - \frac{\eta}{2}\boldsymbol{U}(\boldsymbol{R}^t)^\top\left( \left( \boldsymbol{X}^t \right)^\top\boldsymbol{X}^t - \left( \boldsymbol{Y}^t \right)^\top\boldsymbol{Y}^t \right)\boldsymbol{R}^t -\boldsymbol{U}\\
				%  - \frac{\eta}{2}\boldsymbol{V}(\boldsymbol{R}^t)^\top\left(\left( \boldsymbol{Y}^t \right)^\top\boldsymbol{Y}^t -  \left( \boldsymbol{X}^t \right)^\top\boldsymbol{X}^t   \right)\boldsymbol{R}^t -\boldsymbol{V}
				% \end{array}\right]\\
				=& \left[ \begin{array}{c} \boldsymbol{\Delta}_{\boldsymbol{X}}^t - \eta \boldsymbol{\Delta}_{\boldsymbol{X}}^t \boldsymbol{V}^\top \boldsymbol{V} - \eta\boldsymbol{U}(\boldsymbol{\Delta}_{\boldsymbol{Y}}^t)^\top\boldsymbol{V} -\frac{\eta}{2}\boldsymbol{U}(\boldsymbol{\Delta}_{\boldsymbol{X}}^t)^\top \boldsymbol{U}- \frac{\eta}{2}\boldsymbol{U}\boldsymbol{U}^\top\boldsymbol{\Delta}_{\boldsymbol{X}}^t +\frac{\eta}{2} \boldsymbol{U} (\boldsymbol{\Delta}_{\boldsymbol{Y}}^t)^\top \boldsymbol{V} +\frac{\eta}{2}\boldsymbol{U}\boldsymbol{V}^\top\boldsymbol{\Delta}_{\boldsymbol{Y}}^t +\eta \boldsymbol{\mathcal{E}}_1   \\
				\boldsymbol{\Delta}_{\boldsymbol{Y}}^t - \eta \boldsymbol{V}(\boldsymbol{\Delta}_{\boldsymbol{X}}^t)^\top \boldsymbol{U} - \eta \boldsymbol{\Delta}_{\boldsymbol{Y}}^t \boldsymbol{U}^\top\boldsymbol{U} - \frac{\eta}{2} \boldsymbol{V}(\boldsymbol{\Delta}_{\boldsymbol{Y}}^t)^\top\boldsymbol{V}   - \frac{\eta}{2}\boldsymbol{V}\boldsymbol{V}^\top\boldsymbol{\Delta}_{\boldsymbol{Y}}^t + \frac{\eta}{2} \boldsymbol{V}(\boldsymbol{\Delta}_{\boldsymbol{X}}^t)^\top\boldsymbol{U} + \frac{\eta}{2}\boldsymbol{V}\boldsymbol{U}^\top\boldsymbol{\Delta}_{\boldsymbol{X}}^t +\eta \boldsymbol{\mathcal{E}}_2
				 \end{array} \right]\\
				%  &+ \left[  \begin{array}{c}  - \frac{\eta}{2}\boldsymbol{U}\boldsymbol{U}^\top\boldsymbol{\Delta}_{\boldsymbol{X}}^t +\frac{\eta}{2} \boldsymbol{U} (\boldsymbol{\Delta}_{\boldsymbol{Y}}^t)^\top \boldsymbol{V} +\frac{\eta}{2}\boldsymbol{U}\boldsymbol{V}^\top\boldsymbol{\Delta}_{\boldsymbol{Y}}^t +\eta \boldsymbol{\mathcal{E}}_1 \\
				%  - \frac{\eta}{2}\boldsymbol{V}\boldsymbol{V}^\top\boldsymbol{\Delta}_{\boldsymbol{Y}}^t + \frac{\eta}{2} \boldsymbol{V}(\boldsymbol{\Delta}_{\boldsymbol{X}}^t)^\top\boldsymbol{U} + \frac{\eta}{2}\boldsymbol{V}\boldsymbol{U}^\top\boldsymbol{\Delta}_{\boldsymbol{X}}^t +\eta \boldsymbol{\mathcal{E}}_2
				%  \end{array}\right]\\
				 =& \left[ \begin{array}{c} \boldsymbol{\Delta}_{\boldsymbol{X}}^t - \eta \boldsymbol{\Delta}_{\boldsymbol{X}}^t \boldsymbol{V}^\top \boldsymbol{V} -\eta \boldsymbol{U}\boldsymbol{U}^\top\boldsymbol{\Delta}_{\boldsymbol{X}}^t+ \frac{\eta}{2}\boldsymbol{U}\boldsymbol{U}^\top \boldsymbol{\Delta}_{\boldsymbol{X}}^t  +\frac{\eta}{2}\boldsymbol{U}\boldsymbol{V}^\top\boldsymbol{\Delta}_{\boldsymbol{Y}}^t - \frac{\eta}{2}\boldsymbol{U}(\boldsymbol{\Delta}_{\boldsymbol{Y}}^t)^\top\boldsymbol{V}-\frac{\eta}{2}\boldsymbol{U}(\boldsymbol{\Delta}_{\boldsymbol{X}}^t)^\top \boldsymbol{U}   +\eta \boldsymbol{\mathcal{E}}_1 \\
					\boldsymbol{\Delta}_{\boldsymbol{Y}}^t  - \eta \boldsymbol{\Delta}_{\boldsymbol{Y}}^t \boldsymbol{U}^\top\boldsymbol{U} -\eta \boldsymbol{V}\boldsymbol{V}^\top\boldsymbol{\Delta}_{\boldsymbol{Y}}^t  +\frac{\eta}{2}\boldsymbol{V}\boldsymbol{V}^\top\boldsymbol{\Delta}_{\boldsymbol{Y}}^t  +\frac{\eta}{2}\boldsymbol{V}\boldsymbol{U}^\top\boldsymbol{\Delta}_{\boldsymbol{X}}^t - \frac{\eta}{2}\boldsymbol{V}(\boldsymbol{\Delta}_{\boldsymbol{X}}^t)^\top\boldsymbol{U}-\frac{\eta}{2}\boldsymbol{V}(\boldsymbol{\Delta}_{\boldsymbol{Y}}^t)^\top\boldsymbol{V}+\eta \boldsymbol{\mathcal{E}}_2 
					 \end{array} \right].\\
				% &+ \left[ \begin{array}{c}   \frac{\eta}{2}\boldsymbol{U}\boldsymbol{V}^\top\boldsymbol{\Delta}_{\boldsymbol{Y}}^t - \frac{\eta}{2}\boldsymbol{U}(\boldsymbol{\Delta}_{\boldsymbol{Y}}^t)^\top\boldsymbol{V}-\frac{\eta}{2}\boldsymbol{U}(\boldsymbol{\Delta}_{\boldsymbol{X}}^t)^\top \boldsymbol{U}   +\eta \boldsymbol{\mathcal{E}}_1  \\
				% 	 \frac{\eta}{2}\boldsymbol{V}\boldsymbol{U}^\top\boldsymbol{\Delta}_{\boldsymbol{X}}^t - \frac{\eta}{2}\boldsymbol{V}(\boldsymbol{\Delta}_{\boldsymbol{X}}^t)^\top\boldsymbol{U}-\frac{\eta}{2}\boldsymbol{V}(\boldsymbol{\Delta}_{\boldsymbol{Y}}^t)^\top\boldsymbol{V}+\eta \boldsymbol{\mathcal{E}}_2  
				% 	 \end{array}\right]. \\
%				&+\left[\begin{array}{c}    +\eta \boldsymbol{\mathcal{E}}_1  \\
%					 +\eta \boldsymbol{\mathcal{E}}_2
%					 \end{array}\right].
			\end{split}
		\end{equation} 
		Here 
		\begin{equation}\label{eq_e001}
		\begin{split}
			\boldsymbol{\mathcal{E}}_1 \coloneqq& -\boldsymbol{\Delta}_{\boldsymbol{X}}^t (\boldsymbol{\Delta}_{\boldsymbol{Y}}^t)^\top\boldsymbol{V} - \frac{1}{2}\boldsymbol{U}(\boldsymbol{\Delta}_{\boldsymbol{X}}^t)^\top\boldsymbol{\Delta}_{\boldsymbol{X}}^t + \frac{1}{2}\boldsymbol{U} (\boldsymbol{\Delta}_{\boldsymbol{Y}}^t)^\top\boldsymbol{\Delta}_{\boldsymbol{Y}}^t,
		\end{split}
		\end{equation}
		\begin{equation}\label{eq_e002}
		\begin{split}
			\boldsymbol{\mathcal{E}}_2 \coloneqq& -\boldsymbol{\Delta}_{\boldsymbol{Y}}^t (\boldsymbol{\Delta}_{\boldsymbol{X}}^t)^\top\boldsymbol{U} -\frac{1}{2}\boldsymbol{V} (\boldsymbol{\Delta}_{\boldsymbol{Y}}^t)^\top\boldsymbol{\Delta}_{\boldsymbol{Y}}^t  + \frac{1}{2}\boldsymbol{V}(\boldsymbol{\Delta}_{\boldsymbol{X}}^t)^\top\boldsymbol{\Delta}_{\boldsymbol{X}}^t 
		\end{split}
		\end{equation}
		denote terms with at least two $\boldsymbol{\Delta}_{\boldsymbol{X}}^t$'s and $\boldsymbol{\Delta}_{\boldsymbol{Y}}^t$'s. By the way we define $\boldsymbol{R}^t$ in \eqref{eq:Rt}, $\left[\begin{array}{c}  \boldsymbol{X}^t\boldsymbol{R}^t  \\\boldsymbol{Y}^t\boldsymbol{R}^t \end{array}\right]^\top\left[\begin{array}{c}   \boldsymbol{U}\\ \boldsymbol{V}\end{array}\right]$ is positive semidefinite. Therefore,
		\[
		\left[\begin{array}{c}  \boldsymbol{X}^t\boldsymbol{R}^t - \boldsymbol{U}\\\boldsymbol{Y}^t\boldsymbol{R}^t-\boldsymbol{V}\end{array}\right]^\top\left[\begin{array}{c}   \boldsymbol{U}\\ \boldsymbol{V}\end{array}\right]  = (\boldsymbol{\Delta}_{\boldsymbol{X}}^t)^\top\boldsymbol{U}+(\boldsymbol{\Delta}_{\boldsymbol{Y}}^t)^\top\boldsymbol{V}
		\]
		 is symmetric. Plugging this fact back to \eqref{eq_076} we have
		\[
			\begin{split}
				   \left[ \begin{array}{c} \widetilde{\mathbb{E}}\widetilde{\boldsymbol{X}}^{t+1}\\ \widetilde{\mathbb{E}}\widetilde{\boldsymbol{Y}}^{t+1}\end{array} \right]  -\left[ \begin{array}{c} \boldsymbol{U}\\\boldsymbol{V} \end{array}\right]  =& \left[ \begin{array}{c} \boldsymbol{\Delta}_{\boldsymbol{X}}^t - \eta \boldsymbol{\Delta}_{\boldsymbol{X}}^t \boldsymbol{V}^\top \boldsymbol{V} -\eta \boldsymbol{U}\boldsymbol{U}^\top\boldsymbol{\Delta}_{\boldsymbol{X}}^t  +\eta \boldsymbol{\mathcal{E}}_1  \\
				\boldsymbol{\Delta}_{\boldsymbol{Y}}^t  - \eta \boldsymbol{\Delta}_{\boldsymbol{Y}}^t \boldsymbol{U}^\top\boldsymbol{U} -\eta \boldsymbol{V}\boldsymbol{V}^\top\boldsymbol{\Delta}_{\boldsymbol{Y}}^t   +\eta \boldsymbol{\mathcal{E}}_2
				 \end{array} \right]\\
				 =& \frac{1}{2} \left[ \begin{array}{c} \boldsymbol{\Delta}_{\boldsymbol{X}}^t \\ \boldsymbol{\Delta}_{\boldsymbol{Y}}^t \end{array} \right] (\boldsymbol{I} - 2\eta\boldsymbol{U}^\top\boldsymbol{U})  + \frac{1}{2}\left( \boldsymbol{I} - 2\eta\left[ \begin{array}{cc} \boldsymbol{U}\boldsymbol{U}^\top & \boldsymbol{0}\\\boldsymbol{0}&\boldsymbol{V}\boldsymbol{V}^\top \end{array}\right] \right)\left[ \begin{array}{c} \boldsymbol{\Delta}_{\boldsymbol{X}}^t \\ \boldsymbol{\Delta}_{\boldsymbol{Y}}^t \end{array} \right] + \eta \boldsymbol{\mathcal{E}},
			\end{split}
		\]
		where $\boldsymbol{\mathcal{E}}\coloneqq \left[ \begin{array}{c}
			\boldsymbol{\mathcal{E}}_1\\
			\boldsymbol{\mathcal{E}}_2
		\end{array} \right]$. Here the last equality uses the fact that $\boldsymbol{U}^\top\boldsymbol{U} = \boldsymbol{V}^\top\boldsymbol{V}$. Recall that we define $\boldsymbol{U}$ by $\widetilde{\boldsymbol{U}}\boldsymbol{\Sigma}^{1/2}$ and $\boldsymbol{V}$ by $\widetilde{\boldsymbol{V}}\boldsymbol{\Sigma}^{1/2}$, $\boldsymbol{U}\boldsymbol{U}^\top$ and $\boldsymbol{V}\boldsymbol{V}^\top$ share the same eigenvalues. And $\|\boldsymbol{U}\boldsymbol{U}^\top\| = \|\boldsymbol{V}\boldsymbol{V}^\top\| = \sigma_1(\boldsymbol{M})$. Therefore, we have
		\[
			\begin{split}
				\alpha_2 =& \left\| \left[ \begin{array}{c} \widetilde{\mathbb{E}}\widetilde{\boldsymbol{X}}^{t+1}\\ \widetilde{\mathbb{E}}\widetilde{\boldsymbol{Y}}^{t+1}\end{array} \right]  -\left[ \begin{array}{c} \boldsymbol{U}\\\boldsymbol{V} \end{array}\right]  \right\| \\
				\leqslant & \frac{1}{2}\|\boldsymbol{I} - 2\eta\boldsymbol{U}^\top\boldsymbol{U}\|\|\boldsymbol{\Delta}^t\|  + \frac{1}{2}\|\boldsymbol{\Delta}^t\|\left\| \boldsymbol{I} - 2\eta\left[ \begin{array}{cc} \boldsymbol{U}\boldsymbol{U}^\top & \boldsymbol{0}\\\boldsymbol{0}&\boldsymbol{V}\boldsymbol{V}^\top \end{array}\right] \right\| +\eta \|\boldsymbol{\mathcal{E}} \|\\
				\leqslant & (1-\eta\sigma_r(\boldsymbol{M}))\|\boldsymbol{\Delta}^t\|  +\eta\|\boldsymbol{\mathcal{E}}\|.
			\end{split}
		\]
		The last inequality uses the fact that $\eta \leqslant \frac{\sigma_r(\boldsymbol{M})}{200\sigma_1^2(\boldsymbol{M})}$. By the definition of $\boldsymbol{\mathcal{E}}$,
		\[
			\|\boldsymbol{\mathcal{E}}\|\leqslant 4\|\boldsymbol{\Delta}^t\|^2\|\boldsymbol{U}\|
		\]
		holds. From \eqref{eq_ind2} and since
		\[
			p\geqslant 1600 C_I^2 \frac{\mu r \kappa^8 \log (n_1\vee n_2)}{n_1\wedge n_2},
		\]
		on the event $E_{gd}^t$,
		\[
		\begin{split}
			\|\boldsymbol{\mathcal{E}}\|\leqslant &4\times C_I \rho^t \sqrt{\frac{\mu r \kappa^6 \log (n_1\vee n_2)}{(n_1\wedge n_2)p}}\sigma_1(\boldsymbol{M})\|\boldsymbol{\Delta}^t\|\\
			\leqslant& 0.1\sigma_r(\boldsymbol{M})\|\boldsymbol{\Delta}^t\|
		\end{split}
		\]
		holds. Therefore, we have
		\begin{equation}\label{eq_009}
			\alpha_2\leqslant (1-0.9\eta \sigma_r(\boldsymbol{M}))\|\boldsymbol{\Delta}^t\|.
		\end{equation}
		
		\subsubsection{Analysis of $\alpha_3$}
		Now we can start to control $\alpha_3$. Rewrite $\alpha_3$ as
		\[
			\begin{split}
				\alpha_3 = & \left\| \left[ \begin{array}{c} \boldsymbol{X}^{t+1}\\\boldsymbol{Y}^{t+1}\end{array} \right] \boldsymbol{R}^{t+1} - \left[\begin{array}{c} \widetilde{\boldsymbol{X}}^{t+1}\\ \widetilde{\boldsymbol{Y}}^{t+1}\end{array}\right]\right\| \\
				=&\left\|  \left[\begin{array}{c} \boldsymbol{X}^{t+1}\\\boldsymbol{Y}^{t+1} \end{array}\right]\boldsymbol{R}^t (\boldsymbol{R}^t)^\top\boldsymbol{R}^{t+1} - \left[ \begin{array}{c} \widetilde{\boldsymbol{X}}^{t+1}\\ \widetilde{\boldsymbol{Y}}^{t+1}\end{array} \right]\right\|.
			\end{split}
		\]
		We want to apply Lemma \ref{ma_lemma37} with 
		\begin{equation}\label{eq_065}
                  \boldsymbol{X}_0 = \left[ \begin{array}{c} \boldsymbol{U}\\\boldsymbol{V} \end{array} \right], \; \boldsymbol{X}_1 =  \left[\begin{array}{c} \boldsymbol{X}^{t+1}\\\boldsymbol{Y}^{t+1} \end{array}\right]\boldsymbol{R}^t, \; \text{and}~\boldsymbol{X}_2 = \left[ \begin{array}{c} \widetilde{\boldsymbol{X}}^{t+1}\\ \widetilde{\boldsymbol{Y}}^{t+1}\end{array} \right].
		\end{equation}
		By the way we define $\boldsymbol{U}$ and $\boldsymbol{V}$, we have $\sigma_1(\boldsymbol{X}_0) = \sqrt{2\sigma_1(\boldsymbol{M})}$, $\sigma_2(\boldsymbol{X}_0) = \sqrt{2\sigma_2(\boldsymbol{M})}$, $\cdots$, $\sigma_r(\boldsymbol{X}_0) = \sqrt{2\sigma_r(\boldsymbol{M})}$, and $\sigma_1(\boldsymbol{X}_0)/\sigma_r(\boldsymbol{X}_0) = \sqrt{\kappa}$. In order to proceed, we first assume we can apply Lemma \ref{ma_lemma37} here:
			\begin{claim}\label{claim_02}
				Under the setup of Lemma \ref{lemma_induction}, on the event $E_{gd}^t\subset E_H \subset E_S$, the requirement of Lemma \ref{ma_lemma37} to apply here is satisfied with $\boldsymbol{X}_0$, $\boldsymbol{X}_1$ and $\boldsymbol{X}_2$ defined as in \eqref{eq_065}. Moreover, by applying Lemma \ref{ma_lemma37}, we have
				\begin{equation}\label{eq_078}
				\begin{split}
				\alpha_3 =& \left\| \left[ \begin{array}{c}\widetilde{\boldsymbol{X}}^{t+1}\\\widetilde{\boldsymbol{Y}}^{t+1} \end{array}\right] - \left[ \begin{array}{c} \boldsymbol{X}^{t+1}\\\boldsymbol{Y}^{t+1} \end{array} \right] \boldsymbol{R}^{t+1}  \right\| \\
				\leqslant& 0.5\eta\sigma_r(\boldsymbol{M}) \|\boldsymbol{\Delta}^t\|.
				\end{split}
				\end{equation}

				% \[
				% 	\boldsymbol{R}_1 = (\boldsymbol{R}^t)^\top \boldsymbol{R}^{t+1},\quad \boldsymbol{R}_2 = \boldsymbol{I}.	
				% \]
				% In addition we also have
				% \begin{equation}\label{eq_012}
				% 	\begin{split}
				% 		\|\boldsymbol{X}_1-\boldsymbol{X}_2\|  \leqslant & \frac{1}{10\kappa}\eta\sigma_r(\boldsymbol{M})\|\boldsymbol{\Delta}^t\|.
				% 	\end{split}
				% \end{equation}
			\end{claim}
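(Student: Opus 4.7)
Both parts of Claim \ref{claim_02} --- verifying that the size hypothesis \eqref{eq_011} of Lemma \ref{ma_lemma37} is met and then applying the lemma to derive \eqref{eq_078} --- will follow once I establish a tight, second-order bound on $\|\boldsymbol{X}_1-\boldsymbol{X}_2\|$. The strategy is therefore: (i) derive an explicit formula for $\boldsymbol{X}_1-\boldsymbol{X}_2$; (ii) bound it by $O(\eta\|\boldsymbol{\Delta}^t\|^2\sqrt{\sigma_1(\boldsymbol{M})})$; (iii) feed this bound into Lemma \ref{ma_lemma37}, whose $5\kappa$ blow-up is then absorbed by the sampling-rate condition.

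Subtracting the gradient step \eqref{eq:grad_desc}, right-multiplied by $\boldsymbol{R}^t$, from the auxiliary update \eqref{eq_066}, and using the identity $\boldsymbol{X}^t - \boldsymbol{U}(\boldsymbol{R}^t)^\top = \boldsymbol{\Delta}_{\boldsymbol{X}}^t (\boldsymbol{R}^t)^\top$, gives
\[
\boldsymbol{X}^{t+1}\boldsymbol{R}^t - \widetilde{\boldsymbol{X}}^{t+1} = -\frac{\eta}{p}\mathcal{P}_{\Omega}\bigl(\boldsymbol{X}^t(\boldsymbol{Y}^t)^\top - \boldsymbol{M}\bigr)\boldsymbol{\Delta}_{\boldsymbol{Y}}^t - \frac{\eta}{2}\boldsymbol{\Delta}_{\boldsymbol{X}}^t(\boldsymbol{R}^t)^\top\bigl((\boldsymbol{X}^t)^\top\boldsymbol{X}^t - (\boldsymbol{Y}^t)^\top\boldsymbol{Y}^t\bigr)\boldsymbol{R}^t,
\]
and similarly for the $\boldsymbol{Y}$-block. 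The crucial point is that every summand already carries an explicit factor of $\boldsymbol{\Delta}_{\boldsymbol{X}}^t$ or $\boldsymbol{\Delta}_{\boldsymbol{Y}}^t$, so it only remains to show that the other factors are $O(\sqrt{\sigma_1(\boldsymbol{M})})$ in spectral norm.

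For the residual $\tfrac{1}{p}\mathcal{P}_{\Omega}(\boldsymbol{X}^t(\boldsymbol{Y}^t)^\top-\boldsymbol{M})$ I expand via \eqref{eq_a001} into the three pieces $\boldsymbol{\Delta}_{\boldsymbol{X}}^t\boldsymbol{V}^\top,\ \boldsymbol{U}(\boldsymbol{\Delta}_{\boldsymbol{Y}}^t)^\top,\ \boldsymbol{\Delta}_{\boldsymbol{X}}^t(\boldsymbol{\Delta}_{\boldsymbol{Y}}^t)^\top$, apply Lemma \ref{lemma_spectral_gap} to each, and use Lemma \ref{vu_lemma2.2} to replace $\|\boldsymbol{\Omega}-p\boldsymbol{J}\|$ by $C_3\sqrt{(n_1\wedge n_2)p}$; combined with the incoherence of $\boldsymbol{U},\boldsymbol{V}$ and the induction hypothesis \eqref{eq_ind5}, this yields $\|\tfrac{1}{p}\mathcal{P}_{\Omega}(\boldsymbol{X}^t(\boldsymbol{Y}^t)^\top-\boldsymbol{M})\| \lesssim \|\boldsymbol{\Delta}^t\|\sqrt{\sigma_1(\boldsymbol{M})}$. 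For the imbalance matrix I use the expansion \eqref{eq_a002} together with $\boldsymbol{U}^\top\boldsymbol{U}=\boldsymbol{V}^\top\boldsymbol{V}$ and the symmetry of $(\boldsymbol{X}^t\boldsymbol{R}^t)^\top\boldsymbol{U}+(\boldsymbol{Y}^t\boldsymbol{R}^t)^\top\boldsymbol{V}$ coming from the definition of $\boldsymbol{R}^t$, giving $\|(\boldsymbol{X}^t)^\top\boldsymbol{X}^t - (\boldsymbol{Y}^t)^\top\boldsymbol{Y}^t\| \lesssim \|\boldsymbol{\Delta}^t\|\sqrt{\sigma_1(\boldsymbol{M})}$. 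Combining with \eqref{eq_ind2} and the sampling-rate assumption yields
\[
\|\boldsymbol{X}_1-\boldsymbol{X}_2\| \;\lesssim\; \eta\|\boldsymbol{\Delta}^t\|^2\sqrt{\sigma_1(\boldsymbol{M})} \;\leqslant\; \frac{0.1\,\eta\,\sigma_r(\boldsymbol{M})}{\kappa}\,\|\boldsymbol{\Delta}^t\|.
\]

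Given this bound, \eqref{eq_011} is immediate: since $\sigma_1(\boldsymbol{X}_0)=\sqrt{2\sigma_1(\boldsymbol{M})}$ and $\sigma_r(\boldsymbol{X}_0)=\sqrt{2\sigma_r(\boldsymbol{M})}$, the required thresholds $\sigma_r(\boldsymbol{M})/\sqrt{2\sigma_1(\boldsymbol{M})}$ and $\sigma_r(\boldsymbol{M})/(2\sqrt{2\sigma_1(\boldsymbol{M})})$ dominate the display above, while $\|\boldsymbol{X}_1-\boldsymbol{X}_0\|\leqslant\|\boldsymbol{X}_1-\boldsymbol{X}_2\|+\alpha_1+\alpha_2$ is similarly small by the already-established \eqref{eq_010} and \eqref{eq_009}. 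Applying Lemma \ref{ma_lemma37} with $\sigma_1^2(\boldsymbol{X}_0)/\sigma_r^2(\boldsymbol{X}_0)=\kappa$ then delivers $\alpha_3\leqslant 5\kappa\|\boldsymbol{X}_1-\boldsymbol{X}_2\|\leqslant 0.5\,\eta\,\sigma_r(\boldsymbol{M})\|\boldsymbol{\Delta}^t\|$. The main obstacle is precisely that factor of $5\kappa$: to reach $0.5\,\eta\,\sigma_r(\boldsymbol{M})\|\boldsymbol{\Delta}^t\|$ after incurring it, $\|\boldsymbol{X}_1-\boldsymbol{X}_2\|$ must be squeezed below $\eta\sigma_r(\boldsymbol{M})\|\boldsymbol{\Delta}^t\|/\kappa$, which forces the use of the sharper Lemma \ref{lemma_spectral_gap}\,+\,Lemma \ref{vu_lemma2.2} concentration in place of the matrix Bernstein bound used in \cite{ma2017implicit}; this is exactly where the $\sqrt{r}$ improvement relative to \cite{ma2017implicit} is realized.
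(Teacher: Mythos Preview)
Your computation of $\boldsymbol{X}_1-\boldsymbol{X}_2$ and its bound $\lesssim \tfrac{\eta\sigma_r(\boldsymbol{M})}{\kappa}\|\boldsymbol{\Delta}^t\|$ is exactly what the paper does in \eqref{eq_079}--\eqref{eq_082} and \eqref{eq_077}, and your verification of the size hypotheses \eqref{eq_011} is correct. There is, however, one genuine gap.

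Lemma~\ref{ma_lemma37} bounds $\|\boldsymbol{X}_1\boldsymbol{R}_1-\boldsymbol{X}_2\boldsymbol{R}_2\|$, where $\boldsymbol{R}_2=\argmin_{\boldsymbol{R}\in\mathsf{O}(r)}\|\boldsymbol{X}_2\boldsymbol{R}-\boldsymbol{X}_0\|_F$. But $\alpha_3=\bigl\|\left[\begin{smallmatrix}\boldsymbol{X}^{t+1}\\\boldsymbol{Y}^{t+1}\end{smallmatrix}\right]\boldsymbol{R}^{t+1}-\left[\begin{smallmatrix}\widetilde{\boldsymbol{X}}^{t+1}\\\widetilde{\boldsymbol{Y}}^{t+1}\end{smallmatrix}\right]\bigr\|=\|\boldsymbol{X}_1\boldsymbol{R}_1-\boldsymbol{X}_2\|$, with \emph{no} rotation applied to $\boldsymbol{X}_2$. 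So to read off $\alpha_3\leqslant 5\kappa\|\boldsymbol{X}_1-\boldsymbol{X}_2\|$ you must first establish $\boldsymbol{R}_2=\boldsymbol{I}$, and your proposal never does this.

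The paper fills this in by proving that $\left[\begin{smallmatrix}\boldsymbol{U}\\\boldsymbol{V}\end{smallmatrix}\right]^\top\left[\begin{smallmatrix}\widetilde{\boldsymbol{X}}^{t+1}\\\widetilde{\boldsymbol{Y}}^{t+1}\end{smallmatrix}\right]\succeq\boldsymbol{0}$. Symmetry comes from a structural cancellation: expanding this product using the definition \eqref{eq_066}, the two balancing terms combine to zero because $\boldsymbol{U}^\top\boldsymbol{U}=\boldsymbol{V}^\top\boldsymbol{V}$, leaving $\boldsymbol{U}^\top\boldsymbol{X}^t\boldsymbol{R}^t+\boldsymbol{V}^\top\boldsymbol{Y}^t\boldsymbol{R}^t$ (symmetric by the choice of $\boldsymbol{R}^t$) plus a term of the form $\boldsymbol{A}+\boldsymbol{A}^\top$. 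Positive-definiteness then follows from Weyl's inequality, since $\|\boldsymbol{X}_2-\boldsymbol{X}_0\|\leqslant\alpha_1+\alpha_2$ has already been bounded by \eqref{eq_010}, \eqref{eq_009} well below $\sigma_r(\boldsymbol{M})/\sqrt{\sigma_1(\boldsymbol{M})}$. You have all the ingredients for this argument at hand, but the step itself --- identifying $\boldsymbol{R}_2$ --- cannot be skipped.
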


			% As the conditions are verified, now we can apply Lemma \ref{ma_lemma37}:
			% \[
			% 	\alpha_3 = \left\| \left[ \begin{array}{c}\widetilde{\boldsymbol{X}}^{t+1}\\\widetilde{\boldsymbol{Y}}^{t+1} \end{array}\right] - \left[ \begin{array}{c} \boldsymbol{X}^{t+1}\\\boldsymbol{Y}^{t+1} \end{array} \right] \boldsymbol{R}^{t+1}  \right\| \leqslant 5\kappa \left\| \left[ \begin{array}{c}\widetilde{\boldsymbol{X}}^{t+1}\\\widetilde{\boldsymbol{Y}}^{t+1} \end{array}\right] - \left[ \begin{array}{c} \boldsymbol{X}^{t+1}\\\boldsymbol{Y}^{t+1} \end{array} \right] \boldsymbol{R}^t  \right\|.
			% \]

			% Given the claim is true, using the estimation in \eqref{eq_012}, on the event $E_{gd}^t\subset E_H \subset E_S$,
			% \[
			% 	\alpha_3\leqslant 0.5\eta\sigma_r(\boldsymbol{M}) \|\boldsymbol{\Delta}^t\|
			% \]
			% holds. 
			Now by putting the estimations of $\alpha_1,\alpha_2,\alpha_3$, \eqref{eq_010}, \eqref{eq_009}, \eqref{eq_078} together,
			\begin{equation}\label{eq_045}
				\begin{split}
				&\left\|  \left[ \begin{array}{c} \boldsymbol{X}^{t+1}\\\boldsymbol{Y}^{t+1}\end{array} \right] \boldsymbol{R}^{t+1} - \left[ \begin{array}{c} \boldsymbol{U}\\\boldsymbol{V} \end{array}\right] \right\|\\
				 \leqslant &\alpha_1+\alpha_2+\alpha_3\\
				\leqslant& (1-0.9\eta\sigma_r(\boldsymbol{M}))\|\boldsymbol{\Delta}^t\|  + 0.25\eta  \sigma_r(\boldsymbol{M}) C_I\rho^t \sqrt{\frac{\mu r \kappa^6 \log (n_1\vee n_2)}{(n_1\wedge n_2)p}}\sqrt{\sigma_1(\boldsymbol{M})}  +  0.5\eta\sigma_r(\boldsymbol{M}) \|\boldsymbol{\Delta}^t\|\\
				\leqslant & C_I\rho^{t+1} \sqrt{\frac{\mu r \kappa^6 \log (n_1\vee n_2)}{(n_1\wedge n_2)p}}\sqrt{\sigma_1(\boldsymbol{M})}
				\end{split}
			\end{equation}
			holds on the event $E_{gd}^t\subset E_H \subset E_S$, where the last inequality uses \eqref{eq_ind2} and $\rho = 1-0.05\eta\sigma_r(\boldsymbol{M})$. 
			% \[
			% 	\eta \leqslant c_{\eta}\frac{\sigma_r(\boldsymbol{M})}{\sigma_1^2(\boldsymbol{M})}.
			% \]

		\begin{proof}[Proof of Claim \ref{claim_02}]  
		
		By the definition of $\boldsymbol{R}^{t+1}$ in \eqref{eq:Rt}, we can verify that $\boldsymbol{R}_1 = (\boldsymbol{R}^t)^\top\boldsymbol{R}^{t+1}$. Recall $\boldsymbol{R}_1$ is defined in Lemma \ref{ma_lemma37}. Now we want to show that $\boldsymbol{R}_2 = \boldsymbol{I}$. In other words, we want to show 
		\[
		\left[\begin{array}{c}\boldsymbol{U}\\\boldsymbol{V}\end{array}\right]^\top \left[ \begin{array}{c} \widetilde{\boldsymbol{X}}^{t+1}\\ \widetilde{\boldsymbol{Y}}^{t+1}\end{array} \right] \succeq \mtx{0}. 
		\]
		
		First, from \eqref{eq_066},
		\[
			\begin{split}
				& \left[\begin{array}{c}\boldsymbol{U}\\\boldsymbol{V}\end{array}\right]^\top \left[ \begin{array}{c} \widetilde{\boldsymbol{X}}^{t+1}\\ \widetilde{\boldsymbol{Y}}^{t+1}\end{array} \right] \\
				= & \boldsymbol{U}^\top\boldsymbol{X}^t\boldsymbol{R}^t - \frac{\eta}{p}\boldsymbol{U}^\top\mathcal{P}_{\Omega}\left( \boldsymbol{X}^t(\boldsymbol{Y}^t)^\top - \boldsymbol{U}\boldsymbol{V}^\top \right)\boldsymbol{V} - \frac{\eta}{2} \boldsymbol{U}^\top\boldsymbol{U}(\boldsymbol{R}^t)^\top \left( (\boldsymbol{X}^t)^\top\boldsymbol{X}^t -(\boldsymbol{Y}^t)^\top\boldsymbol{Y}^t   \right)\boldsymbol{R}^t\\
				& + \boldsymbol{V}^\top\boldsymbol{Y}^t\boldsymbol{R}^t - \frac{\eta}{p}\boldsymbol{V}^\top \left[ \mathcal{P}_{\Omega}\left( \boldsymbol{X}^t(\boldsymbol{Y}^t)^\top - \boldsymbol{U}\boldsymbol{V}^\top \right) \right]^\top\boldsymbol{U}  - \frac{\eta}{2}\boldsymbol{V}^\top\boldsymbol{V}(\boldsymbol{R}^t)^\top \left( (\boldsymbol{Y}^t)^\top\boldsymbol{Y}^t - (\boldsymbol{X}^t)^\top\boldsymbol{X}^t \right)\boldsymbol{R}^t\\
				=& \boldsymbol{U}^\top\boldsymbol{X}^t\boldsymbol{R}^t  + \boldsymbol{V}^\top\boldsymbol{Y}^t\boldsymbol{R}^t - \frac{\eta}{p}\boldsymbol{U}^\top\mathcal{P}_{\Omega}\left( \boldsymbol{X}^t(\boldsymbol{Y}^t)^\top  - \boldsymbol{U}\boldsymbol{V}^\top \right)\boldsymbol{V} - \frac{\eta}{p}\boldsymbol{V}^\top \left[ \mathcal{P}_{\Omega}\left( \boldsymbol{X}^t(\boldsymbol{Y}^t)^\top - \boldsymbol{U}\boldsymbol{V}^\top \right) \right]^\top\boldsymbol{U},
			\end{split}
		\]
		where the last equation holds since $\boldsymbol{U}^\top\boldsymbol{U} = \boldsymbol{V}^\top\boldsymbol{V}$. By the definition of $\boldsymbol{R}^t$, $\boldsymbol{U}^\top\boldsymbol{X}^t\boldsymbol{R}^t+\boldsymbol{V}^\top\boldsymbol{Y}^t\boldsymbol{R}^t$ is positive semidefinite, therefore symmetric. Therefore, $\left[\begin{array}{c}\boldsymbol{U}\\\boldsymbol{V}\end{array}\right]^\top \left[ \begin{array}{c} \widetilde{\boldsymbol{X}}^{t+1}\\ \widetilde{\boldsymbol{Y}}^{t+1}\end{array} \right]$ is symmetric. Moreover, we have 
		\[
			\begin{split}
				 \left\| \left[ \begin{array}{c} \boldsymbol{U}\\\boldsymbol{V} \end{array}\right]^\top\left[ \begin{array}{c} \widetilde{\boldsymbol{X}}^{t+1}\\ \widetilde{\boldsymbol{Y}}^{t+1}\end{array} \right] - \left[ \begin{array}{c} \boldsymbol{U}\\\boldsymbol{V} \end{array}\right]^\top\left[ \begin{array}{c} \boldsymbol{U}\\\boldsymbol{V} \end{array}\right] \right\| \leqslant & \left\| \left[ \begin{array}{c} \boldsymbol{U}\\\boldsymbol{V} \end{array}\right] \right\| \left\| \left[ \begin{array}{c} \widetilde{\boldsymbol{X}}^{t+1}\\ \widetilde{\boldsymbol{Y}}^{t+1}\end{array} \right] - \left[ \begin{array}{c} \boldsymbol{U}\\\boldsymbol{V} \end{array}\right] \right\|\\
				\leqslant &2\sqrt{\sigma_1(\boldsymbol{M})}(\alpha_1+\alpha_2),
			\end{split}
		\]
		where the last inequality holds by triangle inequality and the definition of $\alpha_1$ and $\alpha_2$ in \eqref{eq_046}. From \eqref{eq_010} and \eqref{eq_009},
		\[
		\begin{split}
			& \alpha_1+\alpha_2\\
			 \leqslant & (1-0.9\eta\sigma_r(\boldsymbol{M}))\|\boldsymbol{\Delta}^t\|  + 0.25\eta  \sigma_r(\boldsymbol{M}) C_I\rho^t \sqrt{\frac{\mu r \kappa^6 \log (n_1\vee n_2)}{(n_1\wedge n_2)p}}\sqrt{\sigma_1(\boldsymbol{M})}
		\end{split}
		\]
		holds on the event $E_{gd}^t$. Therefore, from \eqref{eq_ind2}, and the fact that 
		\[
			p\geqslant 16 C_I^2 \frac{\mu r \kappa^8 \log (n_1\vee n_2)}{n_1\wedge n_2}
		\]
		and
		\[
			\eta \leqslant  \frac{\sigma_r(\boldsymbol{M})}{200\sigma_1^2(\boldsymbol{M})},
		\]
		we have 
		\[
			\begin{split}
			& \left\| \left[ \begin{array}{c} \boldsymbol{U}\\\boldsymbol{V} \end{array}\right]^\top\left[ \begin{array}{c} \widetilde{\boldsymbol{X}}^{t+1}\\ \widetilde{\boldsymbol{Y}}^{t+1}\end{array} \right] - \left[ \begin{array}{c} \boldsymbol{U}\\\boldsymbol{V} \end{array}\right]^\top\left[ \begin{array}{c} \boldsymbol{U}\\\boldsymbol{V} \end{array}\right] \right\|\\
			\leqslant & \left\| \left[ \begin{array}{c} \boldsymbol{U}\\\boldsymbol{V} \end{array}\right] \right\| \left\| \left[ \begin{array}{c} \widetilde{\boldsymbol{X}}^{t+1}\\ \widetilde{\boldsymbol{Y}}^{t+1}\end{array} \right] - \left[ \begin{array}{c} \boldsymbol{U}\\\boldsymbol{V} \end{array}\right] \right\|\\
				\leqslant &  2\sqrt{\sigma_1(\boldsymbol{M})} (1-0.65\eta\sigma_r(\boldsymbol{M})) C_I\rho^t \times  \sqrt{\frac{\mu r \kappa^6 \log (n_1\vee n_2)}{(n_1\wedge n_2)p}}\sqrt{\sigma_1(\boldsymbol{M})} \\
				 \leqslant& 0.5\sigma_r(\boldsymbol{M})\leqslant 0.5\sigma_r^2(\boldsymbol{X}_0)
			\end{split} 
		\]
		on the event $E_{gd}^t$. By the fact that $\left[ \begin{array}{c} \boldsymbol{U}\\\boldsymbol{V} \end{array}\right]^\top\left[ \begin{array}{c} \boldsymbol{U}\\\boldsymbol{V} \end{array}\right] = \boldsymbol{U}^\top\boldsymbol{U} + \boldsymbol{V}^\top\boldsymbol{V} = 2\boldsymbol{U}^\top\boldsymbol{U}$, we have 
		\[
			\lambda_r\left( \left[ \begin{array}{c} \boldsymbol{U}\\\boldsymbol{V} \end{array}\right]^\top\left[ \begin{array}{c} \boldsymbol{U}\\\boldsymbol{V} \end{array}\right] \right) = 2\sigma_r(\boldsymbol{M}).
		\]
		By the construction of $\left[ \begin{array}{c} \boldsymbol{U}\\\boldsymbol{V} \end{array}\right]^\top\left[ \begin{array}{c} \widetilde{\boldsymbol{X}}^{t+1}\\ \widetilde{\boldsymbol{Y}}^{t+1}\end{array} \right] $, it is an $r\times r$ symmetric matrix. By the Weyl's inequality,  for all $i = 1,\cdots, r$, any two symmetric matrices $\boldsymbol{A},\boldsymbol{B}\in\mathbb{R}^{r\times r}$ satisfies
			\[
				|\lambda_i(\boldsymbol{A}) - \lambda_i(\boldsymbol{B})|\leqslant \|\boldsymbol{A}-\boldsymbol{B}\|.
			\]
			Therefore, we have
		\[
			\lambda_r\left( \left[ \begin{array}{c} \boldsymbol{U}\\\boldsymbol{V} \end{array}\right]^\top\left[ \begin{array}{c} \widetilde{\boldsymbol{X}}^{t+1}\\ \widetilde{\boldsymbol{Y}}^{t+1}\end{array} \right] \right) \geqslant 1.5 \sigma_r(\boldsymbol{M}),
		\]
		and $\left[ \begin{array}{c} \boldsymbol{U}\\\boldsymbol{V} \end{array}\right]^\top\left[ \begin{array}{c} \widetilde{\boldsymbol{X}}^{t+1}\\ \widetilde{\boldsymbol{Y}}^{t+1}\end{array} \right] \succeq \boldsymbol{0}$. Therefore, we have
		\[
			\boldsymbol{I} = \boldsymbol{R}_2 = \argmin_{\boldsymbol{R}\in\mathsf{O}(r)}\left\| \left[\begin{array}{c} \widetilde{\boldsymbol{X}}^{t+1}\\ \widetilde{\boldsymbol{Y}}^{t+1} \end{array}\right]\boldsymbol{R} - \left[ \begin{array}{c}\boldsymbol{U}\\\boldsymbol{V}\end{array}\right]\right\|_F.
		\]
		Now we want to verify condition \eqref{eq_011} of Lemma \ref{ma_lemma37} is valid here. Since we have already shown 
		\[
		\left\| \left[ \begin{array}{c} \boldsymbol{U}\\\boldsymbol{V} \end{array}\right] \right\| \left\| \left[ \begin{array}{c} \widetilde{\boldsymbol{X}}^{t+1}\\ \widetilde{\boldsymbol{Y}}^{t+1}\end{array} \right] - \left[ \begin{array}{c} \boldsymbol{U}\\\boldsymbol{V} \end{array}\right] \right\| \leqslant 0.5\sigma_r^2(\boldsymbol{X}_0),
		\]
		 the first inequality is verified. Moreover, by the definition of $\boldsymbol{X}^{t+1}$ and $\boldsymbol{Y}^{t+1}$,
		 \[ 
		 \begin{split}
				\boldsymbol{X}^{t+1}\boldsymbol{R}^t =& \boldsymbol{X}^t \boldsymbol{R}^t -\frac{\eta}{p} \mathcal{P}_{\Omega}(\boldsymbol{X}^t (\boldsymbol{Y}^t)^\top - \boldsymbol{U}\boldsymbol{V}^\top)\boldsymbol{Y}^t \boldsymbol{R}^t  -\frac{\eta}{2} (\boldsymbol{X}^t\boldsymbol{R}^t)(\boldsymbol{R}^t)^\top ((\boldsymbol{X}^t)^\top\boldsymbol{X}^t - (\boldsymbol{Y}^t)^\top \boldsymbol{Y}^t)\boldsymbol{R}^t,
				\end{split}
				\]
		\[
		\begin{split}
				\boldsymbol{Y}^{t+1}\boldsymbol{R}^t =& \boldsymbol{Y}^t \boldsymbol{R}^t -\frac{\eta}{p} \left[\mathcal{P}_{\Omega}(\boldsymbol{X}^t (\boldsymbol{Y}^t)^\top - \boldsymbol{U}\boldsymbol{V}^\top)\right]^\top \boldsymbol{X}^t \boldsymbol{R}^t  -\frac{\eta}{2} (\boldsymbol{Y}^t\boldsymbol{R}^t)(\boldsymbol{R}^t)^\top ((\boldsymbol{Y}^t)^\top\boldsymbol{Y}^t - (\boldsymbol{X}^t)^\top \boldsymbol{X}^t)\boldsymbol{R}^t.  
		\end{split}
		 \]
		Hence,
		\begin{equation}\label{eq_079}
			\begin{split}
				& \left\| \left[ \begin{array}{c}\widetilde{\boldsymbol{X}}^{t+1}\\\widetilde{\boldsymbol{Y}}^{t+1} \end{array}\right] - \left[ \begin{array}{c} \boldsymbol{X}^{t+1}\\\boldsymbol{Y}^{t+1} \end{array} \right] \boldsymbol{R}^t \right\|\\
				=& \eta\left\| \left[ \begin{array}{c} \frac{1}{p}\mathcal{P}_{\Omega}\left( \boldsymbol{X}^t(\boldsymbol{Y}^t)^\top - \boldsymbol{U}\boldsymbol{V}^\top \right)\boldsymbol{\Delta}_{\boldsymbol{Y}}^t   +\frac{1}{2}\boldsymbol{\Delta}_{\boldsymbol{X}}^t(\boldsymbol{R}^t)^\top \left( (\boldsymbol{X}^t)^\top\boldsymbol{X}^t - (\boldsymbol{Y}^t)^\top\boldsymbol{Y}^t \right)\boldsymbol{R}^t\\
				\frac{1}{p}\left[\mathcal{P}_{\Omega}\left( \boldsymbol{X}^t(\boldsymbol{Y}^t)^\top - \boldsymbol{U}\boldsymbol{V}^\top \right)\right]^\top \boldsymbol{\Delta}_{\boldsymbol{X}}^t   + \frac{1}{2}\boldsymbol{\Delta}_{\boldsymbol{Y}}^t (\boldsymbol{R}^t)^\top \left(  (\boldsymbol{Y}^t)^\top\boldsymbol{Y}^t-(\boldsymbol{X}^t)^\top\boldsymbol{X}^t  \right)\boldsymbol{R}^t  
				\end{array}\right]  \right\|\\
				\leqslant & \eta \left\| \left[ \begin{array}{c} \boldsymbol{0} \\ \frac{1}{p} \left[\mathcal{P}_{\Omega}\left( \boldsymbol{X}^t(\boldsymbol{Y}^t)^\top - \boldsymbol{U}\boldsymbol{V}^\top \right)\right]^\top   \end{array}  \vphantom{\begin{array}{c} \boldsymbol{0} \\ \frac{1}{p} \left[\mathcal{P}_{\Omega}\left( \boldsymbol{X}^t(\boldsymbol{Y}^t)^\top - \boldsymbol{U}\boldsymbol{V}^\top \right)\right]^\top   \end{array}}
				\begin{array}{c }    \frac{1}{p} \mathcal{P}_{\Omega}\left( \boldsymbol{X}^t(\boldsymbol{Y}^t)^\top - \boldsymbol{U}\boldsymbol{V}^\top \right)\\   \boldsymbol{0} \end{array} \right] \left[ \begin{array}{c}\boldsymbol{\Delta}_{\boldsymbol{X}}^t \\ \boldsymbol{\Delta}_{\boldsymbol{Y}}^t\end{array} \right]\right\| \\
				&+ \frac{\eta}{2}(\|\boldsymbol{\Delta}_{\boldsymbol{X}}^t\|+\|\boldsymbol{\Delta}_{\boldsymbol{Y}}^t\|)   \left\|(\boldsymbol{R}^t)^\top \left( (\boldsymbol{X}^t)^\top\boldsymbol{X}^t - (\boldsymbol{Y}^t)^\top\boldsymbol{Y}^t \right)\boldsymbol{R}^t  \right\| \\ 
				\leqslant & \eta \left( \left\| \frac{1}{p} \mathcal{P}_{\Omega}\left( \boldsymbol{X}^t(\boldsymbol{Y}^t)^\top - \boldsymbol{U}\boldsymbol{V}^\top \right) \right\|  + \left\|(\boldsymbol{R}^t)^\top \left( (\boldsymbol{X}^t)^\top\boldsymbol{X}^t - (\boldsymbol{Y}^t)^\top\boldsymbol{Y}^t \right)\boldsymbol{R}^t  \right\| \right)\|\boldsymbol{\Delta}^t\|.
			\end{split}
		\end{equation}
		In order to bound $  \left\| \frac{1}{p} \mathcal{P}_{\Omega}\left( \boldsymbol{X}^t(\boldsymbol{Y}^t)^\top - \boldsymbol{U}\boldsymbol{V}^\top \right) \right\|$. Recalling \eqref{eq_a001} and combining with Lemma \ref{lemma_spectral_gap} we have
		\begin{equation}\label{eq_080}
			\begin{split}
				&   \left\| \frac{1}{p} \mathcal{P}_{\Omega}\left( \boldsymbol{X}^t(\boldsymbol{Y}^t)^\top - \boldsymbol{U}\boldsymbol{V}^\top \right) \right\|\\
				\leqslant & \left\| \frac{1}{p}\mathcal{P}_{\Omega}(\boldsymbol{\Delta}_{\boldsymbol{X}}^t \boldsymbol{V}^\top) -  \boldsymbol{\Delta}_{\boldsymbol{X}}^t \boldsymbol{V}^\top\right\| + \|\boldsymbol{\Delta}_{\boldsymbol{X}}^t \boldsymbol{V}^\top\| + \left\| \frac{1}{p}\mathcal{P}_{\Omega}\left( \boldsymbol{U}(\boldsymbol{\Delta}_{\boldsymbol{Y}}^t)^\top \right) -\boldsymbol{U}(\boldsymbol{\Delta}_{\boldsymbol{Y}}^t)^\top \right\|+\left\| \boldsymbol{U}(\boldsymbol{\Delta}_{\boldsymbol{Y}}^t)^\top  \right\|\\
				& +  \left\| \frac{1}{p}\mathcal{P}_{\Omega}\left(\boldsymbol{\Delta}_{\boldsymbol{X}}^t (\boldsymbol{\Delta}_{\boldsymbol{Y}}^t)^\top \right) - \boldsymbol{\Delta}_{\boldsymbol{X}}^t (\boldsymbol{\Delta}_{\boldsymbol{Y}}^t)^\top\right\| +\left\| \boldsymbol{\Delta}_{\boldsymbol{X}}^t (\boldsymbol{\Delta}_{\boldsymbol{Y}}^t)^\top \right\|\\
				\leqslant & \frac{\|\boldsymbol{\Omega}-p\boldsymbol{J}\|}{p}(\|\boldsymbol{\Delta}_{\boldsymbol{X}}^t\|_{2,\infty}\|\boldsymbol{V}\|_{2,\infty} + \|\boldsymbol{\Delta}_{\boldsymbol{Y}}^t\|_{2,\infty}\|\boldsymbol{U}\|_{2,\infty}  +\|\boldsymbol{\Delta}_{\boldsymbol{X}}^t\|_{2,\infty} \|\boldsymbol{\Delta}_{\boldsymbol{Y}}^t\|_{2,\infty})\\
				& + \|\boldsymbol{\Delta}_{\boldsymbol{X}}^t\| \|\boldsymbol{V}\|  + \|\boldsymbol{\Delta}_{\boldsymbol{Y}}^t\| \|\boldsymbol{U}\|  +\|\boldsymbol{\Delta}_{\boldsymbol{X}}^t\| \|\boldsymbol{\Delta}_{\boldsymbol{Y}}^t\| .
			\end{split}
		\end{equation}
		And in addition , from \eqref{eq_a002}, 
		\begin{equation}\label{eq_081}	
			\begin{split}
			 &\left\|(\boldsymbol{R}^t)^\top \left( (\boldsymbol{X}^t)^\top\boldsymbol{X}^t - (\boldsymbol{Y}^t)^\top\boldsymbol{Y}^t \right)\boldsymbol{R}^t  \right\|\\
			  =& \left\| \boldsymbol{U}^\top \boldsymbol{\Delta}_{\boldsymbol{X}}^t + (\boldsymbol{\Delta}_{\boldsymbol{X}}^t)^\top \boldsymbol{U} + (\boldsymbol{\Delta}_{\boldsymbol{X}}^t)^\top \boldsymbol{\Delta}_{\boldsymbol{X}}^t - \boldsymbol{V}^\top \boldsymbol{\Delta}_{\boldsymbol{Y}}^t  - (\boldsymbol{\Delta}_{\boldsymbol{Y}}^t)^\top \boldsymbol{V} - (\boldsymbol{\Delta}_{\boldsymbol{Y}}^t)^\top \boldsymbol{\Delta}_{\boldsymbol{Y}}^t\right\|\\
			  \leqslant & 2\|\boldsymbol{U}\|\|\boldsymbol{\Delta}_{\boldsymbol{X}}^t\| + 2\|\boldsymbol{V}\|\|\boldsymbol{\Delta}_{\boldsymbol{Y}}^t\| +\|\boldsymbol{\Delta}_{\boldsymbol{X}}^t\|^2+\|\boldsymbol{\Delta}_{\boldsymbol{Y}}^t\|^2.
			 \end{split}
		\end{equation}
		Combining the estimations \eqref{eq_080} and \eqref{eq_081} together and plugging back into \eqref{eq_079} we have
		\begin{equation}\label{eq_082}
			\begin{split}
				& \left\| \left[ \begin{array}{c}\widetilde{\boldsymbol{X}}^{t+1}\\\widetilde{\boldsymbol{Y}}^{t+1} \end{array}\right] - \left[ \begin{array}{c} \boldsymbol{X}^{t+1}\\\boldsymbol{Y}^{t+1} \end{array} \right] \boldsymbol{R}^t \right\|\\
				\leqslant & \eta  \frac{\|\boldsymbol{\Omega}-p\boldsymbol{J}\|}{p}(\|\boldsymbol{\Delta}_{\boldsymbol{X}}^t\|_{2,\infty}\|\boldsymbol{V}\|_{2,\infty} + \|\boldsymbol{\Delta}_{\boldsymbol{Y}}^t\|_{2,\infty}\|\boldsymbol{U}\|_{2,\infty} +\|\boldsymbol{\Delta}_{\boldsymbol{X}}^t\|_{2,\infty} \|\boldsymbol{\Delta}_{\boldsymbol{Y}}^t\|_{2,\infty})\|\boldsymbol{\Delta}^t\|\\
				&+ \eta \left(\|\boldsymbol{\Delta}_{\boldsymbol{X}}^t\| \|\boldsymbol{V}\|  + \|\boldsymbol{\Delta}_{\boldsymbol{Y}}^t\| \|\boldsymbol{U}\|  +\|\boldsymbol{\Delta}_{\boldsymbol{X}}^t\| \|\boldsymbol{\Delta}_{\boldsymbol{Y}}^t\|  + 2\|\boldsymbol{U}\|\|\boldsymbol{\Delta}_{\boldsymbol{X}}^t\| + 2\|\boldsymbol{V}\|\|\boldsymbol{\Delta}_{\boldsymbol{Y}}^t\| +\|\boldsymbol{\Delta}_{\boldsymbol{X}}^t\|^2 +\|\boldsymbol{\Delta}_{\boldsymbol{Y}}^t\|^2\right)\|\boldsymbol{\Delta}^t\|.
			\end{split}
		\end{equation}
		From \eqref{eq_ind2}, \eqref{eq_ind5} and
		\[
			p\geqslant 110^2 C_I^2 \frac{\mu r \kappa^{11} \log (n_1\vee n_2)}{n_1\wedge n_2},
		\]
		we have 
		\[
			\|\boldsymbol{\Delta}^t\|\leqslant C_I\rho^t\sqrt{\frac{\mu r \kappa^6\log (n_1\vee n_2)}{(n_1\wedge n_2)p}}\sqrt{\sigma_1(\boldsymbol{M})}\leqslant \sqrt{\sigma_1(\boldsymbol{M})}
		\]
		and 
		\[
		\begin{split}
			\|\boldsymbol{\Delta}^t\|_{2,\infty}\leqslant & 110 C_I\rho^t\sqrt{\frac{\mu^2 r^2 \kappa^{12}\log (n_1\vee n_2)}{(n_1\wedge n_2)^2p}}\sqrt{\sigma_1(\boldsymbol{M})}\\
			\leqslant & \sqrt{\frac{\mu r \kappa}{n_1\wedge n_2}} \sqrt{\sigma_1(\boldsymbol{M})}.
		\end{split}
		\]

		Therefore, by applying Lemma \ref{vu_lemma2.2} and given 
		\[
			p\geqslant (6600C_I+32400 C_I^2) \frac{\mu^{1.5}r^{1.5}\kappa^{10}\log (n_1\vee n_2)}{n_1\wedge n_2},
		\] 
		we have
		
		\[
			\begin{split}
				&  \frac{\|\boldsymbol{\Omega}-p\boldsymbol{J}\|}{p}(\|\boldsymbol{\Delta}_{\boldsymbol{X}}^t\|_{2,\infty}\|\boldsymbol{V}\|_{2,\infty} + \|\boldsymbol{\Delta}_{\boldsymbol{Y}}^t\|_{2,\infty}\|\boldsymbol{U}\|_{2,\infty}  +\|\boldsymbol{\Delta}_{\boldsymbol{X}}^t\|_{2,\infty} \|\boldsymbol{\Delta}_{\boldsymbol{Y}}^t\|_{2,\infty}) \\
				&+  \|\boldsymbol{\Delta}_{\boldsymbol{X}}^t\| \|\boldsymbol{V}\|  + \|\boldsymbol{\Delta}_{\boldsymbol{Y}}^t\| \|\boldsymbol{U}\|  +\|\boldsymbol{\Delta}_{\boldsymbol{X}}^t\| \|\boldsymbol{\Delta}_{\boldsymbol{Y}}^t\|    + 2\|\boldsymbol{U}\|\|\boldsymbol{\Delta}_{\boldsymbol{X}}^t\| + 2\|\boldsymbol{V}\|\|\boldsymbol{\Delta}_{\boldsymbol{Y}}^t\| +\|\boldsymbol{\Delta}_{\boldsymbol{X}}^t\|^2+\|\boldsymbol{\Delta}_{\boldsymbol{Y}}^t\|^2  \\
				\leqslant & 3\sqrt{\frac{n_1\wedge n_2}{p}}\sqrt{\frac{\mu r \kappa}{n_1\wedge n_2}}110C_I\rho^t  \sqrt{\frac{\mu^2 r^2 \kappa^{12}\log (n_1\vee n_2)}{(n_1\wedge n_2)^2p}}\sigma_1(\boldsymbol{M})  + 9 C_I\rho^t \sqrt{\frac{\mu r \kappa^6 \log (n_1\vee n_2)}{(n_1\wedge n_2)p}}\sigma_1(\boldsymbol{M})\\
				\leqslant & 330 C_I \rho^t \sqrt{\frac{\mu^3 r^3 \kappa^{13}\log (n_1\vee n_2)}{(n_1\wedge n_2)^2p^2}}\sigma_1(\boldsymbol{M}) + 9 C_I\rho^t \sqrt{\frac{\mu r \kappa^6 \log (n_1\vee n_2)}{(n_1\wedge n_2)p}}\sigma_1(\boldsymbol{M})\\
				\leqslant & \frac{1}{10\kappa}\sigma_r(\boldsymbol{M})\leqslant \frac{1}{2}\sigma_1(\boldsymbol{M}).
			\end{split}
		\]
		
		Therefore, by plugging back to \eqref{eq_082}, 
		\begin{equation}\label{eq_077}
		\left\| \left[ \begin{array}{c}\widetilde{\boldsymbol{X}}^{t+1}\\\widetilde{\boldsymbol{Y}}^{t+1} \end{array}\right] - \left[ \begin{array}{c} \boldsymbol{X}^{t+1}\\\boldsymbol{Y}^{t+1} \end{array} \right] \boldsymbol{R}^t \right\| \leqslant \frac{1}{10\kappa}\eta \sigma_r(\boldsymbol{M}) \|\boldsymbol{\Delta}^t\|,
		\end{equation}
		and
		\[
			\begin{split}
				 &\left\| \left[ \begin{array}{c}\widetilde{\boldsymbol{X}}^{t+1}\\\widetilde{\boldsymbol{Y}}^{t+1} \end{array}\right] - \left[ \begin{array}{c} \boldsymbol{X}^{t+1}\\\boldsymbol{Y}^{t+1} \end{array} \right] \boldsymbol{R}^t \right\| \left\| \left[ \begin{array}{c} \boldsymbol{U}\\\boldsymbol{V}\end{array} \right] \right\|\\
				 \leqslant& \eta \frac{1}{2}\sigma_1(\boldsymbol{M}) 2\sqrt{\sigma_1(\boldsymbol{M})}\|\boldsymbol{\Delta}^t\| \\
				 \leqslant & \eta  \sqrt{\sigma_1(\boldsymbol{M})}^3 C_I \rho^t \sqrt{\frac{\mu r \kappa^6 \log(n_1\vee n_2)}{(n_1\wedge n_2)p}}\sqrt{\sigma_1(\boldsymbol{M})} \\
				 \leqslant &  \frac{1}{4}\sigma_r(\boldsymbol{M})\leqslant \frac{1}{4}\sigma_r^2(\boldsymbol{X}_0)
			 \end{split}
		\]
		% and
		% \[
		% \left\| \left[ \begin{array}{c}\widetilde{\boldsymbol{X}}^{t+1}\\\widetilde{\boldsymbol{Y}}^{t+1} \end{array}\right] - \left[ \begin{array}{c} \boldsymbol{X}^{t+1}\\\boldsymbol{Y}^{t+1} \end{array} \right] \boldsymbol{R}^t \right\| \left\| \left[ \begin{array}{c} \boldsymbol{U}\\\boldsymbol{V}\end{array} \right] \right\| \leqslant c\eta\sigma_r(\boldsymbol{M})\frac{1}{\kappa}\sigma_1(\boldsymbol{M})\leqslant \frac{1}{4}\sigma_r(\boldsymbol{M}).
		% \]
		holds on the event $E_{gd}^t$. Here the second inequality holds due to \eqref{eq_ind2}, and the third inequality follows $p \geqslant C_I^2 \frac{\mu r\kappa^6 \log(n_1\vee n_2) }{n_1\wedge n_2}$, and $\eta \leqslant \frac{\sigma_r(\boldsymbol{M})}{200\sigma_1^2(\boldsymbol{M})}$.
		
		Therefore, all the requirements in \eqref{eq_011} of Lemma \ref{ma_lemma37} is valid, and Lemma \ref{ma_lemma37} can be applied with $\boldsymbol{X}_0$, $\boldsymbol{X}_1$ and $\boldsymbol{X}_2$ defined as in \eqref{eq_065}. By applying Lemma \ref{ma_lemma37},
		\[
		\begin{split}
					\alpha_3 =& \left\| \left[ \begin{array}{c}\widetilde{\boldsymbol{X}}^{t+1}\\\widetilde{\boldsymbol{Y}}^{t+1} \end{array}\right] - \left[ \begin{array}{c} \boldsymbol{X}^{t+1}\\\boldsymbol{Y}^{t+1} \end{array} \right] \boldsymbol{R}^{t+1}  \right\|\\
					 \leqslant& 5\kappa \left\| \left[ \begin{array}{c}\widetilde{\boldsymbol{X}}^{t+1}\\\widetilde{\boldsymbol{Y}}^{t+1} \end{array}\right] - \left[ \begin{array}{c} \boldsymbol{X}^{t+1}\\\boldsymbol{Y}^{t+1} \end{array} \right] \boldsymbol{R}^t  \right\|.
		\end{split}
		\]
		Along with \eqref{eq_077}, there holds $\alpha_3\leqslant 0.5\eta\sigma_r(\boldsymbol{M}) \|\boldsymbol{\Delta}^t\|$.
\end{proof}
% \[
		% 			\boldsymbol{R}_1 = (\boldsymbol{R}^t)^\top \boldsymbol{R}^{t+1},\quad \boldsymbol{R}_2 = \boldsymbol{I}.	
		% 		\]
		% 		In addition we also have
		% 		\begin{equation}\label{eq_012}
		% 			\begin{split}
		% 				\|\boldsymbol{X}_1-\boldsymbol{X}_2\|  \leqslant & \frac{1}{10\kappa}\eta\sigma_r(\boldsymbol{M})\|\boldsymbol{\Delta}^t\|.
		% 			\end{split}
		% 		\end{equation}

				% As the conditions are verified, now we can apply Lemma \ref{ma_lemma37}:
				% \[
				% 	\alpha_3 = \left\| \left[ \begin{array}{c}\widetilde{\boldsymbol{X}}^{t+1}\\\widetilde{\boldsymbol{Y}}^{t+1} \end{array}\right] - \left[ \begin{array}{c} \boldsymbol{X}^{t+1}\\\boldsymbol{Y}^{t+1} \end{array} \right] \boldsymbol{R}^{t+1}  \right\| \leqslant 5\kappa \left\| \left[ \begin{array}{c}\widetilde{\boldsymbol{X}}^{t+1}\\\widetilde{\boldsymbol{Y}}^{t+1} \end{array}\right] - \left[ \begin{array}{c} \boldsymbol{X}^{t+1}\\\boldsymbol{Y}^{t+1} \end{array} \right] \boldsymbol{R}^t  \right\|.
				% \]

				% Given the claim is true, using the estimation in \eqref{eq_012}, on the event $E_{gd}^t\subset E_H \subset E_S$,
				% \[
				% 	\alpha_3\leqslant 0.5\eta\sigma_r(\boldsymbol{M}) \|\boldsymbol{\Delta}^t\|
				% \]
				% holds. 
\subsection{Proof of \eqref{eq_ind4}}
	For the induction hypothesis \eqref{eq_ind4}, without loss of generality, we assume $1\leqslant l\leqslant n_1$. From \eqref{eq:leaveoneout1}, we have the following decomposition:
			\[ 
				\begin{split}
					& \left( \left[ \begin{array}{c}\boldsymbol{X}^{t+1,(l)} \\\boldsymbol{Y}^{t+1,(l)} \end{array}\right]\boldsymbol{R}^{t+1,(l)} - \left[ \begin{array}{c} \boldsymbol{U}\\\boldsymbol{V} \end{array} \right] \right)_{l,\cdot}\\
					 =& (\boldsymbol{X}^{t+1,(l)}_{l,\cdot})^\top \boldsymbol{R}^{t+1,(l)} - \boldsymbol{U}_{l,\cdot}^\top \\
					=& (\boldsymbol{X}^{t,(l)}_{l,\cdot})^\top  \boldsymbol{R}^{t+1,(l)} -\boldsymbol{U}_{l,\cdot}^\top   - \eta \left((\boldsymbol{X}^{t,(l)}_{l,\cdot})^\top (\boldsymbol{Y}^{t,(l)})^\top - \boldsymbol{U}_{l,\cdot}^\top \boldsymbol{V}^\top \right)\boldsymbol{Y}^{t,(l)} \boldsymbol{R}^{t+1,(l)}\\
					& -\frac{\eta}{2}(\boldsymbol{X}^{t,(l)}_{l,\cdot})^\top  \left( (\boldsymbol{X}^{t,(l)})^\top \boldsymbol{X}^{t,(l)} - (\boldsymbol{Y}^{t,(l)})^\top\boldsymbol{Y}^{t,(l)} \right)\boldsymbol{R}^{t+1,(l)}\\
					=& \boldsymbol{a}_1 + \boldsymbol{a}_2-\boldsymbol{a}_3,\\
%					=& \underbrace{(\boldsymbol{X}^{t,(l)}_{l,\cdot})^\top  \boldsymbol{R}^{t,(l)} -\boldsymbol{U}_{l,\cdot}^\top   }_{\boldsymbol{a}_1}\\
%					& \underbrace{  - \eta \left((\boldsymbol{X}^{t,(l)}_{l,\cdot})^\top (\boldsymbol{Y}^{t,(l)})^\top - \boldsymbol{U}_{l,\cdot}^\top \boldsymbol{V}^\top \right)\boldsymbol{Y}^{t,(l)} \boldsymbol{R}^{t,(l)}}_{\boldsymbol{a}_1}\\
%					& + \underbrace{\left[ (\boldsymbol{X}^{t,(l)}_{l,\cdot})^\top  \boldsymbol{R}^{t,(l)}   \right.}_{\boldsymbol{a}_2}\\
%					& ~~~~~ \underbrace{\left. - \eta \left((\boldsymbol{X}^{t,(l)}_{l,\cdot})^\top (\boldsymbol{Y}^{t,(l)})^\top - \boldsymbol{U}_{l,\cdot}^\top \boldsymbol{V}^\top \right)\boldsymbol{Y}^{t,(l)} \boldsymbol{R}^{t,(l)} \right]  }_{\boldsymbol{a}_2}\\
%					&   ~~~~ \underbrace{\cdot \left[ (\boldsymbol{R}^{t,(l)})^{-1}\boldsymbol{R}^{t+1,(l)} - \boldsymbol{I} \right]}_{\boldsymbol{a}_2}\\
%					& - \underbrace{\frac{\eta}{2}(\boldsymbol{X}^{t,(l)}_{l,\cdot})^\top  \left( (\boldsymbol{X}^{t,(l)})^\top \boldsymbol{X}^{t,(l)} - (\boldsymbol{Y}^{t,(l)})^\top\boldsymbol{Y}^{t,(l)} \right)\boldsymbol{R}^{t+1,(l)}}_{\boldsymbol{a}_3}.
				\end{split} 
			\]
			where
			\[
			\begin{split}
				 \boldsymbol{a}_1  \coloneqq& (\boldsymbol{X}^{t,(l)}_{l,\cdot})^\top  \boldsymbol{R}^{t,(l)} -\boldsymbol{U}_{l,\cdot}^\top - \eta \left((\boldsymbol{X}^{t,(l)}_{l,\cdot})^\top (\boldsymbol{Y}^{t,(l)})^\top - \boldsymbol{U}_{l,\cdot}^\top \boldsymbol{V}^\top \right)\boldsymbol{Y}^{t,(l)} \boldsymbol{R}^{t,(l)},
			\end{split}
			\]
			\[ 
			\begin{split}
				  \boldsymbol{a}_2  \coloneqq & \left[ (\boldsymbol{X}^{t,(l)}_{l,\cdot})^\top  \boldsymbol{R}^{t,(l)} - \eta \left((\boldsymbol{X}^{t,(l)}_{l,\cdot})^\top (\boldsymbol{Y}^{t,(l)})^\top - \boldsymbol{U}_{l,\cdot}^\top \boldsymbol{V}^\top \right)\boldsymbol{Y}^{t,(l)} \boldsymbol{R}^{t,(l)} \right] \left[ (\boldsymbol{R}^{t,(l)})^{-1}\boldsymbol{R}^{t+1,(l)} - \boldsymbol{I} \right]
			\end{split}
			\]
			and
			\[ 
				\boldsymbol{a}_3 \coloneqq \frac{\eta}{2}(\boldsymbol{X}^{t,(l)}_{l,\cdot})^\top  \left( (\boldsymbol{X}^{t,(l)})^\top \boldsymbol{X}^{t,(l)} - (\boldsymbol{Y}^{t,(l)})^\top\boldsymbol{Y}^{t,(l)} \right)\boldsymbol{R}^{t+1,(l)}.
			\]
			 
			First for $\boldsymbol{a}_1$, denote $\boldsymbol{\Delta}_{\boldsymbol{X}}^{t,(l)} \coloneqq \boldsymbol{X}^{t,(l)}\boldsymbol{R}^{t,(l)} - \boldsymbol{U}, \boldsymbol{\Delta}_{\boldsymbol{Y}}^{t,(l)} \coloneqq \boldsymbol{Y}^{t,(l)}\boldsymbol{R}^{t,(l)} - \boldsymbol{V}$, then by a decomposition similar to \eqref{eq_a001},
			\[
				\begin{split}
					&\|\boldsymbol{a}_1\|_2\\
					 =& \left\| (\boldsymbol{\Delta}_{\boldsymbol{X}}^{t,(l)})_{l,\cdot}^\top  - \eta\left[ (\boldsymbol{\Delta}_{\boldsymbol{X}}^{t,(l)})_{l,\cdot}^\top  (\boldsymbol{\Delta}_{\boldsymbol{Y}}^{t,(l)})^\top + (\boldsymbol{\Delta}_{\boldsymbol{X}}^{t,(l)})_{l,\cdot}^\top \boldsymbol{V}^\top   + \boldsymbol{U}_{l,\cdot}^\top (\boldsymbol{\Delta}_{\boldsymbol{Y}}^{t,(l)})^\top  \right] (\boldsymbol{\Delta}_{\boldsymbol{Y}}^{t,(l)} + \boldsymbol{V}) \right\|_2\\
					=& \left\|(\boldsymbol{\Delta}_{\boldsymbol{X}}^{t,(l)})_{l,\cdot}^\top  - \eta (\boldsymbol{\Delta}_{\boldsymbol{X}}^{t,(l)})_{l,\cdot}^\top \boldsymbol{V}^\top\boldsymbol{V}  -\eta \left[(\boldsymbol{\Delta}_{\boldsymbol{X}}^{t,(l)})_{l,\cdot}^\top (\boldsymbol{\Delta}_{\boldsymbol{Y}}^{t,(l)})^\top +\boldsymbol{U}_{l,\cdot}^\top (\boldsymbol{\Delta}_{\boldsymbol{Y}}^{t,(l)})^\top \right] \boldsymbol{Y}^{t,(l)}\boldsymbol{R}^{t,(l)}  - \eta (\boldsymbol{\Delta}_{\boldsymbol{X}}^{t,(l)})_{l,\cdot}^\top \boldsymbol{V}^\top \boldsymbol{\Delta}_{\boldsymbol{Y}}^{t,(l)} \right\|_2\\
					\leqslant & \|\boldsymbol{I} - \eta \boldsymbol{V}^\top\boldsymbol{V}\| \| (\boldsymbol{\Delta}_{\boldsymbol{X}}^{t,(l)})_{l,\cdot} \|_2  + \eta (\|(\boldsymbol{\Delta}_{\boldsymbol{X}}^{t,(l)})_{l,\cdot}\|_2+ \|\boldsymbol{U}_{l,\cdot}\|_2) \|\boldsymbol{\Delta}_{\boldsymbol{Y}}^{t,(l)}\|  \|\boldsymbol{Y}^{t,(l)}\|  +  \eta \|(\boldsymbol{\Delta}_{\boldsymbol{X}}^{t,(l)})_{l,\cdot}\|_2 \|\boldsymbol{V}\|\|\boldsymbol{\Delta}_{\boldsymbol{Y}}^{t,(l)}\| .
%					& + \eta \|\boldsymbol{U}_{l,\cdot}\|_2 \|\boldsymbol{\Delta}_{\boldsymbol{Y}}^{t,(l)}\|\|\boldsymbol{Y}^{t,(l)}\| .
				\end{split}
			\]
			% from \eqref{eq_ind4}, we have 
			% \begin{equation}\label{eq_054}
			% 	\|(\boldsymbol{\Delta}_{\boldsymbol{X}}^{t,(l)})_{l,\cdot}\|_2 \leqslant 100C_I\rho^t \sqrt{\frac{\mu^2r^2 \kappa^7 \log (n_1\vee n_2)}{(n_1\wedge n_2)^2 p}}\sqrt{\sigma_1(\boldsymbol{M})}.
			% \end{equation}

			From \eqref{eq_ind2},
			\[
			\begin{split}
				   \left\| \left[ \begin{array}{c} \boldsymbol{X}^{t}\\\boldsymbol{Y}^{t} \end{array}\right]\boldsymbol{R}^{t} - \left[ \begin{array}{c} \boldsymbol{U}\\\boldsymbol{V} \end{array}\right]  \right\| \left\| \left[ \begin{array}{c} \boldsymbol{U}\\\boldsymbol{V} \end{array}\right] \right\| \leqslant & 2C_I \rho^t \sqrt{\frac{\mu r \kappa^6 \log (n_1\vee n_2)}{(n_1\wedge n_2)p}} \sigma_1(\boldsymbol{M})\\
				  \leqslant& \frac{ \sigma_r(\boldsymbol{M})}{2}
				\end{split}
			\]  
			holds since 
			\[
				p\geqslant 16 C_I^2 \frac{\mu r \kappa^8 \log(n_1\vee n_2)}{ n_1\wedge n_2 }.
			\]
			Also from \eqref{eq_ind3}, 
			\[
				\begin{split}
				 \left\| \left[ \begin{array}{c} \boldsymbol{X}^{t,(l)}\\\boldsymbol{Y}^{t,(l)} \end{array}\right]\boldsymbol{T}^{t,(l)} - \left[ \begin{array}{c} \boldsymbol{X}^{t}\\\boldsymbol{Y}^{t} \end{array}\right]\boldsymbol{R}^{t} \right\| \left\| \left[ \begin{array}{c} \boldsymbol{U}\\\boldsymbol{V} \end{array}\right] \right\| \leqslant & \left\| \left[ \begin{array}{c} \boldsymbol{X}^{t,(l)}\\\boldsymbol{Y}^{t,(l)} \end{array}\right]\boldsymbol{T}^{t,(l)} - \left[ \begin{array}{c} \boldsymbol{X}^{t}\\\boldsymbol{Y}^{t} \end{array}\right]\boldsymbol{R}^{t} \right\|_F\left\| \left[ \begin{array}{c} \boldsymbol{U}\\\boldsymbol{V} \end{array}\right] \right\|\\
				 \leqslant & 2C_I \rho^t \sqrt{\frac{\mu^2 r^2 \kappa^{10} \log (n_1\vee n_2)}{(n_1\wedge n_2)^2p}} \sigma_1(\boldsymbol{M}) \\
				  \leqslant &\frac{ \sigma_r(\boldsymbol{M})}{4},
				\end{split}
			\] 
			where the last inequality holds since 
			\[
				p\geqslant 64C_I^2 \frac{\mu^2 r^2 \kappa^{12} \log (n_1\vee n_2)}{n_1\wedge n_2}.
			\]
			Applying Lemma \ref{ma_lemma37} with 
			\[
				\boldsymbol{X}_0 \coloneqq   \left[ \begin{array}{c} \boldsymbol{U}\\\boldsymbol{V} \end{array}\right],\;  \boldsymbol{X}_1 \coloneqq \left[ \begin{array}{c} \boldsymbol{X}^{t}\\\boldsymbol{Y}^{t} \end{array}\right]\boldsymbol{R}^{t}, \;\boldsymbol{X}_2 \coloneqq \left[ \begin{array}{c} \boldsymbol{X}^{t,(l)}\\\boldsymbol{Y}^{t,(l)} \end{array}\right]\boldsymbol{T}^{t,(l)},
			\]
			since we define $\boldsymbol{U}$ by $\widetilde{\boldsymbol{U}}\boldsymbol{\Sigma}^{1/2}$ and $\boldsymbol{V}$ by $\widetilde{\boldsymbol{V}}\boldsymbol{\Sigma}^{1/2}$, we have $\sigma_1(\boldsymbol{X}_0) = \sqrt{2\sigma_1(\boldsymbol{M})}$, $\sigma_2(\boldsymbol{X}_0) = \sqrt{2\sigma_2(\boldsymbol{M})}$, $\cdots$, $\sigma_r(\boldsymbol{X}_0) = \sqrt{2\sigma_r(\boldsymbol{M})}$, and $\sigma_1(\boldsymbol{X}_0)/\sigma_r(\boldsymbol{X}_0) = \sqrt{\kappa}$. We have
			\[
			\begin{split}
			  \left\| \left[ \begin{array}{c} \boldsymbol{X}^{t,(l)}\\\boldsymbol{Y}^{t,(l)} \end{array}\right]\boldsymbol{R}^{t,(l)} - \left[ \begin{array}{c} \boldsymbol{X}^{t}\\\boldsymbol{Y}^{t} \end{array}\right]\boldsymbol{R}^{t} \right\|_F  \leqslant & 5\kappa \left\| \left[ \begin{array}{c} \boldsymbol{X}^{t,(l)}\\\boldsymbol{Y}^{t,(l)} \end{array}\right]\boldsymbol{T}^{t,(l)} - \left[ \begin{array}{c} \boldsymbol{X}^{t}\\\boldsymbol{Y}^{t} \end{array}\right]\boldsymbol{R}^{t} \right\|_F. 
			\end{split}
			\] 
			Therefore, by triangle inequality we have
			\begin{equation}\label{eq_019}
				\begin{split}
				 \|\boldsymbol{\Delta}_{\boldsymbol{Y}}^{t,(l)}\| \leqslant & \left\| \left[ \begin{array}{c} \boldsymbol{X}^{t,(l)}\\\boldsymbol{Y}^{t,(l)} \end{array}\right]\boldsymbol{R}^{t,(l)} - \left[ \begin{array}{c} \boldsymbol{U}\\\boldsymbol{V} \end{array}\right] \right\|\\
					\leqslant & \left\| \left[ \begin{array}{c} \boldsymbol{X}^{t,(l)}\\\boldsymbol{Y}^{t,(l)} \end{array}\right]\boldsymbol{R}^{t,(l)} - \left[ \begin{array}{c} \boldsymbol{X}^{t}\\\boldsymbol{Y}^{t} \end{array}\right]\boldsymbol{R}^{t} \right\|_F  + \left\| \left[ \begin{array}{c} \boldsymbol{X}^{t}\\\boldsymbol{Y}^{t} \end{array}\right]\boldsymbol{R}^{t} - \left[ \begin{array}{c} \boldsymbol{U}\\\boldsymbol{V} \end{array}\right]  \right\|\\
					\leqslant & 5\kappa \left\| \left[ \begin{array}{c} \boldsymbol{X}^{t,(l)}\\\boldsymbol{Y}^{t,(l)} \end{array}\right]\boldsymbol{T}^{t,(l)} - \left[ \begin{array}{c} \boldsymbol{X}^{t}\\\boldsymbol{Y}^{t} \end{array}\right]\boldsymbol{R}^{t} \right\|_F + \left\| \left[ \begin{array}{c} \boldsymbol{X}^{t}\\\boldsymbol{Y}^{t} \end{array}\right]\boldsymbol{R}^{t} - \left[ \begin{array}{c} \boldsymbol{U}\\\boldsymbol{V} \end{array}\right]  \right\|\\
					\leqslant & 5\kappa C_I \rho^t \sqrt{\frac{\mu^2 r^2\kappa^{10}\log (n_1\vee n_2)}{(n_1\wedge n_2)^2 p}}\sqrt{\sigma_1(\boldsymbol{M})}  + C_I\rho^t \sqrt{\frac{\mu r \kappa^6 \log(n_1\vee n_2)}{(n_1\wedge n_2)p}}\sqrt{\sigma_1(\boldsymbol{M})}\\
					\leqslant & 2C_I\rho^t \sqrt{\frac{\mu r \kappa^6 \log(n_1\vee n_2)}{(n_1\wedge n_2)p}}\sqrt{\sigma_1(\boldsymbol{M})}.
				\end{split}
			\end{equation}
			For the last inequality, we use the fact that  
			\[
				\frac{25 \mu r \kappa^6}{ n_1\wedge n_2}\leqslant p \leqslant 1.
			\]

			%In order to see Lemma \ref{ma_lemma37} can be applied here, first note f

			Equipped with \eqref{eq_019}, and combining with the fact that $\|\boldsymbol{Y}^{t,(l)}\| \leqslant \|\boldsymbol{V}\|+\|\boldsymbol{\Delta}_{\boldsymbol{Y}}^{t,(l)}\|$, we have
			\[
				\begin{split}
					&\|\boldsymbol{a}_1\|_2\\
					 \leqslant & (1-\eta \sigma_r(\boldsymbol{M})) \| (\boldsymbol{\Delta}_{\boldsymbol{X}}^{t,(l)})_{l,\cdot} \|_2 \\
					&+ \eta \| (\boldsymbol{\Delta}_{\boldsymbol{X}}^{t,(l)})_{l,\cdot} \|_2 2C_I\rho^t \sqrt{\frac{\mu r \kappa^6 \log(n_1\vee n_2)}{(n_1\wedge n_2)p}}\sqrt{\sigma_1(\boldsymbol{M})} \left( 2 \sqrt{\sigma_1(\boldsymbol{M})} +  2C_I\rho^t \sqrt{\frac{\mu r \kappa^6 \log(n_1\vee n_2)}{(n_1\wedge n_2)p}}\sqrt{\sigma_1(\boldsymbol{M})} \right)\\
					& + \eta\sqrt{\frac{\mu r \kappa}{n_1\wedge n_2}} \sqrt{\sigma_1(\boldsymbol{M})} 2C_I\rho^t   \sqrt{\frac{\mu r \kappa^6 \log(n_1\vee n_2)}{(n_1\wedge n_2)p}}\sqrt{\sigma_1(\boldsymbol{M})}  \left( \sqrt{\sigma_1(\boldsymbol{M})} +  2C_I\rho^t \sqrt{\frac{\mu r \kappa^6 \log(n_1\vee n_2)}{(n_1\wedge n_2)p}}\sqrt{\sigma_1(\boldsymbol{M})} \right).
				\end{split}
			\]
			Given 
			\[
				p \geqslant 4C_I^2 \frac{\mu r \kappa^6 \log (n_1\vee n_2)}{n_1\wedge n_2},	
			\]
			we have 
			\[
				2C_I\rho^t \sqrt{\frac{\mu r \kappa^6 \log(n_1\vee n_2)}{(n_1\wedge n_2)p}}\sqrt{\sigma_1(\boldsymbol{M})} \leqslant \sqrt{\sigma_1(\boldsymbol{M})}.
			\]
			Therefore, 
			\[
				\begin{split}
					& \|\boldsymbol{a}_1\|_2\\
					 \leqslant & (1-\eta \sigma_r(\boldsymbol{M})) \| (\boldsymbol{\Delta}_{\boldsymbol{X}}^{t,(l)})_{l,\cdot} \|_2  + \eta \| (\boldsymbol{\Delta}_{\boldsymbol{X}}^{t,(l)})_{l,\cdot} \|_2 6 C_I\rho^t \sqrt{\frac{\mu r \kappa^6 \log(n_1\vee n_2)}{(n_1\wedge n_2)p}} \sigma_1(\boldsymbol{M})\\ 
					& + \eta\sqrt{\frac{\mu r \kappa}{n_1\wedge n_2}} \sqrt{\sigma_1(\boldsymbol{M})} 4 C_I\rho^t  \sqrt{\frac{\mu r \kappa^6 \log(n_1\vee n_2)}{(n_1\wedge n_2)p}} \sigma_1(\boldsymbol{M}). 
				\end{split}
			\]

			Given 
			\[
				p \geqslant 576C_I^2  \frac{\mu r \kappa^8 \log (n_1\vee n_2)}{n_1\wedge n_2},
			\]
			on the event $E_{gd}^t$, 
			\begin{equation}\label{eq_025}
				\begin{split}
					&\|\boldsymbol{a}_1\|_2\\
					\leqslant & (1-\eta \sigma_r(\boldsymbol{M})) \| (\boldsymbol{\Delta}_{\boldsymbol{X}}^{t,(l)})_{l,\cdot} \|_2   + 0.25 \eta \sigma_r(\boldsymbol{M}) \| (\boldsymbol{\Delta}_{\boldsymbol{X}}^{t,(l)})_{l,\cdot} \|_2 +   \eta \sigma_r(\boldsymbol{M}) 4 C_I\rho^t \sqrt{\frac{\mu^2 r^2 \kappa^9 \log (n_1\vee n_2)}{(n_1\wedge n_2)^2 p}} \sqrt{\sigma_1(\boldsymbol{M})} \\
					 = & (1-0.75 \eta \sigma_r(\boldsymbol{M})) \| (\boldsymbol{\Delta}_{\boldsymbol{X}}^{t,(l)})_{l,\cdot} \|_2    +   4C_I\eta \sigma_r(\boldsymbol{M})  \rho^t \sqrt{\frac{\mu^2 r^2 \kappa^9 \log (n_1\vee n_2)}{(n_1\wedge n_2)^2 p}} \sqrt{\sigma_1(\boldsymbol{M})} 
				\end{split}
			\end{equation}
			% on the event $E_{gd}^t$, where the second inequality holds since 
			% \[
			% 	p \geqslant 576C_I^2  \frac{\mu r \kappa^8 \log (n_1\vee n_2)}{n_1\wedge n_2}.
			% \]
			At the same time from \eqref{eq_ind4} we have 
			\[
				\begin{split}
					 \|\boldsymbol{a}_1\|_2  \leqslant & (1-0.75\eta\sigma_r(\boldsymbol{M})) \times 100C_I\rho^t\sqrt{\frac{\mu^2 r^2 \kappa^{10}\log(n_1\vee n_2)}{(n_1\wedge n_2)^2p}}\sqrt{\sigma_1(\boldsymbol{M})}\\
					&+4\eta \sigma_r(\boldsymbol{M})  C_I\rho^t \sqrt{\frac{\mu^2 r^2 \kappa^9 \log (n_1\vee n_2)}{(n_1\wedge n_2)^2 p}} \sqrt{\sigma_1(\boldsymbol{M})}\\
					\leqslant& \sqrt{\frac{\mu r \kappa}{n_1\wedge n_2}} \sqrt{\sigma_1(\boldsymbol{M})}\\
				\end{split}	
			\]
			since
			\[
				p \geqslant 10^4 C_I^2 \frac{\mu r \kappa^9 \log(n_1\vee n_2)}{n_1\wedge n_2}	
			\]
			and
			\[
				\eta \leqslant \frac{\sigma_r(\boldsymbol{M})}{200\sigma_r^2(\boldsymbol{M})}.	
			\]
			
			For $\boldsymbol{a}_2$, note
			\begin{equation}\label{eq_061}
				\begin{split}
					&\|\boldsymbol{a}_2\|_2\\
					 =& \left\|\left[ (\boldsymbol{X}^{t,(l)}_{l,\cdot})^\top  \boldsymbol{R}^{t,(l)} - \eta \left((\boldsymbol{X}^{t,(l)}_{l,\cdot})^\top (\boldsymbol{Y}^{t,(l)})^\top - \boldsymbol{U}_{l,\cdot}^\top \boldsymbol{V}^\top \right)\boldsymbol{Y}^{t,(l)} \boldsymbol{R}^{t,(l)} \right]  \left[ (\boldsymbol{R}^{t,(l)})^{-1}\boldsymbol{R}^{t+1,(l)} - \boldsymbol{I} \right]\right\|\\
					\leqslant & \|\boldsymbol{a}_1+\boldsymbol{U}_{l,\cdot}\|_2\|(\boldsymbol{R}^{t,(l)})^{-1}\boldsymbol{R}^{t+1,(l)} - \boldsymbol{I}\|\\
					\leqslant & 2\sqrt{\frac{\mu r \kappa}{n_1\wedge n_2}} \sqrt{\sigma_1(\boldsymbol{M})} \|(\boldsymbol{R}^{t,(l)})^{-1}\boldsymbol{R}^{t+1,(l)} - \boldsymbol{I}\|.
				\end{split}
			\end{equation}
			Here we want to use Lemma \ref{ma_lemma36} to control $\|(\boldsymbol{R}^{t,(l)})^{-1}\boldsymbol{R}^{t+1,(l)} - \boldsymbol{I}\|$. In order to proceed, we first assume the following claim is valid:
			\begin{claim}\label{claim_03}
				Under the setup of Lemma \ref{lemma_induction}, assume $1\leqslant l\leqslant n_1$. Lemma \ref{ma_lemma36} can be applied and on the event $E_{gd}^t$,
				\begin{equation}\label{eq_063}
				\begin{split}
					  \|(\boldsymbol{R}^{t,(l)})^{-1}\boldsymbol{R}^{t+1,(l)} - \boldsymbol{I}\| \leqslant & 76C_I^2\frac{\sigma_1^2(\boldsymbol{M})}{\sigma_r(\boldsymbol{M})}\eta \rho^t \sqrt{\frac{\mu^2 r^2\kappa^{12}\log^2(n_1\vee n_2)}{(n_1\wedge n_2)^2p^2}}
				\end{split}
				\end{equation}
				holds.
			\end{claim}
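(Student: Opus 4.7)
The plan is to realize $(\boldsymbol{R}^{t,(l)})^{-1}\boldsymbol{R}^{t+1,(l)} - \boldsymbol{I}$ as a difference of two matrix signs and then invoke Lemma \ref{ma_lemma36}. Let $\boldsymbol{Z}^{s,(l)} \coloneqq \left[\begin{array}{c}\boldsymbol{X}^{s,(l)}\\ \boldsymbol{Y}^{s,(l)}\end{array}\right]$ for $s\in\{t, t+1\}$ and $\boldsymbol{Z}_\star \coloneqq \left[\begin{array}{c}\boldsymbol{U}\\ \boldsymbol{V}\end{array}\right]$. The orthogonal Procrustes identity $\boldsymbol{R}^{s,(l)} = \sgn\bigl((\boldsymbol{Z}^{s,(l)})^\top \boldsymbol{Z}_\star\bigr)$ together with $\sgn(\boldsymbol{R}^\top \boldsymbol{A}) = \boldsymbol{R}^\top \sgn(\boldsymbol{A})$ for any $\boldsymbol{R}\in\mathsf{O}(r)$ motivates the choice
\[
\boldsymbol{C} \coloneqq (\boldsymbol{Z}^{t,(l)}\boldsymbol{R}^{t,(l)})^\top \boldsymbol{Z}_\star, \qquad \boldsymbol{E} \coloneqq \bigl((\boldsymbol{Z}^{t+1,(l)} - \boldsymbol{Z}^{t,(l)})\boldsymbol{R}^{t,(l)}\bigr)^\top \boldsymbol{Z}_\star.
\]
Since $\boldsymbol{R}^{t,(l)}$ is the Procrustes aligner of $\boldsymbol{Z}^{t,(l)}$ to $\boldsymbol{Z}_\star$, the matrix $\boldsymbol{C}$ is symmetric positive semidefinite, so $\sgn(\boldsymbol{C}) = \boldsymbol{I}$, and a direct manipulation gives $\sgn(\boldsymbol{C}+\boldsymbol{E}) = (\boldsymbol{R}^{t,(l)})^{-1}\boldsymbol{R}^{t+1,(l)}$.

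The first estimate needed is a lower bound on $\sigma_{r-1}(\boldsymbol{C}) + \sigma_r(\boldsymbol{C})$. Writing $\boldsymbol{C} = \boldsymbol{Z}_\star^\top \boldsymbol{Z}_\star + (\boldsymbol{\Delta}^{t,(l)})^\top \boldsymbol{Z}_\star$ with $\boldsymbol{\Delta}^{t,(l)} \coloneqq \boldsymbol{Z}^{t,(l)}\boldsymbol{R}^{t,(l)} - \boldsymbol{Z}_\star$, the leading term has smallest eigenvalue $2\sigma_r(\boldsymbol{M})$ by $\boldsymbol{U}^\top\boldsymbol{U} = \boldsymbol{V}^\top\boldsymbol{V}$; combining \eqref{eq_019} with its symmetric analog for $\boldsymbol{\Delta}_{\boldsymbol{X}}^{t,(l)}$ under the assumed sampling rate keeps the perturbation small in operator norm, so Weyl's inequality yields $\sigma_{r-1}(\boldsymbol{C}) + \sigma_r(\boldsymbol{C}) \geqslant 2\sigma_r(\boldsymbol{M})$ and in particular $\boldsymbol{C}$ is nonsingular.

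The core step is to control $\|\boldsymbol{E}\| \leqslant \|\boldsymbol{Z}^{t+1,(l)} - \boldsymbol{Z}^{t,(l)}\|\cdot\|\boldsymbol{Z}_\star\|$. For $1\leqslant l\leqslant n_1$ I expand the increment via \eqref{eq:leaveoneout1}--\eqref{eq:leaveoneout2}, split $\mathcal{P}_{\Omega_{-l,\cdot}} = \mathcal{P}_\Omega - \mathcal{P}_{\Omega_{l,\cdot}}$, and apply the decomposition \eqref{eq_a001} to $\boldsymbol{X}^{t,(l)}(\boldsymbol{Y}^{t,(l)})^\top - \boldsymbol{U}\boldsymbol{V}^\top$. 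The resulting three groups of terms will be handled, respectively, by Lemma \ref{lemma_spectral_gap} combined with Lemma \ref{vu_lemma2.2} for the rescaled $\mathcal{P}_\Omega$ contribution (mirroring \eqref{eq_080}); by a direct $\ell_{2,\infty}$ estimate for the single-row $\mathcal{P}_{l,\cdot}$ piece; and by the identity \eqref{eq_a002} applied to $\boldsymbol{Z}^{t,(l)}\boldsymbol{R}^{t,(l)}$, together with \eqref{eq_019}, for the balancing term. Plugging in the induction hypotheses \eqref{eq_ind2}, \eqref{eq_ind4}, \eqref{eq_ind5}, the incoherence of $(\boldsymbol{U},\boldsymbol{V})$, and tracking the numerical constants will give the bound $\|\boldsymbol{E}\| \lesssim C_I^2 \sigma_1^2(\boldsymbol{M})\eta\rho^t\sqrt{\mu^2 r^2 \kappa^{12}\log^2(n_1\vee n_2)/((n_1\wedge n_2)^2 p^2)}$.

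Given these estimates, the condition $\|\boldsymbol{E}\| \leqslant \sigma_r(\boldsymbol{C})$ holds under the hypothesized sampling rate, so Lemma \ref{ma_lemma36} applies and yields the claim with constant $76 C_I^2$ after dividing by $\sigma_{r-1}(\boldsymbol{C}) + \sigma_r(\boldsymbol{C})\geqslant 2\sigma_r(\boldsymbol{M})$. The main obstacle is the third paragraph: one must route the sampling and $\mathcal{P}_{l,\cdot}$ contributions through the sharper $\ell_{2,\infty}$ bound \eqref{eq_ind5} rather than the coarser spectral bound \eqref{eq_019}, so that the per-iteration increment carries the $\|\boldsymbol{\Delta}^{t,(l)}\|_{2,\infty}$-type factor needed to match the target order in $\mu$, $r$, $\kappa$ and $\log(n_1\vee n_2)$, and the numerical constants from the three component estimates must be accumulated carefully to produce the stated $76 C_I^2$.
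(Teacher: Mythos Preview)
Your choice of $\boldsymbol{C}$ and $\boldsymbol{E}$ is different from the paper's, and with this choice the bound you anticipate on $\|\boldsymbol{E}\|$ is too optimistic. The full one-step increment $\boldsymbol{Z}^{t+1,(l)}-\boldsymbol{Z}^{t,(l)}$ is only \emph{first}-order small in the error $\boldsymbol{\Delta}^{t,(l)}\coloneqq \boldsymbol{Z}^{t,(l)}\boldsymbol{R}^{t,(l)}-\boldsymbol{Z}_\star$: after splitting $\tfrac{1}{p}\mathcal{P}_\Omega$ into its mean plus fluctuation, the mean part of the gradient contributes a term of the type $\eta\bigl(\boldsymbol{X}^{t,(l)}(\boldsymbol{Y}^{t,(l)})^\top-\boldsymbol{M}\bigr)\boldsymbol{Y}^{t,(l)}$, whose spectral norm is of order $\eta\,\|\boldsymbol{\Delta}^{t,(l)}\|\,\sigma_1(\boldsymbol{M})$. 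With \eqref{eq_019} this gives
\[
\|\boldsymbol{E}\|\;\leqslant\;\|\boldsymbol{Z}_\star\|\,\|\boldsymbol{Z}^{t+1,(l)}-\boldsymbol{Z}^{t,(l)}\|\;\lesssim\;C_I\,\eta\,\rho^t\sqrt{\frac{\mu r\kappa^6\log(n_1\vee n_2)}{(n_1\wedge n_2)p}}\,\sigma_1^2(\boldsymbol{M}),
\]
not the quadratic expression $C_I^2\,\eta\,\rho^t\sqrt{\mu^2 r^2\kappa^{12}\log^2/((n_1\wedge n_2)^2p^2)}\,\sigma_1^2(\boldsymbol{M})$ you claim. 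Since $\mu r\kappa^6\log/((n_1\wedge n_2)p)\ll 1$ under the sampling assumption, the square root is \emph{larger} than the square, so after dividing by $\sigma_{r-1}(\boldsymbol{C})+\sigma_r(\boldsymbol{C})$ your bound on $\|(\boldsymbol{R}^{t,(l)})^{-1}\boldsymbol{R}^{t+1,(l)}-\boldsymbol{I}\|$ exceeds the right-hand side of \eqref{eq_063}; routing through \eqref{eq_ind5} for the fluctuation terms does not remove this dominant mean contribution.

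The paper avoids this by inserting an auxiliary iterate $\widetilde{\boldsymbol{Z}}^{t+1,(l)}$ (defined exactly like $\widetilde{\boldsymbol{X}}^{t+1},\widetilde{\boldsymbol{Y}}^{t+1}$ in \eqref{eq_066} but for the leave-one-out flow) and taking $\boldsymbol{C}=\boldsymbol{Z}_\star^\top\widetilde{\boldsymbol{Z}}^{t+1,(l)}$, $\boldsymbol{E}=\boldsymbol{Z}_\star^\top\bigl(\boldsymbol{Z}^{t+1,(l)}\boldsymbol{R}^{t,(l)}-\widetilde{\boldsymbol{Z}}^{t+1,(l)}\bigr)$. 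The point of the auxiliary iterate is that it replaces the outer factors $\boldsymbol{Y}^{t,(l)}\boldsymbol{R}^{t,(l)}$, $\boldsymbol{X}^{t,(l)}\boldsymbol{R}^{t,(l)}$ in the gradient by $\boldsymbol{V}$, $\boldsymbol{U}$; this keeps $\boldsymbol{C}$ symmetric positive definite (so $\sgn(\boldsymbol{C})=\boldsymbol{I}$, as in the argument around \eqref{eq_060}), while the residual $\boldsymbol{Z}^{t+1,(l)}\boldsymbol{R}^{t,(l)}-\widetilde{\boldsymbol{Z}}^{t+1,(l)}$ now factors through $\boldsymbol{\Delta}^{t,(l)}$, cf.\ \eqref{eq_021}. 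One then bounds $\|\boldsymbol{E}\|$ by $\eta\|\boldsymbol{A}\|\,\|\boldsymbol{\Delta}^{t,(l)}\|$ with $\boldsymbol{A}$ as in \eqref{eq_095}, and both factors carry a $C_I\rho^t\sqrt{\mu r\kappa^6\log/((n_1\wedge n_2)p)}$; their product is exactly what produces the $C_I^2$ and the squared small parameter in \eqref{eq_063}. Your decomposition never isolates this second $\boldsymbol{\Delta}^{t,(l)}$ factor, which is the missing idea.
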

			The proof of this claim mainly relies on Lemma \ref{ma_lemma36}, and the verification of conditions required by Lemma \ref{ma_lemma36} is very similar to the way we handle $\alpha_1, \alpha_2, \alpha_3$ defined in \eqref{eq_046}. For the purpose of self-containedness, we include the proof of the claim in Appendix \ref{sec_proof_claim_03}.

			 Plugging \eqref{eq_063} back to \eqref{eq_061} we have
			 \begin{equation}
			 \label{eq_026}
			 \begin{split}
				&\|\boldsymbol{a}_2\|_2\\
				 \leqslant &2\sqrt{\frac{\mu r \kappa}{n_1\wedge n_2}}\sqrt{\sigma_1(\boldsymbol{M})} \times 76C_I^2 \frac{\sigma_1^2(\boldsymbol{M})}{\sigma_r(\boldsymbol{M})}\eta\rho^t \sqrt{\frac{\mu^2 r^2 \kappa^{12}\log^2 (n_1\vee n_2)}{(n_1\wedge n_2)^2p^2}}
				\\
				\leqslant & 152C_I^2\eta \rho^t \frac{\sigma_1^2(\boldsymbol{M})}{\sigma_r(\boldsymbol{M})}  \sqrt{\frac{\mu^3 r^3\kappa^{13}\log^2(n_1\vee n_2)}{(n_1\wedge n_2)^3 p^2}}\sqrt{\sigma_1(\boldsymbol{M})}\\
				\leqslant & 25C_I \eta \sigma_r(\boldsymbol{M}) \rho^t \sqrt{\frac{\mu^2 r^2 \kappa^{10} \log (n_1\vee n_2)}{(n_1\wedge n_2)^2 p}}\sqrt{\sigma_1(\boldsymbol{M})},
			 \end{split}
			 \end{equation}
			 where the last inequality uses the fact that
			 \[
				 p \geqslant 37C_I^2 \frac{\mu r \kappa^7 \log (n_1\vee n_2)}{n_1\wedge n_2}.
			 \]

			 Finally, for $\boldsymbol{a}_3$, note the fact that $\boldsymbol{R}^{t+1,(l)}$ and $\boldsymbol{R}^{t,(l)}$ are all orthogonal matrices. And replacing $\boldsymbol{X}$ and $\boldsymbol{Y}$ with $\boldsymbol{X}^{t,(l)}$ and $\boldsymbol{Y}^{t,(l)}$ in \eqref{eq_a002},
			 \begin{equation}\label{eq_083} 
				 \begin{split}
					 \|\boldsymbol{a}_3\|_2 =&  \frac{\eta}{2}\left\|(\boldsymbol{X}^{t,(l)}_{l,\cdot})^\top \left( (\boldsymbol{X}^{t,(l)})^\top \boldsymbol{X}^{t,(l)} - (\boldsymbol{Y}^{t,(l)})^\top\boldsymbol{Y}^{t,(l)} \right)\boldsymbol{R}^{t+1,(l)}\right\|_2 \\
					=&  \frac{\eta}{2}\left\|(\boldsymbol{X}^{t,(l)}_{l,\cdot})^\top \left( (\boldsymbol{X}^{t,(l)})^\top \boldsymbol{X}^{t,(l)} - (\boldsymbol{Y}^{t,(l)})^\top\boldsymbol{Y}^{t,(l)} \right)\boldsymbol{R}^{t,(l)}\right\|_2\\
					\leqslant& \frac{\eta}{2} \|\boldsymbol{X}^{t,(l)}_{l,\cdot}\|_2 \left\| (\boldsymbol{R}^{t,(l)})^\top \left( (\boldsymbol{X}^{t,(l)})^\top \boldsymbol{X}^{t,(l)} - (\boldsymbol{Y}^{t,(l)})^\top\boldsymbol{Y}^{t,(l)} \right)\boldsymbol{R}^{t,(l)}\right\|\\
					\leqslant&  \frac{\eta}{2} \|\boldsymbol{X}^{t,(l)}_{l,\cdot}\|_2 \left( 2\|\boldsymbol{\Delta}_{\boldsymbol{X}}^{t,(l)}\|\|\boldsymbol{U}\| +\|\boldsymbol{\Delta}_{\boldsymbol{X}}^{t,(l)}\|^2 +2\|\boldsymbol{\Delta}_{\boldsymbol{Y}}^{t,(l)}\|\|\boldsymbol{V}\| +\|\boldsymbol{\Delta}_{\boldsymbol{Y}}^{t,(l)}\|^2 \right).
				\end{split}
			\end{equation}
			 From \eqref{eq_ind4}, we have
			 \begin{equation}\label{eq_084} 
				 \begin{split}
					  \|\boldsymbol{X}_{l,\cdot}^{t,(l)}\|_2 \leqslant& \|\boldsymbol{U}_{l,\cdot}\|_2 + \|(\boldsymbol{X}_{l,\cdot}^{t,(l)})^\top  \boldsymbol{R}^{t,(l)} - \boldsymbol{U}_{l,\cdot}^\top \|_2\\
					\leqslant & \sqrt{\frac{\mu r \kappa}{n_1\wedge n_2}}\sqrt{\sigma_1(\boldsymbol{M})}  + 100C_I \rho^t \sqrt{\frac{\mu^2 r^2\kappa^{10}\log(n_1\vee n_2)}{(n_1\wedge n_2)^2p}}\sqrt{\sigma_1(\boldsymbol{M})}\\
					\leqslant & 2\sqrt{\frac{\mu r \kappa}{n_1\wedge n_2}}\sqrt{\sigma_1(\boldsymbol{M})}.
				 \end{split}
				\end{equation}
			 The last line holds since
			 \[
				p\geqslant 10^4 C_I^2 \frac{\mu r \kappa^9\log(n_1\vee n_2)}{n_1\wedge n_2}. 
			\]
			From \eqref{eq_019} and given 
			\[
				p\geqslant 4 C_I^2 \frac{\mu r \kappa^6 \log (n_1\vee n_2)}{n_1\wedge n_2},
			\]
			we have $\|\boldsymbol{\Delta}^{t,(l)}\|\leqslant \sqrt{\sigma_1(\boldsymbol{M})}$. Combining with \eqref{eq_019}, \eqref{eq_083} and \eqref{eq_084}, we have
			 \begin{equation}\label{eq_027}
				\begin{split}
					 \|\boldsymbol{a}_3\|_2 \leqslant& \eta \sqrt{\frac{\mu r \kappa}{n_1\wedge n_2}}\sqrt{\sigma_1(\boldsymbol{M})}   \times 12 C_I \rho^t \sqrt{\frac{\mu r \kappa^6 \log (n_1\vee n_2)}{(n_1\wedge n_2)p}}\sigma_1(\boldsymbol{M})\\
					= &  12 C_I\eta\sigma_r(\boldsymbol{M}) \rho^t \sqrt{\frac{\mu^2 r^2 \kappa^9 \log (n_1\vee n_2)}{(n_1\wedge n_2)^2 p}}\sqrt{\sigma_1(\boldsymbol{M})}.
				\end{split}
			 \end{equation}
			%  where the first line uses the fact that
			%  \[
			% 	 p\geqslant 4 C_I^2 \frac{\mu r \kappa^6 \log (n_1\vee n_2)}{n_1\wedge n_2}.
			%  \]
			 Putting the estimations on $\boldsymbol{a}_1$, $\boldsymbol{a}_2$ and $\boldsymbol{a}_3$ together, i.e., \eqref{eq_025}, \eqref{eq_026} and \eqref{eq_027}, we have
			 \[
				 \begin{split}
					&\left\|\left( \left[ \begin{array}{c}\boldsymbol{X}^{t+1,(l)} \\\boldsymbol{Y}^{t+1,(l)} \end{array}\right]\boldsymbol{R}^{t+1,(l)} - \left[ \begin{array}{c} \boldsymbol{U}\\\boldsymbol{V} \end{array} \right] \right)_{l,\cdot}\right\|_2\\
					 \leqslant & \|\boldsymbol{a}_1\|_2 +\|\boldsymbol{a}_2\|_2 +\|\boldsymbol{a}_3\|_2 \\
					\leqslant & (1-0.75 \eta \sigma_r(\boldsymbol{M})) \| (\boldsymbol{\Delta}_{\boldsymbol{X}}^{t,(l)})_{l,\cdot} \|_2    +   4 C_I\eta \sigma_r(\boldsymbol{M}) \rho^t \sqrt{\frac{\mu^2 r^2 \kappa^9 \log (n_1\vee n_2)}{(n_1\wedge n_2)^2 p}} \sqrt{\sigma_1(\boldsymbol{M})}\\
					& +25C_I\eta \sigma_r(\boldsymbol{M}) \rho^t \sqrt{\frac{\mu^2 r^2 \kappa^{10} \log (n_1\vee n_2)}{(n_1\wedge n_2)^2 p}}\sqrt{\sigma_1(\boldsymbol{M})} +  12 C_I\eta\sigma_r(\boldsymbol{M})  \rho^t \sqrt{\frac{\mu^2 r^2 \kappa^9 \log (n_1\vee n_2)}{(n_1\wedge n_2)^2 p}}\sqrt{\sigma_1(\boldsymbol{M})}\\
					\leqslant & 100C_I \rho^{t+1} \sqrt{\frac{\mu^2 r^2 \kappa^{10} \log (n_1\vee n_2)}{(n_1\wedge n_2)^2 p}}\sqrt{\sigma_1(\boldsymbol{M})} ,
				\end{split}
			 \]
			 with $\rho = 1-0.05\eta \sigma_r(\boldsymbol{M})$ on the event $E_{gd}^{t+1}$, the last inequality uses \eqref{eq_ind4}. Notice this is the proof for the case of $l$ satisfying $1\leqslant l\leqslant n_1$, the proof for $l$ satisfying $n_1+1\leqslant l\leqslant n_1+n_2$ is almost the same.

\subsection{Proof of \eqref{eq_ind3}}
For \eqref{eq_ind3}, by the choice of $\boldsymbol{T}^{t+1,(l)}$ in \eqref{eq:Rt}, we have
		\[
		\begin{split}
			  \left\| \left[ \begin{array}{c} \boldsymbol{X}^{t+1} \\\boldsymbol{Y}^{t+1} \end{array}\right]\boldsymbol{R}^{t+1} - \left[\begin{array}{c} \boldsymbol{X}^{t+1,(l)} \\ \boldsymbol{Y}^{t+1,(l)} \end{array}\right] \boldsymbol{T}^{t+1,(l)} \right\|_F^2 \leqslant &\left\| \left[ \begin{array}{c} \boldsymbol{X}^{t+1} \\\boldsymbol{Y}^{t+1} \end{array}\right]\boldsymbol{R}^{t} - \left[\begin{array}{c} \boldsymbol{X}^{t+1,(l)} \\ \boldsymbol{Y}^{t+1,(l)} \end{array}\right] \boldsymbol{T}^{t,(l)} \right\|_F^2.
			\end{split}
		\]
		Without loss of generality, we first consider the case that $l$ satisfying $1\leqslant l\leqslant n_1$. First, by plugging in the definition of $\left[ \begin{array}{c} \boldsymbol{X}^{t+1} \\\boldsymbol{Y}^{t+1} \end{array}\right]$ and $\left[\begin{array}{c} \boldsymbol{X}^{t+1,(l)} \\ \boldsymbol{Y}^{t+1,(l)} \end{array}\right]$, we have
		\begin{equation}\label{eq_067} 
			\begin{split}
				 \left[ \begin{array}{c} \boldsymbol{X}^{t+1} \\\boldsymbol{Y}^{t+1} \end{array}\right]\boldsymbol{R}^{t} - \left[\begin{array}{c} \boldsymbol{X}^{t+1,(l)} \\ \boldsymbol{Y}^{t+1,(l)} \end{array}\right] \boldsymbol{T}^{t,(l)} =& \boldsymbol{A}_1 + \eta \left[\begin{array}{c}\boldsymbol{A}_2\\\boldsymbol{A}_3\end{array}\right],\\
			\end{split}
		\end{equation}
		where 
		\[
		\begin{split}
			\boldsymbol{A}_1\coloneqq &\left( \left[\begin{array}{c} \boldsymbol{X}^t\\\boldsymbol{Y}^t \end{array}\right] - \eta\nabla f(\boldsymbol{X}^t,\boldsymbol{Y}^t) \right) \boldsymbol{R}^t   - \left(  \left[\begin{array}{c} \boldsymbol{X}^{t,(l)}\\\boldsymbol{Y}^{t,(l)} \end{array}\right] - \eta\nabla f(\boldsymbol{X}^{t,(l)},\boldsymbol{Y}^{t,(l)}) \right) \boldsymbol{T}^{t,(l)}
		\end{split}
		\]
		\[
		\begin{split}
			\boldsymbol{A}_2\coloneqq&  \mathcal{P}_{l,\cdot}\left( \boldsymbol{X}^{t,(l)}(\boldsymbol{Y}^{t,(l)})^\top - \boldsymbol{U}\boldsymbol{V}^\top \right)\boldsymbol{Y}^{t,(l)}\boldsymbol{T}^{t,(l)}  - \frac{1}{p}\mathcal{P}_{\Omega_{l,\cdot}}\left( \boldsymbol{X}^{t,(l)}(\boldsymbol{Y}^{t,(l)})^\top - \boldsymbol{U}\boldsymbol{V}^\top \right)  \boldsymbol{Y}^{t,(l)}\boldsymbol{T}^{t,(l)}
		\end{split}
		\]
		and
		\[
		\begin{split}
			\boldsymbol{A}_3\coloneqq& \left[ \mathcal{P}_{l,\cdot}\left( \boldsymbol{X}^{t,(l)}(\boldsymbol{Y}^{t,(l)})^\top - \boldsymbol{U}\boldsymbol{V}^\top \right) \right]^\top\boldsymbol{X}^{t,(l)}\boldsymbol{T}^{t,(l)} - \left[\frac{1}{p}\mathcal{P}_{\Omega_{l,\cdot}}\left( \boldsymbol{X}^{t,(l)}(\boldsymbol{Y}^{t,(l)})^\top - \boldsymbol{U}\boldsymbol{V}^\top \right) \right]^\top \boldsymbol{X}^{t,(l)}\boldsymbol{T}^{t,(l)}.
		\end{split}
		\]

		For $\boldsymbol{A}_1$, we have 
		\begin{equation}\label{eq_085} 
			\begin{split}
			&\|\boldsymbol{A}_1\|_F^2\\
			 =& \left\| \left( \left[ \begin{array}{c} \boldsymbol{X}^{t,(l)}\\\boldsymbol{Y}^{t,(l)} \end{array}\right] \boldsymbol{T}^{t,(l)}  - \left[ \begin{array}{c} \boldsymbol{X}^t \\ \boldsymbol{Y}^t \end{array} \right] \boldsymbol{R}^t \right)    - \eta\left(\nabla f(\boldsymbol{X}^{t,(l)}\boldsymbol{T}^{t,(l)},\boldsymbol{Y}^{t,(l)}\boldsymbol{T}^{t,(l)}) - \nabla f(\boldsymbol{X}^t\boldsymbol{R}^t,\boldsymbol{Y}^t\boldsymbol{R}^t)  \right) \right\|_F^2 \\
			=& \left\| \left( \boldsymbol{I} - \eta\int_0^1 \nabla^2 f(*) d\tau \right) \operatorname{vec}\left(\left[ \begin{array}{c} \boldsymbol{X}^{t,(l)}\boldsymbol{T}^{t,(l)} - \boldsymbol{X}^t\boldsymbol{R}^t \\\boldsymbol{Y}^{t,(l)}\boldsymbol{T}^{t,(l)} - \boldsymbol{Y}^t\boldsymbol{R}^t \end{array}\right]\right) \right\|_2^2\\
			\leqslant & \left\| \left[ \begin{array}{c} \boldsymbol{X}^{t,(l)}\boldsymbol{T}^{t,(l)} - \boldsymbol{X}^t\boldsymbol{R}^t \\\boldsymbol{Y}^{t,(l)}\boldsymbol{T}^{t,(l)} - \boldsymbol{Y}^t\boldsymbol{R}^t \end{array}\right] \right\|_F^2  + \eta^2\left\| \left[ \begin{array}{c} \boldsymbol{X}^{t,(l)}\boldsymbol{T}^{t,(l)} - \boldsymbol{X}^t\boldsymbol{R}^t \\\boldsymbol{Y}^{t,(l)}\boldsymbol{T}^{t,(l)} - \boldsymbol{Y}^t\boldsymbol{R}^t \end{array}\right] \right\|_F^2 \max_{0\leqslant \tau \leqslant 1}\left\| \nabla^2 f(*) \right\|^2\\
			& -2\eta \min_{0\leqslant \tau\leqslant 1} \operatorname{vec}\left(  \left[ \begin{array}{c} \boldsymbol{X}^{t,(l)}\boldsymbol{T}^{t,(l)} - \boldsymbol{X}^t\boldsymbol{R}^t \\\boldsymbol{Y}^{t,(l)}\boldsymbol{T}^{t,(l)} - \boldsymbol{Y}^t\boldsymbol{R}^t \end{array}\right]  \right)^\top  \nabla^2 f(*)  \operatorname{vec}\left(  \left[ \begin{array}{c} \boldsymbol{X}^{t,(l)}\boldsymbol{T}^{t,(l)} - \boldsymbol{X}^t\boldsymbol{R}^t \\\boldsymbol{Y}^{t,(l)}\boldsymbol{T}^{t,(l)} - \boldsymbol{Y}^t\boldsymbol{R}^t \end{array}\right]  \right),
			\end{split}
		\end{equation}
		where the first equality uses the fact that $\nabla f(\boldsymbol{X},\boldsymbol{Y}) = \nabla f(\boldsymbol{X}\boldsymbol{R},\boldsymbol{Y}\boldsymbol{R})$ for any $\boldsymbol{R}\in\mathsf{O}(r)$, and here
		\[
		\begin{split}
		\nabla^2 f(*) \coloneqq \nabla^2 f( &\tau (\boldsymbol{X}^{t,(l)}\boldsymbol{T}^{t,(l)} - \boldsymbol{X}^t\boldsymbol{R}^t )  + \boldsymbol{X}^t\boldsymbol{R}^t   , \tau(\boldsymbol{Y}^{t,(l)}\boldsymbol{T}^{t,(l)} - \boldsymbol{Y}^t\boldsymbol{R}^t)+\boldsymbol{Y}^t\boldsymbol{R}^t).
		\end{split}
		\]

		From \eqref{eq_ind3} and \eqref{eq_ind5}, if
		\[
			p\geqslant 	2.42 \times 10^{10}C_0C_I^2 \frac{\mu^2 r^2\kappa^{14}\log (n_1\vee n_2)}{n_1\wedge n_2},
		\]
		we have
		\[
		\begin{split}
		  \left\| \left[ \begin{array}{c} \boldsymbol{X}^{t,(l)}\\\boldsymbol{Y}^{t,(l)} \end{array} \right]\boldsymbol{T}^{t,(l)} - \left[ \begin{array}{c} \boldsymbol{X}^t\\\boldsymbol{Y}^t  \end{array}\right]\boldsymbol{R}^t \right\|_{2,\infty} \leqslant& \frac{1}{1000\kappa\sqrt{n_1+n_2}} \sqrt{\sigma_1(\boldsymbol{M})}
		\end{split}
		\]
		and
		\[
			\left\|  \left[ \begin{array}{c} \boldsymbol{X}^t\\\boldsymbol{Y}^t  \end{array}\right]\boldsymbol{R}^t  - \left[ \begin{array}{c} \boldsymbol{U}\\\boldsymbol{V} \end{array} \right]\right\|_{2,\infty}\leqslant \frac{1}{1000\kappa\sqrt{n_1+n_2}} \sqrt{\sigma_1(\boldsymbol{M})}.
		\]
		Therefore, 
		\[
		\begin{split}
			 \|\tau (\boldsymbol{X}^{t,(l)}\boldsymbol{T}^{t,(l)} - \boldsymbol{X}^t\boldsymbol{R}^t )  + \boldsymbol{X}^t\boldsymbol{R}^t - \boldsymbol{U}\|_{2,\infty} \leqslant &\frac{1}{500\kappa\sqrt{n_1+n_2}} \sqrt{\sigma_1(\boldsymbol{M})},
		\end{split}
		\]
	
		\[
		\begin{split}
		 \| \tau(\boldsymbol{Y}^{t,(l)}\boldsymbol{T}^{t,(l)} - \boldsymbol{Y}^t\boldsymbol{R}^t)+\boldsymbol{Y}^t\boldsymbol{R}^t -\boldsymbol{V}\|_{2,\infty} \leqslant &\frac{1}{500\kappa\sqrt{n_1+n_2}} \sqrt{\sigma_1(\boldsymbol{M})}
		\end{split}
		\]
	 	for any $\tau$ satisfying $0\leqslant \tau\leqslant 1$. And we also have
		\[
			\left\| \left[ \begin{array}{c} \boldsymbol{X}^t\\\boldsymbol{Y}^t \end{array} \right] \boldsymbol{R}^t - \left[ \begin{array}{c} \boldsymbol{U}\\\boldsymbol{V} \end{array}\right]\right\| \leqslant \frac{1}{500\kappa}\sqrt{\sigma_1(\boldsymbol{M})}.
		\]
		Therefore, Lemma \ref{lemma_hessian} can be applied here. Noting $E_{gd}^t\subset E_H$ and 
		\[
				p \geqslant C_{S1}\frac{\mu r \kappa\log(n_1\vee n_2)}{n_1\wedge n_2},
		\]
		we have \eqref{eq_hessian1} and \eqref{eq_hessian2} satisfied. Plugging \eqref{eq_hessian1} and \eqref{eq_hessian2} back to the estimation \eqref{eq_085}, we have 
		\[
			\begin{split}
				 \|\boldsymbol{A}_1\|_F^2  \leqslant & (1-\frac{2}{5}\eta\sigma_r(\boldsymbol{M}) + 25\eta^2\sigma_1^2(\boldsymbol{M}))   \left\| \left[ \begin{array}{c} \boldsymbol{X}^{t,(l)}\\\boldsymbol{Y}^{t,(l)} \end{array} \right]\boldsymbol{T}^{t,(l)} - \left[ \begin{array}{c} \boldsymbol{X}^t\\\boldsymbol{Y}^t  \end{array}\right]\boldsymbol{R}^t \right\|_F^2\\
				\leqslant &(1-0.2\eta\sigma_r(\boldsymbol{M}))\left\| \left[ \begin{array}{c} \boldsymbol{X}^{t,(l)}\\\boldsymbol{Y}^{t,(l)} \end{array} \right]\boldsymbol{T}^{t,(l)} - \left[ \begin{array}{c} \boldsymbol{X}^t\\\boldsymbol{Y}^t  \end{array}\right]\boldsymbol{R}^t \right\|_F^2,
			\end{split}
		\]
		where the last inequality holds since 
		\[
			\eta \leqslant  \frac{\sigma_r(\boldsymbol{M})}{200\sigma_1^2(\boldsymbol{M})}.
		\]
		Therefore,
		\begin{equation}\label{eq_068}
		\begin{split}
			 \|\boldsymbol{A}_1\|_F \leqslant& (1-0.1\eta\sigma_r(\boldsymbol{M}))\left\| \left[ \begin{array}{c} \boldsymbol{X}^{t,(l)}\\\boldsymbol{Y}^{t,(l)} \end{array} \right]\boldsymbol{T}^{t,(l)} - \left[ \begin{array}{c} \boldsymbol{X}^t\\\boldsymbol{Y}^t  \end{array}\right]\boldsymbol{R}^t \right\|_F
			\end{split}
		\end{equation}
		holds on the event $E_{gd}^t$.

		For the second term $\left[\begin{array}{c}
			\boldsymbol{A}_2\\
			\boldsymbol{A}_3
		\end{array}\right]$ in \eqref{eq_067}, by the definition of $\mathcal{P}_{l,\cdot}$ and $\mathcal{P}_{\Omega_{l,\cdot}}$, we can see that entries of 
		\[
		\begin{split}
			&\mathcal{P}_{l,\cdot}\left( \boldsymbol{X}^{t,(l)}(\boldsymbol{Y}^{t,(l)})^\top - \boldsymbol{U}\boldsymbol{V}^\top \right)  - \frac{1}{p}\mathcal{P}_{\Omega_{l,\cdot}}\left( \boldsymbol{X}^{t,(l)}(\boldsymbol{Y}^{t,(l)})^\top - \boldsymbol{U}\boldsymbol{V}^\top \right)
		\end{split}
		\]
		are all zero except on the $l$-th row. Using this fact, we have
		\[ 
		\begin{split}
			  \boldsymbol{A}_2 =& -\left[ \begin{array}{c} \boldsymbol{0} \\ \vdots \\ \boldsymbol{0}\\ \sum_{j} (\frac{1}{p}\delta_{l,j}-1)\left( \boldsymbol{X}^{t,(l)}(\boldsymbol{Y}^{t,(l)})^\top - \boldsymbol{U}\boldsymbol{V}^\top \right)_{l,j} (\boldsymbol{Y}_{j,\cdot}^{t,(l)})^\top \\ \boldsymbol{0} \\\vdots \\ \boldsymbol{0}\end{array}\right] \boldsymbol{T}^{t,(l)}
		\end{split}
		\]
		and
		\[ 
		\begin{split}
			 \boldsymbol{A}_3 =& -\left[ \begin{array}{c} (\frac{1}{p}\delta_{l,1}-1)\left( \boldsymbol{X}^{t,(l)}(\boldsymbol{Y}^{t,(l)})^\top - \boldsymbol{U}\boldsymbol{V}^\top \right)_{l,1} (\boldsymbol{X}_{l,\cdot}^{t,(l)})^\top\\ 
			%(\frac{1}{p}\delta_{l,2}-1)\left( \boldsymbol{X}^{t,(l)}(\boldsymbol{Y}^{t,(l)})^\top - \boldsymbol{U}\boldsymbol{V}^\top \right)_{l,2} (\boldsymbol{X}_{l,\cdot}^{t,(l)})^\top\\
			\vdots  \\ 
			(\frac{1}{p}\delta_{l,j}-1)\left( \boldsymbol{X}^{t,(l)}(\boldsymbol{Y}^{t,(l)})^\top - \boldsymbol{U}\boldsymbol{V}^\top \right)_{l,j} (\boldsymbol{X}_{l,\cdot}^{t,(l)})^\top \\\vdots\\
			(\frac{1}{p}\delta_{l,n_2}-1)\left( \boldsymbol{X}^{t,(l)}(\boldsymbol{Y}^{t,(l)})^\top - \boldsymbol{U}\boldsymbol{V}^\top \right)_{l,n_2} (\boldsymbol{X}_{l,\cdot}^{t,(l)})^\top\\
		\end{array}\right]  \boldsymbol{T}^{t,(l)}.
		\end{split}
		\]

		Therefore, by triangle inequality, 
		\begin{equation}\label{eq_020}
			\begin{split}
				& \left\| \left[ \begin{array}{c}
					\boldsymbol{A}_2\\
					\boldsymbol{A}_3
				\end{array} \right] \right\|_F \\
				\leqslant & \|\boldsymbol{A}_2\|_F + \|\boldsymbol{A}_3\|_F\\
			%=& \left\| \left[ \begin{array}{c} \boldsymbol{0} \\ \vdots \\ \boldsymbol{0}\\ \sum_{j} (\frac{1}{p}\delta_{l,j}-1)\left( \boldsymbol{X}^{t,(l)}(\boldsymbol{Y}^{t,(l)})^\top - \boldsymbol{U}\boldsymbol{V}^\top \right)_{l,j} (\boldsymbol{Y}_{j,\cdot}^{t,(l)})^\top \\ \boldsymbol{0} \\\vdots \\ \boldsymbol{0}\\ (\frac{1}{p}\delta_{l,1}-1)\left( \boldsymbol{X}^{t,(l)}(\boldsymbol{Y}^{t,(l)})^\top - \boldsymbol{U}\boldsymbol{V}^\top \right)_{l,1} (\boldsymbol{X}_{l,\cdot}^{t,(l)})^\top\\ \vdots  \\ (\frac{1}{p}\delta_{l,j}-1)\left( \boldsymbol{X}^{t,(l)}(\boldsymbol{Y}^{t,(l)})^\top - \boldsymbol{U}\boldsymbol{V}^\top \right)_{l,j} (\boldsymbol{X}_{l,\cdot}^{t,(l)})^\top \\\vdots \end{array}\right] \right\|_F\\
				%\leqslant & \left\|\sum_{j} (\frac{1}{p}\delta_{l,j}-1)\left( \boldsymbol{X}^{t,(l)}(\boldsymbol{Y}^{t,(l)})^\top - \boldsymbol{U}\boldsymbol{V}^\top \right)_{l,j} \boldsymbol{Y}_{j,\cdot}^{t,(l)}\right\|_2 \\
				%&+ \left\| \left[ \begin{array}{c} \vdots \\ (\frac{1}{p}\delta_{l,j}-1)\left( \boldsymbol{X}^{t,(l)}(\boldsymbol{Y}^{t,(l)})^\top - \boldsymbol{U}\boldsymbol{V}^\top \right)_{l,j} (\boldsymbol{X}_{l,\cdot}^{t,(l)})^\top\\\vdots \end{array}\right] \right\|_F\\
				\leqslant & \left\| \underbrace{\sum_{j} (\frac{1}{p}\delta_{l,j}-1)\left( \boldsymbol{X}^{t,(l)}(\boldsymbol{Y}^{t,(l)})^\top - \boldsymbol{U}\boldsymbol{V}^\top \right)_{l,j} \boldsymbol{Y}_{j,\cdot}^{t,(l)}}_{\boldsymbol{b}_1} \right\|_2\\
		&+ \left\| \underbrace{\left[ \begin{array}{c} (\frac{1}{p}\delta_{l,1}-1)\left( \boldsymbol{X}^{t,(l)}(\boldsymbol{Y}^{t,(l)})^\top - \boldsymbol{U}\boldsymbol{V}^\top \right)_{l,1}  \\ 
			%(\frac{1}{p}\delta_{l,2}-1)\left( \boldsymbol{X}^{t,(l)}(\boldsymbol{Y}^{t,(l)})^\top - \boldsymbol{U}\boldsymbol{V}^\top \right)_{l,2}  \\
			\vdots  \\ 
			(\frac{1}{p}\delta_{l,j}-1)\left( \boldsymbol{X}^{t,(l)}(\boldsymbol{Y}^{t,(l)})^\top - \boldsymbol{U}\boldsymbol{V}^\top \right)_{l,j}   \\\vdots\\
			(\frac{1}{p}\delta_{l,n_2}-1)\left( \boldsymbol{X}^{t,(l)}(\boldsymbol{Y}^{t,(l)})^\top - \boldsymbol{U}\boldsymbol{V}^\top \right)_{l,n_2}  \\
		\end{array}\right]}_{\boldsymbol{b}_2} \right\|_2 \| \boldsymbol{X}_{l,\cdot}^{t,(l)}\|_2,\\		
			\end{split}
		\end{equation}
		where the last inequality uses the fact that $\boldsymbol{T}^{t,(l)}\in\mathsf{O}(r)$.

		For $\boldsymbol{b}_1$, we can write $\boldsymbol{b}_1$ in the following form:
		\[
		\begin{split}
			\boldsymbol{b}_1 =& \sum_{j} (\frac{1}{p}\delta_{l,j}-1)\left( \boldsymbol{X}^{t,(l)}(\boldsymbol{Y}^{t,(l)})^\top - \boldsymbol{U}\boldsymbol{V}^\top \right)_{l,j} \boldsymbol{Y}_{j,\cdot}^{t,(l)}\\
			\coloneqq & \sum_j \boldsymbol{s}_{1,j}.
		\end{split}
		\]
		By the way we define $\boldsymbol{X}^{t,(l)}$ and $\boldsymbol{Y}^{t,(l)}$ in \eqref{eq:leaveoneout1}, \eqref{eq:leaveoneout2}, \eqref{eq:leaveoneout3} and \eqref{eq:leaveoneout4}, we can see that $\boldsymbol{X}^{t,(l)}$ and $\boldsymbol{Y}^{t,(l)}$ are independent of $\delta_{l,1},\cdots ,\delta_{l,n_2}$. Therefore, conditioned on $\boldsymbol{X}^{t,(l)}$ and $\boldsymbol{Y}^{t,(l)}$, $\boldsymbol{s}_{1,j}$'s are independent and $\mathbb{E}_{\delta_{l,\cdot}} \boldsymbol{s}_{1,j} = \boldsymbol{0}$. Moreover, since 
		\begin{equation}\label{eq_a003}
			\begin{split}
				&\boldsymbol{X}^{t,(l)}(\boldsymbol{Y}^{t,(l)})^\top - \boldsymbol{U}\boldsymbol{V}^\top\\
				 =&  \boldsymbol{X}^{t,(l)}\boldsymbol{T}^{t,(l)} (\boldsymbol{T}^{t,(l)} )^\top \boldsymbol{Y}^{t,(l)} - \boldsymbol{U}\boldsymbol{V}^\top\\
				=& (\boldsymbol{X}^{t,(l)}\boldsymbol{T}^{t,(l)} - \boldsymbol{U} )\boldsymbol{V}^\top + \boldsymbol{U}(\boldsymbol{Y}^{t,(l)}\boldsymbol{T}^{t,(l)} - \boldsymbol{V})^\top  + (\boldsymbol{X}^{t,(l)}\boldsymbol{T}^{t,(l)} - \boldsymbol{U} )(\boldsymbol{Y}^{t,(l)}\boldsymbol{T}^{t,(l)} - \boldsymbol{V})^\top.
			\end{split} 
		\end{equation} 
		Therefore, for all $j$,
		\[
			\begin{split}
				& \|\boldsymbol{s}_{1,j}\|_2 \\
				\leqslant & \frac{1}{p} \left\|  \boldsymbol{X}^{t,(l)}(\boldsymbol{Y}^{t,(l)})^\top - \boldsymbol{U}\boldsymbol{V}^\top \right\|_{\ell_{\infty}} \|\boldsymbol{Y}^{t,(l)}\|_{2,\infty}\\
				\leqslant & \frac{1}{p}\left( \| \boldsymbol{X}^{t,(l)}\boldsymbol{T}^{t,(l)} - \boldsymbol{U} \|_{2,\infty} \|\boldsymbol{V}\|_{2,\infty}  + \|\boldsymbol{Y}^{t,(l)}\boldsymbol{T}^{t,(l)} - \boldsymbol{V} \|_{2,\infty} \|\boldsymbol{U}\|_{2,\infty} +\| \boldsymbol{X}^{t,(l)}\boldsymbol{T}^{t,(l)} - \boldsymbol{U} \|_{2,\infty} \|\boldsymbol{Y}^{t,(l)}\boldsymbol{T}^{t,(l)} - \boldsymbol{V} \|_{2,\infty} \right)\\
				&  \times\|\boldsymbol{Y}^{t,(l)}\|_{2,\infty}\\
				 \coloneqq &L_1^{(l)}(\boldsymbol{X}^{t,(l)},\boldsymbol{Y}^{t,(l)})
			\end{split}
		\]
		holds. 
%		Denote $L_1^{(l)}(\boldsymbol{X}^{t,(l)},\boldsymbol{Y}^{t,(l)})$ as 
%		\[
%			\begin{split}
%				& L_1^{(l)}(\boldsymbol{X}^{t,(l)},\boldsymbol{Y}^{t,(l)})\\
%				 \coloneqq &\frac{1}{p}\left( \| \boldsymbol{X}^{t,(l)}\boldsymbol{T}^{t,(l)} - \boldsymbol{U} \|_{2,\infty} \|\boldsymbol{V}\|_{2,\infty} \right. \\
%				&~~~~+ \|\boldsymbol{Y}^{t,(l)}\boldsymbol{T}^{t,(l)} - \boldsymbol{V} \|_{2,\infty} \|\boldsymbol{U}\|_{2,\infty}\\
%				& \left. ~~~+\| \boldsymbol{X}^{t,(l)}\boldsymbol{T}^{t,(l)} - \boldsymbol{U} \|_{2,\infty} \|\boldsymbol{Y}^{t,(l)}\boldsymbol{T}^{t,(l)} - \boldsymbol{V} \|_{2,\infty} \right)\\
%				&  \times\|\boldsymbol{Y}^{t,(l)}\|_{2,\infty}.\\
%			\end{split}
%		\]
		By matrix Bernstein inequality \cite[Theorem 6.1.1]{tropp2015introduction}, we have 
		\[
		\begin{split}
			&\mathbb{P}\left[  \|\boldsymbol{b}_1\|_2 \geqslant 100\left(\sqrt{\mathbb{E}_{\delta_{l,\cdot}}\sum_j \|\boldsymbol{s}_{1,j}\|_2^2 \log (n_1\vee n_2)}  + L_1^{(l)}(\boldsymbol{X}^{t,(l)},\boldsymbol{Y}^{t,(l)}) \log (n_1\vee n_2)\right) \mid \boldsymbol{X}^{t,(l)},\boldsymbol{Y}^{t,(l)}\right] \\
			\leqslant& (n_1+n_2)^{-15}.
		\end{split}
		\] 
		Therefore, we have
		\[
			\begin{split}
				&\mathbb{P}\left[  \|\boldsymbol{b}_1\|_2 \geqslant 100\left(\sqrt{\mathbb{E}_{\delta_{l,\cdot}}\sum_j \|\boldsymbol{s}_{1,j}\|_2^2 \log (n_1\vee n_2)} + L_1^{(l)}(\boldsymbol{X}^{t,(l)},\boldsymbol{Y}^{t,(l)}) \log (n_1\vee n_2)\right) \right]\\
				= & \mathbb{E}  \left[\mathbb{E}\left[   \mathds{1}_{\|\boldsymbol{b}_1\|_2 \geqslant 100\left(\sqrt{\mathbb{E}_{\delta_{l,\cdot}}\sum_j \|\boldsymbol{s}_{1,j}\|_2^2 \log (n_1\vee n_2)}  + L_1^{(l)}(\boldsymbol{X}^{t,(l)},\boldsymbol{Y}^{t,(l)}) \log (n_1\vee n_2)\right) } \mid \boldsymbol{X}^{t,(l)},\boldsymbol{Y}^{t,(l)}\right]\right]\\
				\leqslant & (n_1+n_2)^{-15}.
			\end{split}
		\]
		In other words, on an event $E_{B}^{t,(l),1}$ with probability $\mathbb{P}[E_{B}^{t,(l),1}]\geqslant 1-(n_1+n_2)^{-15}$, 
		\begin{equation}\label{eq_071}
		\begin{split}
			\|\boldsymbol{b}_1\|_2 \leqslant& 100\left(\sqrt{\mathbb{E}_{\delta_{l,\cdot}}\sum_j \|\boldsymbol{s}_{1,j}\|_2^2 \log (n_1\vee n_2)}   L_1^{(l)}(\boldsymbol{X}^{t,(l)},\boldsymbol{Y}^{t,(l)}) \log (n_1\vee n_2)\right)
			\end{split}
		\end{equation}
		holds.

		On the event $E_{gd}^t$, if
		\[
			p\geqslant 111^2 C_I^2 \frac{\mu r \kappa^{11} \log (n_1\vee n_2)}{n_1\wedge n_2},
		\]
		from \eqref{eq_015}, we have
		\begin{equation}\label{eq_017}
		\begin{split}
			\left\| \left[ \begin{array}{c} \boldsymbol{X}^{t,(l)} \\ \boldsymbol{Y}^{t,(l)} \end{array} \right]\boldsymbol{T}^{t,(l)} - \left[ \begin{array}{c} \boldsymbol{U}\\\boldsymbol{V} \end{array}\right] \right\|_{2,\infty} \leqslant & \sqrt{\frac{\mu r \kappa}{n_1\wedge n_2}}\sqrt{\sigma_1(\boldsymbol{M})},\\
			 \left\| \left[ \begin{array}{c} \boldsymbol{X}^{t,(l)} \\ \boldsymbol{Y}^{t,(l)} \end{array} \right]  \right\|_{2,\infty} \leqslant & 2 \sqrt{\frac{\mu r \kappa}{n_1\wedge n_2}}\sqrt{\sigma_1(\boldsymbol{M})}.
		\end{split}
		\end{equation}
		Therefore, from \eqref{eq_015},
		\begin{equation}\label{eq_072}
			\begin{split}
%				&\|\boldsymbol{s}_{1,j}\|_2\\
		 		& L_1^{(l)}(\boldsymbol{X}^{t,(l)},\boldsymbol{Y}^{t,(l)})\\
				\leqslant& \frac{3}{p} \left\| \left[ \begin{array}{c} \boldsymbol{X}^{t,(l)} \\ \boldsymbol{Y}^{t,(l)} \end{array} \right]\boldsymbol{T}^{t,(l)} - \left[ \begin{array}{c} \boldsymbol{U}\\\boldsymbol{V} \end{array}\right] \right\|_{2,\infty} \sqrt{\frac{\mu r \kappa}{n_1\wedge n_2}}  \times \sqrt{\sigma_1(\boldsymbol{M})} \left\| \left[ \begin{array}{c} \boldsymbol{X}^{t,(l)} \\ \boldsymbol{Y}^{t,(l)} \end{array} \right]  \right\|_{2,\infty} \\
				\leqslant& \frac{333C_I}{p}\rho^t \sqrt{\frac{\mu^2 r^2 \kappa^{12}\log (n_1\vee n_2)}{(n_1\wedge n_2)^2p}}  \sqrt{\frac{\mu r \kappa}{n_1\wedge n_2}} \sigma_1(\boldsymbol{M})  \times 2\sqrt{\frac{\mu r \kappa}{n_1\wedge n_2}}\sqrt{\sigma_1(\boldsymbol{M})}\\
				\leqslant & 666C_I\rho^t \sqrt{\frac{\mu^4 r^4\kappa^{14} \log (n_1\vee n_2)}{(n_1\wedge n_2)^4 p^3}}\sqrt{\sigma_1(\boldsymbol{M})}^3.
			\end{split}
		\end{equation}
		Moreover, for $\mathbb{E}_{\delta_{l,\cdot}} \sum_j \|\boldsymbol{s}_{1,j}\|_2^2 $, we have
		\begin{equation}\label{eq_016}
			\begin{split}
				 \mathbb{E}_{\delta_{l,\cdot}} \sum_j \|\boldsymbol{s}_{1,j}\|_2^2  =& \mathbb{E}_{\delta_{l,\cdot}}  \sum_j (\frac{1}{p}\delta_{l,j} - 1)^2 \left( \boldsymbol{X}^{t,(l)}(\boldsymbol{Y}^{t,(l)})^\top - \boldsymbol{U}\boldsymbol{V}^\top \right)_{l,j}^2  \|\boldsymbol{Y}^{t,(l)}_{j,\cdot}\|_2^2\\
				\leqslant & \frac{1}{p}\|\boldsymbol{Y}^{t,(l)} \|_{2,\infty}^2 \left\|\left( \boldsymbol{X}^{t,(l)}(\boldsymbol{Y}^{t,(l)})^\top - \boldsymbol{U}\boldsymbol{V}^\top \right) _{l,\cdot}\right\|_2^2.
			\end{split}
		\end{equation}

		From \eqref{eq_ind2} and \eqref{eq_ind3},
		\begin{equation}\label{eq_053}
			\begin{split}
				& \left\| \left[ \begin{array}{c} \boldsymbol{X}^{t,(l)} \\ \boldsymbol{Y}^{t,(l)} \end{array} \right]\boldsymbol{T}^{t,(l)} - \left[ \begin{array}{c} \boldsymbol{U}\\\boldsymbol{V} \end{array}\right] \right\| \\
				 \leqslant & \left\|\left[ \begin{array}{c} \boldsymbol{X}^t \\\boldsymbol{Y}^t \end{array}\right]\boldsymbol{R}^t -  \left[ \begin{array}{c} \boldsymbol{X}^{t,(l)} \\ \boldsymbol{Y}^{t,(l)} \end{array} \right]\boldsymbol{T}^{t,(l)}  \right\|_F + \left\| \left[ \begin{array}{c} \boldsymbol{X}^t \\\boldsymbol{Y}^t \end{array}\right]\boldsymbol{R}^t  - \left[ \begin{array}{c} \boldsymbol{U}\\\boldsymbol{V} \end{array}\right] \right\| \\
				\leqslant &  C_I\rho^t \sqrt{\frac{\mu^2 r^2 \kappa^{10}\log (n_1\vee n_2)}{(n_1\wedge n_2)^2p}}\sqrt{\sigma_1(\boldsymbol{M})} +C_I\rho^t \sqrt{\frac{\mu r \kappa^6 \log (n_1\vee n_2)}{(n_1\wedge n_2)p}}\sqrt{\sigma_1(\boldsymbol{M})}	\\
			\leqslant & 2C_I\rho^t \sqrt{\frac{\mu r \kappa^{10}\log (n_1\vee n_2)}{(n_1\wedge n_2)p}}\sqrt{\sigma_1(\boldsymbol{M})},
			\end{split}
		\end{equation}
		where the last inequality holds since
		\[
			\frac{\mu r }{n_1\wedge n_2} \leqslant p\leqslant 1.	
		\]

		By triangle inequality, and recall the decomposition \eqref{eq_a003},
		\[
			\begin{split}
				&\left\|\left( \boldsymbol{X}^{t,(l)}(\boldsymbol{Y}^{t,(l)})^\top - \boldsymbol{U}\boldsymbol{V}^\top \right) _{l,\cdot}\right\|_2\\
				 \leqslant& \left\| \boldsymbol{U}_{l,\cdot}^\top (\boldsymbol{Y}^{t,(l)} \boldsymbol{T}^{t,(l)}-\boldsymbol{V} )^\top \right\|_2  + \left\| (\boldsymbol{X}^{t,(l)}\boldsymbol{T}^{t,(l)} - \boldsymbol{U})_{l,\cdot}^\top \boldsymbol{V}^\top \right\|_2  + \left\|  (\boldsymbol{X}^{t,(l)}\boldsymbol{T}^{t,(l)} - \boldsymbol{U})_{l,\cdot}^\top (\boldsymbol{Y}^{t,(l)} \boldsymbol{T}^{t,(l)}-\boldsymbol{V} )^\top \right\|_2\\
				\leqslant & \|\boldsymbol{U}\|_{2,\infty} \|\boldsymbol{Y}^{t,(l)} \boldsymbol{T}^{t,(l)}-\boldsymbol{V} \|  + \|\boldsymbol{X}^{t,(l)}\boldsymbol{T}^{t,(l)} - \boldsymbol{U}\|_{2,\infty} \|\boldsymbol{V}\| + \|\boldsymbol{X}^{t,(l)}\boldsymbol{T}^{t,(l)} - \boldsymbol{U}\|_{2,\infty} \|\boldsymbol{Y}^{t,(l)} \boldsymbol{T}^{t,(l)}-\boldsymbol{V}\|.
			\end{split}
		\] 
		Combining with \eqref{eq_015} and \eqref{eq_053} we have
		\begin{equation}\label{eq_018}
			\begin{split}
				&\left\|\left( \boldsymbol{X}^{t,(l)}(\boldsymbol{Y}^{t,(l)})^\top - \boldsymbol{U}\boldsymbol{V}^\top \right) _{l,\cdot}\right\|_2\\  \leqslant& 2\sqrt{\frac{\mu r \kappa }{n_1\wedge n_2}}\sqrt{\sigma_1(\boldsymbol{M})}   C_I \rho^t  \times\sqrt{\frac{\mu r \kappa^{10}\log (n_1\vee n_2)}{(n_1\wedge n_2)p}}\sqrt{\sigma_1(\boldsymbol{M})} + 111C_I \rho^t \sqrt{\frac{\mu^2 r^2 \kappa^{12}\log(n_1\vee n_2)}{(n_1\wedge n_2)^2p}}\sigma_1(\boldsymbol{M})\\
				& + 111C_I \rho^t \sqrt{\frac{\mu^2 r^2 \kappa^{12}\log(n_1\vee n_2)}{(n_1\wedge n_2)^2p}}   \times 2 C_I \rho^t \sqrt{\frac{\mu r \kappa^{10}\log (n_1\vee n_2)}{(n_1\wedge n_2)p}}\sigma_1(\boldsymbol{M}) \\
				\leqslant & 115 C_I \rho^t \sqrt{\frac{\mu^2 r^2 \kappa^{12}\log (n_1\vee n_2)}{(n_1\wedge n_2)^2 p}}\sigma_1(\boldsymbol{M}),
			\end{split}
		\end{equation} 
		where the last inequality use the fact that 
		\[
			p \geqslant 111^2 C_I^2 \frac{\mu r \kappa^{10} \log (n_1\vee n_2)}{n_1\wedge n_2}
		\]
		and $\rho<1$.

		Putting \eqref{eq_017}, \eqref{eq_016} and \eqref{eq_018} together we have
		\begin{equation}\label{eq_073}
		\begin{split}
			 \mathbb{E}_{\delta_{l,\cdot}}\sum_j \|\boldsymbol{s}_{1,j}\|_2^2  \leqslant & 230^2 C_I^2 \rho^{2t}\frac{\mu^3 r^3\kappa^{13}\log (n_1\vee n_2)}{(n_1\wedge n_2)^3p^2}\sigma_1^3(\boldsymbol{M}).
		\end{split}
		\end{equation}
		So by \eqref{eq_071}, \eqref{eq_072} and \eqref{eq_073}, on the event $E_{B}^{t,(l),1}\bigcap E_{gd}^t$, we have 
		\begin{equation}\label{eq_074} 
			\begin{split}
				&\|\boldsymbol{b}_1\|_2\\
				 \leqslant & 100 \rho^t \left( 230C_I \sqrt{\frac{\mu^3 r^3 \kappa^{13} \log^2(n_1\vee n_2)}{(n_1\wedge n_2)^3 p^2}}  + 666C_I\sqrt{\frac{\mu^4 r^4 \kappa^{14}\log (n_1\vee n_2)}{(n_1\wedge n_2)^4 p^3}}\log (n_1\vee n_2) \right)\sqrt{\sigma_1(\boldsymbol{M})}^3\\
				= & 100C_I\rho^t \sqrt{\frac{\mu^2 r^2 \kappa^{10} \log (n_1\vee n_2)}{(n_1\wedge n_2)^2 p}} \sqrt{\sigma_1(\boldsymbol{M})}\sigma_r(\boldsymbol{M}) \kappa\\
				&\times \left( 230 \sqrt{\frac{\mu r \kappa^3 \log (n_1\vee n_2)}{(n_1\wedge n_2)p}} + 666 \sqrt{\frac{\mu^2 r^2 \kappa^4 \log^2 (n_1\vee n_2)}{(n_1\wedge n_2)^2p^2}} \right)\\
				\leqslant & 0.025\sigma_r(\boldsymbol{M}) C_I\rho^t \sqrt{\frac{\mu^2 r^2 \kappa^{10} \log (n_1\vee n_2)}{(n_1\wedge n_2)^2 p}} \sqrt{\sigma_1(\boldsymbol{M})}, 
			\end{split}
		\end{equation}
		where the last inequality holds since 
		\[
			p \geqslant 3.3856\times 10^{12} \frac{\mu r \kappa^5 \log (n_1\vee n_2)}{n_1\wedge n_2}.
		\]
		
		For $\boldsymbol{b}_2$ defined in \eqref{eq_020}, we can use almost the same argument. We can write $\boldsymbol{b}_2$ as
		\[
		\begin{split}
			\boldsymbol{b}_2 =& \sum_j \boldsymbol{e}_j (\frac{1}{p}\delta_{l,j}-1)\left( \boldsymbol{X}^{t,(l)}(\boldsymbol{Y}^{t,(l)})^\top - \boldsymbol{U}\boldsymbol{V}^\top \right)_{l,j} \\
			 \coloneqq & \sum_j \boldsymbol{s}_{2,j}.
		\end{split}
		\]
		By the definition of $\boldsymbol{X}^{t,(l)}$ and $\boldsymbol{Y}^{t,(l)}$, we can see that $\boldsymbol{X}^{t,(l)}$ and $\boldsymbol{Y}^{t,(l)}$ are independent of $\delta_{l,1},\cdots ,\delta_{l,n_2}$. Therefore, conditioned on $\boldsymbol{X}^{t,(l)}$ and $\boldsymbol{Y}^{t,(l)}$, $\boldsymbol{s}_{2,j}$'s are independent and $\mathbb{E}_{\delta_{l,\cdot}} \boldsymbol{s}_{2,j} = \boldsymbol{0}$. Note for all $j$, 
		\begin{equation}\label{eq_b001}
			\begin{split}
				& \|\boldsymbol{s}_{2,j}\|_2\\
				 \leqslant & \frac{1}{p} \left\|  \boldsymbol{X}^{t,(l)}(\boldsymbol{Y}^{t,(l)})^\top - \boldsymbol{U}\boldsymbol{V}^\top \right\|_{\ell_{\infty}} \\
				\leqslant & \frac{1}{p}\left( \| \boldsymbol{X}^{t,(l)}\boldsymbol{T}^{t,(l)} - \boldsymbol{U} \|_{2,\infty} \|\boldsymbol{V}\|_{2,\infty}  + \|\boldsymbol{Y}^{t,(l)}\boldsymbol{T}^{t,(l)} - \boldsymbol{V} \|_{2,\infty} \|\boldsymbol{U}\|_{2,\infty} +\| \boldsymbol{X}^{t,(l)}\boldsymbol{T}^{t,(l)} - \boldsymbol{U} \|_{2,\infty} \|\boldsymbol{Y}^{t,(l)}\boldsymbol{T}^{t,(l)} - \boldsymbol{V} \|_{2,\infty} \right)\\
				\coloneqq & L_2^{(l)}(\boldsymbol{X}^{t,(l)},\boldsymbol{Y}^{t,(l)}).
			\end{split}
		\end{equation}
%		Denote $L_2^{(l)}(\boldsymbol{X}^{t,(l)},\boldsymbol{Y}^{t,(l)})$ as 
%		\[
%			\begin{split}
%				& L_2^{(l)}(\boldsymbol{X}^{t,(l)},\boldsymbol{Y}^{t,(l)})\\
%				 \coloneqq  & \frac{1}{p}\left( \| \boldsymbol{X}^{t,(l)}\boldsymbol{T}^{t,(l)} - \boldsymbol{U} \|_{2,\infty} \|\boldsymbol{V}\|_{2,\infty} \right. \\
%				 &~~~~+ \|\boldsymbol{Y}^{t,(l)}\boldsymbol{T}^{t,(l)} - \boldsymbol{V} \|_{2,\infty} \|\boldsymbol{U}\|_{2,\infty}\\
%				& ~~~~\left. +\| \boldsymbol{X}^{t,(l)}\boldsymbol{T}^{t,(l)} - \boldsymbol{U} \|_{2,\infty} \|\boldsymbol{Y}^{t,(l)}\boldsymbol{T}^{t,(l)} - \boldsymbol{V} \|_{2,\infty} \right).\\
%			\end{split}	
%		\]
%		
		By matrix Bernstein inequality \cite[Theorem 6.1.1]{tropp2015introduction}, we have
		\[
		\begin{split}
			& \mathbb{P}\left[  \|\boldsymbol{b}_2\|_2 \geqslant 100\left(\sqrt{\mathbb{E}_{\delta_{l,\cdot}}\sum_j \|\boldsymbol{s}_{2,j}\|_2^2 \log (n_1\vee n_2)}  + L_2^{(l)}(\boldsymbol{X}^{t,(l)},\boldsymbol{Y}^{t,(l)}) \log (n_1\vee n_2)\right) \mid \boldsymbol{X}^{t,(l)},\boldsymbol{Y}^{t,(l)}\right] \\
			\leqslant& (n_1+n_2)^{-15}.
		\end{split}
		\]

		Using the same argument in $\boldsymbol{b}_1$, we have that on an event $E_{B}^{t,(l),2}$ with probability $\mathbb{P}[E_{B}^{t,(l),2}]\geqslant 1-(n_1+n_2)^{-15}$, 
		\begin{equation}\label{eq_055}
		\begin{split}
			\|\boldsymbol{b}_2\|_2 \leqslant& 100\left(\sqrt{\mathbb{E}_{\delta_{l,\cdot}}\sum_j \|\boldsymbol{s}_{2,j}\|_2^2 \log (n_1\vee n_2)}  + L_2^{(l)}(\boldsymbol{X}^{t,(l)},\boldsymbol{Y}^{t,(l)}) \log (n_1\vee n_2)\right)
			\end{split}
		\end{equation}
		holds. Note on the event $E_{B}^{t,(l),2}\bigcap E_{gd}^t$, the estimation of $\|\boldsymbol{s}_{2,j}\|$ and $\mathbb{E}_{\delta_{l,\cdot}}\sum_j \|\boldsymbol{s}_{2,j}\|_2^2$ are in the same fashion with the one we did on $\boldsymbol{s}_{1,j}$: 
		On the event $E_{gd}^t$, from \eqref{eq_015}, \eqref{eq_017} and \eqref{eq_b001},
		\[
			\begin{split}
%				&\|\boldsymbol{s}_{2,j}\|_2\\
				  L_2^{(l)}(\boldsymbol{X}^{t,(l)},\boldsymbol{Y}^{t,(l)}) \leqslant &  \frac{3}{p}\left\| \left[ \begin{array}{c} \boldsymbol{X}^{t,(l)} \\ \boldsymbol{Y}^{t,(l)} \end{array} \right]\boldsymbol{T}^{t,(l)} - \left[ \begin{array}{c} \boldsymbol{U}\\\boldsymbol{V} \end{array}\right] \right\|_{2,\infty} \sqrt{\frac{\mu r \kappa}{n_1\wedge n_2}}\sqrt{\sigma_1(\boldsymbol{M})}\\
				\leqslant & \frac{1}{p}333 C_I \rho^t \sqrt{\frac{\mu^2 r^2 \kappa^{12}\log(n_1\vee n_2)}{(n_1\wedge n_2)^2p}}\sqrt{\sigma_1(\boldsymbol{M})}   \times\sqrt{\frac{\mu r \kappa}{n_1\wedge n_2}}\sqrt{\sigma_1(\boldsymbol{M})}\\
				= & 333 C_I \rho^t \sqrt{\frac{\mu^3 r^3 \kappa^{13}\log(n_1\vee n_2)}{(n_1\wedge n_2)^3p^3}}\sigma_1(\boldsymbol{M}).
 			\end{split}	
		\]
		% holds if
		% \[
		% 	p\geqslant 111^2 C_I^2 \frac{\mu r \kappa^{11} \log (n_1\vee n_2)}{n_1\wedge n_2}.
		% \]
		At the same time,
		\[
			 \begin{split}
				  \mathbb{E}_{\delta_{l,\cdot}} \sum_{j}\|\boldsymbol{s}_{2,j}\|_2^2 =& \mathbb{E}_{\delta_{l,\cdot}} \sum_{j} (\frac{1}{p}\delta_{l,j}-1)^2\left( \boldsymbol{X}^{t,(l)}(\boldsymbol{Y}^{t,(l)})^\top - \boldsymbol{U}\boldsymbol{V}^\top \right)_{l,j}^2\\
				 \leqslant & \frac{1}{p} \left\|\left( \boldsymbol{X}^{t,(l)}(\boldsymbol{Y}^{t,(l)})^\top - \boldsymbol{U}\boldsymbol{V}^\top \right) _{l,\cdot}\right\|_2^2\\
				 \leqslant & 115^2 C_I^2 \rho^{2t}\frac{\mu^2 r^2\kappa^{12}\log(n_1\vee n_2)}{(n_1\wedge n_2)^2p^2}\sigma_1^2(\boldsymbol{M}),
			 \end{split}
		\]
		where the last inequality follows from \eqref{eq_018}. Therefore, on the event $E_{gd}^t \bigcap E_B^{t,(l),2}$,
		\begin{equation}\label{eq_064}
		 \begin{split}
				 & \|\boldsymbol{A}_3\|_F \\
				 =& \|\boldsymbol{b}_2\|_2 \|\boldsymbol{X}_{l,\cdot}^{t,(l)}\|_2\\
				 \leqslant & 100\left( 115C_I\rho^t \sqrt{\frac{\mu^2 r^2 \kappa^{12}\log^2(n_1\vee n_2)}{(n_1\wedge n_2)^2 p^2}} \sigma_1(\boldsymbol{M})   + 333 C_I\rho^t \sqrt{\frac{\mu^3 r^3 \kappa^{13}\log(n_1\vee n_2)}{(n_1\wedge n_2)^3p^3}}\sigma_1(\boldsymbol{M})\log(n_1\vee n_2) \right)\\
				 &\times 2\sqrt{\frac{\mu r\kappa}{n_1\wedge n_2}}\sqrt{\sigma_1(\boldsymbol{M})}\\
				 = & 100C_I\rho^t \sqrt{\frac{\mu^2 r^2 \kappa^{10} \log (n_1\vee n_2)}{(n_1\wedge n_2)^2 p}} \sqrt{\sigma_1(\boldsymbol{M})}\sigma_r(\boldsymbol{M}) \kappa \\
				 &\times \left( 230 \sqrt{\frac{\mu r \kappa^3 \log (n_1\vee n_2)}{(n_1\wedge n_2)p}} + 666 \sqrt{\frac{\mu^2 r^2 \kappa^4 \log^2 (n_1\vee n_2)}{(n_1\wedge n_2)^2p^2}} \right)\\
				 \leqslant & 0.025\sigma_r(\boldsymbol{M}) C_I\rho^t \sqrt{\frac{\mu^2 r^2 \kappa^{10} \log (n_1\vee n_2)}{(n_1\wedge n_2)^2 p}} \sqrt{\sigma_1(\boldsymbol{M})}, 
			 \end{split}
			\end{equation}
		where the second inequality uses \eqref{eq_017} and the last inequality holds since 
		\[
			p \geqslant 3.3856\times 10^{12} \frac{\mu r \kappa^5 \log (n_1\vee n_2)}{n_1\wedge n_2}.
		\]

		So in summary by \eqref{eq_020}, \eqref{eq_074} and \eqref{eq_064}, on the event $E_{B}^{t,(l),1}\bigcap E_{B}^{t,(l),2}\bigcap E_{gd}^t$ we have 
		\begin{equation}\label{eq_075}
		\begin{split}
			 \left\| \left[ \begin{array}{c}
				\boldsymbol{A}_2\\
				\boldsymbol{A}_3
			\end{array} \right] \right\|_F  \leqslant &0.05 \sigma_r(\boldsymbol{M})C_I \rho^t \sqrt{\frac{\mu^2 r^2 \kappa^{10} \log (n_1\vee n_2)}{(n_1\wedge n_2)^2 p}} \sqrt{\sigma_1(\boldsymbol{M})} .
		\end{split}
		\end{equation}
		Combining the estimations \eqref{eq_068} and \eqref{eq_075} for $\boldsymbol{A}_1$, $\boldsymbol{A}_2$ and $\boldsymbol{A}_3$ together, and using \eqref{eq_067}, we can see that on the event $E_{B}^{t,(l),1}\bigcap E_{B}^{t,(l),2}\bigcap E_{gd}^t$,
		\[
			\begin{split}
			& \left\| \left[ \begin{array}{c} \boldsymbol{X}^{t+1} \\\boldsymbol{Y}^{t+1} \end{array}\right]\boldsymbol{R}^{t} - \left[\begin{array}{c} \boldsymbol{X}^{t+1,(l)} \\ \boldsymbol{Y}^{t+1,(l)} \end{array}\right] \boldsymbol{T}^{t,(l)} \right\|_F \\\leqslant & \|\boldsymbol{A}_1\|_F +\eta \left\| \left[ \begin{array}{c}
				\boldsymbol{A}_2\\
				\boldsymbol{A}_3
			\end{array} \right] \right\|_F \\
			 \leqslant &  (1-0.1\eta\sigma_r(\boldsymbol{M}))\left\| \left[ \begin{array}{c} \boldsymbol{X}^t\\\boldsymbol{Y}^t  \end{array}\right]\boldsymbol{R}^t -\left[ \begin{array}{c} \boldsymbol{X}^{t,(l)}\\\boldsymbol{Y}^{t,(l)} \end{array} \right]\boldsymbol{T}^{t,(l)}\right\|_F +0.05\eta\sigma_r(\boldsymbol{M}) C_I \rho^t \sqrt{\frac{\mu^2 r^2 \kappa^{10} \log (n_1\vee n_2)}{(n_1\wedge n_2)^2 p}} \sqrt{\sigma_1(\boldsymbol{M})} \\
			 \leqslant & C_I \rho^{t+1} \sqrt{\frac{\mu^2 r^2 \kappa^{10} \log (n_1\vee n_2)}{(n_1\wedge n_2)^2 p}} \sqrt{\sigma_1(\boldsymbol{M})}
			\end{split}
		\]
		holds for $\rho = 1-0.05\eta \sigma_r(\boldsymbol{M})$ and fixed $l$ satisfying $1\leqslant l \leqslant n_1$, and the last inequality uses \eqref{eq_ind3}. The proof is all the same for $l$ satisfying $n_1+1\leqslant l\leqslant n_1+n_2$. Let $E_{gd}^{t+1} = E_{gd}^t \bigcap \left( \bigcap_{l = 1}^{n_1+n_2} E_B^{t,(l),1} \right)\bigcap \left( \bigcap_{l = 1}^{n_1+n_2} E_B^{t,(l),2} \right)$, so $E_{gd}^{t+1}\subset E_{gd}^t$, and from union bound, we have $\mathbb{P}[E_{gd}^t \backslash E_{gd}^{t+1}]\leqslant (n_1+n_2)^{-10}$.

\subsection{Proof of \eqref{eq_ind5}}\label{sec_proof_eq_ind5}

Finally, we want to show that \eqref{eq_ind5} can be directly implied by \eqref{eq_ind2}, \eqref{eq_ind4} and \eqref{eq_ind3}. First, for any $l$ satisfies $1\leqslant l\leqslant n_1+n_2$,
\begin{equation}\label{eq_008}
	\begin{split}
		&\left\| \left( \left[ \begin{array}{c} \boldsymbol{X}^t \\\boldsymbol{Y}^t\end{array}\right]\boldsymbol{R}^t -\left[ \begin{array}{c} \boldsymbol{U}\\\boldsymbol{V} \end{array} \right] \right)_{l,\cdot} \right\|_2 
		\\
		\leqslant & \left\| \left( \left[\begin{array}{c} \boldsymbol{X}^t \\ \boldsymbol{Y}^t\end{array}\right]\boldsymbol{R}^t -\left[\begin{array}{c} \boldsymbol{X}^{t,(l)} \\ \boldsymbol{Y}^{t,(l)} \end{array}\right]\boldsymbol{R}^{t,(l)} \right)_{l,\cdot} \right\|_2  + \left\| \left( \left[\begin{array}{c} \boldsymbol{X}^{t,(l)} \\ \boldsymbol{Y}^{t,(l)} \end{array}\right]\boldsymbol{R}^{t,(l)} - \left[ \begin{array}{c} \boldsymbol{U}\\\boldsymbol{V}\end{array} \right] \right)_{l,\cdot} \right\|_2\\
		\leqslant & \left\|  \left[\begin{array}{c} \boldsymbol{X}^t \\ \boldsymbol{Y}^t\end{array}\right]\boldsymbol{R}^t -\left[\begin{array}{c} \boldsymbol{X}^{t,(l)} \\ \boldsymbol{Y}^{t,(l)} \end{array}\right]\boldsymbol{R}^{t,(l)}  \right\|_F + \left\| \left( \left[\begin{array}{c} \boldsymbol{X}^{t,(l)} \\ \boldsymbol{Y}^{t,(l)} \end{array}\right]\boldsymbol{R}^{t,(l)} - \left[ \begin{array}{c} \boldsymbol{U}\\\boldsymbol{V}\end{array} \right] \right)_{l,\cdot} \right\|_2.
	\end{split}
\end{equation}
The second term of the last line is already controlled by \eqref{eq_ind4}, so our main goal is to control the first term. In order to do so, we want to apply Lemma \ref{ma_lemma37} with 
\[
	\boldsymbol{X}_0 \coloneqq \left[ \begin{array}{c}
		\boldsymbol{U}\\
		\boldsymbol{V}
	\end{array} \right],\;	\boldsymbol{X}_1 \coloneqq \left[ \begin{array}{c} \boldsymbol{X}^t \\\boldsymbol{Y}^t\end{array}\right]\boldsymbol{R}^t,\; \boldsymbol{X}_2 \coloneqq \left[ \begin{array}{c} \boldsymbol{X}^{t,(l)}\\ \boldsymbol{Y}^{t,(l)}\end{array} \right]\boldsymbol{T}^{t,(l)}.
\]
Note by the definition of $\boldsymbol{U}$ and $\boldsymbol{V}$, we have $\sigma_1(\boldsymbol{X}_0) = \sqrt{2\sigma_1(\boldsymbol{M})}$, $\sigma_2(\boldsymbol{X}_0) = \sqrt{2\sigma_2(\boldsymbol{M})}$, $\cdots$, $\sigma_r(\boldsymbol{X}_0) = \sqrt{2\sigma_r(\boldsymbol{M})}$, and $\sigma_1(\boldsymbol{X}_0)/\sigma_r(\boldsymbol{X}_0) = \sqrt{\kappa}$. In order to apply the lemma, note from \eqref{eq_ind2} we have
	\[
	\begin{split}
		 \left\| \left[ \begin{array}{c} \boldsymbol{X}^t\\ \boldsymbol{Y}^t \end{array}\right] \boldsymbol{R}^t - \left[ \begin{array}{c} \boldsymbol{U}\\\boldsymbol{V}\end{array} \right]\right\| \left\| \left[ \begin{array}{c} \boldsymbol{U}\\\boldsymbol{V} \end{array} \right] \right\| \leqslant& 2 C_I \rho^t\sqrt{\frac{\mu r \kappa^6\log (n_1\vee n_2)}{(n_1\wedge n_2)p}}\sigma_1(\boldsymbol{M}).
	\end{split}
	\]
	And as long as 
	\[
		p\geqslant 16C_I^2 \frac{\mu r \kappa^8 \log (n_1\vee n_2)}{n_1\wedge n_2},
	\]
	we have 
	\[
		\left\| \left[ \begin{array}{c} \boldsymbol{X}^t\\ \boldsymbol{Y}^t \end{array}\right] \boldsymbol{R}^t - \left[ \begin{array}{c} \boldsymbol{U}\\\boldsymbol{V}\end{array} \right]\right\| \left\| \left[ \begin{array}{c} \boldsymbol{U}\\\boldsymbol{V} \end{array} \right] \right\| \leqslant \frac{1}{2}\sigma_r(\boldsymbol{M})\leqslant \frac{1}{2}\sigma_r^2(\boldsymbol{X}_0). 
	\]
		And also we have
	\[
		\begin{split}
			 \left\| \left[ \begin{array}{c} \boldsymbol{X}^t \\ \boldsymbol{Y}^t\end{array} \right]\boldsymbol{R}^t - \left[ \begin{array}{c} \boldsymbol{X}^{t,(l)}\\ \boldsymbol{Y}^{t,(l)}\end{array} \right]\boldsymbol{T}^{t,(l)} \right\|\left\| \left[ \begin{array}{c} \boldsymbol{U}\\\boldsymbol{V} \end{array}\right] \right\| \leqslant & \left\| \left[ \begin{array}{c} \boldsymbol{X}^t \\ \boldsymbol{Y}^t\end{array} \right]\boldsymbol{R}^t - \left[ \begin{array}{c} \boldsymbol{X}^{t,(l)}\\ \boldsymbol{Y}^{t,(l)}\end{array} \right]\boldsymbol{T}^{t,(l)} \right\|_F\left\| \left[ \begin{array}{c} \boldsymbol{U}\\\boldsymbol{V} \end{array}\right] \right\|\\
			\leqslant & 2C_I\rho^t\sqrt{\frac{\mu^2 r^2\kappa^{10}\log (n_1\vee n_2)}{(n_1\wedge n_2)^2p}}\sigma_1(\boldsymbol{M})\\
			\leqslant & \frac{1}{4}\sigma_r(\boldsymbol{M})\\
			\leqslant & \frac{1}{4}\sigma_r^2(\boldsymbol{X}_0).
		\end{split}
	\]
	Here second inequality we use \eqref{eq_ind3} and third inequality holds because we have
	\[
		p\geqslant 64C_I^2\frac{\mu^2 r^2\kappa^{12}\log (n_1\vee n_2)}{n_1\wedge n_2}.
	\]
	Now by applying Lemma \ref{ma_lemma37} we have
	\begin{equation}\label{eq_007}
		\begin{split}
			 \left\| \left[ \begin{array}{c} \boldsymbol{X}^t \\ \boldsymbol{Y}^t\end{array} \right]\boldsymbol{R}^t -\left[ \begin{array}{c} \boldsymbol{X}^{t,(l)}\\ \boldsymbol{Y}^{t,(l)}\end{array} \right]\boldsymbol{R}^{t,(l)} \right\|_F  \leqslant & 5\kappa \left\| \left[ \begin{array}{c} \boldsymbol{X}^t \\ \boldsymbol{Y}^t\end{array} \right]\boldsymbol{R}^t -\left[ \begin{array}{c} \boldsymbol{X}^{t,(l)}\\ \boldsymbol{Y}^{t,(l)}\end{array} \right]\boldsymbol{T}^{t,(l)} \right\|_F \\
			\leqslant & 10 C_I\rho^t \kappa \sqrt{\frac{\mu^2r^2\kappa^{10}\log (n_1\vee n_2)}{(n_1\wedge n_2)^2 p}}\sqrt{\sigma_1(\boldsymbol{M})}.
		\end{split}
	\end{equation}
	Plugging \eqref{eq_ind4} and \eqref{eq_007} into \eqref{eq_008} we have \eqref{eq_ind5}.

	Finally letting 
	\[ 
	\begin{split}
		C_{S3} =& 3.3856\times 10^{12}+ 6600C_I +32400C_I^2  +C_3+ 333^2C_3^2  + 2.42 \times 10^{10} C_0C_I^2 + C_{S1}
	\end{split}	
	\]		 
	finishes the whole proof of Lemma \ref{lemma_induction}.
 
\section{Discussion}
In this paper we study the convergence of vanilla gradient descent for the purpose of nonconvex matrix completion with spectral initialization. Our result can be viewed as the theoretical justification for the numerical phenomenon identified in \cite{zheng2016convergence} that vanilla gradient descent for the nonconvex optimization \eqref{eq:mc_noncvx} without $l_{2, \infty}$-regularization is effective and efficient to yield the exact low-rank recovery based upon a few observations. On the other hand, our work extends the result in \cite{ma2017implicit} from the completion of positive semidefinite matrices to that of rectangular matrices. Furthermore, we improve the required sampling rates in \cite{ma2017implicit} by combining the leave-one-out technique therein and a series of powerful results in the past literature of matrix completion, such as some key lemmas in \cite{chen2015incoherence, bhojanapalli2014universal, li2016recovery, chen2017memory, zheng2016convergence}.

Our technical contributions can be potentially used in other problems where the leave-one-out techniques are useful or have been applied. For example, we have mentioned that the leave-one-out analysis has been employed in \cite{abbe2017entrywise} in exact spectral clustering in community detection without cleaning or regularization, while the technical contributions in our paper is potentially useful in improving their theoretical results particularly in the case that the number of clusters is allowed to grow with the number of nodes. Moreover, our technique is also potentially useful to sharpen the leave-one-out analysis in \cite{ding2018leave} for matrix completion by Singular Value Projection and therefore improve their sampling rates results. 

\section*{Acknowledgements}
D. Liu would gratefully acknowledge the financial support from China Scholarship Council.

\bibliographystyle{plainnat}
%\biboptions{square,numbers,sort&compress}
\bibliography{cite}

\appendix

\section{Proof of Lemma \ref{lemma_hessian}}
\label{sec:hess_proof}

For the proof, we mainly follow the technical framework introduced by \cite{ma2017implicit} and extend their result to the rectangular case. Within the proof, we employ Lemma 4.4 from \cite{chen2017memory} (Lemma \ref{chen_lemma4.4} in this paper) as well as Lemma 9 from \cite{zheng2016convergence} (Lemma \ref{zheng_lemma9} in this paper) to simplify the proof, and get a weaker assumption \eqref{lemma_hessian1} (in this paper) comparing to equation (63a) in \citet[Lemma 7]{ma2017implicit} by a factor of $\log (n_1\vee n_2)$.

%\XL{Do we have any technical novelty compared to \cite{ma2017implicit}? If so, these technical novelties should be highlighted.}
\begin{proof}
	For the Hessian, we can compute as \cite{ge2016matrix,ge2017no,zhu2017global} did and have
	\[
		\begin{split}
			& \operatorname{vec}\left( \left[ \begin{array}{c}\boldsymbol{D}_{\boldsymbol{X}}\\ \boldsymbol{D}_{\boldsymbol{Y}} \end{array}\right] \right)^\top \nabla^2f(\boldsymbol{X},\boldsymbol{Y})\operatorname{vec}\left( \left[ \begin{array}{c}\boldsymbol{D}_{\boldsymbol{X}}\\ \boldsymbol{D}_{\boldsymbol{Y}} \end{array}\right] \right)  \\
		= & \frac{2}{p}\langle \mathcal{P}_{\Omega}( \boldsymbol{X}\boldsymbol{Y}^\top - \boldsymbol{U}\boldsymbol{V}^\top  ),\mathcal{P}_{\Omega} ( \boldsymbol{D}_{\boldsymbol{X}}\boldsymbol{D}_{\boldsymbol{Y}}^\top  ) \rangle + \frac{1}{p}\left\| \mathcal{P}_{\Omega}(\boldsymbol{D}_{\boldsymbol{X}}\boldsymbol{Y}^\top + \boldsymbol{X}\boldsymbol{D}_{\boldsymbol{Y}}^\top) \right\|_F^2\\
		 &+ \frac{1}{2}\langle \boldsymbol{X}^\top\boldsymbol{X}-\boldsymbol{Y}^\top\boldsymbol{Y},\boldsymbol{D}_{\boldsymbol{X}}^\top \boldsymbol{D}_{\boldsymbol{X}} - \boldsymbol{D}_{\boldsymbol{Y}}^\top\boldsymbol{D}_{\boldsymbol{Y}}\rangle  + \frac{1}{4} \left\| \boldsymbol{D}_{\boldsymbol{X}}^\top\boldsymbol{X}+\boldsymbol{X}^\top\boldsymbol{D}_{\boldsymbol{X}} - \boldsymbol{Y}^\top\boldsymbol{D}_{\boldsymbol{Y}} - \boldsymbol{D}_{\boldsymbol{Y}}^\top\boldsymbol{Y} \right\|_F^2.
		\end{split}
	\]
	
First we consider the population level, i.e., $\mathbb{E}\left[ \operatorname{vec}\left( \left[ \begin{array}{c}\boldsymbol{D}_{\boldsymbol{X}}\\ \boldsymbol{D}_{\boldsymbol{Y}} \end{array}\right] \right)^\top \nabla^2f(\boldsymbol{X},\boldsymbol{Y})\operatorname{vec}\left( \left[ \begin{array}{c}\boldsymbol{D}_{\boldsymbol{X}}\\ \boldsymbol{D}_{\boldsymbol{Y}} \end{array}\right] \right) \right] $. Denoting $\boldsymbol{\Delta}_{\boldsymbol{X}} \coloneqq \boldsymbol{X}-\boldsymbol{U}, \boldsymbol{\Delta}_{\boldsymbol{Y}} \coloneqq \boldsymbol{Y} - \boldsymbol{V}$, and using similar decomposition as in \eqref{eq_a001} and \eqref{eq_a002}, we have
\begin{equation}\label{eq_b005}
	\begin{split}
		& \mathbb{E}\left[ \operatorname{vec}\left( \left[ \begin{array}{c}\boldsymbol{D}_{\boldsymbol{X}}\\ \boldsymbol{D}_{\boldsymbol{Y}} \end{array}\right] \right)^\top \nabla^2f(\boldsymbol{X},\boldsymbol{Y})\operatorname{vec}\left( \left[ \begin{array}{c}\boldsymbol{D}_{\boldsymbol{X}}\\ \boldsymbol{D}_{\boldsymbol{Y}} \end{array}\right] \right) \right] \\
		= & 2\langle \boldsymbol{\Delta}_{\boldsymbol{X}}\boldsymbol{V}^\top + \boldsymbol{U}\boldsymbol{\Delta}_{\boldsymbol{Y}}^\top+ \boldsymbol{\Delta}_{\boldsymbol{X}}\boldsymbol{\Delta}_{\boldsymbol{Y}}^\top, \boldsymbol{D}_{\boldsymbol{X}}\boldsymbol{D}_{\boldsymbol{Y}}^\top \rangle + \left\| \boldsymbol{D}_{\boldsymbol{X}}\boldsymbol{V}^\top + \boldsymbol{D}_{\boldsymbol{X}}\boldsymbol{\Delta}_{\boldsymbol{Y}}^\top + \boldsymbol{U}\boldsymbol{D}_{\boldsymbol{Y}}^\top+\boldsymbol{\Delta}_{\boldsymbol{X}}\boldsymbol{D}_{\boldsymbol{Y}}^\top \right\|_F^2\\
		& + \frac{1}{2} \langle \boldsymbol{U}^\top\boldsymbol{\Delta}_{\boldsymbol{X}} + \boldsymbol{\Delta}_{\boldsymbol{X}}^\top\boldsymbol{U} + \boldsymbol{\Delta}_{\boldsymbol{X}}^\top\boldsymbol{\Delta}_{\boldsymbol{X}} - \boldsymbol{\Delta}_{\boldsymbol{Y}}^\top\boldsymbol{V}   - \boldsymbol{V}^\top\boldsymbol{\Delta}_{\boldsymbol{Y}}-  \boldsymbol{\Delta}_{\boldsymbol{Y}}^\top\boldsymbol{\Delta}_{\boldsymbol{Y}}, \boldsymbol{D}_{\boldsymbol{X}}^\top\boldsymbol{D}_{\boldsymbol{X}}-\boldsymbol{D}_{\boldsymbol{Y}}^\top\boldsymbol{D}_{\boldsymbol{Y}} \rangle\\
		& + \frac{1}{4}\left\| \boldsymbol{D}_{\boldsymbol{X}}^\top\boldsymbol{U}+\boldsymbol{D}_{\boldsymbol{X}}^\top\boldsymbol{\Delta}_{\boldsymbol{X}} + \boldsymbol{U}^\top\boldsymbol{D}_{\boldsymbol{X}}+\boldsymbol{\Delta}_{\boldsymbol{X}}^\top\boldsymbol{D}_{\boldsymbol{X}} - \boldsymbol{V}^\top\boldsymbol{D}_{\boldsymbol{Y}}-\boldsymbol{\Delta}_{\boldsymbol{Y}}^\top\boldsymbol{D}_{\boldsymbol{Y}}-\boldsymbol{D}_{\boldsymbol{Y}}^\top\boldsymbol{V}-\boldsymbol{D}_{\boldsymbol{Y}}^\top\boldsymbol{\Delta}_{\boldsymbol{Y}} \right\|_F^2\\
		=& \left\| \boldsymbol{D}_{\boldsymbol{X}}\boldsymbol{V}^\top+\boldsymbol{U}\boldsymbol{D}_{\boldsymbol{Y}}^\top \right\|_F^2 + \frac{1}{4}\left\| \boldsymbol{D}_{\boldsymbol{X}}^\top\boldsymbol{U}+ \boldsymbol{U}^\top\boldsymbol{D}_{\boldsymbol{X}}- \boldsymbol{V}^\top\boldsymbol{D}_{\boldsymbol{Y}}-\boldsymbol{D}_{\boldsymbol{Y}}^\top\boldsymbol{V}\right\|_F^2 + \mathcal{E}_1.
	\end{split}
\end{equation}
Here we use the fact that $\boldsymbol{U}^\top\boldsymbol{U} = \boldsymbol{V}^\top\boldsymbol{V}$, and $\mathcal{E}_1$ contains terms with $\boldsymbol{\Delta}_{\boldsymbol{X}}$'s and $\boldsymbol{\Delta}_{\boldsymbol{Y}}$'s, i.e., 
\[
	\begin{split}
		&\mathcal{E}_1\\
		 =& 2\langle \boldsymbol{\Delta}_{\boldsymbol{X}}\boldsymbol{V}^\top + \boldsymbol{U}\boldsymbol{\Delta}_{\boldsymbol{Y}}^\top+ \boldsymbol{\Delta}_{\boldsymbol{X}}\boldsymbol{\Delta}_{\boldsymbol{Y}}^\top, \boldsymbol{D}_{\boldsymbol{X}}\boldsymbol{D}_{\boldsymbol{Y}}^\top \rangle  +2 \langle  \boldsymbol{\Delta}_{\boldsymbol{X}}\boldsymbol{D}_{\boldsymbol{Y}}^\top + \boldsymbol{D}_{\boldsymbol{X}}\boldsymbol{\Delta}_{\boldsymbol{Y}}^\top ,\boldsymbol{D}_{\boldsymbol{X}}\boldsymbol{V}^\top + \boldsymbol{U}\boldsymbol{D}_{\boldsymbol{Y}}^\top \rangle  + \left\|   \boldsymbol{D}_{\boldsymbol{X}}\boldsymbol{\Delta}_{\boldsymbol{Y}}^\top  +\boldsymbol{\Delta}_{\boldsymbol{X}}\boldsymbol{D}_{\boldsymbol{Y}}^\top \right\|_F^2\\
		& + \frac{1}{2} \langle \boldsymbol{U}^\top\boldsymbol{\Delta}_{\boldsymbol{X}} + \boldsymbol{\Delta}_{\boldsymbol{X}}^\top\boldsymbol{U} + \boldsymbol{\Delta}_{\boldsymbol{X}}^\top\boldsymbol{\Delta}_{\boldsymbol{X}} - \boldsymbol{\Delta}_{\boldsymbol{Y}}^\top\boldsymbol{V}  - \boldsymbol{V}^\top\boldsymbol{\Delta}_{\boldsymbol{Y}}-  \boldsymbol{\Delta}_{\boldsymbol{Y}}^\top\boldsymbol{\Delta}_{\boldsymbol{Y}}, \boldsymbol{D}_{\boldsymbol{X}}^\top\boldsymbol{D}_{\boldsymbol{X}}-\boldsymbol{D}_{\boldsymbol{Y}}^\top\boldsymbol{D}_{\boldsymbol{Y}} \rangle\\
		& + \frac{1}{2} \langle \boldsymbol{D}_{\boldsymbol{X}}^\top\boldsymbol{\Delta}_{\boldsymbol{X}} +\boldsymbol{\Delta}_{\boldsymbol{X}}^\top\boldsymbol{D}_{\boldsymbol{X}} -\boldsymbol{\Delta}_{\boldsymbol{Y}}^\top\boldsymbol{D}_{\boldsymbol{Y}}-\boldsymbol{D}_{\boldsymbol{Y}}^\top\boldsymbol{\Delta}_{\boldsymbol{Y}} ,\boldsymbol{D}_{\boldsymbol{X}}^\top\boldsymbol{U}+ \boldsymbol{U}^\top\boldsymbol{D}_{\boldsymbol{X}}- \boldsymbol{V}^\top\boldsymbol{D}_{\boldsymbol{Y}} -\boldsymbol{D}_{\boldsymbol{Y}}^\top\boldsymbol{V}\rangle \\
		&   + \frac{1}{4}\left\|  \boldsymbol{D}_{\boldsymbol{X}}^\top\boldsymbol{\Delta}_{\boldsymbol{X}}  +\boldsymbol{\Delta}_{\boldsymbol{X}}^\top\boldsymbol{D}_{\boldsymbol{X}}  -\boldsymbol{\Delta}_{\boldsymbol{Y}}^\top\boldsymbol{D}_{\boldsymbol{Y}} -\boldsymbol{D}_{\boldsymbol{Y}}^\top\boldsymbol{\Delta}_{\boldsymbol{Y}} \right\|_F^2.
	\end{split}	
\]

Multiplying terms through we have
\[
	\begin{split}
		& \mathbb{E}\left[ \operatorname{vec}\left( \left[ \begin{array}{c}\boldsymbol{D}_{\boldsymbol{X}}\\ \boldsymbol{D}_{\boldsymbol{Y}} \end{array}\right] \right)^\top \nabla^2f(\boldsymbol{X},\boldsymbol{Y})\operatorname{vec}\left( \left[ \begin{array}{c}\boldsymbol{D}_{\boldsymbol{X}}\\ \boldsymbol{D}_{\boldsymbol{Y}} \end{array}\right] \right) \right] \\
		=& \left\|\boldsymbol{D}_{\boldsymbol{X}}\boldsymbol{V}^\top \right\|_F^2 + \left\| \boldsymbol{U}\boldsymbol{D}_{\boldsymbol{Y}}^\top \right\|_F^2 +\frac{1}{2} \left\| \boldsymbol{D}_{\boldsymbol{X}}^\top\boldsymbol{U} \right\|_F^2  +\frac{1}{2}\left\| \boldsymbol{V}^\top\boldsymbol{D}_{\boldsymbol{Y}} \right\|_F^2  - \langle \boldsymbol{D}_{\boldsymbol{X}}^\top\boldsymbol{U},\boldsymbol{D}_{\boldsymbol{Y}}^\top\boldsymbol{V} \rangle  +\frac{1}{2}\langle \boldsymbol{D}_{\boldsymbol{X}}^\top\boldsymbol{U},\boldsymbol{U}^\top\boldsymbol{D}_{\boldsymbol{X}} \rangle \\
		&+ \frac{1}{2}\langle \boldsymbol{D}_{\boldsymbol{Y}}^\top\boldsymbol{V},\boldsymbol{V}^\top\boldsymbol{D}_{\boldsymbol{Y}} \rangle + \langle  \boldsymbol{D}_{\boldsymbol{X}}^\top\boldsymbol{U}, \boldsymbol{V}^\top\boldsymbol{D}_{\boldsymbol{Y}} \rangle + \mathcal{E}_1\\
		=&  \left\|\boldsymbol{D}_{\boldsymbol{X}}\boldsymbol{V}^\top \right\|_F^2 + \left\| \boldsymbol{U}\boldsymbol{D}_{\boldsymbol{Y}}^\top \right\|_F^2 + \frac{1}{2}\left\| \boldsymbol{D}_{\boldsymbol{X}}^\top\boldsymbol{U}  -  \boldsymbol{D}_{\boldsymbol{Y}}^\top\boldsymbol{V}   \right\|_F^2  +\frac{1}{2} \langle \boldsymbol{U}^\top\boldsymbol{D}_{\boldsymbol{X}}+\boldsymbol{V}^\top\boldsymbol{D}_{\boldsymbol{Y}}, \boldsymbol{D}_{\boldsymbol{X}}^\top\boldsymbol{U}+\boldsymbol{D}_{\boldsymbol{Y}}^\top\boldsymbol{V}\rangle+\mathcal{E}_1.
	\end{split}
\]	
Now for the fourth term, we split $\boldsymbol{U}$ as $\boldsymbol{U}-\boldsymbol{X}_2+\boldsymbol{X}_2$, $\boldsymbol{V}$ as $\boldsymbol{V}-\boldsymbol{Y}_2+\boldsymbol{Y}_2$, and plug it back. Then we have
\[
	\begin{split}
		& \mathbb{E}\left[ \operatorname{vec}\left( \left[ \begin{array}{c}\boldsymbol{D}_{\boldsymbol{X}}\\ \boldsymbol{D}_{\boldsymbol{Y}} \end{array}\right] \right)^\top \nabla^2f(\boldsymbol{X},\boldsymbol{Y})\operatorname{vec}\left( \left[ \begin{array}{c}\boldsymbol{D}_{\boldsymbol{X}}\\ \boldsymbol{D}_{\boldsymbol{Y}} \end{array}\right] \right) \right] \\
		=&  \left\|\boldsymbol{D}_{\boldsymbol{X}}\boldsymbol{V}^\top \right\|_F^2 + \left\| \boldsymbol{U}\boldsymbol{D}_{\boldsymbol{Y}}^\top \right\|_F^2 + \frac{1}{2}\left\| \boldsymbol{D}_{\boldsymbol{X}}^\top\boldsymbol{U}  -  \boldsymbol{D}_{\boldsymbol{Y}}^\top\boldsymbol{V}   \right\|_F^2  + \frac{1}{2}\langle \boldsymbol{X}_2^\top\boldsymbol{D}_{\boldsymbol{X}}+\boldsymbol{Y}_2^\top\boldsymbol{D}_{\boldsymbol{Y}}, \boldsymbol{D}_{\boldsymbol{X}}^\top\boldsymbol{X}_2+\boldsymbol{D}_{\boldsymbol{Y}}^\top\boldsymbol{Y}_2 \rangle+\mathcal{E}_1+\mathcal{E}_2,
	\end{split}
\] 	
where $\mathcal{E}_2$ contains terms with $\boldsymbol{U}-\boldsymbol{X}_2$'s and $\boldsymbol{V}-\boldsymbol{Y}_2$'s, i.e., 
\[
	\begin{split}
		 \mathcal{E}_2  =& \frac{1}{2}\langle (\boldsymbol{U}-\boldsymbol{X}_2)^\top\boldsymbol{D}_{\boldsymbol{X}}+(\boldsymbol{V}-\boldsymbol{Y}_2)^\top\boldsymbol{D}_{\boldsymbol{Y}}, \boldsymbol{D}_{\boldsymbol{X}}^\top\boldsymbol{X}_2+\boldsymbol{D}_{\boldsymbol{Y}}^\top\boldsymbol{Y}_2\rangle\\
		 & + \frac{1}{2} \langle \boldsymbol{X}_2^\top\boldsymbol{D}_{\boldsymbol{X}}+\boldsymbol{Y}_2^\top\boldsymbol{D}_{\boldsymbol{Y}}, \boldsymbol{D}_{\boldsymbol{X}}^\top (\boldsymbol{U}-\boldsymbol{X}_2)+ \boldsymbol{D}_{\boldsymbol{Y}}^\top(\boldsymbol{V}-\boldsymbol{Y}_2) \rangle\\
		&+ \frac{1}{2} \langle(\boldsymbol{U}-\boldsymbol{X}_2)^\top\boldsymbol{D}_{\boldsymbol{X}}+(\boldsymbol{V}-\boldsymbol{Y}_2)^\top\boldsymbol{D}_{\boldsymbol{Y}} , \boldsymbol{D}_{\boldsymbol{X}}^\top (\boldsymbol{U}-\boldsymbol{X}_2)+ \boldsymbol{D}_{\boldsymbol{Y}}^\top(\boldsymbol{V}-\boldsymbol{Y}_2) \rangle.
	\end{split}	
\]

By the way we define $\widehat{\boldsymbol{R}}$ in \eqref{lemma_hessian2}, $\left[ \begin{array}{c} \boldsymbol{X}_2\\ \boldsymbol{Y}_2 \end{array} \right]^\top \left[ \begin{array}{c} \boldsymbol{D}_{\boldsymbol{X}}\\ \boldsymbol{D}_{\boldsymbol{Y}} \end{array}\right]$ is symmetric. Using this fact we have
\begin{equation}\label{eq_004}
	\begin{split}
		& \mathbb{E}\left[ \operatorname{vec}\left( \left[ \begin{array}{c}\boldsymbol{D}_{\boldsymbol{X}}\\ \boldsymbol{D}_{\boldsymbol{Y}} \end{array}\right] \right)^\top \nabla^2f(\boldsymbol{X},\boldsymbol{Y})\operatorname{vec}\left( \left[ \begin{array}{c}\boldsymbol{D}_{\boldsymbol{X}}\\ \boldsymbol{D}_{\boldsymbol{Y}} \end{array}\right] \right) \right] \\
		=& \left\|\boldsymbol{D}_{\boldsymbol{X}}\boldsymbol{V}^\top \right\|_F^2 + \left\| \boldsymbol{U}\boldsymbol{D}_{\boldsymbol{Y}}^\top \right\|_F^2 + \frac{1}{2}\left\| \boldsymbol{D}_{\boldsymbol{X}}^\top\boldsymbol{U}  -  \boldsymbol{D}_{\boldsymbol{Y}}^\top\boldsymbol{V}   \right\|_F^2  + \frac{1}{2}\left\| \boldsymbol{X}_2^\top\boldsymbol{D}_{\boldsymbol{X}}+\boldsymbol{Y}_2^\top\boldsymbol{D}_{\boldsymbol{Y}} \right\|_F^2 +\mathcal{E}_1+\mathcal{E}_2.
	\end{split}
\end{equation}
	
For $\mathcal{E}_1+\mathcal{E}_2$, by the way we define them, we have the following bound:
\[
	\begin{split}
		& |\mathcal{E}_1+\mathcal{E}_2|\\
		\leqslant & 9 [ (\|\boldsymbol{U}-\boldsymbol{X}_2\|+\|\boldsymbol{V}-\boldsymbol{Y}_2\|)(\|\boldsymbol{X}_2\|+\|\boldsymbol{Y}_2\|)  + (\|\boldsymbol{U}-\boldsymbol{X}_2\|+\|\boldsymbol{V}-\boldsymbol{Y}_2\|)^2](\|\boldsymbol{D}_{\boldsymbol{X}}\|_F^2 + \|\boldsymbol{D}_{\boldsymbol{Y}}\|_F^2)\\
		& + 9 [(\|\boldsymbol{\Delta}_{\boldsymbol{X}}\| + \|\boldsymbol{\Delta}_{\boldsymbol{Y}}\|)(\|\boldsymbol{U}\|+\|\boldsymbol{V}\|) + (\|\boldsymbol{\Delta}_{\boldsymbol{X}}\| + \|\boldsymbol{\Delta}_{\boldsymbol{Y}}\|)^2  ](\|\boldsymbol{D}_{\boldsymbol{X}}\|_F^2 + \|\boldsymbol{D}_{\boldsymbol{Y}}\|_F^2).
	\end{split}
\]	
From the assumption,
\[
\left\| \left[ \begin{array}{c} \boldsymbol{X}_2-\boldsymbol{U}\\ \boldsymbol{Y}_2-\boldsymbol{V} \end{array}\right] \right\| \leqslant  \frac{1}{500 \kappa}\sqrt{\sigma_1(\boldsymbol{M})},
\]
\[
\left\| \left[ \begin{array}{c}
		\boldsymbol{X}-\boldsymbol{U}\\
		\boldsymbol{Y}-\boldsymbol{V}
		 \end{array}\right]\right\|_{2,\infty} \leqslant  \frac{1}{500\kappa\sqrt{n_1+n_2}}\sqrt{\sigma_1(\boldsymbol{M})},
\]		 
and	 
\[
\begin{split}
\left\| \left[ \begin{array}{c}
		\boldsymbol{X}-\boldsymbol{U}\\
		\boldsymbol{Y}-\boldsymbol{V}
		 \end{array}\right]\right\|\leqslant&  \left\| \left[ \begin{array}{c}
		\boldsymbol{X}-\boldsymbol{U}\\
		\boldsymbol{Y}-\boldsymbol{V}
		 \end{array}\right]\right\|_F \\
		 \leqslant & \sqrt{n_1+n_2} \left\| \left[ \begin{array}{c}
		\boldsymbol{X}-\boldsymbol{U}\\
		\boldsymbol{Y}-\boldsymbol{V}
		 \end{array}\right]\right\|_{2,\infty} \\
		 \leqslant & \frac{1}{500\kappa}\sqrt{\sigma_1(\boldsymbol{M})},
		\end{split}
\]		  
therefore we have
		 \begin{equation}\label{eq_005}
		 	|\mathcal{E}_1+\mathcal{E}_2|\leqslant \frac{1}{5} \sigma_r(\boldsymbol{M})\left\| \left[ \begin{array}{c} \boldsymbol{D}_{\boldsymbol{X}}\\ \boldsymbol{D}_{\boldsymbol{Y}}\end{array}\right] \right\|_F^2.
		 \end{equation}
	
	Now we start to consider the difference between population level and empirical level, comparing with \eqref{eq_b005}:
	\[
		\begin{split}
			  &\operatorname{vec}\left( \left[ \begin{array}{c}\boldsymbol{D}_{\boldsymbol{X}}\\ \boldsymbol{D}_{\boldsymbol{Y}} \end{array}\right] \right)^\top \nabla^2f(\boldsymbol{X},\boldsymbol{Y})\operatorname{vec}\left( \left[ \begin{array}{c}\boldsymbol{D}_{\boldsymbol{X}}\\ \boldsymbol{D}_{\boldsymbol{Y}} \end{array}\right] \right) -\mathbb{E}\left[ \operatorname{vec}\left( \left[ \begin{array}{c}\boldsymbol{D}_{\boldsymbol{X}}\\ \boldsymbol{D}_{\boldsymbol{Y}} \end{array}\right] \right)^\top \nabla^2f(\boldsymbol{X},\boldsymbol{Y})\operatorname{vec}\left( \left[ \begin{array}{c}\boldsymbol{D}_{\boldsymbol{X}}\\ \boldsymbol{D}_{\boldsymbol{Y}} \end{array}\right] \right) \right]\\
		=& \circled{1}+\circled{2}+\circled{3}+\circled{4},\\
		\end{split} 
	\]
	where $D(\cdot,\cdot)$ denotes the difference between population level and empirical level, i.e.,
	\begin{equation}\label{eq_062}
	\begin{split}
		  D(\boldsymbol{A}\boldsymbol{C}^\top,\boldsymbol{B}\boldsymbol{D}^\top) \coloneqq & \frac{1}{p}\langle \mathcal{P}_{\Omega}(\boldsymbol{A}\boldsymbol{C}^\top),\mathcal{P}_{\Omega}(\boldsymbol{B}\boldsymbol{D}^\top) \rangle - \langle \boldsymbol{A}\boldsymbol{C}^\top , \boldsymbol{B}\boldsymbol{D}^\top \rangle.
	\end{split}
	\end{equation}
	And
	\[
	\begin{split}
		\circled{1} \coloneqq& 2D(\boldsymbol{\Delta}_{\boldsymbol{X}}\boldsymbol{V}^\top, \boldsymbol{D}_{\boldsymbol{X}}\boldsymbol{D}_{\boldsymbol{Y}}^\top)+ 2D(\boldsymbol{U}\boldsymbol{\Delta}_{\boldsymbol{Y}}^\top,\boldsymbol{D}_{\boldsymbol{X}}\boldsymbol{D}_{\boldsymbol{Y}}^\top)  + 2D(\boldsymbol{\Delta}_{\boldsymbol{X}}\boldsymbol{\Delta}_{\boldsymbol{Y}}^\top,\boldsymbol{D}_{\boldsymbol{X}}\boldsymbol{D}_{\boldsymbol{Y}}^\top)+2D(\boldsymbol{D}_{\boldsymbol{X}}\boldsymbol{V}^\top,\boldsymbol{\Delta}_{\boldsymbol{X}}\boldsymbol{D}_{\boldsymbol{Y}}^\top)\\
		& +2D(\boldsymbol{D}_{\boldsymbol{X}}\boldsymbol{\Delta}_{\boldsymbol{Y}}^\top, \boldsymbol{U}\boldsymbol{D}_{\boldsymbol{Y}}^\top) + 2D(\boldsymbol{D}_{\boldsymbol{X}}\boldsymbol{\Delta}_{\boldsymbol{Y}}^\top,\boldsymbol{\Delta}_{\boldsymbol{X}}\boldsymbol{D}_{\boldsymbol{Y}}^\top),
	\end{split}
	\]
	\[
	\begin{split}
		\circled{2} \coloneqq& D(\boldsymbol{D}_{\boldsymbol{X}}\boldsymbol{V}^\top,\boldsymbol{D}_{\boldsymbol{X}}\boldsymbol{V}^\top) +D(\boldsymbol{U}\boldsymbol{D}_{\boldsymbol{Y}}^\top, \boldsymbol{U}\boldsymbol{D}_{\boldsymbol{Y}}^\top) + 2D(\boldsymbol{D}_{\boldsymbol{X}}\boldsymbol{V}^\top,\boldsymbol{U}\boldsymbol{D}_{\boldsymbol{Y}}^\top) ,
	\end{split}
	\]
	\[
		\circled{3} \coloneqq D(\boldsymbol{D}_{\boldsymbol{X}}\boldsymbol{\Delta}_{\boldsymbol{Y}}^\top, \boldsymbol{D}_{\boldsymbol{X}}\boldsymbol{\Delta}_{\boldsymbol{Y}}^\top) + D(\boldsymbol{\Delta}_{\boldsymbol{X}}\boldsymbol{D}_{\boldsymbol{Y}}^\top, \boldsymbol{\Delta}_{\boldsymbol{X}}\boldsymbol{D}_{\boldsymbol{Y}}^\top),
	\]
	\[
		\circled{4} \coloneqq 2D(\boldsymbol{D}_{\boldsymbol{X}}\boldsymbol{V}^\top,\boldsymbol{D}_{\boldsymbol{X}}\boldsymbol{\Delta}_{\boldsymbol{Y}}^\top)+2D(\boldsymbol{U}\boldsymbol{D}_{\boldsymbol{Y}}^\top, \boldsymbol{\Delta}_{\boldsymbol{X}}\boldsymbol{D}_{\boldsymbol{Y}}^\top).
	\]
	Now for terms with different circled numbers, we deal with them with different bounds. First, for $\circled{1}$, we apply the following lemma:
	\begin{lemma}[{\citealt[Lemma 4.4]{chen2017memory}}]\label{chen_lemma4.4}
		Let $D(\cdot,\cdot)$ defined as in \eqref{eq_062}, for all $\boldsymbol{A}\in\mathbb{R}^{n_1\times r},\boldsymbol{B}\in\mathbb{R}^{n_1\times r},\boldsymbol{C}\in\mathbb{R}^{n_2\times r},\boldsymbol{D}\in\mathbb{R}^{n_2\times r}$, we have
		\[
			\begin{split}
				& |D(\boldsymbol{A}\boldsymbol{C}^\top,\boldsymbol{B}\boldsymbol{D}^\top)|\\
				\leqslant& p^{-1}\|\boldsymbol{\Omega}-p\boldsymbol{J}\| \sqrt{\sum_{k=1}^{n_1}\|\boldsymbol{A}_{k,\cdot}\|_2^2\|\boldsymbol{B}_{k,\cdot}\|_2^2} \sqrt{\sum_{k=1}^{n_2}\|\boldsymbol{C}_{k,\cdot}\|_2^2\|\boldsymbol{D}_{k,\cdot}\|_2^2}\\
				\leqslant&  p^{-1}\|\boldsymbol{\Omega}-p\boldsymbol{J}\| \min(\|\boldsymbol{A}\|_{2,\infty}\|\boldsymbol{B}\|_F, \|\boldsymbol{A}\|_F\|\boldsymbol{B}\|_{2,\infty}) \times \min(\|\boldsymbol{C}\|_{2,\infty}\|\boldsymbol{D}\|_F, \|\boldsymbol{C}\|_F\|\boldsymbol{D}\|_{2,\infty})
			\end{split} 
		\]
	\end{lemma}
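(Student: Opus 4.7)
The plan is to expand the inner product into a bilinear form against $\boldsymbol{\Omega}-p\boldsymbol{J}$, then apply operator norm and Cauchy--Schwarz. First I would write
\[
D(\boldsymbol{A}\boldsymbol{C}^\top,\boldsymbol{B}\boldsymbol{D}^\top) = \frac{1}{p}\sum_{i,j}(\Omega_{ij}-p)(\boldsymbol{A}\boldsymbol{C}^\top)_{ij}(\boldsymbol{B}\boldsymbol{D}^\top)_{ij},
\]
and expand $(\boldsymbol{A}\boldsymbol{C}^\top)_{ij}(\boldsymbol{B}\boldsymbol{D}^\top)_{ij}=\sum_{k,l}A_{ik}B_{il}C_{jk}D_{jl}$. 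For each pair $(k,l)\in[r]\times[r]$, define
\[
\boldsymbol{u}_{kl}\in\mathbb{R}^{n_1},\;(\boldsymbol{u}_{kl})_i = A_{ik}B_{il},\qquad \boldsymbol{v}_{kl}\in\mathbb{R}^{n_2},\;(\boldsymbol{v}_{kl})_j = C_{jk}D_{jl}.
\]
After swapping the order of summation, the quantity reduces to the clean expression
\[
D(\boldsymbol{A}\boldsymbol{C}^\top,\boldsymbol{B}\boldsymbol{D}^\top) = \frac{1}{p}\sum_{k,l=1}^{r}\boldsymbol{u}_{kl}^\top(\boldsymbol{\Omega}-p\boldsymbol{J})\boldsymbol{v}_{kl}.
\]

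Next I would bound each term using the operator norm, $|\boldsymbol{u}_{kl}^\top(\boldsymbol{\Omega}-p\boldsymbol{J})\boldsymbol{v}_{kl}|\leqslant \|\boldsymbol{\Omega}-p\boldsymbol{J}\|\,\|\boldsymbol{u}_{kl}\|_2\|\boldsymbol{v}_{kl}\|_2$, and then apply Cauchy--Schwarz over the finite index set $(k,l)$:
\[
\sum_{k,l}\|\boldsymbol{u}_{kl}\|_2\|\boldsymbol{v}_{kl}\|_2 \leqslant \sqrt{\sum_{k,l}\|\boldsymbol{u}_{kl}\|_2^2}\;\sqrt{\sum_{k,l}\|\boldsymbol{v}_{kl}\|_2^2}.
\]
The key identity to close the argument is the row-wise rearrangement
\[
\sum_{k,l}\|\boldsymbol{u}_{kl}\|_2^2 = \sum_{i=1}^{n_1}\sum_{k,l}A_{ik}^2B_{il}^2 = \sum_{i=1}^{n_1}\|\boldsymbol{A}_{i,\cdot}\|_2^2\|\boldsymbol{B}_{i,\cdot}\|_2^2,
\]
and the analogous equality for $\boldsymbol{v}_{kl}$, $\boldsymbol{C}$, $\boldsymbol{D}$. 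Combining these gives the first inequality.

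For the second, weaker bound, I would simply estimate
\[
\sum_{i}\|\boldsymbol{A}_{i,\cdot}\|_2^2\|\boldsymbol{B}_{i,\cdot}\|_2^2 \leqslant \|\boldsymbol{A}\|_{2,\infty}^2\sum_i\|\boldsymbol{B}_{i,\cdot}\|_2^2 = \|\boldsymbol{A}\|_{2,\infty}^2\|\boldsymbol{B}\|_F^2,
\]
and symmetrically with the roles of $\boldsymbol{A},\boldsymbol{B}$ swapped, yielding $\min(\|\boldsymbol{A}\|_{2,\infty}\|\boldsymbol{B}\|_F,\|\boldsymbol{A}\|_F\|\boldsymbol{B}\|_{2,\infty})$; the same estimate applies to the $\boldsymbol{C},\boldsymbol{D}$ factor. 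There is no real obstacle here beyond carefully tracking indices in the bilinear rewriting; the only subtle step is recognizing that after the expansion the difference $\Omega_{ij}-p$ decouples from $k,l$ so that $\boldsymbol{\Omega}-p\boldsymbol{J}$ appears as a sandwiched matrix that can be handled by a single operator norm bound, with the $r^2$-fold summation absorbed by Cauchy--Schwarz rather than by a cruder factor of $r^2$.
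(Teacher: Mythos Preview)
Your proof is correct. The paper does not prove this lemma itself; it is quoted verbatim from \cite{chen2017memory} and used as a black box in the Hessian analysis. Your argument --- rewriting $D(\boldsymbol{A}\boldsymbol{C}^\top,\boldsymbol{B}\boldsymbol{D}^\top)$ as $p^{-1}\sum_{k,l}\boldsymbol{u}_{kl}^\top(\boldsymbol{\Omega}-p\boldsymbol{J})\boldsymbol{v}_{kl}$, bounding each bilinear form by the operator norm, and then applying Cauchy--Schwarz over the $r^2$ index pairs --- is precisely the standard route to this estimate (and is in the same spirit as the proof of the closely related Lemma~\ref{lemma_spectral_gap}). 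There is nothing to correct; every step is clean, and the final $\ell_{2,\infty}$/Frobenius relaxation is immediate as you indicate.
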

	Therefore, 
	\[
		\begin{split}
			  |\circled{1}| \leqslant & \frac{2\|\boldsymbol{\Omega}-p\boldsymbol{J}\|}{p} [  \|\boldsymbol{\Delta}_{\boldsymbol{X}}\|_{2,\infty}\|\boldsymbol{V}\|_{2,\infty}\|\boldsymbol{D}_{\boldsymbol{X}}\|_F\|\boldsymbol{D}_{\boldsymbol{Y}}\|_F  +  \|\boldsymbol{\Delta}_{\boldsymbol{Y}}\|_{2,\infty}\|\boldsymbol{U}\|_{2,\infty}\|\boldsymbol{D}_{\boldsymbol{X}}\|_F\|\boldsymbol{D}_{\boldsymbol{Y}}\|_F ]\\
			& + \frac{2\|\boldsymbol{\Omega}-p\boldsymbol{J}\|}{p} [\|\boldsymbol{\Delta}_{\boldsymbol{X}}\|_{2,\infty}\|\boldsymbol{\Delta}_{\boldsymbol{Y}}\|_{2,\infty}\|\boldsymbol{D}_{\boldsymbol{X}}\|_F\|\boldsymbol{D}_{\boldsymbol{Y}}\|_F   +\|\boldsymbol{\Delta}_{\boldsymbol{X}}\|_{2,\infty}\|\boldsymbol{V}\|_{2,\infty}\|\boldsymbol{D}_{\boldsymbol{X}}\|_F\|\boldsymbol{D}_{\boldsymbol{Y}}\|_F]  \\
			&  +  \frac{2\|\boldsymbol{\Omega}-p\boldsymbol{J}\|}{p} [\|\boldsymbol{\Delta}_{\boldsymbol{Y}}\|_{2,\infty}\|\boldsymbol{U}\|_{2,\infty}\|\boldsymbol{D}_{\boldsymbol{X}}\|_F\|\boldsymbol{D}_{\boldsymbol{Y}}\|_F + \|\boldsymbol{\Delta}_{\boldsymbol{X}}\|_{2,\infty}\|\boldsymbol{\Delta}_{\boldsymbol{Y}}\|_{2,\infty}\|\boldsymbol{D}_{\boldsymbol{X}}\|_F\|\boldsymbol{D}_{\boldsymbol{Y}}\|_F ].
		\end{split}
	\]
	Using Lemma \ref{vu_lemma2.2} and using the fact that 
	\[
		 \left\| \left[ \begin{array}{c}
		\boldsymbol{X}-\boldsymbol{U}\\
		\boldsymbol{Y}-\boldsymbol{V}
		 \end{array}\right]\right\|_{2,\infty} \leqslant  \frac{1}{500 \kappa\sqrt{n_1+n_2}}\sqrt{\sigma_1(\boldsymbol{M})},
	\]
	if 
	\[
		 p \geqslant C_3 \frac{\mu r \log (n_1\vee n_2)}{n_1\wedge n_2},
	\]
	we have 
	\begin{equation}\label{eq_c001}
		|\circled{1}| \leqslant  12C_3 \sqrt{\frac{\mu r \kappa}{p}} \frac{1}{\kappa\sqrt{n_1+n_2}}\sigma_1(\boldsymbol{M}) (\|\boldsymbol{D}_{\boldsymbol{X}}\|_F^2 + \|\boldsymbol{D}_{\boldsymbol{Y}}\|_F^2).
	\end{equation}

	For $\circled{2}$, we apply the following lemma:
	\begin{lemma}[{\citealt[Theorem 4.1]{candes2009exact}}]\label{candes_thm4.1}
		Define subspace 
		\[
		\begin{split}
			\mathcal{T} \coloneqq \{\boldsymbol{M}\in\mathbb{R}^{n_1\times n_2}\mid &\boldsymbol{M} = \boldsymbol{A}\boldsymbol{V}^\top +\boldsymbol{U}\boldsymbol{B}^\top\;\textrm{for any} \;\boldsymbol{A}\in\mathbb{R}^{n_1\times r},\boldsymbol{B}\in\mathbb{R}^{n_2\times r}\}.
		\end{split}
		\]
		There is an absolute constant $C_1$, such that if $p\geqslant C_1\frac{\mu r\log (n_1\vee n_2)}{ (n_1\wedge n_2)}$, on an event $E_{Ca}$ with probability $\mathbb{P}[E_{Ca}]\geqslant 1-(n_1+n_2)^{-11}$, 
		\[
			p^{-1}\|\mathcal{P}_{\mathcal{T}}\mathcal{P}_{\Omega}\mathcal{P}_{\mathcal{T}} - p\mathcal{P}_{\mathcal{T}}\|\leqslant 0.1
		\]
		holds.
	\end{lemma}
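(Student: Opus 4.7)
The plan is to reduce the bound to a matrix Bernstein concentration inequality for a sum of independent self-adjoint operators on $\mathbb{R}^{n_1\times n_2}$ indexed by the entries $(i,j)$. First I would rewrite
\[
p^{-1}\mathcal{P}_{\mathcal{T}}\mathcal{P}_{\Omega}\mathcal{P}_{\mathcal{T}} - \mathcal{P}_{\mathcal{T}} \;=\; p^{-1}\sum_{(i,j)\in [n_1]\times [n_2]} (\delta_{i,j}-p)\, S_{i,j},
\]
where $\delta_{i,j}$ is the Bernoulli indicator of $(i,j)\in\Omega$ and $S_{i,j}$ is the rank-one self-adjoint operator $S_{i,j}(\boldsymbol{M}) \coloneqq \langle \mathcal{P}_{\mathcal{T}}(\boldsymbol{e}_i\boldsymbol{e}_j^\top), \boldsymbol{M}\rangle\, \mathcal{P}_{\mathcal{T}}(\boldsymbol{e}_i\boldsymbol{e}_j^\top)$. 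The identity $\sum_{i,j} S_{i,j} = \mathcal{P}_{\mathcal{T}}$ follows from $\sum_{i,j}\boldsymbol{e}_i\boldsymbol{e}_j^\top\langle \boldsymbol{e}_i\boldsymbol{e}_j^\top,\cdot\rangle = \mathcal{I}$ and the idempotence of $\mathcal{P}_{\mathcal{T}}$, which gives the correct centering.

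Next I would use the incoherence assumption to bound each summand. From the definition of $\mathcal{T}$ one has the standard identity
\[
\|\mathcal{P}_{\mathcal{T}}(\boldsymbol{e}_i\boldsymbol{e}_j^\top)\|_F^2 \;=\; \|\mathcal{P}_{\colspan(\widetilde{\boldsymbol{U}})}\boldsymbol{e}_i\|_2^2 + \|\mathcal{P}_{\colspan(\widetilde{\boldsymbol{V}})}\boldsymbol{e}_j\|_2^2 - \|\mathcal{P}_{\colspan(\widetilde{\boldsymbol{U}})}\boldsymbol{e}_i\|_2^2\|\mathcal{P}_{\colspan(\widetilde{\boldsymbol{V}})}\boldsymbol{e}_j\|_2^2,
\]
which the incoherence parameter $\mu$ bounds by $2\mu r/(n_1\wedge n_2)$. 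Since $S_{i,j}$ is rank-one with $\|S_{i,j}\| = \|\mathcal{P}_{\mathcal{T}}(\boldsymbol{e}_i\boldsymbol{e}_j^\top)\|_F^2$, the per-summand bound is $L \coloneqq p^{-1}\cdot 2\mu r/(n_1\wedge n_2)$. For the variance proxy I would use $S_{i,j}^2 = \|\mathcal{P}_{\mathcal{T}}(\boldsymbol{e}_i\boldsymbol{e}_j^\top)\|_F^2\, S_{i,j}$, so
\[
\Big\|\sum_{i,j}\mathbb{E}\bigl[(p^{-1}(\delta_{i,j}-p)S_{i,j})^2\bigr]\Big\| \;\leqslant\; \frac{1-p}{p}\Big\|\sum_{i,j}\|\mathcal{P}_{\mathcal{T}}(\boldsymbol{e}_i\boldsymbol{e}_j^\top)\|_F^2 S_{i,j}\Big\| \;\leqslant\; \frac{2\mu r}{p(n_1\wedge n_2)},
\]
using $\sum_{i,j}S_{i,j} = \mathcal{P}_{\mathcal{T}}$ which has operator norm $1$.

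Finally I would apply the operator Bernstein inequality (Tropp) on the $(2n_1n_2-r^2)$-dimensional operator space, yielding
\[
\mathbb{P}\!\left[\|p^{-1}\mathcal{P}_{\mathcal{T}}\mathcal{P}_{\Omega}\mathcal{P}_{\mathcal{T}} - \mathcal{P}_{\mathcal{T}}\|\geqslant 0.1\right] \;\leqslant\; 2(n_1+n_2)^2\exp\!\left(-\frac{(0.1)^2/2}{\sigma^2 + (0.1)L/3}\right),
\]
and choosing $C_1$ large enough so that $p\geqslant C_1 \mu r\log(n_1\vee n_2)/(n_1\wedge n_2)$ forces both $\sigma^2$ and $L$ to be $O(1/\log(n_1\vee n_2))$, making the right-hand side at most $(n_1+n_2)^{-11}$. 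The main obstacle is simply pinning down the correct absolute constant $C_1$ and the dimension factor that enters Bernstein; the algebraic manipulations (centering, rank-one identity, incoherence bound) are routine. This is the classical argument from Candès–Recht (2009, Theorem 4.1) and its sharpening via noncommutative Bernstein in Recht (2011)/Gross (2011), and I would cite one of those for the constant.
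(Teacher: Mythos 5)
The paper does not prove this lemma; it imports it as an external result, citing Candès--Recht (2009), Theorem~4.1. So your task was not to match the paper's proof but to supply one, and what you give is a correct sketch of the standard modern argument via the matrix (operator) Bernstein inequality. Your decomposition $p^{-1}\mathcal{P}_{\mathcal{T}}\mathcal{P}_{\Omega}\mathcal{P}_{\mathcal{T}} - \mathcal{P}_{\mathcal{T}} = p^{-1}\sum_{(i,j)}(\delta_{i,j}-p)S_{i,j}$ with $S_{i,j} = \langle\mathcal{P}_{\mathcal{T}}(\boldsymbol{e}_i\boldsymbol{e}_j^\top),\cdot\rangle\,\mathcal{P}_{\mathcal{T}}(\boldsymbol{e}_i\boldsymbol{e}_j^\top)$ is right, the Pythagorean identity for $\|\mathcal{P}_{\mathcal{T}}(\boldsymbol{e}_i\boldsymbol{e}_j^\top)\|_F^2$ is the standard one, the incoherence bound $\leqslant 2\mu r/(n_1\wedge n_2)$ follows, and both the $L$ and $\sigma^2$ proxies you get are the usual ones ($\sigma^2\leqslant 2\mu r/(p(n_1\wedge n_2))$ via $\sum_{i,j}S_{i,j}=\mathcal{P}_{\mathcal{T}}$). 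Plugging into Bernstein and choosing $C_1$ large does finish it.

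One minor slip: the ambient subspace $\mathcal{T}$ has dimension $r(n_1+n_2)-r^2$, not $2n_1n_2-r^2$. This does not harm the argument because the prefactor you actually plug in, $2(n_1+n_2)^2$, still dominates and still yields the $(n_1+n_2)^{-11}$ tail once $C_1$ is chosen large enough. It is also worth noting, as you do, that the clean matrix-Bernstein proof with the $\log(n_1\vee n_2)$ sampling rate is really due to Recht (2011) and Gross (2011); Candès--Recht (2009) Theorem~4.1 states the result but with a different, more involved proof (golfing/decoupling ideas and with a slightly worse logarithmic factor in the sampling condition). So if you were writing this up, citing Recht or Gross alongside Candès--Recht would be more faithful to the argument you give.
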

	Therefore, 
	\begin{equation}\label{eq_c002}
		\begin{split}
			|\circled{2}| = & |D(\boldsymbol{D}_{\boldsymbol{X}}\boldsymbol{V}^\top ,\boldsymbol{D}_{\boldsymbol{X}}\boldsymbol{V}^\top) + D(\boldsymbol{U}\boldsymbol{D}_{\boldsymbol{Y}}^\top ,\boldsymbol{U}\boldsymbol{D}_{\boldsymbol{Y}}^\top ) + 2D(\boldsymbol{D}_{\boldsymbol{X}}\boldsymbol{V}^\top,\boldsymbol{U}\boldsymbol{D}_{\boldsymbol{Y}}^\top ) |\\
			=& |D(\boldsymbol{D}_{\boldsymbol{X}}\boldsymbol{V}^\top+\boldsymbol{U}\boldsymbol{D}_{\boldsymbol{Y}}^\top,\boldsymbol{D}_{\boldsymbol{X}}\boldsymbol{V}^\top+\boldsymbol{U}\boldsymbol{D}_{\boldsymbol{Y}}^\top)|\\
			\leqslant &0.1 \|\boldsymbol{D}_{\boldsymbol{X}}\boldsymbol{V}^\top + \boldsymbol{U}\boldsymbol{D}_{\boldsymbol{Y}}^\top\|_F^2
		\end{split}	
	\end{equation}
	given 
	\[
		p \geqslant C_1 \frac{\mu r \log (n_1\vee n_2)}{n_1\wedge n_2}.
		\]
	
	For $\circled{3}$, we need the following lemma:
	\begin{lemma}[{\citealt[Lemma 9]{zheng2016convergence}}]\label{zheng_lemma9}
		If $p\geqslant C_2\frac{\log (n_1\vee n_2)}{n_1\wedge n_2}$ for some absolute constant $C_2$, then on an event $E_Z$ with probability $\mathbb{P}[E_Z] \geqslant 1-(n_1+n_2)^{-11}$, uniformly for all matrices $\boldsymbol{A}\in\mathbb{R}^{n_1\times r},\boldsymbol{B}\in\mathbb{R}^{n_2\times r}$,
		\[
		\begin{split}
			  p^{-1}\left\| \mathcal{P}_{\Omega}(\boldsymbol{A}\boldsymbol{B}^\top) \right\|_F^2 \leqslant & 2(n_1\vee n_2) \min\left\{ \|\boldsymbol{A}\|_F^2\|\boldsymbol{B}\|_{2,\infty}^2,\|\boldsymbol{A}\|_{2,\infty}^2 \|\boldsymbol{B}\|_F^2\right\}
		\end{split}
		\]
		holds.
	\end{lemma}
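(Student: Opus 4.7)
The plan is to reduce the Frobenius norm bound to a deterministic inequality controlled by the maximum row and column sums of the sampling indicators, and then use a Chernoff/Bernstein argument to control those maximum sums.

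First, I would write $\|\mathcal{P}_\Omega(\boldsymbol{A}\boldsymbol{B}^\top)\|_F^2 = \sum_{i,j} \delta_{i,j} (\boldsymbol{A}_{i,\cdot}^\top \boldsymbol{B}_{j,\cdot})^2$ where $\delta_{i,j} \coloneqq \mathds{1}_{(i,j)\in\Omega}$, and apply Cauchy--Schwarz entrywise to obtain $(\boldsymbol{A}_{i,\cdot}^\top \boldsymbol{B}_{j,\cdot})^2 \leqslant \|\boldsymbol{A}_{i,\cdot}\|_2^2 \|\boldsymbol{B}_{j,\cdot}\|_2^2$. Summing first over $j$ and factoring gives
\[
\|\mathcal{P}_\Omega(\boldsymbol{A}\boldsymbol{B}^\top)\|_F^2 \leqslant \|\boldsymbol{B}\|_{2,\infty}^2 \sum_i \|\boldsymbol{A}_{i,\cdot}\|_2^2 \sum_j \delta_{i,j} \leqslant \|\boldsymbol{B}\|_{2,\infty}^2 \|\boldsymbol{A}\|_F^2 \max_{1\leqslant i\leqslant n_1} \sum_{j=1}^{n_2} \delta_{i,j},
\]
and the symmetric summation yields the analogous bound with $\|\boldsymbol{A}\|_{2,\infty}^2 \|\boldsymbol{B}\|_F^2 \max_j \sum_i \delta_{i,j}$. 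The appearance of $\min\{\cdot,\cdot\}$ and $(n_1\vee n_2)$ in the lemma then comes from whichever of the two orderings gives the smaller bound, so everything reduces to showing that both $\max_i \sum_j \delta_{i,j}$ and $\max_j \sum_i \delta_{i,j}$ are bounded by $2p(n_1\vee n_2)$ on a high-probability event.

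Next, I would control the row and column sums via a standard Chernoff/Bernstein tail bound: for each fixed $i$, $\sum_j \delta_{i,j}$ is a sum of $n_2$ independent Bernoulli$(p)$ random variables with mean $pn_2$, so
\[
\mathbb{P}\!\left[\sum_{j} \delta_{i,j} \geqslant 2pn_2\right] \leqslant \exp(-cpn_2)
\]
for some absolute constant $c>0$. Taking a union bound over all $n_1$ rows and, by symmetry, all $n_2$ columns, and using the hypothesis $p \geqslant C_2 \log(n_1\vee n_2)/(n_1\wedge n_2)$, the right-hand side is summable to $(n_1+n_2)^{-11}$ for $C_2$ large enough. On the complementary event, both $\max_i \sum_j \delta_{i,j} \leqslant 2pn_2$ and $\max_j \sum_i \delta_{i,j} \leqslant 2pn_1$ hold simultaneously, and substituting into the deterministic inequality from the previous paragraph yields the desired bound, with $n_1$ or $n_2$ in the leading factor absorbed into $n_1\vee n_2$.

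Crucially, the uniformity over all $\boldsymbol{A}\in\mathbb{R}^{n_1\times r}$ and $\boldsymbol{B}\in\mathbb{R}^{n_2\times r}$ is essentially free here: once the event $E_Z$ fixes the row and column sums of $\boldsymbol{\Omega}$, the bound on $\|\mathcal{P}_\Omega(\boldsymbol{A}\boldsymbol{B}^\top)\|_F^2$ is a purely deterministic consequence of Cauchy--Schwarz, with no $\varepsilon$-net argument needed. The only genuinely probabilistic ingredient is the single Chernoff plus union bound step, which I expect to be routine; the main subtlety is simply tracking whether the constant $C_2$ has to be taken large enough to absorb the factor $11$ in the target probability and to make $\exp(-cpn_2)\leqslant (n_1\vee n_2)^{-13}$ so that the union bound over $n_1+n_2 \leqslant 2(n_1\vee n_2)$ indices closes out.
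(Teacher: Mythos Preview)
Your proposal is correct and is essentially the standard argument behind this lemma. Note, however, that the paper does not supply its own proof of this statement: Lemma~\ref{zheng_lemma9} is simply quoted from \cite{zheng2016convergence} and invoked as a black box, so there is no in-paper proof to compare against. Your Cauchy--Schwarz reduction to the maximum row/column degree of $\boldsymbol{\Omega}$, followed by a Chernoff bound plus union bound, is exactly the approach in the original reference.
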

	In order to apply Lemma \ref{zheng_lemma9} in our case, note
	\[ 
	\begin{split}
			 \|\boldsymbol{A}\boldsymbol{B}^\top\|_F^2 =& \sum_{i,j}\langle \boldsymbol{A}_{i,\cdot},\boldsymbol{B}_{j,\cdot} \rangle^2\\
			 \leqslant &  \sum_{i,j} \|\boldsymbol{A}_{i,\cdot}\|_2^2 \|\boldsymbol{B}_{j,\cdot}\|_2^2\\
			  \leqslant &  (n_1\vee n_2) \min\left\{ \|\boldsymbol{A}\|_F^2\|\boldsymbol{B}\|_{2,\infty}^2, \|\boldsymbol{A}\|_{2,\infty}^2 \|\boldsymbol{B}\|_F^2 \right\}.
	\end{split}
	\]
	Therefore, by triangle inequality,
	\[
	\begin{split}
		 |D(\boldsymbol{A}\boldsymbol{B}^\top,\boldsymbol{A}\boldsymbol{B}^\top)| \leqslant& 3(n_1\vee n_2) \min\left\{ \|\boldsymbol{A}\|_F^2\|\boldsymbol{B}\|_{2,\infty}^2,\|\boldsymbol{A}\|_{2,\infty}^2 \|\boldsymbol{B}\|_F^2\right\}.
	\end{split}
	\]
	So we have 
	\[
		\begin{split}
			|\circled{3}| \leqslant & 3(n_1\vee n_2) \|\boldsymbol{D}_{\boldsymbol{X}}\|_F^2\|\boldsymbol{\Delta}_{\boldsymbol{Y}}\|_{2,\infty}^2 + 3(n_1\vee n_2) \|\boldsymbol{D}_{\boldsymbol{Y}}\|_F^2\|\boldsymbol{\Delta}_{\boldsymbol{X}}\|_{2,\infty}^2 .
		\end{split}	
	\]
	Using the fact that 
	\[
		  \left\| \left[ \begin{array}{c}
		\boldsymbol{X}-\boldsymbol{U}\\
		\boldsymbol{Y}-\boldsymbol{V}
		 \end{array}\right]\right\|_{2,\infty} \leqslant  \frac{1}{500 \kappa\sqrt{n_1+n_2}}\sqrt{\sigma_1(\boldsymbol{M})},
	\]
	we further have 
	\begin{equation}\label{eq_c003}
	\begin{split}
		|\circled{3}| \leqslant & 3(n_1\vee n_2)\frac{1}{250000\kappa^2(n_1+n_2)}  \times \sigma_1(\boldsymbol{M})(\|\boldsymbol{D}_{\boldsymbol{X}}\|_F^2 + \|\boldsymbol{D}_{\boldsymbol{Y}}\|_F^2).
	\end{split}
	\end{equation}

	Finally, for $\circled{4}$, by triangle inequality,
	\[
		\begin{split}
			 |D(\boldsymbol{D}_{\boldsymbol{X}}\boldsymbol{V}^\top,\boldsymbol{D}_{\boldsymbol{X}}\boldsymbol{\Delta}_{\boldsymbol{Y}}^\top)|  =& |p^{-1}\langle \mathcal{P}_{\Omega}(\boldsymbol{D}_{\boldsymbol{X}}\boldsymbol{V}^\top), \mathcal{P}_{\Omega}(\boldsymbol{D}_{\boldsymbol{X}}\boldsymbol{\Delta}_{\boldsymbol{Y}}^\top) \rangle - \langle \boldsymbol{D}_{\boldsymbol{X}}\boldsymbol{V}^\top,\boldsymbol{D}_{\boldsymbol{X}}\boldsymbol{\Delta}_{\boldsymbol{Y}}^\top\rangle|\\
			\leqslant & \sqrt{p^{-1}\|\mathcal{P}_{\Omega}(\boldsymbol{D}_{\boldsymbol{X}}\boldsymbol{V}^\top)\|_F^2}\sqrt{p^{-1}\|\mathcal{P}_{\Omega}(\boldsymbol{D}_{\boldsymbol{X}}\boldsymbol{\Delta}_{\boldsymbol{Y}}^\top)\|_F^2}  + |\langle \boldsymbol{D}_{\boldsymbol{X}}\boldsymbol{V}^\top,\boldsymbol{D}_{\boldsymbol{X}}\boldsymbol{\Delta}_{\boldsymbol{Y}}^\top\rangle|.
		\end{split}
	\]
	Now by applying Lemma \ref{zheng_lemma9} and Lemma \ref{candes_thm4.1} we have
	\[
		\begin{split}
		 |D(\boldsymbol{D}_{\boldsymbol{X}}\boldsymbol{V}^\top,\boldsymbol{D}_{\boldsymbol{X}}\boldsymbol{\Delta}_{\boldsymbol{Y}}^\top)| \leqslant&  \sqrt{2(n_1\vee n_2)\|\boldsymbol{D}_{\boldsymbol{X}}\|_F^2 \|\boldsymbol{\Delta}_{\boldsymbol{Y}}\|_{2,\infty}^2}\sqrt{(1+0.1)\|\boldsymbol{D}_{\boldsymbol{X}}\boldsymbol{V}^\top\|_F^2}  + \|\boldsymbol{V}\|\|\boldsymbol{\Delta}_{\boldsymbol{Y}}\| \|\boldsymbol{D}_{\boldsymbol{X}}\|_F^2\\
		\leqslant & \sqrt{3 (n_1\vee n_2)} \|\boldsymbol{\Delta}_{\boldsymbol{Y}}\|_{2,\infty} \|\boldsymbol{V}\| \|\boldsymbol{D}_{\boldsymbol{X}}\|_F^2 + \|\boldsymbol{V}\|\|\boldsymbol{\Delta}_{\boldsymbol{Y}}\| \|\boldsymbol{D}_{\boldsymbol{X}}\|_F^2.
		\end{split}
	\]
	Similarly, we also have
	\[
	\begin{split}
		 |D(\boldsymbol{U}\boldsymbol{D}_{\boldsymbol{Y}}^\top,\boldsymbol{\Delta}_{\boldsymbol{X}}\boldsymbol{D}_{\boldsymbol{Y}}^\top)|  \leqslant &  \sqrt{3 (n_1\vee n_2)} \|\boldsymbol{\Delta}_{\boldsymbol{X}}\|_{2,\infty} \|\boldsymbol{U}\| \|\boldsymbol{D}_{\boldsymbol{Y}}\|_F^2 + \|\boldsymbol{U}\|\|\boldsymbol{\Delta}_{\boldsymbol{X}}\| \|\boldsymbol{D}_{\boldsymbol{Y}}\|_F^2.
	\end{split}
	\]

	Using the fact that  
	\[
		\left\| \left[ \begin{array}{c}
		\boldsymbol{X}-\boldsymbol{U}\\
		\boldsymbol{Y}-\boldsymbol{V}
		 \end{array}\right]\right\| \leqslant  \frac{1}{500 \kappa}\sqrt{\sigma_1(\boldsymbol{M})}.
	\] 
	Therefore, 
	\begin{equation}\label{eq_c004}
		 \begin{split}
			  |\circled{4}|  \leqslant& 2\sqrt{3 (n_1\vee n_2)} \|\boldsymbol{\Delta}_{\boldsymbol{Y}}\|_{2,\infty} \|\boldsymbol{V}\| \|\boldsymbol{D}_{\boldsymbol{X}}\|_F^2  + 2\|\boldsymbol{V}\|\|\boldsymbol{\Delta}_{\boldsymbol{Y}}\| \|\boldsymbol{D}_{\boldsymbol{X}}\|_F^2  + 2\sqrt{3 (n_1\vee n_2)} \|\boldsymbol{\Delta}_{\boldsymbol{X}}\|_{2,\infty} \|\boldsymbol{U}\| \|\boldsymbol{D}_{\boldsymbol{Y}}\|_F^2 \\
			 & +2 \|\boldsymbol{U}\|\|\boldsymbol{\Delta}_{\boldsymbol{X}}\| \|\boldsymbol{D}_{\boldsymbol{Y}}\|_F^2\\
			 \leqslant & 2\sqrt{3(n_1\vee n_2)} \frac{1}{500 \kappa\sqrt{n_1+n_2}} \times \sigma_1(\boldsymbol{M}) (\|\boldsymbol{D}_{\boldsymbol{X}}\|_F^2 + \|\boldsymbol{D}_{\boldsymbol{Y}}\|_F^2)   + \frac{2}{500\kappa}\sigma_1(\boldsymbol{M}) (\|\boldsymbol{D}_{\boldsymbol{X}}\|_F^2 + \|\boldsymbol{D}_{\boldsymbol{Y}}\|_F^2).
		 \end{split}
		\end{equation}

	Putting the estimation for $\circled{1},\circled{2},\circled{3}$ and $\circled{4}$ together, i.e., \eqref{eq_c001}, \eqref{eq_c002}, \eqref{eq_c003}, \eqref{eq_c004}, if 
	\[
		 p \geqslant (C_1+C_2+C_3) \frac{\mu r \log (n_1\vee n_2)}{n_1\wedge n_2},
	\]
	then
	\[
		\begin{split}
			& \left| \operatorname{vec}\left( \left[ \begin{array}{c}\boldsymbol{D}_{\boldsymbol{X}}\\ \boldsymbol{D}_{\boldsymbol{Y}} \end{array}\right] \right)^\top \nabla^2f(\boldsymbol{X},\boldsymbol{Y})\operatorname{vec}\left( \left[ \begin{array}{c}\boldsymbol{D}_{\boldsymbol{X}}\\ \boldsymbol{D}_{\boldsymbol{Y}} \end{array}\right] \right)   -\mathbb{E}\left[ \operatorname{vec}\left( \left[ \begin{array}{c}\boldsymbol{D}_{\boldsymbol{X}}\\ \boldsymbol{D}_{\boldsymbol{Y}} \end{array}\right] \right)^\top \nabla^2f(\boldsymbol{X},\boldsymbol{Y})\operatorname{vec}\left( \left[ \begin{array}{c}\boldsymbol{D}_{\boldsymbol{X}}\\ \boldsymbol{D}_{\boldsymbol{Y}} \end{array}\right] \right) \right]\right|\\
			\leqslant &  12C_3 \sqrt{\frac{\mu r \kappa}{p}} \frac{1}{\kappa\sqrt{n_1+n_2}}\sigma_1(\boldsymbol{M}) (\|\boldsymbol{D}_{\boldsymbol{X}}\|_F^2 + \|\boldsymbol{D}_{\boldsymbol{Y}}\|_F^2)  + 0.1 \|\boldsymbol{D}_{\boldsymbol{X}}\boldsymbol{V}^\top + \boldsymbol{U}\boldsymbol{D}_{\boldsymbol{Y}}^\top\|_F^2\\
			& + 3(n_1\vee n_2)\frac{1}{250000\kappa^2(n_1+n_2)} \times  \sigma_1(\boldsymbol{M})(\|\boldsymbol{D}_{\boldsymbol{X}}\|_F^2 + \|\boldsymbol{D}_{\boldsymbol{Y}}\|_F^2)\\
			& + 2\sqrt{3(n_1\vee n_2)} \frac{1}{500 \kappa\sqrt{n_1+n_2}} \times \sigma_1(\boldsymbol{M}) (\|\boldsymbol{D}_{\boldsymbol{X}}\|_F^2 + \|\boldsymbol{D}_{\boldsymbol{Y}}\|_F^2)  + \frac{2}{500\kappa}\sigma_1(\boldsymbol{M}) (\|\boldsymbol{D}_{\boldsymbol{X}}\|_F^2 + \|\boldsymbol{D}_{\boldsymbol{Y}}\|_F^2)
		\end{split}
	\]
	holds on an event $E_H = E_S\bigcap E_{Ca} \bigcap E_Z$ with probability $\mathbb{P}[E_H] = \mathbb{P}[E_S\bigcap E_{Ca} \bigcap E_Z] \geqslant 1-3(n_1+n_2)^{-11} $. If in addition 
	\[
		p\geqslant 14400C_3^2\frac{\mu r \kappa}{n_1\wedge n_2},
	\]
	then 
	\begin{equation}\label{eq_006}
		\begin{split}
			& \left| \operatorname{vec}\left( \left[ \begin{array}{c}\boldsymbol{D}_{\boldsymbol{X}}\\ \boldsymbol{D}_{\boldsymbol{Y}} \end{array}\right] \right)^\top \nabla^2f(\boldsymbol{X},\boldsymbol{Y})\operatorname{vec}\left( \left[ \begin{array}{c}\boldsymbol{D}_{\boldsymbol{X}}\\ \boldsymbol{D}_{\boldsymbol{Y}} \end{array}\right] \right)  -\mathbb{E}\left[ \operatorname{vec}\left( \left[ \begin{array}{c}\boldsymbol{D}_{\boldsymbol{X}}\\ \boldsymbol{D}_{\boldsymbol{Y}} \end{array}\right] \right)^\top \nabla^2f(\boldsymbol{X},\boldsymbol{Y})\operatorname{vec}\left( \left[ \begin{array}{c}\boldsymbol{D}_{\boldsymbol{X}}\\ \boldsymbol{D}_{\boldsymbol{Y}} \end{array}\right] \right) \right]\right|\\
			\leqslant &  \frac{1}{5} \sigma_r(\boldsymbol{M}) (\|\boldsymbol{D}_{\boldsymbol{X}}\|_F^2 + \|\boldsymbol{D}_{\boldsymbol{Y}}\|_F^2)  + \frac{1}{5}(\|\boldsymbol{D}_{\boldsymbol{X}}\boldsymbol{V}^\top\|_F^2 + \|\boldsymbol{U}\boldsymbol{D}_{\boldsymbol{Y}}^\top\|_F^2).
		\end{split}
	\end{equation}
	Now by putting \eqref{eq_004}, \eqref{eq_005}, \eqref{eq_006} together, we have
	\[
		\begin{split}
			 & \operatorname{vec}\left( \left[ \begin{array}{c}\boldsymbol{D}_{\boldsymbol{X}}\\ \boldsymbol{D}_{\boldsymbol{Y}} \end{array}\right] \right)^\top \nabla^2f(\boldsymbol{X},\boldsymbol{Y})\operatorname{vec}\left( \left[ \begin{array}{c}\boldsymbol{D}_{\boldsymbol{X}}\\ \boldsymbol{D}_{\boldsymbol{Y}} \end{array}\right] \right) \\\geqslant &\left\|\boldsymbol{D}_{\boldsymbol{X}}\boldsymbol{V}^\top \right\|_F^2 + \left\| \boldsymbol{U}\boldsymbol{D}_{\boldsymbol{Y}}^\top \right\|_F^2  - \frac{1}{5} \sigma_r(\boldsymbol{M}) (\|\boldsymbol{D}_{\boldsymbol{X}}\|_F^2 + \|\boldsymbol{D}_{\boldsymbol{Y}}\|_F^2)  -\frac{1}{5} \sigma_r(\boldsymbol{M}) (\|\boldsymbol{D}_{\boldsymbol{X}}\|_F^2 + \|\boldsymbol{D}_{\boldsymbol{Y}}\|_F^2) \\
			 &- \frac{1}{5}(\|\boldsymbol{D}_{\boldsymbol{X}}\boldsymbol{V}^\top\|_F^2 + \|\boldsymbol{U}\boldsymbol{D}_{\boldsymbol{Y}}^\top\|_F^2)\\
			 \geqslant & \frac{1}{5}\sigma_r(\boldsymbol{M})(\|\boldsymbol{D}_{\boldsymbol{X}}\|_F^2 + \|\boldsymbol{D}_{\boldsymbol{Y}}\|_F^2), 
		\end{split}
	\] 
	where the last inequality we use the fact that $\|\boldsymbol{D}_{\boldsymbol{X}}\boldsymbol{V}^\top\|_F^2 \geqslant \sigma_r^2(\boldsymbol{V})\|\boldsymbol{D}_{\boldsymbol{X}}\|_F^2 = \sigma_r(\boldsymbol{M}) \|\boldsymbol{D}_{\boldsymbol{X}}\|_F^2$ and also $\|\boldsymbol{U}\boldsymbol{D}_{\boldsymbol{Y}}^\top\|_F^2 \geqslant \sigma_r(\boldsymbol{M})\|\boldsymbol{D}_{\boldsymbol{Y}}\|_F^2$. For the upper bound, we also have
	\[
	\begin{split}
	& \operatorname{vec}\left( \left[ \begin{array}{c}\boldsymbol{D}_{\boldsymbol{X}}\\ \boldsymbol{D}_{\boldsymbol{Y}} \end{array}\right] \right)^\top \nabla^2f(\boldsymbol{X},\boldsymbol{Y})\operatorname{vec}\left( \left[ \begin{array}{c}\boldsymbol{D}_{\boldsymbol{X}}\\ \boldsymbol{D}_{\boldsymbol{Y}} \end{array}\right] \right) \\
	\leqslant &\left\|\boldsymbol{D}_{\boldsymbol{X}}\boldsymbol{V}^\top \right\|_F^2 + \left\| \boldsymbol{U}\boldsymbol{D}_{\boldsymbol{Y}}^\top \right\|_F^2 + \frac{1}{2}\left\| \boldsymbol{D}_{\boldsymbol{X}}^\top\boldsymbol{U}  -  \boldsymbol{D}_{\boldsymbol{Y}}^\top\boldsymbol{V}   \right\|_F^2 + \frac{1}{2}\left\| \boldsymbol{X}_2^\top\boldsymbol{D}_{\boldsymbol{X}}+\boldsymbol{Y}_2^\top\boldsymbol{D}_{\boldsymbol{Y}} \right\|_F^2\\
	&+ \frac{1}{5} \sigma_r(\boldsymbol{M}) (\|\boldsymbol{D}_{\boldsymbol{X}}\|_F^2 + \|\boldsymbol{D}_{\boldsymbol{Y}}\|_F^2)   +\frac{1}{5} \sigma_r(\boldsymbol{M}) (\|\boldsymbol{D}_{\boldsymbol{X}}\|_F^2 + \|\boldsymbol{D}_{\boldsymbol{Y}}\|_F^2)\\
	&+\frac{1}{5}(\|\boldsymbol{D}_{\boldsymbol{X}}\boldsymbol{V}^\top\|_F^2 + \|\boldsymbol{U}\boldsymbol{D}_{\boldsymbol{Y}}^\top\|_F^2)\\
	\leqslant & \frac{6}{5}\sigma_1(\boldsymbol{M}) (\|\boldsymbol{D}_{\boldsymbol{X}}\|_F^2 + \|\boldsymbol{D}_{\boldsymbol{Y}}\|_F^2) + \|\boldsymbol{D}_{\boldsymbol{X}}^\top\boldsymbol{U}\|_F^2 + \| \boldsymbol{D}_{\boldsymbol{Y}}^\top\boldsymbol{V}\|_F^2 + \| \boldsymbol{X}_2^\top\boldsymbol{D}_{\boldsymbol{X}}\|_F^2+\|\boldsymbol{Y}_2^\top\boldsymbol{D}_{\boldsymbol{Y}}  \|_F^2 \\
	&+ \frac{2}{5} \sigma_r(\boldsymbol{M}) (\|\boldsymbol{D}_{\boldsymbol{X}}\|_F^2 + \|\boldsymbol{D}_{\boldsymbol{Y}}\|_F^2)  \\
	\leqslant &  \frac{13}{5}\sigma_1(\boldsymbol{M}) (\|\boldsymbol{D}_{\boldsymbol{X}}\|_F^2 + \|\boldsymbol{D}_{\boldsymbol{Y}}\|_F^2) + \|\boldsymbol{X}_2\|^2\|\boldsymbol{D}_{\boldsymbol{X}}\|_F^2 + \|\boldsymbol{Y}_2\|^2\|\boldsymbol{D}_{\boldsymbol{Y}}\|_F^2 \\ 
	\leqslant & 5\sigma_1(\boldsymbol{M}) (\|\boldsymbol{D}_{\boldsymbol{X}}\|_F^2 + \|\boldsymbol{D}_{\boldsymbol{Y}}\|_F^2),
	\end{split}
	\]
	where the last inequality we use the fact that 
	\[
		\left\| \left[ \begin{array}{c} \boldsymbol{X}_2-\boldsymbol{U}\\ \boldsymbol{Y}_2-\boldsymbol{V} \end{array}\right] \right\| \leqslant  \frac{1}{500 \kappa}\sqrt{\sigma_1(\boldsymbol{M})}. %\quad \textrm{and} \quad \left\| \left[ \begin{array}{c}
		% \boldsymbol{X}-\boldsymbol{U}\\
		% \boldsymbol{Y}-\boldsymbol{V}
		%  \end{array}\right]\right\| \leqslant  \frac{1}{500\kappa}\sqrt{\sigma_1(\boldsymbol{M})}.
	\]
	Choosing $C_{S1} = C_1+C_2+C_3+14400C_3^2$ finishes the proof.
\end{proof}

\section{Proof of Lemma \ref{lemma_initialization}}
\label{sec:initialization_proof}
In this section we first summarize some useful lemmas from \cite{ma2017implicit}. We then follow the technical framework in \cite{ma2017implicit} but replace \citet[Lemma 39]{ma2017implicit} with \citet[Lemma 2]{chen2015incoherence} (Lemma \ref{chen_lemma2} in this paper) to get a better initialization guarantee.

\subsection{Useful lemmas}
\label{sec:initial_lemmas}
%\XL{All lemmas should be summarized here.}
Here we summarize some useful lemmas in \cite{abbe2017entrywise} and \cite{ma2017implicit}. We relax the PSD assmptions on $\boldsymbol{M}_1$ in Lemma \ref{ma_lemma45}, Lemma \ref{ma_lemma46} and Lemma \ref{ma_lemma47} to symmetric assumptions by following the proof framework introduced in \cite{ma2017implicit}. In fact, lemmas listed in this section can be derived from Davis-Kahan Sin$\Theta$ theorem \citep{davis1970rotation}. We summarize lemmas here since they are intensively used throughout the proof. Moreover, for the simplicity of the expression, we made some additional assumptions on the eignevalues of $\boldsymbol{M}_1$ within the following lemmas (i.e., $\lambda_r(\boldsymbol{M}_1)>0$, $\lambda_r(\boldsymbol{M}_1)>\lambda_{r+1}(\boldsymbol{M}_1)$, $\lambda_{r+1}(\boldsymbol{M}_1) = 0$ and $\lambda_1(\boldsymbol{M}_1) = -\lambda_n(\boldsymbol{M}_1)$), the results still hold (with a more complicated expression) without those extra assumptions. Recall that here $\lambda_1(\boldsymbol{A})\geqslant \lambda_2(\boldsymbol{A})\geqslant \cdots \geqslant\lambda_n(\boldsymbol{A})$ stands for eigenvalues of symmetric matrix $\boldsymbol{A}\in\mathbb{R}^{n\times n}$.

% and relax the PSD assumptions to symmetric matrices without providing detailed proof. In fact, one can simply follow the proofs in \cite{ma2017implicit} by changing the PSD requirements to symmetric matrices. 

First, we need a specified version of \citet[Lemma 3]{abbe2017entrywise}:

\begin{lemma}[{\citealt[Lemma 3]{abbe2017entrywise}}]\label{abbe_lemma3}
	Let $\boldsymbol{M}_1,\boldsymbol{M}_2\in\mathbb{R}^{n\times n}$ be two symmetric matrices with top-$r$ eigenvalue decomposition $\widetilde{\boldsymbol{U}}_1\boldsymbol{\Lambda}_1\widetilde{\boldsymbol{U}}_1^\top$ and $\widetilde{\boldsymbol{U}}_2\boldsymbol{\Lambda}_2\widetilde{\boldsymbol{U}}_2^\top$ correspondingly. Then if $\lambda_r(\boldsymbol{M}_1)>0$, $\lambda_r(\boldsymbol{M}_1)>\lambda_{r+1}(\boldsymbol{M}_1)$ and 
	\[
		\|\boldsymbol{M}_1-\boldsymbol{M}_2\|\leqslant \frac{1}{4}\min(\lambda_r(\boldsymbol{M}_1),\lambda_r(\boldsymbol{M}_1)-\lambda_{r+1}(\boldsymbol{M}_1)),
	\]
	we have
	\[
	\begin{split}
		 \|\widetilde{\boldsymbol{U}}_1^\top\widetilde{\boldsymbol{U}}_2 - \sgn(\widetilde{\boldsymbol{U}}_1^\top\widetilde{\boldsymbol{U}}_2)\| \leqslant & 4\frac{\|\boldsymbol{M}_1-\boldsymbol{M}_2\|^2}{\min\{\lambda_r(\boldsymbol{M}_1),\lambda_r(\boldsymbol{M}_1)-\lambda_{r+1}(\boldsymbol{M}_1)\}^2}
	\end{split}
	\]
	and 
	\[
		\|(\widetilde{\boldsymbol{U}}_1^\top\widetilde{\boldsymbol{U}}_2)^{-1}\| \leqslant 2.
	\]
\end{lemma}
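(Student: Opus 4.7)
The plan is to combine the Davis--Kahan sin-$\Theta$ theorem with Weyl's inequality. Set $\boldsymbol{H} := \widetilde{\boldsymbol{U}}_1^\top \widetilde{\boldsymbol{U}}_2$ and let $\widetilde{\boldsymbol{U}}_{1,\perp}$ be an orthonormal basis for the orthogonal complement of $\colspan(\widetilde{\boldsymbol{U}}_1)$. The singular values of $\boldsymbol{H}$ equal $\cos\theta_i$, where $\theta_1\geqslant\cdots\geqslant\theta_r$ are the principal angles between $\colspan(\widetilde{\boldsymbol{U}}_1)$ and $\colspan(\widetilde{\boldsymbol{U}}_2)$, while $\|\widetilde{\boldsymbol{U}}_{1,\perp}^\top\widetilde{\boldsymbol{U}}_2\|=\sin\theta_1=:\|\sin\Theta\|$. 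From an SVD $\boldsymbol{H}=\boldsymbol{U}_H\boldsymbol{\Sigma}_H\boldsymbol{V}_H^\top$ one reads off $\sgn(\boldsymbol{H})=\boldsymbol{U}_H\boldsymbol{V}_H^\top$, so $\|\boldsymbol{H}-\sgn(\boldsymbol{H})\|=\max_i(1-\cos\theta_i)$ and, when $\boldsymbol{H}$ is invertible, $\|\boldsymbol{H}^{-1}\|=1/\min_i\cos\theta_i$.

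Let $\Delta:=\min\{\lambda_r(\boldsymbol{M}_1),\,\lambda_r(\boldsymbol{M}_1)-\lambda_{r+1}(\boldsymbol{M}_1)\}$ and $\varepsilon:=\|\boldsymbol{M}_1-\boldsymbol{M}_2\|\leqslant\Delta/4$. Weyl's inequality gives $\lambda_r(\boldsymbol{M}_2)\geqslant\lambda_r(\boldsymbol{M}_1)-\varepsilon\geqslant\tfrac{3}{4}\lambda_r(\boldsymbol{M}_1)>0$ and $\lambda_r(\boldsymbol{M}_2)-\lambda_{r+1}(\boldsymbol{M}_1)\geqslant\tfrac{3}{4}(\lambda_r(\boldsymbol{M}_1)-\lambda_{r+1}(\boldsymbol{M}_1))$. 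Thus the interval containing the top-$r$ eigenvalues of $\boldsymbol{M}_2$ is separated from the one containing the bottom $n-r$ eigenvalues of $\boldsymbol{M}_1$ by at least $\tfrac{3}{4}\Delta$, which is exactly the hypothesis of the one-sided sin-$\Theta$ theorem. Applying it to the pair $(\boldsymbol{M}_1,\boldsymbol{M}_2)$ with perturbation $\boldsymbol{E}=\boldsymbol{M}_2-\boldsymbol{M}_1$ delivers
\[
\|\sin\Theta\|\;\leqslant\;\frac{\varepsilon}{(3/4)\Delta}\;=\;\frac{4\varepsilon}{3\Delta}\;\leqslant\;\frac{1}{3}.
\]

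The two conclusions are then immediate algebraic consequences. First, $\min_i\cos\theta_i\geqslant\sqrt{1-\|\sin\Theta\|^2}\geqslant\sqrt{1-1/9}>1/2$, so $\boldsymbol{H}$ is invertible with $\|\boldsymbol{H}^{-1}\|\leqslant 2$. Second, using the identity $1-\cos\theta=\sin^2\theta/(1+\cos\theta)\leqslant\sin^2\theta$ whenever $\cos\theta\geqslant 0$, one gets
\[
\|\boldsymbol{H}-\sgn(\boldsymbol{H})\|\;=\;\max_i(1-\cos\theta_i)\;\leqslant\;\|\sin\Theta\|^2\;\leqslant\;\frac{16}{9}\,\frac{\varepsilon^2}{\Delta^2}\;\leqslant\;\frac{4\varepsilon^2}{\Delta^2},
\]
which is the claimed bound. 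The only real (and mild) obstacle is selecting the right one-sided form of Davis--Kahan and certifying its spectral-gap hypothesis after the Weyl drift; once the $\tfrac{3}{4}\Delta$ separation is secured, the remainder is routine trigonometric and SVD manipulation.
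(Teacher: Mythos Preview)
Your proof is correct. Note that the paper does not supply its own proof of this lemma; it is quoted directly from \cite{abbe2017entrywise}, so there is no in-paper argument to compare against. Your reconstruction via Davis--Kahan plus Weyl is the standard route: once the $\tfrac{3}{4}\Delta$ separation between the top-$r$ spectrum of $\boldsymbol{M}_2$ and the bottom $n-r$ spectrum of $\boldsymbol{M}_1$ is secured, the bound $\|\sin\Theta\|\leqslant\tfrac{4\varepsilon}{3\Delta}$ and the subsequent trigonometric manipulations go through exactly as you wrote.
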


And we also need some useful lemmas from \cite{ma2017implicit}:

\begin{lemma}[{\citealt[Lemma 45]{ma2017implicit}}]\label{ma_lemma45}
	Let $\boldsymbol{M}_1,\boldsymbol{M}_2\in\mathbb{R}^{n\times n}$ be symmetric matrices with top-$r$ eigenvalue decomposition $\widetilde{\boldsymbol{U}}_1\boldsymbol{\Lambda}_1\widetilde{\boldsymbol{U}}_1^\top$ and $\widetilde{\boldsymbol{U}}_2\boldsymbol{\Lambda}_2\widetilde{\boldsymbol{U}}_2^\top$ correspondingly. Assume $\lambda_{r}(\boldsymbol{M}_1)>0, \lambda_{r+1}(\boldsymbol{M}_1) = 0$ and $\|\boldsymbol{M}_1-\boldsymbol{M}_2\|\leqslant \frac{1}{4}\lambda_r(\boldsymbol{M}_1)$. Denote
	\[
		\widetilde{\boldsymbol{Q}} \coloneqq \argmin_{\boldsymbol{R}\in \mathsf{O}(r)}\|\widetilde{\boldsymbol{U}}_2\boldsymbol{R}-\widetilde{\boldsymbol{U}}_1\|_F.
	\]
	Then 
	\[
		\|\widetilde{\boldsymbol{U}}_2\widetilde{\boldsymbol{Q}} -\widetilde{\boldsymbol{U}}_1\|\leqslant \frac{3}{\lambda_r(\boldsymbol{M}_1)}\|\boldsymbol{M}_1-\boldsymbol{M}_2\|.
	\]
\end{lemma}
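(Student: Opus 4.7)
The plan is to reduce the quantity $\|\widetilde{\boldsymbol{U}}_2\widetilde{\boldsymbol{Q}} - \widetilde{\boldsymbol{U}}_1\|$ to two pieces that are already under control: a sine-$\Theta$ subspace distance handled by the Davis--Kahan theorem, and a Procrustes residual $\|\widetilde{\boldsymbol{Q}} - \widetilde{\boldsymbol{U}}_2^\top \widetilde{\boldsymbol{U}}_1\|$ handled directly by Lemma \ref{abbe_lemma3}. Concretely, I would use the identity
\[
\widetilde{\boldsymbol{U}}_2 \widetilde{\boldsymbol{Q}} - \widetilde{\boldsymbol{U}}_1 \;=\; \widetilde{\boldsymbol{U}}_2\bigl(\widetilde{\boldsymbol{Q}} - \widetilde{\boldsymbol{U}}_2^\top \widetilde{\boldsymbol{U}}_1\bigr) \;-\; (\boldsymbol{I} - \widetilde{\boldsymbol{U}}_2\widetilde{\boldsymbol{U}}_2^\top)\widetilde{\boldsymbol{U}}_1,
\]
take spectral norms on both sides, and use $\|\widetilde{\boldsymbol{U}}_2\|=1$ to obtain a bound by the sum of the two targeted quantities.

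For the first piece, note that the orthogonal Procrustes solution is $\widetilde{\boldsymbol{Q}} = \operatorname{sgn}(\widetilde{\boldsymbol{U}}_2^\top \widetilde{\boldsymbol{U}}_1)$, so $\|\widetilde{\boldsymbol{Q}} - \widetilde{\boldsymbol{U}}_2^\top \widetilde{\boldsymbol{U}}_1\| = \|\operatorname{sgn}(\widetilde{\boldsymbol{U}}_2^\top \widetilde{\boldsymbol{U}}_1) - \widetilde{\boldsymbol{U}}_2^\top \widetilde{\boldsymbol{U}}_1\|$. The hypotheses $\lambda_r(\boldsymbol{M}_1)>0$, $\lambda_{r+1}(\boldsymbol{M}_1)=0$, and $\|\boldsymbol{M}_1-\boldsymbol{M}_2\|\leqslant \tfrac{1}{4}\lambda_r(\boldsymbol{M}_1)$ match the assumption of Lemma \ref{abbe_lemma3} verbatim (eigengap equal to $\lambda_r(\boldsymbol{M}_1)$), which yields the quadratic bound $4\|\boldsymbol{M}_1-\boldsymbol{M}_2\|^2/\lambda_r(\boldsymbol{M}_1)^2$. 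I would then crush this to a linear bound using $\|\boldsymbol{M}_1-\boldsymbol{M}_2\|/\lambda_r(\boldsymbol{M}_1)\leqslant 1/4$ to conclude
\[
\|\widetilde{\boldsymbol{Q}} - \widetilde{\boldsymbol{U}}_2^\top \widetilde{\boldsymbol{U}}_1\| \;\leqslant\; \frac{\|\boldsymbol{M}_1-\boldsymbol{M}_2\|}{\lambda_r(\boldsymbol{M}_1)}.
\]

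For the second piece I would invoke the Davis--Kahan $\sin\Theta$ theorem in spectral norm. Because $\lambda_{r+1}(\boldsymbol{M}_1)=0$ and the eigenvalues of $\boldsymbol{M}_1$ from index $r+1$ onward are $\leqslant 0$, Weyl's inequality gives $\lambda_{r+1}(\boldsymbol{M}_2)\leqslant \|\boldsymbol{M}_1-\boldsymbol{M}_2\|$ and $\lambda_r(\boldsymbol{M}_2)\geqslant \lambda_r(\boldsymbol{M}_1)-\|\boldsymbol{M}_1-\boldsymbol{M}_2\|\geqslant \tfrac{3}{4}\lambda_r(\boldsymbol{M}_1)$, so the top-$r$ eigenspace of $\boldsymbol{M}_2$ is separated from its orthogonal complement by at least $\tfrac{3}{4}\lambda_r(\boldsymbol{M}_1)$. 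Davis--Kahan then yields
\[
\|(\boldsymbol{I} - \widetilde{\boldsymbol{U}}_2\widetilde{\boldsymbol{U}}_2^\top)\widetilde{\boldsymbol{U}}_1\| \;\leqslant\; \frac{\|\boldsymbol{M}_1-\boldsymbol{M}_2\|}{(3/4)\,\lambda_r(\boldsymbol{M}_1)} \;=\; \frac{4\|\boldsymbol{M}_1-\boldsymbol{M}_2\|}{3\,\lambda_r(\boldsymbol{M}_1)}.
\]

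Adding the two estimates gives $\bigl(1 + \tfrac{4}{3}\bigr)\|\boldsymbol{M}_1-\boldsymbol{M}_2\|/\lambda_r(\boldsymbol{M}_1) = \tfrac{7}{3}\|\boldsymbol{M}_1-\boldsymbol{M}_2\|/\lambda_r(\boldsymbol{M}_1) \leqslant 3\|\boldsymbol{M}_1-\boldsymbol{M}_2\|/\lambda_r(\boldsymbol{M}_1)$, which is the claimed bound. The main obstacle is not in the individual estimates, which are off-the-shelf, but in keeping the constants sharp enough that the sum stays $\leqslant 3$: this is precisely what dictates both the conversion of the quadratic term into a linear one via $\|\boldsymbol{M}_1-\boldsymbol{M}_2\|\leqslant\tfrac14\lambda_r(\boldsymbol{M}_1)$, and the use of the degraded eigengap $\tfrac{3}{4}\lambda_r(\boldsymbol{M}_1)$ (rather than $\lambda_r(\boldsymbol{M}_1)$) in the Davis--Kahan denominator. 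A minor bookkeeping point is that $\boldsymbol{M}_1$ is only assumed symmetric (not PSD), so one must track that the spectrum below index $r+1$ is nonpositive when invoking Weyl; this does not cause any genuine difficulty.
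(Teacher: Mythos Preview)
Your proof is correct. The paper itself does not supply a proof of this lemma: it is stated as a citation of \cite[Lemma 45]{ma2017implicit}, with only the remark that the PSD assumption there can be relaxed to a symmetric one. So there is no paper-side argument to compare against, and your route---split $\widetilde{\boldsymbol{U}}_2\widetilde{\boldsymbol{Q}}-\widetilde{\boldsymbol{U}}_1$ into a Procrustes residual controlled by Lemma~\ref{abbe_lemma3} and a $\sin\Theta$ term controlled by Davis--Kahan---is exactly the standard way this result is obtained.

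One small expository wrinkle: you write that ``the top-$r$ eigenspace of $\boldsymbol{M}_2$ is separated from its orthogonal complement by at least $\tfrac{3}{4}\lambda_r(\boldsymbol{M}_1)$,'' but the internal eigengap of $\boldsymbol{M}_2$ is only $\lambda_r(\boldsymbol{M}_2)-\lambda_{r+1}(\boldsymbol{M}_2)\geqslant\tfrac{1}{2}\lambda_r(\boldsymbol{M}_1)$ from your Weyl bounds. The $\tfrac{3}{4}\lambda_r(\boldsymbol{M}_1)$ you actually use is correct, but it is the \emph{cross} gap that Davis--Kahan needs, namely $\lambda_r(\boldsymbol{M}_1)-\lambda_{r+1}(\boldsymbol{M}_2)\geqslant\tfrac{3}{4}\lambda_r(\boldsymbol{M}_1)$ (or equivalently $\lambda_r(\boldsymbol{M}_2)-\lambda_{r+1}(\boldsymbol{M}_1)\geqslant\tfrac{3}{4}\lambda_r(\boldsymbol{M}_1)$). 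Tightening that sentence would make the argument airtight; the arithmetic and the final constant $7/3<3$ are fine as they stand.
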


\begin{lemma}[{\citealt[Lemma 46]{ma2017implicit}}]\label{ma_lemma46}
	Let $\boldsymbol{M}_1,\boldsymbol{M}_2,\boldsymbol{M}_3\in\mathbb{R}^{n\times n}$ be symmetric matrices with top-$r$ eigenvalue decomposition $\widetilde{\boldsymbol{U}}_1\boldsymbol{\Lambda}_1\widetilde{\boldsymbol{U}}_1^\top$, $\widetilde{\boldsymbol{U}}_2\boldsymbol{\Lambda}_2\widetilde{\boldsymbol{U}}_2^\top$ and $\widetilde{\boldsymbol{U}}_3\boldsymbol{\Lambda}_3\widetilde{\boldsymbol{U}}_3^\top$ correspondingly. Assume $\lambda_1(\boldsymbol{M}_1) = -\lambda_n(\boldsymbol{M}_1)$, $\lambda_{r}(\boldsymbol{M}_1)>0$, $\lambda_{r+1}(\boldsymbol{M}_1) = 0$ and $\|\boldsymbol{M}_1-\boldsymbol{M}_2\|\leqslant \frac{1}{4}\lambda_r(\boldsymbol{M}_1)$, $\|\boldsymbol{M}_1-\boldsymbol{M}_3\|\leqslant \frac{1}{4}\lambda_r(\boldsymbol{M}_1)$. Denote
	\[
		\widetilde{\boldsymbol{Q}} \coloneqq \argmin_{\boldsymbol{R}\in \mathsf{O}(r)}\|\widetilde{\boldsymbol{U}}_2\boldsymbol{R}-\widetilde{\boldsymbol{U}}_3\|_F.
	\]
	Then 
	\[
		\|\boldsymbol{\Lambda}_2^{1/2}\widetilde{\boldsymbol{Q}} - \widetilde{\boldsymbol{Q}}\boldsymbol{\Lambda}_3^{1/2}\| \leqslant 15 \frac{\lambda_1(\boldsymbol{M}_1)}{\lambda_{r}^{3/2}(\boldsymbol{M}_1)}\|\boldsymbol{M}_2-\boldsymbol{M}_3\|
		\]
		and
		\[
		 \|\boldsymbol{\Lambda}_2^{1/2}\widetilde{\boldsymbol{Q}} - \widetilde{\boldsymbol{Q}}\boldsymbol{\Lambda}_3^{1/2}\|_F \leqslant 15 \frac{\lambda_1(\boldsymbol{M}_1)}{\lambda_{r}^{3/2}(\boldsymbol{M}_1)}\|(\boldsymbol{M}_2-\boldsymbol{M}_3)\widetilde{\boldsymbol{U}}_2\|_F.
	\]
\end{lemma}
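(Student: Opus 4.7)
My plan is to reduce the claim to a Sylvester-type equation in the unknown $\boldsymbol{E}\coloneqq \boldsymbol{\Lambda}_2^{1/2}\widetilde{\boldsymbol{Q}}-\widetilde{\boldsymbol{Q}}\boldsymbol{\Lambda}_3^{1/2}$ and then control its right-hand side via a $\sin\Theta$-type perturbation estimate. Starting from $\boldsymbol{M}_k\widetilde{\boldsymbol{U}}_k=\widetilde{\boldsymbol{U}}_k\boldsymbol{\Lambda}_k$ for $k=2,3$ and using symmetry, multiplying on the left by $\widetilde{\boldsymbol{U}}_2^\top$ and on the right by $\widetilde{\boldsymbol{U}}_3$ gives the key commutation identity
\[
\widetilde{\boldsymbol{U}}_2^\top(\boldsymbol{M}_2-\boldsymbol{M}_3)\widetilde{\boldsymbol{U}}_3=\boldsymbol{\Lambda}_2\boldsymbol{P}-\boldsymbol{P}\boldsymbol{\Lambda}_3,\qquad \boldsymbol{P}\coloneqq\widetilde{\boldsymbol{U}}_2^\top\widetilde{\boldsymbol{U}}_3 .
\]
Writing $\boldsymbol{P}=\widetilde{\boldsymbol{Q}}+(\boldsymbol{P}-\widetilde{\boldsymbol{Q}})$ and using the algebraic factorisation $\boldsymbol{\Lambda}_2\widetilde{\boldsymbol{Q}}-\widetilde{\boldsymbol{Q}}\boldsymbol{\Lambda}_3=\boldsymbol{\Lambda}_2^{1/2}\boldsymbol{E}+\boldsymbol{E}\boldsymbol{\Lambda}_3^{1/2}$, I would rearrange to
\[
\boldsymbol{\Lambda}_2^{1/2}\boldsymbol{E}+\boldsymbol{E}\boldsymbol{\Lambda}_3^{1/2}=\widetilde{\boldsymbol{U}}_2^\top(\boldsymbol{M}_2-\boldsymbol{M}_3)\widetilde{\boldsymbol{U}}_3-\boldsymbol{\Lambda}_2(\boldsymbol{P}-\widetilde{\boldsymbol{Q}})+(\boldsymbol{P}-\widetilde{\boldsymbol{Q}})\boldsymbol{\Lambda}_3\eqcolon\boldsymbol{F}.
\]
By Weyl's inequality the hypothesis $\|\boldsymbol{M}_1-\boldsymbol{M}_k\|\leqslant\tfrac14\lambda_r(\boldsymbol{M}_1)$ guarantees $\lambda_r(\boldsymbol{M}_k)\geqslant\tfrac34\lambda_r(\boldsymbol{M}_1)>0$ and $\lambda_1(\boldsymbol{M}_k)\leqslant\tfrac54\lambda_1(\boldsymbol{M}_1)$, so $\boldsymbol{\Lambda}_2^{1/2}$ and $\boldsymbol{\Lambda}_3^{1/2}$ are positive diagonal matrices with smallest entry at least $\sqrt{3\lambda_r(\boldsymbol{M}_1)/4}$. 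Since the Sylvester solution is given entrywise by $E_{ij}=F_{ij}/(\lambda_i^{1/2}(\boldsymbol{M}_2)+\lambda_j^{1/2}(\boldsymbol{M}_3))$, for any unitarily invariant norm one obtains $\|\boldsymbol{E}\|\leqslant\|\boldsymbol{F}\|/\sqrt{3\lambda_r(\boldsymbol{M}_1)}$ (and likewise for $\|\cdot\|_F$).

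It remains to bound $\|\boldsymbol{F}\|$. The first term contributes at most $\|\boldsymbol{M}_2-\boldsymbol{M}_3\|$ in spectral norm, and $\|\widetilde{\boldsymbol{U}}_2^\top(\boldsymbol{M}_2-\boldsymbol{M}_3)\widetilde{\boldsymbol{U}}_3\|_F\leqslant\|(\boldsymbol{M}_2-\boldsymbol{M}_3)\widetilde{\boldsymbol{U}}_2\|_F$ by the symmetry of $\boldsymbol{M}_2-\boldsymbol{M}_3$. The other two terms are controlled by $\tfrac54\lambda_1(\boldsymbol{M}_1)\|\boldsymbol{P}-\widetilde{\boldsymbol{Q}}\|$ (spectral) or the analogous Frobenius version. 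The crux is then to show the linear bound $\|\boldsymbol{P}-\widetilde{\boldsymbol{Q}}\|\leqslant C\|\boldsymbol{M}_2-\boldsymbol{M}_3\|/\lambda_r(\boldsymbol{M}_1)$. For this I would exploit the fact that all singular values of $\boldsymbol{P}=\widetilde{\boldsymbol{U}}_2^\top\widetilde{\boldsymbol{U}}_3$ lie in $[0,1]$, so $\|\boldsymbol{P}-\widetilde{\boldsymbol{Q}}\|=1-\sigma_{\min}(\boldsymbol{P})\leqslant\|\sin\Theta(\widetilde{\boldsymbol{U}}_2,\widetilde{\boldsymbol{U}}_3)\|^2$, and then apply the Davis--Kahan $\sin\Theta$ theorem to the pair $(\boldsymbol{M}_2,\boldsymbol{M}_3)$ with the gap $\lambda_r(\boldsymbol{M}_2)-\lambda_{r+1}(\boldsymbol{M}_3)\geqslant\tfrac12\lambda_r(\boldsymbol{M}_1)$ to obtain $\|\sin\Theta\|\leqslant 2\|\boldsymbol{M}_2-\boldsymbol{M}_3\|/\lambda_r(\boldsymbol{M}_1)$.

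The main obstacle is precisely this last step: Davis--Kahan naturally produces a \emph{quadratic} bound in $\|\boldsymbol{M}_2-\boldsymbol{M}_3\|$ for $\|\boldsymbol{P}-\widetilde{\boldsymbol{Q}}\|$, whereas the target bound must be linear. The key observation that saves the day is that the a priori assumption $\|\boldsymbol{M}_2-\boldsymbol{M}_3\|\leqslant\|\boldsymbol{M}_1-\boldsymbol{M}_2\|+\|\boldsymbol{M}_1-\boldsymbol{M}_3\|\leqslant\tfrac12\lambda_r(\boldsymbol{M}_1)$ absorbs one factor of $\|\boldsymbol{M}_2-\boldsymbol{M}_3\|/\lambda_r(\boldsymbol{M}_1)$, converting the quadratic $\sin\Theta$ bound into a linear bound of order $\|\boldsymbol{M}_2-\boldsymbol{M}_3\|/\lambda_r(\boldsymbol{M}_1)$. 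For the Frobenius assertion, I will use the sharper $\|\boldsymbol{P}-\widetilde{\boldsymbol{Q}}\|_F\leqslant\|\sin\Theta\|\cdot\|\sin\Theta\|_F$ together with the Frobenius-norm Davis--Kahan bound $\|\sin\Theta\|_F\lesssim\|(\boldsymbol{M}_2-\boldsymbol{M}_3)\widetilde{\boldsymbol{U}}_2\|_F/\lambda_r(\boldsymbol{M}_1)$, so that the $\widetilde{\boldsymbol{U}}_2$-weighting on the right-hand side is preserved. Combining these pieces with $\|\boldsymbol{\Lambda}_k\|\leqslant\tfrac54\lambda_1(\boldsymbol{M}_1)$ yields $\|\boldsymbol{F}\|\lesssim\lambda_1(\boldsymbol{M}_1)/\lambda_r(\boldsymbol{M}_1)\cdot\|\boldsymbol{M}_2-\boldsymbol{M}_3\|$ and analogously for the $F$-norm, and dividing by $\sqrt{3\lambda_r(\boldsymbol{M}_1)}$ from the Sylvester step recovers both claimed bounds with a universal constant that can be tracked to give the stated value $15$.
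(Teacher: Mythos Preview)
The paper does not actually prove this lemma: it is quoted from \cite{ma2017implicit} (Lemma~46 therein), with only the remark in Section~\ref{sec:initial_lemmas} that the PSD hypothesis on $\boldsymbol{M}_1$ can be relaxed to symmetry ``by following the proof framework introduced in \cite{ma2017implicit}.'' So there is no proof in the present paper to compare against.

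That said, your outline is the natural argument and matches what one expects from the original source. The commutation identity $\widetilde{\boldsymbol{U}}_2^\top(\boldsymbol{M}_2-\boldsymbol{M}_3)\widetilde{\boldsymbol{U}}_3=\boldsymbol{\Lambda}_2\boldsymbol{P}-\boldsymbol{P}\boldsymbol{\Lambda}_3$ together with the factorisation $\boldsymbol{\Lambda}_2\widetilde{\boldsymbol{Q}}-\widetilde{\boldsymbol{Q}}\boldsymbol{\Lambda}_3=\boldsymbol{\Lambda}_2^{1/2}\boldsymbol{E}+\boldsymbol{E}\boldsymbol{\Lambda}_3^{1/2}$ is exactly the standard route, and your handling of the $\boldsymbol{P}-\widetilde{\boldsymbol{Q}}$ term via the quadratic $\sin\Theta$ bound (then absorbing one factor through $\|\boldsymbol{M}_2-\boldsymbol{M}_3\|\leqslant\tfrac12\lambda_r(\boldsymbol{M}_1)$) is the right mechanism for recovering linearity. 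The Frobenius refinement via $\|\boldsymbol{P}-\widetilde{\boldsymbol{Q}}\|_F\leqslant\|\sin\Theta\|\cdot\|\sin\Theta\|_F$ combined with the $\widetilde{\boldsymbol{U}}_2$-weighted Davis--Kahan bound is also correct.

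One small point worth tightening: the passage from the entrywise Sylvester formula $E_{ij}=F_{ij}/(\lambda_i^{1/2}+\lambda_j^{1/2})$ to the \emph{spectral}-norm bound is not immediate from the entrywise description alone (a Schur multiplier does not in general contract $\|\cdot\|$ by its $\ell_\infty$ entry). The clean justification is the integral representation
\[
\boldsymbol{E}=\int_0^\infty e^{-t\boldsymbol{\Lambda}_2^{1/2}}\,\boldsymbol{F}\,e^{-t\boldsymbol{\Lambda}_3^{1/2}}\,dt,
\]
which gives $\VERT\boldsymbol{E}\VERT\leqslant\VERT\boldsymbol{F}\VERT\big/\bigl(\lambda_{\min}(\boldsymbol{\Lambda}_2^{1/2})+\lambda_{\min}(\boldsymbol{\Lambda}_3^{1/2})\bigr)$ for every unitarily invariant norm. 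With this in place your argument is complete, and the constants indeed track to at most $15$.
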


\begin{lemma}[{\citealt[Lemma 47]{ma2017implicit}}]\label{ma_lemma47}
	Let $\boldsymbol{M}_1,\boldsymbol{M}_2\in\mathbb{R}^{n\times n}$ be symmetric matrices with top-$r$ eigenvalue decomposition $\widetilde{\boldsymbol{U}}_1\boldsymbol{\Lambda}_1\widetilde{\boldsymbol{U}}_1^\top$ and $\widetilde{\boldsymbol{U}}_2\boldsymbol{\Lambda}_2\widetilde{\boldsymbol{U}}_2^\top$ correspondingly. Assume $\lambda_1(\boldsymbol{M}_1) = -\lambda_n(\boldsymbol{M}_1),\lambda_{r}(\boldsymbol{M}_1)>0, \lambda_{r+1}(\boldsymbol{M}_1) = 0$ and 
	\[
		\|\boldsymbol{M}_1-\boldsymbol{M}_2\|\leqslant \frac{1}{40} \frac{\lambda_r^{5/2}(\boldsymbol{M}_1)}{\lambda_1^{3/2}(\boldsymbol{M}_1)}.
	\]
	Denote $\boldsymbol{X}_1 = \widetilde{\boldsymbol{U}}_1\boldsymbol{\Lambda}_1^{1/2}$ and $\boldsymbol{X}_2 = \widetilde{\boldsymbol{U}}_2\boldsymbol{\Lambda}_2^{1/2}$ and define
	\[
		\widetilde{\boldsymbol{Q}} \coloneqq \argmin_{\boldsymbol{R}\in \mathsf{O}(r)}\|\widetilde{\boldsymbol{U}}_2\boldsymbol{R}-\widetilde{\boldsymbol{U}}_1\|_F 
		\]
		and
		\[
		  \boldsymbol{H} \coloneqq \argmin_{\boldsymbol{R}\in \mathsf{O}(r)}\| \boldsymbol{X}_2\boldsymbol{R}-\boldsymbol{X}_1\|_F.
	\]
	Then
	\[
		\|\widetilde{\boldsymbol{Q}} - \boldsymbol{H}\|\leqslant 15\frac{\lambda_1^{3/2}(\boldsymbol{M}_1)}{ \lambda_r^{5/2}(\boldsymbol{M}_1) }\|\boldsymbol{M}_1-\boldsymbol{M}_2\|
	\]
	holds.
\end{lemma}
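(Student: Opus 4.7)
}
My plan is to recognise both matrices as sign (i.e.\ orthogonal polar) factors and then reduce everything to a single application of Lemma~\ref{ma_lemma36}. By the standard solution to the orthogonal Procrustes problem, $\widetilde{\boldsymbol{Q}} = \operatorname{sgn}(\widetilde{\boldsymbol{U}}_2^\top \widetilde{\boldsymbol{U}}_1)$ and $\boldsymbol{H} = \operatorname{sgn}(\boldsymbol{X}_2^\top \boldsymbol{X}_1) = \operatorname{sgn}\!\bigl(\boldsymbol{\Lambda}_2^{1/2}\widetilde{\boldsymbol{U}}_2^\top \widetilde{\boldsymbol{U}}_1\boldsymbol{\Lambda}_1^{1/2}\bigr)$. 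The idea is to write $\boldsymbol{X}_2^\top \boldsymbol{X}_1 = \boldsymbol{C} + \boldsymbol{E}$ with $\boldsymbol{C} := \widetilde{\boldsymbol{Q}}\boldsymbol{\Lambda}_1$, because $\widetilde{\boldsymbol{Q}}\in\mathsf{O}(r)$ and $\boldsymbol{\Lambda}_1$ is symmetric positive definite immediately give $\operatorname{sgn}(\boldsymbol{C}) = \widetilde{\boldsymbol{Q}}$ and $\sigma_i(\boldsymbol{C}) = \lambda_i(\boldsymbol{M}_1)$ for $i=1,\ldots,r$. Then Lemma~\ref{ma_lemma36} applied to $\boldsymbol{C}+\boldsymbol{E}$ yields
\[
\|\widetilde{\boldsymbol{Q}}-\boldsymbol{H}\| \;\leqslant\; \frac{2}{\lambda_{r-1}(\boldsymbol{M}_1)+\lambda_r(\boldsymbol{M}_1)}\,\|\boldsymbol{E}\| \;\leqslant\; \frac{1}{\lambda_r(\boldsymbol{M}_1)}\,\|\boldsymbol{E}\|,
\]
so the remaining task is to bound $\|\boldsymbol{E}\|$ of order $\lambda_1^{3/2}(\boldsymbol{M}_1)\lambda_r^{-3/2}(\boldsymbol{M}_1)\,\|\boldsymbol{M}_1-\boldsymbol{M}_2\|$ and to verify the perturbation hypothesis $\|\boldsymbol{E}\|\leqslant \sigma_r(\boldsymbol{C}) = \lambda_r(\boldsymbol{M}_1)$.

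To estimate $\boldsymbol{E}$, I would factor out $\boldsymbol{\Lambda}_1^{1/2}$ on the right and telescope:
\[
\boldsymbol{E} \;=\; \bigl(\boldsymbol{\Lambda}_2^{1/2}\widetilde{\boldsymbol{U}}_2^\top\widetilde{\boldsymbol{U}}_1 - \widetilde{\boldsymbol{Q}}\boldsymbol{\Lambda}_1^{1/2}\bigr)\boldsymbol{\Lambda}_1^{1/2}
\;=\; \Bigl[\boldsymbol{\Lambda}_2^{1/2}\bigl(\widetilde{\boldsymbol{U}}_2^\top\widetilde{\boldsymbol{U}}_1-\widetilde{\boldsymbol{Q}}\bigr) + \bigl(\boldsymbol{\Lambda}_2^{1/2}\widetilde{\boldsymbol{Q}}-\widetilde{\boldsymbol{Q}}\boldsymbol{\Lambda}_1^{1/2}\bigr)\Bigr]\boldsymbol{\Lambda}_1^{1/2}.
\]
The first bracketed term is controlled by Lemma~\ref{abbe_lemma3} (which gives $\|\widetilde{\boldsymbol{U}}_2^\top\widetilde{\boldsymbol{U}}_1-\widetilde{\boldsymbol{Q}}\|\lesssim \|\boldsymbol{M}_1-\boldsymbol{M}_2\|^2/\lambda_r^2(\boldsymbol{M}_1)$, which under the hypothesis is absorbed into the linear term) combined with the Weyl bound $\|\boldsymbol{\Lambda}_2^{1/2}\|\leqslant \sqrt{\lambda_1(\boldsymbol{M}_1)+\|\boldsymbol{M}_1-\boldsymbol{M}_2\|}\lesssim \lambda_1^{1/2}(\boldsymbol{M}_1)$. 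The second bracketed term is exactly the intertwining difference bounded by Lemma~\ref{ma_lemma46} (used with $\boldsymbol{M}_3=\boldsymbol{M}_1$, so that its $\widetilde{\boldsymbol{Q}}$ coincides with ours), giving $\|\boldsymbol{\Lambda}_2^{1/2}\widetilde{\boldsymbol{Q}}-\widetilde{\boldsymbol{Q}}\boldsymbol{\Lambda}_1^{1/2}\|\leqslant 15\,\lambda_1(\boldsymbol{M}_1)\lambda_r^{-3/2}(\boldsymbol{M}_1)\|\boldsymbol{M}_1-\boldsymbol{M}_2\|$. Multiplying through by $\|\boldsymbol{\Lambda}_1^{1/2}\|=\lambda_1^{1/2}(\boldsymbol{M}_1)$ yields the advertised bound $\|\boldsymbol{E}\|\lesssim \lambda_1^{3/2}(\boldsymbol{M}_1)\lambda_r^{-3/2}(\boldsymbol{M}_1)\,\|\boldsymbol{M}_1-\boldsymbol{M}_2\|$, and dividing by $\lambda_r(\boldsymbol{M}_1)$ gives the desired $\lambda_1^{3/2}(\boldsymbol{M}_1)/\lambda_r^{5/2}(\boldsymbol{M}_1)$ rate.

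Finally I would verify the hypothesis of Lemma~\ref{ma_lemma36}: the perturbation assumption $\|\boldsymbol{M}_1-\boldsymbol{M}_2\|\leqslant \tfrac{1}{40}\lambda_r^{5/2}(\boldsymbol{M}_1)\lambda_1^{-3/2}(\boldsymbol{M}_1)$ is calibrated precisely so that $\|\boldsymbol{E}\|$ stays strictly below $\lambda_r(\boldsymbol{M}_1)=\sigma_r(\widetilde{\boldsymbol{Q}}\boldsymbol{\Lambda}_1)$; this also implies $\|\boldsymbol{M}_1-\boldsymbol{M}_2\|\leqslant \lambda_r(\boldsymbol{M}_1)/40$, which in turn justifies invoking Lemmas~\ref{abbe_lemma3} and~\ref{ma_lemma46} (whose own $\tfrac{1}{4}\lambda_r$ hypotheses are comfortably satisfied). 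The main obstacle I anticipate is bookkeeping the constant so that the factor $15$ emerges cleanly: the telescoping step must be done so that the quadratic-in-$\|\boldsymbol{M}_1-\boldsymbol{M}_2\|$ contribution from Lemma~\ref{abbe_lemma3} and the Weyl slack in $\|\boldsymbol{\Lambda}_2^{1/2}\|$ are both absorbed into the dominant term from Lemma~\ref{ma_lemma46}, using the hypothesis as a one-shot smallness input; everything else is linear algebra.
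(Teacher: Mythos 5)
The paper does not actually prove this lemma: it is imported verbatim from \citet{ma2017implicit} (with the PSD hypothesis relaxed to symmetric), and Section \ref{sec:initial_lemmas} only states it alongside Lemmas \ref{ma_lemma45}--\ref{ma_lemma46} with the remark that they follow from the Davis--Kahan $\sin\Theta$ theorem. So there is no in-paper proof to compare against, and your proposal must be judged on its own.

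Your plan is the right one and essentially works. The Procrustes identifications $\widetilde{\boldsymbol{Q}}=\sgn(\widetilde{\boldsymbol{U}}_2^\top\widetilde{\boldsymbol{U}}_1)$ and $\boldsymbol{H}=\sgn(\boldsymbol{X}_2^\top\boldsymbol{X}_1)$ are correct; the ansatz $\boldsymbol{C}=\widetilde{\boldsymbol{Q}}\boldsymbol{\Lambda}_1$ indeed satisfies $\sgn(\boldsymbol{C})=\widetilde{\boldsymbol{Q}}$ and $\sigma_i(\boldsymbol{C})=\lambda_i(\boldsymbol{M}_1)$, so Lemma \ref{ma_lemma36} applies once $\|\boldsymbol{E}\|\leqslant\lambda_r(\boldsymbol{M}_1)$; the telescoping of $\boldsymbol{E}$ into a $\boldsymbol{\Lambda}_2^{1/2}(\widetilde{\boldsymbol{U}}_2^\top\widetilde{\boldsymbol{U}}_1-\widetilde{\boldsymbol{Q}})\boldsymbol{\Lambda}_1^{1/2}$ piece (Lemma \ref{abbe_lemma3}, quadratic) and a $(\boldsymbol{\Lambda}_2^{1/2}\widetilde{\boldsymbol{Q}}-\widetilde{\boldsymbol{Q}}\boldsymbol{\Lambda}_1^{1/2})\boldsymbol{\Lambda}_1^{1/2}$ piece (Lemma \ref{ma_lemma46} with $\boldsymbol{M}_3=\boldsymbol{M}_1$) is exactly the right reduction, and you are correct that the $\tfrac{1}{40}$ hypothesis comfortably guarantees $\|\boldsymbol{E}\|<\lambda_r(\boldsymbol{M}_1)$ and also the $\tfrac{1}{4}\lambda_r$ hypotheses of the auxiliary lemmas.

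The one concrete issue is the constant, and you flag it yourself but slightly underestimate the difficulty: after dividing by $\lambda_r(\boldsymbol{M}_1)$, the Lemma \ref{ma_lemma46} term \emph{already} contributes the full $15\,\lambda_1^{3/2}\lambda_r^{-5/2}\|\boldsymbol{M}_1-\boldsymbol{M}_2\|$, so the strictly positive Lemma \ref{abbe_lemma3} contribution (which the hypothesis caps at roughly $0.1\,\lambda_1^{3/2}\lambda_r^{-5/2}\|\boldsymbol{M}_1-\boldsymbol{M}_2\|$) can only push the total above $15$, to about $15.1$. That extra piece cannot be ``absorbed into the dominant term'' using only the stated forms of Lemmas \ref{abbe_lemma3} and \ref{ma_lemma46}; one would need to reopen the proof of Lemma \ref{ma_lemma46} and show that its constant has slack, or use $\sigma_{r-1}(\boldsymbol{C})+\sigma_r(\boldsymbol{C})$ rather than $2\sigma_r(\boldsymbol{C})$ when $\lambda_{r-1}>\lambda_r$. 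For the way the lemma is used elsewhere in the paper this is immaterial (only the order $\lambda_1^{3/2}/\lambda_r^{5/2}$ matters), but if you want to reproduce $15$ exactly you should not treat the quadratic term as absorbable as a matter of course.
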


\subsection{Proof}
%\XL{At the beginning, explain on the high level how we follow the analytical frameworks in \cite{ma2017implicit,abbe2017entrywise} while in which sense there is technical novelty.}
In this subsection, we will follow the technical framework in \cite{ma2017implicit}: First we give an upper bound of $\|\frac{1}{p}\mathcal{P}_{\Omega}(\boldsymbol{M})-\boldsymbol{M}\|$, and then prove Lemma \ref{lemma_initialization} by applying the lemmas introduced in Section \ref{sec:initial_lemmas}. As claimed before, here we replace \citet[Lemma 39]{ma2017implicit} with \citet[Lemma 2]{chen2015incoherence} to give an upper bound of $\|\frac{1}{p}\mathcal{P}_{\Omega}(\boldsymbol{M})-\boldsymbol{M}\|$ and obtain a tighter error bound of the initializations.

% To begin with, we introduce the following modified version of {\cite[Lemma 2]{chen2015incoherence}} in order to replace Lemma 39 in \cite{ma2017implicit}:

% Noting
% \[
% 	\frac{1}{p}\mathcal{P}_{\Omega_{-l}}(\boldsymbol{M}) + \mathcal{P}_{l}(\boldsymbol{M}) - \boldsymbol{M} = \sum_{i,j\in[n]\times [n],i\neq l,j\neq l} \underbrace{\left(\frac{1}{p}\delta_{i,j}-1\right)M_{i,j}\boldsymbol{e}_i\boldsymbol{e}_j^\top}_{\boldsymbol{S}_{i,j}},
% \]
% \eqref{eq_chen02} follows by the same proof technique of \cite[Lemma 2]{chen2015incoherence}. Then Lemma \ref{chen_lemma2} follows by taking the union bound.

%\XL{The paragraphs below needs to be carefully rephrased.}

Define the symmetric matrix 
\begin{equation}\label{eq:Mbar}
	\overline{\boldsymbol{M}} \coloneqq \left[\begin{array}{cc}
		\boldsymbol{0} & \boldsymbol{M}\\\boldsymbol{M}^\top & \boldsymbol{0}
	\end{array}\right].
\end{equation}
The SVD $\mtx{M}=\widetilde{\mtx{U}}\mtx{\Sigma}\widetilde{\mtx{V}}^\top$ implies the following eigenvalue decomposition of $\overline{\mtx{M}}$:
\[
	\overline{\boldsymbol{M}}  = \frac{1}{\sqrt{2}} \left[ \begin{array}{cc} \widetilde{\boldsymbol{U}} & \widetilde{\boldsymbol{U}}\\ \widetilde{\boldsymbol{V}} & -\widetilde{\boldsymbol{V}}\end{array}\right] \left[\begin{array}{cc} \boldsymbol{\Sigma} & \boldsymbol{0}\\\boldsymbol{0}& -\boldsymbol{\Sigma}\end{array}\right]\frac{1}{\sqrt{2}} \left[ \begin{array}{cc} \widetilde{\boldsymbol{U}} & \widetilde{\boldsymbol{U}}\\ \widetilde{\boldsymbol{V}} & -\widetilde{\boldsymbol{V}}\end{array}\right]^\top. 
\]
From the eigenvalue decomposition, we can see $\lambda_1(\overline{\boldsymbol{M}}) = \sigma_1(\boldsymbol{M})$, $\cdots$, $\lambda_r(\overline{\boldsymbol{M}}) = \sigma_r(\boldsymbol{M})$, $\lambda_{r+1}(\overline{\boldsymbol{M}}) = 0$, $\cdots$, $\lambda_{n_1+n_2-r}(\overline{\boldsymbol{M}}) = 0$, $\lambda_{n_1+n_2-r+1}(\overline{\boldsymbol{M}}) = -\sigma_r(\boldsymbol{M})$, $\cdots$, $\lambda_{n_1+n_2}(\overline{\boldsymbol{M}}) = -\sigma_1(\boldsymbol{M})$. At the same time, we define
\[
	\frac{1}{p}\mathcal{P}_{\overline{\Omega}}(\overline{\boldsymbol{M}}) = \left[\begin{array}{cc}
		\boldsymbol{0} & \frac{1}{p}\mathcal{P}_{\Omega}(\boldsymbol{M})\\\frac{1}{p}\mathcal{P}_{\Omega}(\boldsymbol{M})^\top & \boldsymbol{0}
	\end{array}\right]
\]
with
\[
\begin{split}
	\overline{\Omega} \coloneqq \{(i,j)\mid& 1\leqslant i,j\leqslant n_1+n_2,\; (i,j-n_1)\in \Omega\\
	& \;\textrm{or}\; (j,i-n_1)\in\Omega\}.
\end{split}
\]
% Assume $\frac{1}{p}\mathcal{P}_{\Omega}(\boldsymbol{M})$ has SVD $\boldsymbol{A}\boldsymbol{D}\boldsymbol{B}^\top$, then $\frac{1}{p}\mathcal{P}_{\overline{\Omega}}(\overline{\boldsymbol{M}})$ have eigenvalue decomposition of the form
% \[
% 	\frac{1}{p}\mathcal{P}_{\overline{\Omega}}(\overline{\boldsymbol{M}}) = \frac{1}{\sqrt{2}} \left[ \begin{array}{cc} \boldsymbol{A} & \boldsymbol{A}\\\boldsymbol{B} & -\boldsymbol{B} \end{array}\right] \left[ \begin{array}{cc} \boldsymbol{D} &\boldsymbol{0} \\ \boldsymbol{0}&-\boldsymbol{D} \end{array}\right]\frac{1}{\sqrt{2}} \left[ \begin{array}{cc} \boldsymbol{A} & \boldsymbol{A}\\\boldsymbol{B} & -\boldsymbol{B} \end{array}\right]^\top. 
% \]
% Therefore, given that $\widetilde{\boldsymbol{X}}^0 \boldsymbol{\Sigma}^0(\widetilde{\boldsymbol{Y}}^0)^\top$ is the top-$r$ partial singular value decomposition of $\frac{1}{p}\mathcal{P}_{\Omega}(\boldsymbol{M})$, the top-$r$ eigenvalue decomposition of $\frac{1}{p}\mathcal{P}_{\overline{\Omega}}(\overline{\boldsymbol{M}})$ turns out to be
% \[
% 	\frac{1}{\sqrt{2}}\left[ \begin{array}{c} \widetilde{\boldsymbol{X}}^0 \\ \widetilde{\boldsymbol{Y}}^0 \end{array} \right]\boldsymbol{\Sigma}^0 \frac{1}{\sqrt{2}}\left[ \begin{array}{c} \widetilde{\boldsymbol{X}}^0 \\ \widetilde{\boldsymbol{Y}}^0 \end{array} \right]^\top.
% \]

Applying Lemma \ref{chen_lemma2} on $\overline{\boldsymbol{M}}$ here, then 
\begin{equation}\label{eigen-gap} 
	\begin{split}
		 &\left\| \frac{1}{p}\mathcal{P}_{\overline{\Omega}}(\overline{\boldsymbol{M}}) - \overline{\boldsymbol{M}} \right\| \\
		 \leqslant & C_4 \left( \frac{\log (n_1+n_2)}{p} \|\overline{\boldsymbol{M}}\|_{\ell_{\infty}} +\sqrt{\frac{\log (n_1+n_2)}{p}}\|\overline{\boldsymbol{M}}\|_{2,\infty} \right)\\
		 \leqslant & C_4 \left( \frac{\log (n_1+n_2)}{p}\|\boldsymbol{U}\|_{2,\infty}\|\boldsymbol{V}\|_{2,\infty}  + \sqrt{\frac{\log (n_1+n_2)}{p}} (\|\boldsymbol{U}\|\|\boldsymbol{V}\|_{2,\infty} \vee \|\boldsymbol{V}\|\|\boldsymbol{U}\|_{2,\infty})\right)\\
		 \leqslant & 2C_4 \left( \frac{\mu r \kappa \log (n_1\vee n_2)}{(n_1\wedge n_2)p} + \sqrt{ \frac{\mu r \kappa \log (n_1\vee n_2)}{(n_1\wedge n_2)p}} \right)\sigma_1(\boldsymbol{M})\\
		 \leqslant & 4C_4  \sqrt{ \frac{\mu r \kappa \log (n_1\vee n_2)}{(n_1\wedge n_2)p}} \sigma_1(\boldsymbol{M})
	\end{split}
\end{equation}
	holds on an event $E_{Ch1}$ with probability $\mathbb{P}[E_{Ch1}]\geqslant 1-(n_1+n_2)^{-11}$. The last inequality holds given 
	\[
		p \geqslant  \frac{\mu r \kappa \log(n_1\vee n_2)}{n_1\wedge n_2}.
	\]
In addition if
\[
	p\geqslant 25600C_4^2 \frac{\mu r \kappa^6\log (n_1\vee n_2)}{n_1\wedge n_2},
\]
we have
\begin{equation}\label{eq_086}
	\left\| \frac{1}{p}\mathcal{P}_{\overline{\Omega}}(\overline{\boldsymbol{M}}) - \overline{\boldsymbol{M}} \right\|  \leqslant \frac{1}{40\sqrt{\kappa}^3}\sigma_r(\boldsymbol{M}) \leqslant \frac{1}{4}\sigma_r(\boldsymbol{M})
\end{equation}
holds on an event $E_{Ch1}$.

For the simplicity of notations, we denote $\overline{\boldsymbol{M}}^0$ as 
\begin{equation}\label{eq:M0bar}
	\overline{\boldsymbol{M}}^0 \coloneqq \left[ \begin{array}{cc}
		\boldsymbol{0} & \boldsymbol{M}^0\\
		(\boldsymbol{M}^0)^\top & \boldsymbol{0}
	\end{array}\right],
\end{equation}
and denote $\overline{\boldsymbol{M}}^{0,(l)}$ as 
\begin{equation}\label{eq:M0lbar}
	\overline{\boldsymbol{M}}^{0,(l)}\coloneqq \left[ \begin{array}{cc}
		\boldsymbol{0} & \boldsymbol{M}^{0,(l)} \\ (\boldsymbol{M}^{0,(l)})^\top & \boldsymbol{0}
	\end{array} \right].	 
\end{equation}
$\boldsymbol{M}^0$ and $\boldsymbol{M}^{0,(l)}$ are defined in \eqref{eq:M0} and \eqref{eq:M0l}, correspondingly.

% $\frac{1}{p}\mathcal{P}_{\Omega}(\boldsymbol{M})$ is denoted as $\boldsymbol{M}^0$ and , and . And 

% For the simplicity of notations, we denote $\frac{1}{p}\mathcal{P}_{\Omega}(\boldsymbol{M})$ as $\boldsymbol{M}^0$, $\frac{1}{p}\mathcal{P}_{\overline{\Omega}}(\overline{\boldsymbol{M}})$ as $\overline{\boldsymbol{M}}^0$, similarly, one can also define $\boldsymbol{M}^{0,(l)}$ and $\overline{\boldsymbol{M}}^{0,(l)}$ correspondingly:
% \[
% 	\boldsymbol{M}^{0,(l)}\coloneqq\left\{\begin{array}{ll}
% \frac{1}{p}\mathcal{P}_{\Omega_{-l,\cdot}}(\boldsymbol{M})+\mathcal{P}_{l,\cdot}(\boldsymbol{M}) & 1\leqslant l\leqslant n_1,
% \\
%  \frac{1}{p}\mathcal{P}_{\Omega_{\cdot,-(l-n_1)}}(\boldsymbol{M})+\mathcal{P}_{\cdot,l-n_1}(\boldsymbol{M}) & n_1+1\leqslant l\leqslant n_1+n_2
% \end{array}\right.
% \]
% and 
% \[
% 	\overline{\boldsymbol{M}}^{0,(l)}\coloneqq \left[ \begin{array}{cc}
% 		\boldsymbol{0} & \boldsymbol{M}^{0,(l)} \\ (\boldsymbol{M}^{0,(l)})^\top & \boldsymbol{0}
% 	\end{array} \right].	 
% \]
%\XL{Give the detailed definition. I don't understand how they are defined "correspondingly".}

Again by Lemma \ref{chen_lemma2}, we can see on an event $E_{Ch1}$, for all $1\leqslant l\leqslant n_1+n_2$, 
\[
	 \left\| \overline{\boldsymbol{M}}^{0,(l)} - \overline{\boldsymbol{M}} \right\|  \leqslant  4C_4  \sqrt{ \frac{\mu r \kappa \log (n_1\vee n_2)}{(n_1\wedge n_2)p}} \sigma_1(\boldsymbol{M}).
\]
If
\[
	p\geqslant 25600 C_4^2 \frac{\mu r \kappa^6\log (n_1\vee n_2)}{n_1\wedge n_2},
\]
we also have 
\begin{equation}\label{eq_039}
	 \left\| \overline{\boldsymbol{M}}^{0,(l)} - \overline{\boldsymbol{M}} \right\| \leqslant \frac{1}{40\sqrt{\kappa}^3}\sigma_r(\boldsymbol{M}) \leqslant \frac{1}{4}\sigma_r(\boldsymbol{M}).
\end{equation}

Now assume $\boldsymbol{M}^0$ has SVD $\boldsymbol{A}\boldsymbol{D}\boldsymbol{B}^\top$, then by construction, $\overline{\boldsymbol{M}}^0$ have following eigendecomposition:
\[
	\overline{\boldsymbol{M}}^0 = \frac{1}{\sqrt{2}} \left[ \begin{array}{cc} \boldsymbol{A}&\boldsymbol{A}\\\boldsymbol{B} & -\boldsymbol{B}\end{array} \right]\left[ \begin{array}{cc} \boldsymbol{D}&\boldsymbol{0}\\\boldsymbol{0}&-\boldsymbol{D}  \end{array} \right] \frac{1}{\sqrt{2}} \left[ \begin{array}{cc} \boldsymbol{A}&\boldsymbol{A}\\\boldsymbol{B} & -\boldsymbol{B}\end{array} \right]^\top.
\]
So if $\widetilde{\boldsymbol{X}}^0 \boldsymbol{\Sigma}^0(\widetilde{\boldsymbol{Y}}^0)^\top$ is the top-$r$ singular value decomposition of $\boldsymbol{M}^0$, we can also have 
\[
	\frac{1}{\sqrt{2}}\left[ \begin{array}{c}\widetilde{\boldsymbol{X}}^0 \\ \widetilde{\boldsymbol{Y}}^0 \end{array}\right]\boldsymbol{\Sigma}^0 \frac{1}{\sqrt{2}}\left[ \begin{array}{c}\widetilde{\boldsymbol{X}}^0 \\ \widetilde{\boldsymbol{Y}}^0 \end{array}\right]^\top
\]
to be the top-$r$ eigenvalue decomposition of $\overline{\boldsymbol{M}}^0$. So by Weyl's inequality and \eqref{eq_086}, we have
\begin{equation}\label{eq_d005}
	\frac{3}{4}\sigma_r(\boldsymbol{M})\leqslant \sigma_r(\boldsymbol{\Sigma}^0)\leqslant \sigma_1(\boldsymbol{\Sigma}^0)\leqslant 2\sigma_1(\boldsymbol{M}).
\end{equation}
Similarly, the same arguments also applies for $\overline{\boldsymbol{M}}^{0,(l)}$. From Weyl's inequality and \eqref{eq_039}, we have
\begin{equation}\label{eq_d006}
	\frac{3}{4}\sigma_r(\boldsymbol{M})\leqslant \sigma_r(\boldsymbol{\Sigma}^{0,(l)})\leqslant \sigma_1(\boldsymbol{\Sigma}^{0,(l)})\leqslant 2\sigma_1(\boldsymbol{M}).
\end{equation}
Now let $\boldsymbol{X}^0 \coloneqq \widetilde{\boldsymbol{X}}^0 (\boldsymbol{\Sigma}^0)^{1/2},\boldsymbol{Y}^0 \coloneqq \widetilde{\boldsymbol{Y}}^0 (\boldsymbol{\Sigma}^0)^{1/2}$, and $\boldsymbol{X}^{0,(l)} \coloneqq \widetilde{\boldsymbol{X}}^{0,(l)} (\boldsymbol{\Sigma}^{0,(l)})^{1/2},\boldsymbol{Y}^{0,(l)} \coloneqq \widetilde{\boldsymbol{Y}}^{0,(l)} (\boldsymbol{\Sigma}^{0,(l)})^{1/2}$, where $\boldsymbol{M}^{0,(l)}$ has top-$r$ singular value decomposition $\widetilde{\boldsymbol{X}}^{0,(l)}\boldsymbol{\Sigma}^{0,(l)}(\widetilde{\boldsymbol{Y}}^{0,(l)})^\top$. Let 
\[
	\widetilde{\boldsymbol{W}} \coloneqq \frac{1}{\sqrt{2}}\left[\begin{array}{c} \widetilde{\boldsymbol{U}}\\\widetilde{\boldsymbol{V}} \end{array}\right], \quad \boldsymbol{W} \coloneqq \frac{1}{\sqrt{2}}\left[\begin{array}{c} \boldsymbol{U}\\\boldsymbol{V} \end{array}\right],
	\]
	\[
	  \widetilde{\boldsymbol{Z}}^0 \coloneqq \frac{1}{\sqrt{2}}\left[ \begin{array}{c} \widetilde{\boldsymbol{X}}^0\\\widetilde{\boldsymbol{Y}}^0 \end{array} \right],\quad  \boldsymbol{Z}^0 \coloneqq \frac{1}{\sqrt{2}}\left[ \begin{array}{c} \boldsymbol{X}^0\\ \boldsymbol{Y}^0 \end{array} \right],
\]
and also we can denote
\begin{equation}\label{eq:Z0l}
\widetilde{\boldsymbol{Z}}^{0,(l)} \coloneqq \frac{1}{\sqrt{2}}\left[ \begin{array}{c} \widetilde{\boldsymbol{X}}^{0,(l)}\\\widetilde{\boldsymbol{Y}}^{0,(l)} \end{array} \right],\quad  \boldsymbol{Z}^{0,(l)} \coloneqq \frac{1}{\sqrt{2}}\left[ \begin{array}{c} \boldsymbol{X}^{0,(l)}\\ \boldsymbol{Y}^{0,(l)} \end{array} \right].
\end{equation}
Moreover, define
\[
	\boldsymbol{Q}^0 \coloneqq \argmin_{\boldsymbol{R}\in\mathsf{O}(r)}\left\| \widetilde{\boldsymbol{Z}}^0 \boldsymbol{R} -  \widetilde{\boldsymbol{W}}\right\|_F,
	\]
	\[
	 \boldsymbol{Q}^{0,(l)} \coloneqq \argmin_{\boldsymbol{R}\in\mathsf{O}(r)}\left\| \widetilde{\boldsymbol{Z}}^{0,(l)} \boldsymbol{R} -  \widetilde{\boldsymbol{W}}\right\|_F.
\]
% \begin{equation}\label{eq:QRT_rotations}
% \begin{cases}
% \boldsymbol{Q}^0 \coloneqq \argmin_{\boldsymbol{R}\in\mathsf{O}(r)}\left\| \widetilde{\boldsymbol{Z}}^0 \boldsymbol{R} -  \widetilde{\boldsymbol{W}}\right\|_F
% \\
% \boldsymbol{Q}^{0,(l)} \coloneqq \argmin_{\boldsymbol{R}\in\mathsf{O}(r)}\left\| \widetilde{\boldsymbol{Z}}^{0,(l)} \boldsymbol{R} -  \widetilde{\boldsymbol{W}}\right\|_F
% \\
% \boldsymbol{R}^0 \coloneqq \argmin_{\boldsymbol{R}\in\mathsf{O}(r)}\left\| \boldsymbol{Z}^0 \boldsymbol{R} - \boldsymbol{W}\right\|_F
% \\
% \boldsymbol{R}^{0,(l)} \coloneqq \argmin_{\boldsymbol{R}\in\mathsf{O}(r)}\left\| \boldsymbol{Z}^{0,(l)} \boldsymbol{R} - \boldsymbol{W}\right\|_F
% \\
% \boldsymbol{T}^{0,(l)} \coloneqq \argmin_{\boldsymbol{R}\in \mathsf{O}(r)} \left\| \boldsymbol{Z}^0\boldsymbol{R}^0-\boldsymbol{Z}^{0,(l)}\boldsymbol{R} \right\|_F.
% \end{cases}
% \end{equation}
\subsubsection{Proof for \eqref{eq:ini1}}
For spectral norm, by triangle inequality, we have
\begin{equation}\label{eq_d004}
	\begin{split}
		 \left\| \boldsymbol{Z}^0\boldsymbol{R}^0 - \boldsymbol{W} \right\|  =& \left\| \widetilde{\boldsymbol{Z}}^0 (\boldsymbol{\Sigma}^0)^{1/2}(\boldsymbol{R}^0-\boldsymbol{Q}^0) +  \widetilde{\boldsymbol{Z}}^0 \left( (\boldsymbol{\Sigma}^0)^{1/2}\boldsymbol{Q}^0 - \boldsymbol{Q}^0\boldsymbol{\Sigma}^{1/2} \right)  + \left( \widetilde{\boldsymbol{Z}}^0 \boldsymbol{Q}^0 -\widetilde{\boldsymbol{W}} \right)\boldsymbol{\Sigma}^{1/2} \right\|\\
		\leqslant &\|(\boldsymbol{\Sigma}^0)^{1/2}\| \left\|\boldsymbol{R}^0-\boldsymbol{Q}^0\right\| + \| (\boldsymbol{\Sigma}^0)^{1/2}\boldsymbol{Q}^0 - \boldsymbol{Q}^0\boldsymbol{\Sigma}^{1/2}\|  + \|\boldsymbol{\Sigma}^{1/2}\|\left\|  \widetilde{\boldsymbol{Z}}^0 \boldsymbol{Q}^0 -\widetilde{\boldsymbol{W}} \right\|.
	\end{split}
\end{equation}
Now applying Lemma \ref{ma_lemma47} with $\boldsymbol{M}_1 = \overline{\boldsymbol{M}},\boldsymbol{M}_2 = \overline{\boldsymbol{M}}^0$, we have
\begin{equation}\label{eq_d001}
	\|\boldsymbol{R}^0-\boldsymbol{Q}^0\| \leqslant 15 \frac{\sqrt{\kappa}^3}{\sigma_r(\boldsymbol{M})} \|\overline{\boldsymbol{M}} - \overline{\boldsymbol{M}}^0\|;
\end{equation}
applying Lemma \ref{ma_lemma46} with $\boldsymbol{M}_1 = \boldsymbol{M}_2 = \overline{\boldsymbol{M}}, \boldsymbol{M}_3 = \overline{\boldsymbol{M}}^0$, we have
\begin{equation}\label{eq_d002}
	\| (\boldsymbol{\Sigma}^0)^{1/2}\boldsymbol{Q}^0 - \boldsymbol{Q}^0\boldsymbol{\Sigma}^{1/2}\| \leqslant 15 \frac{\kappa}{\sqrt{\sigma_r(\boldsymbol{M})}}\|\overline{\boldsymbol{M}} - \overline{\boldsymbol{M}}^0\|;
\end{equation}
finally, applying Lemma \ref{ma_lemma45} with $\boldsymbol{M}_1 = \overline{\boldsymbol{M}},\boldsymbol{M}_2 = \overline{\boldsymbol{M}}^0$, we have
\begin{equation}\label{eq_d003}
	 \left\|  \widetilde{\boldsymbol{Z}}^0 \boldsymbol{Q}^0 -\widetilde{\boldsymbol{W}} \right\| \leqslant \frac{3}{\sigma_r(\boldsymbol{M})} \|\overline{\boldsymbol{M}} - \overline{\boldsymbol{M}}^0\|.
\end{equation}

Plugging the estimations \eqref{eq_d001}, \eqref{eq_d002} and \eqref{eq_d003} back to \eqref{eq_d004}, and using \eqref{eq_d005} and \eqref{eq_d006},
\begin{equation}\label{eq_042}
	\begin{split}
	 \left\| \left[\begin{array}{c}\boldsymbol{X}^0\\ \boldsymbol{Y}^0\end{array}\right]\boldsymbol{R}^0 - \left[\begin{array}{c}\boldsymbol{U}\\ \boldsymbol{V}\end{array}\right] \right\| = &\sqrt{2} \left\| \boldsymbol{Z}^0\boldsymbol{R}^0 - \boldsymbol{W} \right\|\\
	\leqslant & 30 \left(  \frac{\sqrt{\sigma_1(\boldsymbol{M})\kappa^3}}{\sigma_r(\boldsymbol{M})} + \frac{\kappa}{\sqrt{\sigma_r(\boldsymbol{M})}}+\frac{\sqrt{\sigma_1(\boldsymbol{M})}}{\sigma_r(\boldsymbol{M})}\right)\|\overline{\boldsymbol{M}} - \overline{\boldsymbol{M}}^0\| \\
	 \leqslant & 360C_4 \sqrt{\frac{\mu r \kappa^6\log(n_1\vee n_2)}{(n_1\wedge n_2)p}}\sqrt{\sigma_1(\boldsymbol{M})}
	\end{split}
\end{equation}
holds. For the last inequality we use the estimation \eqref{eigen-gap}.

% For Frobenius norm, we can simply have
% \begin{equation}\label{init_f_norm}
% 	\begin{split}
% 		\left\| \left[\begin{array}{c}\boldsymbol{X}^0\\ \boldsymbol{Y}^0\end{array}\right]\boldsymbol{R}^0 - \left[\begin{array}{c}\boldsymbol{U}\\ \boldsymbol{V}\end{array}\right] \right\|_F \leqslant & \sqrt{2r}\left\| \left[\begin{array}{c}\boldsymbol{X}^0\\ \boldsymbol{Y}^0\end{array}\right]\boldsymbol{R}^0 - \left[\begin{array}{c}\boldsymbol{U}\\ \boldsymbol{V}\end{array}\right] \right\|\\
% 		 \leqslant &C\sqrt{\frac{\mu r^2 \kappa^6\log(n_1\vee n_2)}{(n_1\wedge n_2)p}}\sqrt{\sigma_1(\boldsymbol{M})}.
% 	\end{split}
% \end{equation}
% For the estimation of Frobenius norm, one want to say the above estimation \eqref{init_f_norm} can be improved, for example, \cite{zheng2016convergence} give a bound as $C\sqrt{\frac{\mu r^2 \kappa \log(n_1\vee n_2)}{(n_1\wedge n_2)p}}\sqrt{\sigma_1(\boldsymbol{M})}$, but since we have the same order in terms of $\mu$ and $r$, we use estimation \eqref{init_f_norm} anyway. Interested readers can refer to \cite{zheng2016convergence} for the proof of their bound.

\subsubsection{Proof for \eqref{eq:ini2}}

Now we start to consider the bound of $\left\| \left( \left[\begin{array}{c}\boldsymbol{X}^{0,(l)}\\ \boldsymbol{Y}^{0,(l)} \end{array}\right] \boldsymbol{R}^{0,(l)} - \left[\begin{array}{c}\boldsymbol{U}\\ \boldsymbol{V}\end{array}\right]\right)_{l,\cdot} \right\|_2$. By triangle inequality,
\begin{equation}\label{eq_051}
	\begin{split}
		  \left\| \left( \boldsymbol{Z}^{0,(l)} \boldsymbol{R}^{0,(l)} -  \boldsymbol{W} \right)_{l,\cdot} \right\|_2 =& \left\| \left( \boldsymbol{Z}^{0,(l)}\boldsymbol{R}^{0,(l)} -\boldsymbol{Z}^{0,(l)} \boldsymbol{Q}^{0,(l)}+\boldsymbol{Z}^{0,(l)}\boldsymbol{Q}^{0,(l)}- \boldsymbol{W}\right)_{l,\cdot} \right\|_2\\
		\leqslant & \left\| \left(\boldsymbol{Z}^{0,(l)} \boldsymbol{Q}^{0,(l)}- \boldsymbol{W}\right)_{l,\cdot} \right\|_2  + \left\| (\boldsymbol{Z}^{0,(l)}_{l,\cdot})^\top (\boldsymbol{R}^{0,(l)} - \boldsymbol{Q}^{0,(l)})  \right\|_2.
	\end{split}
\end{equation}
First we give a bound of the first term. Note
\[
	\boldsymbol{W} = \widetilde{\boldsymbol{W}}\boldsymbol{\Sigma}^{1/2} = \widetilde{\boldsymbol{W}}\boldsymbol{\Sigma}\widetilde{\boldsymbol{W}}^\top\widetilde{\boldsymbol{W}} \boldsymbol{\Sigma}^{-1/2} = \overline{\boldsymbol{M}}\widetilde{\boldsymbol{W}}\boldsymbol{\Sigma}^{-1/2},
\]
where the last equality holds since 
\[
	\begin{split}
	 \overline{\boldsymbol{M}} \widetilde{\boldsymbol{W}} =&  \frac{1}{\sqrt{2}} \left[ \begin{array}{cc} \widetilde{\boldsymbol{U}} & \widetilde{\boldsymbol{U}}\\ \widetilde{\boldsymbol{V}} & -\widetilde{\boldsymbol{V}}\end{array}\right] \left[\begin{array}{cc} \boldsymbol{\Sigma} & \boldsymbol{0}\\\boldsymbol{0}& -\boldsymbol{\Sigma}\end{array}\right] \cdot \frac{1}{\sqrt{2}} \left[ \begin{array}{cc} \widetilde{\boldsymbol{U}} & \widetilde{\boldsymbol{U}}\\ \widetilde{\boldsymbol{V}} & -\widetilde{\boldsymbol{V}}\end{array}\right]^\top \frac{1}{\sqrt{2}}\left[\begin{array}{c}
		\widetilde{\boldsymbol{U}} \\ \widetilde{\boldsymbol{V}}
	\end{array}\right]\\
	=&  \frac{1}{\sqrt{2}}\left[\begin{array}{c}
		\widetilde{\boldsymbol{U}} \\ \widetilde{\boldsymbol{V}}
	\end{array}\right] \boldsymbol{\Sigma}\frac{1}{\sqrt{2}}\left[\begin{array}{c}
		\widetilde{\boldsymbol{U}} \\ \widetilde{\boldsymbol{V}}
	\end{array}\right]^\top \frac{1}{\sqrt{2}}\left[\begin{array}{c}
		\widetilde{\boldsymbol{U}} \\ \widetilde{\boldsymbol{V}}
	\end{array}\right]  + \frac{1}{\sqrt{2}}\left[\begin{array}{c}
		\widetilde{\boldsymbol{U}} \\ -\widetilde{\boldsymbol{V}}
	\end{array}\right](-\boldsymbol{\Sigma})\frac{1}{\sqrt{2}}\left[\begin{array}{c}
		\widetilde{\boldsymbol{U}} \\ -\widetilde{\boldsymbol{V}}
	\end{array}\right]^\top \frac{1}{\sqrt{2}}\left[\begin{array}{c}
		\widetilde{\boldsymbol{U}} \\ \widetilde{\boldsymbol{V}}
	\end{array}\right]\\
	 =& \widetilde{\boldsymbol{W}}\boldsymbol{\Sigma}\widetilde{\boldsymbol{W}}^\top\widetilde{\boldsymbol{W}},
	\end{split}
\]
the last equality uses the fact that $\widetilde{\boldsymbol{U}}^\top\widetilde{\boldsymbol{U}} =  \boldsymbol{I} = \widetilde{\boldsymbol{V}}^\top\widetilde{\boldsymbol{V}}$. Similarly, we also have
\[
	\boldsymbol{Z}^{0,(l)} = \widetilde{\boldsymbol{Z}}^{0,(l)} (\boldsymbol{\Sigma}^{0,(l)})^{1/2} = \overline{\boldsymbol{M}}^{0,(l)}\widetilde{\boldsymbol{Z}}^{0,(l)} (\boldsymbol{\Sigma}^{0,(l)})^{-1/2}.
\]
By the way we define $\overline{\boldsymbol{M}}^{0,(l)}$ and $\overline{\boldsymbol{M}}$ in \eqref{eq:M0lbar} and \eqref{eq:Mbar}, $\overline{\boldsymbol{M}}^{0,(l)}_{l,\cdot} = \overline{\boldsymbol{M}}_{l,\cdot}$. By triangle inequality we have
\begin{equation}\label{eq_089}
	\begin{split}
		&\left\| \left( \boldsymbol{Z}^{0,(l)} \boldsymbol{Q}^{0,(l)} -  \boldsymbol{W} \right)_{l,\cdot}\right\|_2
		\\
		=& \left\| \left(\overline{\boldsymbol{M}}^{0,(l)}\widetilde{\boldsymbol{Z}}^{0,(l)} (\boldsymbol{\Sigma}^{0,(l)})^{-1/2}\boldsymbol{Q}^{0,(l)} - \overline{\boldsymbol{M}}\widetilde{\boldsymbol{W}}\boldsymbol{\Sigma}^{-1/2}  \right)_{l,\cdot} \right\|_2\\
		=& \left\|(\overline{\boldsymbol{M}}_{l,\cdot})^\top\left( \widetilde{\boldsymbol{Z}}^{0,(l)}(\boldsymbol{\Sigma}^{0,(l)})^{-1/2}\boldsymbol{Q}^{0,(l)} - \widetilde{\boldsymbol{W}}\boldsymbol{\Sigma}^{-1/2} \right)\right\|_2\\
		 =& \left\|(\overline{\boldsymbol{M}}_{l,\cdot})^\top \left( \widetilde{\boldsymbol{Z}}^{0,(l)}\left[(\boldsymbol{\Sigma}^{0,(l)})^{-1/2}\boldsymbol{Q}^{0,(l)} -\boldsymbol{Q}^{0,(l)}\boldsymbol{\Sigma}^{-1/2} \right]  +\left[\widetilde{\boldsymbol{Z}}^{0,(l)}\boldsymbol{Q}^{0,(l)}- \widetilde{\boldsymbol{W}}\right]\boldsymbol{\Sigma}^{-1/2} \right)\right\|_2\\
		 \leqslant & \|\overline{\boldsymbol{M}}_{l,\cdot}\|_2 \left( \|(\boldsymbol{\Sigma}^{0,(l)})^{-1/2}\boldsymbol{Q}^{0,(l)} -\boldsymbol{Q}^{0,(l)}\boldsymbol{\Sigma}^{-1/2} \| \vphantom{\frac{1}{\sqrt{\sigma_r(\boldsymbol{M})}}} +\|\widetilde{\boldsymbol{Z}}^{0,(l)}\boldsymbol{Q}^{0,(l)}- \widetilde{\boldsymbol{W}}\| \frac{1}{\sqrt{\sigma_r(\boldsymbol{M})}}\right).
	\end{split}
\end{equation}
By Lemma \ref{ma_lemma45} with $\boldsymbol{M}_1 = \overline{\boldsymbol{M}},\boldsymbol{M}_2 = \overline{\boldsymbol{M}}^{0,(l)}$, we have
\begin{equation}\label{eq_087}
	\|\widetilde{\boldsymbol{Z}}^{0,(l)}\boldsymbol{Q}^{0,(l)}- \widetilde{\boldsymbol{W}}\|\leqslant \frac{3}{\sigma_r(\boldsymbol{M})} \|\overline{\boldsymbol{M}} - \overline{\boldsymbol{M}}^{0,(l)}\|.
\end{equation}
By Lemma \ref{ma_lemma46} with $\boldsymbol{M}_1=\boldsymbol{M}_3 = \overline{\boldsymbol{M}},\boldsymbol{M}_2 = \overline{\boldsymbol{M}}^{0,(l)}$, we have
\begin{equation}\label{eq_088}
	\begin{split}
		  \|(\boldsymbol{\Sigma}^{0,(l)})^{-1/2}\boldsymbol{Q}^{0,(l)} -\boldsymbol{Q}^{0,(l)}\boldsymbol{\Sigma}^{-1/2} \|  =& \left\|(\boldsymbol{\Sigma}^{0,(l)})^{-1/2} \left(\boldsymbol{Q}^{0,(l)}\boldsymbol{\Sigma}^{1/2} -(\boldsymbol{\Sigma}^{0,(l)})^{1/2}\boldsymbol{Q}^{0,(l)}\right)\boldsymbol{\Sigma}^{-1/2}\right\|\\
		 \leqslant & \|(\boldsymbol{\Sigma}^{0,(l)})^{-1/2} \| \|\boldsymbol{\Sigma}^{-1/2}\| \|\boldsymbol{Q}^{0,(l)}\boldsymbol{\Sigma}^{1/2} -(\boldsymbol{\Sigma}^{0,(l)})^{1/2}\boldsymbol{Q}^{0,(l)}\|\\
		 \leqslant & \frac{20}{\sigma_r(\boldsymbol{M})}\frac{\kappa}{\sqrt{\sigma_r(\boldsymbol{M})}} \|\overline{\boldsymbol{M}} - \overline{\boldsymbol{M}}^{0,(l)}\|.
	\end{split}
\end{equation}
The last inequality uses the fact that $\sigma_r(\boldsymbol{\Sigma}^{0,(l)}) \geqslant \frac{3}{4}\sigma_r(\boldsymbol{M})$.

Putting estimations \eqref{eq_087} and \eqref{eq_088} together and plugging back to \eqref{eq_089} we have
\begin{equation}\label{eq_001}
	\begin{split}
		 \left\| \left( \boldsymbol{Z}^{0,(l)} \boldsymbol{Q}^{0,(l)} -  \boldsymbol{W} \right)_{l,\cdot}\right\|_2 \leqslant & \|\overline{\boldsymbol{M}}_{l,\cdot}\|_2 \frac{23\kappa}{\sqrt{\sigma_r(\boldsymbol{M})}^3} \|\overline{\boldsymbol{M}} - \overline{\boldsymbol{M}}^{0,(l)}\|\\
		\leqslant & \max(\|\boldsymbol{U}\|\|\boldsymbol{V}\|_{2,\infty} , \|\boldsymbol{V}\|\|\boldsymbol{U}\|_{2,\infty}) \times  \frac{92 C_4\kappa}{\sqrt{\sigma_r(\boldsymbol{M})}^3} \sqrt{ \frac{\mu r \kappa \log (n_1\vee n_2)}{(n_1\wedge n_2)p}} \sigma_1(\boldsymbol{M})\\
		\leqslant& 92 C_4 \sqrt{ \frac{\mu^2 r^2 \kappa^7 \log (n_1\vee n_2)}{(n_1\wedge n_2)^2p}} \sqrt{\sigma_1(\boldsymbol{M})}.
	\end{split}
\end{equation}
In order to control the second term in \eqref{eq_051}, note from \eqref{eq_001},
\[
	\begin{split}
		 \|\boldsymbol{Z}^{0,(l)}_{l,\cdot}\|_2 \leqslant & \|\boldsymbol{W}_{l,\cdot}\|_2 + \left\| \left( \boldsymbol{Z}^{0,(l)} \boldsymbol{Q}^{0,(l)} -  \boldsymbol{W} \right)_{l,\cdot}\right\|_2 \\
		\leqslant&  \|\boldsymbol{W}\|_{2,\infty} + \left\| \left( \boldsymbol{Z}^{0,(l)} \boldsymbol{Q}^{0,(l)} -  \boldsymbol{W} \right)_{l,\cdot}\right\|_2 \\
		\leqslant & \left( \sqrt{\frac{\mu r \kappa}{n_1\wedge n_2}}  + 92C_4\sqrt{ \frac{\mu^2 r^2 \kappa^7 \log (n_1\vee n_2)}{(n_1\wedge n_2)^2p}} \right)\sqrt{\sigma_1(\boldsymbol{M})}.
	\end{split}
\]
Then by Lemma \ref{ma_lemma47} with $\boldsymbol{M}_1 = \overline{\boldsymbol{M}},\boldsymbol{M}_2 = \overline{\boldsymbol{M}}^{0,(l)}$, we have
\[
	\begin{split}
		&\left\| (\boldsymbol{Z}^{0,(l)}_{l,\cdot})^\top (\boldsymbol{R}^{0,(l)} - \boldsymbol{Q}^{0,(l)})  \right\|_2 \\
		\leqslant & \|\boldsymbol{Z}^{0,(l)}_{l,\cdot}\|_2 \|\boldsymbol{R}^{0,(l)} - \boldsymbol{Q}^{0,(l)}\|\\
		\leqslant & \|\boldsymbol{Z}^{0,(l)}_{l,\cdot}\|_2 15 \frac{\sqrt{\kappa}^3}{\sigma_r(\boldsymbol{M})}\|\overline{\boldsymbol{M}} - \overline{\boldsymbol{M}}^{0,(l)}\|\\
		\leqslant & 60C_4 \frac{\sqrt{\kappa}^3}{\sigma_r(\boldsymbol{M})} \sqrt{ \frac{\mu r \kappa \log (n_1\vee n_2)}{(n_1\wedge n_2)p}} \sigma_1(\boldsymbol{M})  \times \left( \sqrt{\frac{\mu r \kappa}{n_1\wedge n_2}}  + 92C_4\sqrt{ \frac{\mu^2 r^2 \kappa^7 \log (n_1\vee n_2)}{(n_1\wedge n_2)^2p}} \right)\sqrt{\sigma_1(\boldsymbol{M})}.
	\end{split}
\]
So as long as we have
\[
	p\geqslant 92^2C_4^2\frac{\mu r \kappa^6\log (n_1\vee n_2)}{n_1 \wedge n_2},	
\]
 then  
	\begin{equation}\label{eq_002}
	\begin{split}
		 \left\| (\boldsymbol{Z}^{0,(l)}_{l,\cdot})^\top (\boldsymbol{R}^{0,(l)} - \boldsymbol{Q}^{0,(l)})  \right\|_2 \leqslant & 120C_4  \sqrt{ \frac{\mu^2 r^2 \kappa^7 \log (n_1\vee n_2)}{(n_1\wedge n_2)^2p}} \sqrt{\sigma_1(\boldsymbol{M})}.
	\end{split}
	\end{equation}
Putting estimation \eqref{eq_001} and \eqref{eq_002} together we have
\begin{equation}
\label{eq_043}
\begin{split}
	 \left\| \left( \left[\begin{array}{c}\boldsymbol{X}^{0,(l)}\\ \boldsymbol{Y}^{0,(l)} \end{array}\right] \boldsymbol{R}^{0,(l)} - \left[\begin{array}{c}\boldsymbol{U}\\ \boldsymbol{V}\end{array}\right]\right)_{l,\cdot} \right\|_2 =& \sqrt{2}\left\| \left( \boldsymbol{Z}^{0,(l)} \boldsymbol{R}^{0,(l)} -  \boldsymbol{W} \right)_{l,\cdot} \right\|_2  \\ 
	\leqslant & 212\sqrt{2}C_4  \sqrt{ \frac{\mu^2 r^2 \kappa^7 \log (n_1\vee n_2)}{(n_1\wedge n_2)^2p}} \sqrt{\sigma_1(\boldsymbol{M})}.
	\end{split}
\end{equation}

\subsubsection{Proof for \eqref{eq:ini3}}

Finally, we want to give a bound for $\left\| \left[ \begin{array}{c}\boldsymbol{X}^0\\\boldsymbol{Y}^0\end{array}\right]\boldsymbol{R}^0 -\left[ \begin{array}{c}\boldsymbol{X}^{0,(l)}\\\boldsymbol{Y}^{0,(l)}\end{array}\right]\boldsymbol{T}^{0,(l)}  \right\|_F$. Without loss of generality, assume that $l$ satisfies $1\leqslant l\leqslant n_1$. First denote 
\[
\boldsymbol{B} \coloneqq \argmin_{\boldsymbol{R}\in\mathsf{O}(r)} \|\widetilde{\boldsymbol{Z}}^{0,(l)}\boldsymbol{R}-\widetilde{\boldsymbol{Z}}^0\|_F.
\]
From the choice of $\boldsymbol{T}^{0,(l)}$ in \eqref{eq:T0l}, we have
\begin{equation}\label{eq_028}
	\left\| \boldsymbol{Z}^0\boldsymbol{R}^0 -\boldsymbol{Z}^{0,(l)}\boldsymbol{T}^{0,(l)}  \right\|_F \leqslant  \|\boldsymbol{Z}^{0,(l)}\boldsymbol{B} - \boldsymbol{Z}^0\|_F.
\end{equation}
By triangle inequality,
\begin{equation}\label{eq_029}
	\begin{split}
		&\|\boldsymbol{Z}^{0,(l)}\boldsymbol{B} - \boldsymbol{Z}^0\|_F\\
		 =& \left\| \widetilde{\boldsymbol{Z}}^{0,(l)}(\boldsymbol{\Sigma}^{0,(l)})^{1/2}\boldsymbol{B} - \widetilde{\boldsymbol{Z}}^0(\boldsymbol{\Sigma}^0)^{1/2} \right\|_F\\
		=& \left\| \widetilde{\boldsymbol{Z}}^{0,(l)}\left[ (\boldsymbol{\Sigma}^{0,(l)})^{1/2}\boldsymbol{B} - \boldsymbol{B}(\boldsymbol{\Sigma}^0)^{1/2} \right]  + (\widetilde{\boldsymbol{Z}}^{0,(l)}\boldsymbol{B} - \widetilde{\boldsymbol{Z}}^0)(\boldsymbol{\Sigma}^0)^{1/2} \right\|_F \\
		\leqslant & \left\| \widetilde{\boldsymbol{Z}}^{0,(l)}\left[ (\boldsymbol{\Sigma}^{0,(l)})^{1/2}\boldsymbol{B} - \boldsymbol{B}(\boldsymbol{\Sigma}^0)^{1/2} \right] \right\|_F + \left\|(\widetilde{\boldsymbol{Z}}^{0,(l)}\boldsymbol{B} - \widetilde{\boldsymbol{Z}}^0)(\boldsymbol{\Sigma}^0)^{1/2} \right\|_F\\
		\leqslant & \left\|  (\boldsymbol{\Sigma}^{0,(l)})^{1/2}\boldsymbol{B} - \boldsymbol{B}(\boldsymbol{\Sigma}^0)^{1/2}  \right\|_F   + \left\| \widetilde{\boldsymbol{Z}}^{0,(l)}\boldsymbol{B} - \widetilde{\boldsymbol{Z}}^0 \right\|_F\|(\boldsymbol{\Sigma}^0)^{1/2}\|.
	\end{split}
\end{equation}
By Lemma \ref{ma_lemma46} with $\boldsymbol{M}_1 = \overline{\boldsymbol{M}},\boldsymbol{M}_2 = \overline{\boldsymbol{M}}^{0,(l)},\boldsymbol{M}_3 = \overline{\boldsymbol{M}}^{0}$, we have
\begin{equation}\label{eq_030}
\begin{split}
	 \left\| (\boldsymbol{\Sigma}^{0,(l)})^{1/2}\boldsymbol{B} - \boldsymbol{B}(\boldsymbol{\Sigma}^0)^{1/2} \right\|_F \leqslant & 15 \frac{\kappa}{\sqrt{\sigma_r(\boldsymbol{M})}}\left\| \left(\overline{\boldsymbol{M}}^0 - \overline{\boldsymbol{M}}^{0,(l)}\right)\widetilde{\boldsymbol{Z}}^{0,(l)} \right\|_F.
\end{split}
\end{equation}

Moreover, by Davis-Kahan Sin$\Theta$ theorem \citep{davis1970rotation}, we have
\begin{equation}\label{eq_031}
	\begin{split}
		 \left\| \widetilde{\boldsymbol{Z}}^{0,(l)}\boldsymbol{B} - \widetilde{\boldsymbol{Z}}^0 \right\|_F \leqslant & \sqrt{2}\left\| \left(\boldsymbol{I} - \widetilde{\boldsymbol{Z}}^0(\widetilde{\boldsymbol{Z}}^0)^\top\right) \widetilde{\boldsymbol{Z}}^{0,(l)} \right\|_F\\
		\leqslant & \frac{2\sqrt{2}}{\sigma_r(\boldsymbol{M})}\left\| \left(\overline{\boldsymbol{M}}^0 - \overline{\boldsymbol{M}}^{0,(l)}\right)\widetilde{\boldsymbol{Z}}^{0,(l)} \right\|_F.
	\end{split}
\end{equation}
So putting the estimations \eqref{eq_029}, \eqref{eq_030} and \eqref{eq_031} together we have
\begin{equation}\label{eq_032}
	\begin{split}
		& \|\boldsymbol{Z}^{0,(l)}\boldsymbol{B} - \boldsymbol{Z}^0\|_F\\
		  \leqslant & 15 \frac{\kappa}{\sqrt{\sigma_r(\boldsymbol{M})}}  \left\| \left(\overline{\boldsymbol{M}}^0 - \overline{\boldsymbol{M}}^{0,(l)}\right)\widetilde{\boldsymbol{Z}}^{0,(l)} \right\|_F  + 4\frac{\sqrt{\kappa}}{\sqrt{\sigma_r(\boldsymbol{M})}}  \left\| \left(\overline{\boldsymbol{M}}^0 - \overline{\boldsymbol{M}}^{0,(l)}\right)\widetilde{\boldsymbol{Z}}^{0,(l)} \right\|_F\\
		\leqslant & 20 \frac{\kappa}{\sqrt{\sigma_r(\boldsymbol{M})}}  \left\| \left(\overline{\boldsymbol{M}}^0 - \overline{\boldsymbol{M}}^{0,(l)}\right)\widetilde{\boldsymbol{Z}}^{0,(l)} \right\|_F.
	\end{split}
\end{equation}

%In order to control $\left\| \left(\overline{\boldsymbol{M}}^0 - \overline{\boldsymbol{M}}^{0,(l)}\right)\widetilde{\boldsymbol{Z}}^{0,(l)} \right\|_F$, 
By the way we define $\overline{\boldsymbol{M}}^0$ and $\overline{\boldsymbol{M}}^{0,(l)}$ in \eqref{eq:M0bar} and \eqref{eq:M0lbar}, 
\[
	\begin{split}
		 \left(\overline{\boldsymbol{M}}^0 - \overline{\boldsymbol{M}}^{0,(l)}\right) \widetilde{\boldsymbol{Z}}^{0,(l)}  =&  \left[ \begin{array}{c}
		\boldsymbol{0}\\
		\vdots\\
		\boldsymbol{0}\\
		\sum_j \left( \frac{1}{p}\delta_{l,j}-1 \right)\overline{M}_{l,n_1+j} (\widetilde{\boldsymbol{Z}}_{n_1+j,\cdot}^{0,(l)})^\top\\
		\boldsymbol{0}\\
		\vdots\\
		\boldsymbol{0}\\
		\left( \frac{1}{p}\delta_{l,1} - 1 \right)\overline{M}_{n_1+1,l}(\widetilde{\boldsymbol{Z}}_{l,\cdot}^{0,(l)})^\top\\
		\vdots\\
		\left( \frac{1}{p}\delta_{l,j} - 1 \right)\overline{M}_{n_1+j,l}(\widetilde{\boldsymbol{Z}}_{l,\cdot}^{0,(l)})^\top\\
		\vdots\\
		\left( \frac{1}{p}\delta_{l,n_2} - 1 \right)\overline{M}_{n_1+n_2,l}(\widetilde{\boldsymbol{Z}}_{l,\cdot}^{0,(l)})^\top\\ 
		\end{array} \right].
	\end{split}
\]
Recall that here we assume $1\leqslant l\leqslant n_1$. Therefore by triangle inequality,
\begin{equation}\label{eq_092}
	\begin{split}
		&\left\|\left(\overline{\boldsymbol{M}}^0 - \overline{\boldsymbol{M}}^{0,(l)}\right) \widetilde{\boldsymbol{Z}}^{0,(l)}\right\|_F\\
		 \leqslant & \left\| \sum_j \left( \frac{1}{p}\delta_{l,j}-1 \right)\overline{M}_{l,n_1+j} \widetilde{\boldsymbol{Z}}_{n_1+j,\cdot}^{0,(l)} \right\|_2  + \left\| \left[ \begin{array}{c}
			\left( \frac{1}{p}\delta_{l,1} - 1 \right)\overline{M}_{n_1+1,l}(\widetilde{\boldsymbol{Z}}_{l,\cdot}^{0,(l)})^\top\\
			\vdots\\
			\left( \frac{1}{p}\delta_{l,j} - 1 \right)\overline{M}_{n_1+j,l}(\widetilde{\boldsymbol{Z}}_{l,\cdot}^{0,(l)})^\top\\
			\vdots\\
			\left( \frac{1}{p}\delta_{l,n_2} - 1 \right)\overline{M}_{n_1+n_2,l}(\widetilde{\boldsymbol{Z}}_{l,\cdot}^{0,(l)})^\top\\ 
		\end{array} \right] \right\|_F\\
		=&  \left\| \underbrace{\sum_j \left( \frac{1}{p}\delta_{l,j}-1 \right)\overline{M}_{l,n_1+j} \widetilde{\boldsymbol{Z}}_{n_1+j,\cdot}^{0,(l)}}_{\boldsymbol{a}_1} \right\|_2  + \left\| \underbrace{\left[ \begin{array}{c}
		\left( \frac{1}{p}\delta_{l,1} - 1 \right)\overline{M}_{n_1+1,l}\\
		\vdots\\
		\left( \frac{1}{p}\delta_{l,j} - 1 \right)\overline{M}_{n_1+j,l}\\
		\vdots\\
		\left( \frac{1}{p}\delta_{l,n_2} - 1 \right)\overline{M}_{n_1+n_2,l}\\
		\end{array} \right]}_{\boldsymbol{a}_2} \right\|_2\|\widetilde{\boldsymbol{Z}}_{l,\cdot}^{0,(l)}\|_2.
	\end{split}
\end{equation}

Note by \eqref{eq:Z0l} and the fact that $\boldsymbol{M}^{0,(l)}$ has top-$r$ singular value decomposition $\widetilde{\boldsymbol{X}}^{0,(l)}\boldsymbol{\Sigma}^{0,(l)}(\widetilde{\boldsymbol{Y}}^{0,(l)})^\top$, $\widetilde{\boldsymbol{Z}}_{n_1+j,\cdot}^{0,(l)}$ is independent of $\delta_{l,j}$'s. For $\boldsymbol{a}_1$,
\[
	\boldsymbol{a}_1 = \sum_j\left( \frac{1}{p}\delta_{l,j} - 1 \right)\overline{M}_{l,n_1+j}\widetilde{\boldsymbol{Z}}_{n_1+j,\cdot}^{0,(l)} \coloneqq \sum_j \boldsymbol{s}_{1,j}.
\]
Conditioned on $\widetilde{\boldsymbol{Z}}_{n_1+j,\cdot}^{0,(l)}$, $\boldsymbol{s}_{1,j}$'s are independent, and $\mathbb{E}_{\delta_{l,\cdot}} \boldsymbol{s}_{1,j} = \boldsymbol{0}$. We also have 
\[
\begin{split}
	\|\boldsymbol{s}_{1,j}\|_2 \leqslant& \frac{1}{p}\|\overline{\boldsymbol{M}}\|_{\ell_{\infty}}\|\widetilde{\boldsymbol{Z}}^{0,(l)}\|_{2,\infty}\\
	 \leqslant& \frac{1}{p}\|\boldsymbol{U}\|_{2,\infty}\|\boldsymbol{V}\|_{2,\infty} \|\widetilde{\boldsymbol{Z}}^{0,(l)}\|_{2,\infty},
\end{split}
\]
and
\[
	\begin{split}
		 \left\|\mathbb{E}_{\delta_{l,\cdot}}\sum_j \boldsymbol{s}_{1,j}^\top\boldsymbol{s}_{1,j}\right\|  =& \sum_j \mathbb{E}_{\delta_{l,\cdot}}\left( \frac{1}{p}\delta_{l,j} - 1 \right)^2 \overline{M}_{l,n_1+j}^2 \|\widetilde{\boldsymbol{Z}}_{n_1+j,\cdot}^{0,(l)}\|_2^2\\
		\leqslant &\frac{1}{p} \|\widetilde{\boldsymbol{Z}}^{0,(l)}\|_{2,\infty}^2 \|\overline{\boldsymbol{M}}_{l,\cdot}\|_2^2\\
		\leqslant & \frac{1}{p} \|\widetilde{\boldsymbol{Z}}^{0,(l)}\|_{2,\infty}^2 \max\left(\|\boldsymbol{U}\|\|\boldsymbol{V}\|_{2,\infty}, \|\boldsymbol{V}\|\|\boldsymbol{U}\|_{2,\infty}\right)^2.
	\end{split}
\]
For $\left\| \sum_j \mathbb{E}_{\delta_{l,\cdot}} \boldsymbol{s}_{1,j}\boldsymbol{s}_{1,j}^\top \right\|$ we have the same bound. Then by matrix Bernstein inequality \cite[Theorem 6.1.1]{tropp2015introduction},  
\[
	\begin{split}
		&\mathbb{P}\left[  \|\boldsymbol{a}_1\|_2 \geqslant 100\left(  \sqrt{\frac{\log (n_1\vee n_2)}{p}}   \max\left(\|\boldsymbol{U}\|\|\boldsymbol{V}\|_{2,\infty}, \|\boldsymbol{V}\|\|\boldsymbol{U}\|_{2,\infty}\right)    + \frac{\log (n_1\vee n_2)}{p} \|\boldsymbol{U}\|_{2,\infty}\|\boldsymbol{V}\|_{2,\infty}  \vphantom{\sqrt{\frac{\log (n_1\vee n_2)}{p}}}  \right)  \|\widetilde{\boldsymbol{Z}}^{0,(l)}\|_{2,\infty} \mid \widetilde{\boldsymbol{Z}}^{0,(l)} \right]\\
		 \leqslant & (n_1+n_2)^{-15}.
	\end{split}
\] 
Therefore,
\[
	\begin{split}
		&\mathbb{P}\left[  \|\boldsymbol{a}_1\|_2 \geqslant 100\left(  \sqrt{\frac{\log (n_1\vee n_2)}{p}}  \max\left(\|\boldsymbol{U}\|\|\boldsymbol{V}\|_{2,\infty}, \|\boldsymbol{V}\|\|\boldsymbol{U}\|_{2,\infty}\right)  + \frac{\log (n_1\vee n_2)}{p} \|\boldsymbol{U}\|_{2,\infty}\|\boldsymbol{V}\|_{2,\infty}  \vphantom{\sqrt{\frac{\log (n_1\vee n_2)}{p}}}  \right) \|\widetilde{\boldsymbol{Z}}^{0,(l)}\|_{2,\infty} \right]\\
		=& \mathbb{E}\left[ \mathbb{E}\left[ \mathds{1}_{\|\boldsymbol{a}_1\|_2 \geqslant 100\left(  \sqrt{\frac{\log (n_1\vee n_2)}{p}}  \max\left(\|\boldsymbol{U}\|\|\boldsymbol{V}\|_{2,\infty}, \|\boldsymbol{V}\|\|\boldsymbol{U}\|_{2,\infty}\right)   + \frac{\log (n_1\vee n_2)}{p} \|\boldsymbol{U}\|_{2,\infty}\|\boldsymbol{V}\|_{2,\infty}  \vphantom{\sqrt{\frac{\log (n_1\vee n_2)}{p}}}  \right)  \|\widetilde{\boldsymbol{Z}}^{0,(l)}\|_{2,\infty}} \mid \widetilde{\boldsymbol{Z}}^{0,(l)} \right] \right]\\
		\leqslant & (n_1+n_2)^{-15}.
	\end{split}	
\]
In other words, on an event $E_{B}^{0,(l),1}$ with probability $\mathbb{P}[E_{B}^{0,(l),1}]\geqslant 1-(n_1+n_2)^{-15}$, we have
\begin{equation}\label{eq_090}
	\begin{split}
	 \|\boldsymbol{a}_1\|_2 \leqslant& 100\sqrt{\frac{\log (n_1\vee n_2)}{p}}\|\widetilde{\boldsymbol{Z}}^{0,(l)}\|_{2,\infty} \max\left(\|\boldsymbol{U}\|\|\boldsymbol{V}\|_{2,\infty}, \|\boldsymbol{V}\|\|\boldsymbol{U}\|_{2,\infty}\right) \\
	 &+ 100\frac{\log (n_1\vee n_2)}{p} \|\boldsymbol{U}\|_{2,\infty}\|\boldsymbol{V}\|_{2,\infty} \|\widetilde{\boldsymbol{Z}}^{0,(l)}\|_{2,\infty}\\
	\leqslant & 100 \left( \sqrt{\frac{\mu r \kappa \log (n_1\vee n_2) }{(n_1\wedge n_2)p}} + \frac{\mu r \kappa \log (n_1\vee n_2) }{(n_1\wedge n_2)p} \right) \sigma_1(\boldsymbol{M}) \|\widetilde{\boldsymbol{Z}}^{0,(l)}\|_{2,\infty}.
	\end{split}
\end{equation}

For $\boldsymbol{a}_2$, we can decompose it as

\[
\begin{split}
	\boldsymbol{a}_2 =& \left[ \begin{array}{c}
		\left( \frac{1}{p}\delta_{l,1} - 1 \right)\overline{M}_{n_1+1,l}\\
		\vdots\\
		\left( \frac{1}{p}\delta_{l,j} - 1 \right)\overline{M}_{n_1+j,l}\\
		\vdots\\
		\left( \frac{1}{p}\delta_{l,n_2} - 1 \right)\overline{M}_{n_1+n_2,l}\\
		\end{array} \right] \\
		=& \sum_j \left( \frac{1}{p}\delta_{l,j}-1 \right) \overline{M}_{n_1+j,l} \boldsymbol{e}_j\\
		 =& \sum_j \boldsymbol{s}_{2,j}.
		\end{split}
\]
Then we have $\mathbb{E}\boldsymbol{s}_{2,j} = \boldsymbol{0}$, 
\[
\|\boldsymbol{s}_{2,j}\|_2 \leqslant \frac{1}{p}\|\overline{\boldsymbol{M}}\|_{\ell_{\infty}}\leqslant \frac{1}{p}\|\boldsymbol{U}\|_{2,\infty}\|\boldsymbol{V}\|_{2,\infty} 
\]
and
\[
	\begin{split}
		\|\mathbb{E}\sum_j \boldsymbol{s}_{2,j}\boldsymbol{s}_{2,j}^\top\| \leqslant & \sum_j \mathbb{E}\|\boldsymbol{s}_{2,j}\|_2^2\\
		=& \sum_j \mathbb{E}\left( \frac{1}{p}\delta_{l,j}-1 \right)^2 \overline{M}_{n_1+j,l}^2\\
		\leqslant & \sum_j \frac{1}{p} \overline{M}_{n_1+j,l}^2\\
		\leqslant & \frac{1}{p} \max\left(\|\boldsymbol{U}\|\|\boldsymbol{V}\|_{2,\infty}, \|\boldsymbol{V}\|\|\boldsymbol{U}\|_{2,\infty}\right)^2.
	\end{split}
\]
Therefore by matrix Bernstein inequality \cite[Theorem 6.1.1]{tropp2015introduction} again, on an event $E_B^{0,2}$ with probability $\mathbb{P}[E_B^{0,2}]\geqslant 1-(n_1+n_2)^{-15}$, we have
\begin{equation}\label{eq_091}
\begin{split}
	 \|\boldsymbol{a}_2\|_2 \leqslant & 100\left( \sqrt{\frac{\mu r \kappa \log (n_1\vee n_2) }{(n_1\wedge n_2)p}} + \frac{\mu r \kappa \log (n_1\vee n_2) }{(n_1\wedge n_2)p} \right)\sigma_1(\boldsymbol{M}).
\end{split}
\end{equation}
So putting \eqref{eq_092}, \eqref{eq_090} and \eqref{eq_091} together we have
\begin{equation}\label{eq_033}
	\begin{split}
		 \left\|\left(\overline{\boldsymbol{M}}^0 - \overline{\boldsymbol{M}}^{0,(l)}\right) \widetilde{\boldsymbol{Z}}^{0,(l)}\right\|_F \leqslant & \|\boldsymbol{a}_1\|_2 + \|\boldsymbol{a}_2\|_2\|\widetilde{\boldsymbol{Z}}^{0,(l)}\|_{2,\infty}\\
		\leqslant & 200 \left( \sqrt{\frac{\mu r \kappa \log (n_1\vee n_2) }{(n_1\wedge n_2)p}} + \frac{\mu r \kappa \log (n_1\vee n_2) }{(n_1\wedge n_2)p} \right) \sigma_1(\boldsymbol{M}) \|\widetilde{\boldsymbol{Z}}^{0,(l)}\|_{2,\infty}
	\end{split}
\end{equation}
on an event $E_B^0 = \left(\bigcap_{l=1}^{n_1+n_2}E_{B}^{0,(l),1}\right)\bigcap E_B^{0,2}$. Moreover, by applying union bound we have $\mathbb{P}[E_B^0] \geqslant 1-(n_1+n_2)^{-11}$.

Now we need to bound $\|\widetilde{\boldsymbol{Z}}^{0,(l)}\|_{2,\infty}$. We have the following claim:
\begin{claim}\label{claim_01}
	Under the setup of Lemma \ref{lemma_initialization}, on an event $E_{Claim}$ with probability $\mathbb{P}[E_{Claim}] \geqslant 1- 3(n_1+n_2)^{-11}$, the following inequality
	\begin{equation}\label{eq_034}
	\begin{split}
		\|\widetilde{\boldsymbol{Z}}^{0,(l)}\|_{2,\infty} \leqslant&  (4+4\kappa+9C_5 \kappa^2)  \|\widetilde{\boldsymbol{W}}\|_{2,\infty}\\
		 \leqslant& (8+9C_5) \kappa^2\frac{1}{\sqrt{\sigma_r(\boldsymbol{M})}}\|\boldsymbol{W}\|_{2,\infty}
	\end{split}
	\end{equation}
	holds with the absolute constant $C_5$ defined in Lemma \ref{abbe_lemma4}.
\end{claim}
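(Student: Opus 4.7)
The plan is to bound $\|\widetilde{\boldsymbol{Z}}^{0,(l)}\|_{2,\infty}$ by invoking the top-$r$ eigenvector identity for $\overline{\boldsymbol{M}}^{0,(l)}$ in tandem with the $\ell_{2,\infty}$ eigenvector perturbation lemma from Abbe et al.\ (Lemma \ref{abbe_lemma4}). The crucial structural observation to set up first is that $\overline{\boldsymbol{M}}^{0,(l)}$ agrees with $\overline{\boldsymbol{M}}$ on both the $l$-th row and the $l$-th column -- irrespective of whether $l \leqslant n_1$ or $l > n_1$ -- so the perturbation $\overline{\boldsymbol{E}}^{(l)} := \overline{\boldsymbol{M}}^{0,(l)} - \overline{\boldsymbol{M}}$ has vanishing $l$-th row and column. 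This is precisely the independence structure required by Lemma \ref{abbe_lemma4}.

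Next, I would use the eigenvector equation $\widetilde{\boldsymbol{Z}}^{0,(l)} = \overline{\boldsymbol{M}}^{0,(l)} \widetilde{\boldsymbol{Z}}^{0,(l)} (\boldsymbol{\Sigma}^{0,(l)})^{-1}$ and split $\overline{\boldsymbol{M}}^{0,(l)} = \overline{\boldsymbol{M}} + \overline{\boldsymbol{E}}^{(l)}$. Combining the triangle inequality in the $\ell_{2,\infty}$ norm with the bound $\|(\boldsymbol{\Sigma}^{0,(l)})^{-1}\| \leqslant \tfrac{4}{3\sigma_r(\boldsymbol{M})}$ from \eqref{eq_d006} gives
\[
\|\widetilde{\boldsymbol{Z}}^{0,(l)}\|_{2,\infty} \;\leqslant\; \frac{4}{3\sigma_r(\boldsymbol{M})}\Bigl(\|\overline{\boldsymbol{M}} \widetilde{\boldsymbol{Z}}^{0,(l)}\|_{2,\infty} + \|\overline{\boldsymbol{E}}^{(l)} \widetilde{\boldsymbol{Z}}^{0,(l)}\|_{2,\infty}\Bigr).
\]

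For the first summand, I would substitute the symmetric-dilation eigendecomposition of $\overline{\boldsymbol{M}}$, namely $\overline{\boldsymbol{M}} = \widetilde{\boldsymbol{W}}_+ \boldsymbol{\Sigma} \widetilde{\boldsymbol{W}}_+^\top - \widetilde{\boldsymbol{W}}_- \boldsymbol{\Sigma} \widetilde{\boldsymbol{W}}_-^\top$, where $\widetilde{\boldsymbol{W}}_\pm := \frac{1}{\sqrt{2}}[\widetilde{\boldsymbol{U}}^\top,\, \pm\widetilde{\boldsymbol{V}}^\top]^\top$ both satisfy $\|\widetilde{\boldsymbol{W}}_\pm\|_{2,\infty} = \|\widetilde{\boldsymbol{W}}\|_{2,\infty}$. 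Using $\|\boldsymbol{A}\boldsymbol{B}\|_{2,\infty} \leqslant \|\boldsymbol{A}\|_{2,\infty}\|\boldsymbol{B}\|$ together with $\|\widetilde{\boldsymbol{W}}_\pm^\top \widetilde{\boldsymbol{Z}}^{0,(l)}\| \leqslant 1$ readily yields $\|\overline{\boldsymbol{M}} \widetilde{\boldsymbol{Z}}^{0,(l)}\|_{2,\infty} \leqslant 2\sigma_1(\boldsymbol{M}) \|\widetilde{\boldsymbol{W}}\|_{2,\infty}$, which accounts for the $4\kappa \|\widetilde{\boldsymbol{W}}\|_{2,\infty}$ contribution to the claimed inequality.

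The second summand is the core step and would be handled by Lemma \ref{abbe_lemma4}: the $l$-th row of $\overline{\boldsymbol{E}}^{(l)} \widetilde{\boldsymbol{Z}}^{0,(l)}$ vanishes, and for $i \neq l$ the leave-one-out independence of $\widetilde{\boldsymbol{Z}}^{0,(l)}$ from the sampling of row/column $l$, combined with the spectral-gap hypothesis $\|\overline{\boldsymbol{E}}^{(l)}\| \leqslant \sigma_r(\boldsymbol{M})/4$ from \eqref{eq_039}, delivers a row-wise bound of order $C_5 \sigma_1(\boldsymbol{M}) \sqrt{\kappa} \|\widetilde{\boldsymbol{W}}\|_{2,\infty}$. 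After the $\tfrac{4}{3\sigma_r(\boldsymbol{M})}$ prefactor this produces the $9 C_5 \kappa^2 \|\widetilde{\boldsymbol{W}}\|_{2,\infty}$ term, and the additive $4\|\widetilde{\boldsymbol{W}}\|_{2,\infty}$ absorbs residual slack; a union bound over all $1 \leqslant l \leqslant n_1+n_2$ then yields the stated probability. The main obstacle lies in carefully checking all hypotheses of Lemma \ref{abbe_lemma4} -- the spectral-gap requirement, the independence of $\widetilde{\boldsymbol{Z}}^{0,(l)}$ from the $l$-th row/column randomness, and the $\ell_2$ control of the relevant rows of $\overline{\boldsymbol{E}}^{(l)}$ via matrix Bernstein -- all simultaneously under the sampling assumption $p \geqslant C_{S2} \mu^2 r^2 \kappa^6 \log(n_1\vee n_2)/(n_1\wedge n_2)$.
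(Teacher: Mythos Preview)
There is a genuine gap in your handling of the second summand $\|\overline{\boldsymbol{E}}^{(l)}\widetilde{\boldsymbol{Z}}^{0,(l)}\|_{2,\infty}$. The only independence you actually have is between $\widetilde{\boldsymbol{Z}}^{0,(l)}$ and the sampling indicators of row/column $l$ --- but that is precisely where $\overline{\boldsymbol{E}}^{(l)}$ vanishes. For any $i\neq l$, the $i$-th row of $\overline{\boldsymbol{E}}^{(l)}$ depends on the indicators $\{\delta_{i,j}\}_j$ (or $\{\delta_{k,i-n_1}\}_k$), and $\widetilde{\boldsymbol{Z}}^{0,(l)}$ depends on \emph{exactly the same} indicators. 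So there is no conditional independence that would let matrix Bernstein (or any concentration argument) control $\|(\overline{\boldsymbol{E}}^{(l)}\widetilde{\boldsymbol{Z}}^{0,(l)})_{i,\cdot}\|_2$ for $i\neq l$. Furthermore, Lemma~\ref{abbe_lemma4} as stated in the paper does not provide the bound you claim: its two conclusions concern $\widetilde{\boldsymbol{Z}}^{0,(l),\textrm{zero}}(\widetilde{\boldsymbol{Z}}^{0,(l),\textrm{zero}})^\top\widetilde{\boldsymbol{W}}$ and $\|\widetilde{\boldsymbol{Z}}^0\|_{2,\infty}$, neither of which is $\|\overline{\boldsymbol{E}}^{(l)}\widetilde{\boldsymbol{Z}}^{0,(l)}\|_{2,\infty}$.

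The paper circumvents exactly this dependence obstruction by introducing a further auxiliary matrix $\overline{\boldsymbol{M}}^{0,(l),\textrm{zero}}$, obtained from $\overline{\boldsymbol{M}}^{0,(l)}$ by zeroing out its $l$-th row and column, with leading eigenvectors $\widetilde{\boldsymbol{Z}}^{0,(l),\textrm{zero}}$. It first bounds $\|\widetilde{\boldsymbol{Z}}^{0,(l),\textrm{zero}}\|_{2,\infty}$ using Lemma~\ref{abbe_lemma3} together with Lemma~\ref{abbe_lemma4} (which is stated precisely for these zeroed-out objects), and then passes to $\|\widetilde{\boldsymbol{Z}}^{0,(l)}\|_{2,\infty}$ via Davis--Kahan between $\overline{\boldsymbol{M}}^{0,(l),\textrm{zero}}$ and $\overline{\boldsymbol{M}}^{0,(l)}$. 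The key point of the extra zeroing-out is that the difference $\overline{\boldsymbol{M}}^{0,(l)}-\overline{\boldsymbol{M}}^{0,(l),\textrm{zero}}$ is supported only on row and column $l$, while $(\widetilde{\boldsymbol{Z}}^{0,(l),\textrm{zero}})_{l,\cdot}=\boldsymbol{0}$; hence the Davis--Kahan error $\|(\overline{\boldsymbol{M}}^{0,(l)}-\overline{\boldsymbol{M}}^{0,(l),\textrm{zero}})\widetilde{\boldsymbol{Z}}^{0,(l),\textrm{zero}}\|_F$ collapses to a single row and is bounded \emph{deterministically} by $\|\overline{\boldsymbol{M}}\|_{2,\infty}$. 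Your self-bounding recursion, by contrast, tries to control all rows of $\overline{\boldsymbol{E}}^{(l)}\widetilde{\boldsymbol{Z}}^{0,(l)}$ simultaneously where the noise and the eigenvectors are statistically entangled --- the very difficulty the paper's zero-out device is designed to sidestep.
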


If the claim is true, from \eqref{eq_028}, \eqref{eq_032}, \eqref{eq_033} and \eqref{eq_034} and if 
\[
	p \geqslant \frac{\mu r \kappa \log(n_1\vee n_2)}{n_1\wedge n_2},	
\]
then
\begin{equation}\label{eq_044}
\begin{split}
	 \left\| \left[ \begin{array}{c}\boldsymbol{X}^0\\\boldsymbol{Y}^0\end{array}\right]\boldsymbol{R}^0 -\left[ \begin{array}{c}\boldsymbol{X}^{0,(l)}\\\boldsymbol{Y}^{0,(l)}\end{array}\right]\boldsymbol{T}^{0,(l)}  \right\|_F =& \sqrt{2}\left\| \boldsymbol{Z}^0\boldsymbol{R}^0 -\boldsymbol{Z}^{0,(l)}\boldsymbol{T}^{0,(l)}  \right\|_F\\
	 \leqslant & 20\sqrt{2}\frac{\kappa}{\sqrt{\sigma_r(\boldsymbol{M})}}  \left\| \left(\overline{\boldsymbol{M}}^0 - \overline{\boldsymbol{M}}^{0,(l)}\right)\widetilde{\boldsymbol{Z}}^{0,(l)} \right\|_F \\
	 \leqslant & (64000\sqrt{2}+72000\sqrt{2}C_5)  \sqrt{\frac{\mu^2 r^2 \kappa^{10} \log (n_1\vee n_2)}{(n_1\wedge n_2)^2p}}\sqrt{\sigma_1(\boldsymbol{M})}
\end{split}
\end{equation}
holds for any $l$ satisfying $1\leqslant l\leqslant n_1$. For the case $n_1+1\leqslant l\leqslant n_1+n_2$, we can use the same argument.

Note on an event 
\begin{align*}
E_{init} &= E_{Ch1}\bigcap E_{Claim}\bigcap E_{H}\bigcap E_B^0 
\\
&= E_S\bigcap E_{Ca}\bigcap E_Z\bigcap E_{Ch1}\bigcap E_{Ch2}\bigcap E_A\bigcap E_B^0,
\end{align*}
 \eqref{eq_042}, \eqref{eq_043} and \eqref{eq_044} hold. Choosing $C_I$ to be 
\[
	C_I =  64000\sqrt{2}+212\sqrt{2}C_4 +72000\sqrt{2}C_5
\]
and $C_{S2}$ to be
\[
	C_{S2} = 256+ 25600 C_4^2 + C_5,
\]
using union bound $\mathbb{P}[E_{init}]\geqslant 1-7(n_1+n_2)^{-11} \geqslant 1-(n_1+n_2)^{-10}$, which finishes the proof.

\begin{proof}[Proof of Claim \ref{claim_01}]
Follow the way people did in \cite{ma2017implicit}, let $\overline{\boldsymbol{M}}^{0,(l),\textrm{zero}}$ be the matrix derived by zeroing out the $l$-th row and column of $\overline{\boldsymbol{M}}^{0,(l)}$, and $\widetilde{\boldsymbol{Z}}^{0,(l),\textrm{zero}}\in\mathbb{R}^{(n_1+n_2)\times r}$ containing the leading $r$ eigenvectors of $\overline{\boldsymbol{M}}^{0,(l),\textrm{zero}}$. 
Notice  
\begin{equation}\label{eq_037}
	\begin{split}
	&\left\|\widetilde{\boldsymbol{Z}}^{0,(l),\textrm{zero}} \sgn\left((\widetilde{\boldsymbol{Z}}^{0,(l),\textrm{zero}})^\top \widetilde{\boldsymbol{W}} \right)  - \widetilde{\boldsymbol{Z}}^{0,(l),\textrm{zero}}(\widetilde{\boldsymbol{Z}}^{0,(l),\textrm{zero}})^\top\widetilde{\boldsymbol{W}} \right\|_{2,\infty}\\ 
	= & \left\| \widetilde{\boldsymbol{Z}}^{0,(l),\textrm{zero}}(\widetilde{\boldsymbol{Z}}^{0,(l),\textrm{zero}})^\top \widetilde{\boldsymbol{W}}  \left( (\widetilde{\boldsymbol{Z}}^{0,(l),\textrm{zero}})^\top\widetilde{\boldsymbol{W}} \right)^{-1}  \left( \sgn\left( (\widetilde{\boldsymbol{Z}}^{0,(l),\textrm{zero}})^\top\widetilde{\boldsymbol{W}} \right) -(\widetilde{\boldsymbol{Z}}^{0,(l),\textrm{zero}})^\top\widetilde{\boldsymbol{W}}\right) \right\|_{2,\infty}\\
	\leqslant & \left\| \widetilde{\boldsymbol{Z}}^{0,(l),\textrm{zero}}(\widetilde{\boldsymbol{Z}}^{0,(l),\textrm{zero}})^\top \widetilde{\boldsymbol{W}} \right\|_{2,\infty}  \left\| \left( (\widetilde{\boldsymbol{Z}}^{0,(l),\textrm{zero}})^\top\widetilde{\boldsymbol{W}} \right)^{-1} \right\|  \left\| \sgn\left( (\widetilde{\boldsymbol{Z}}^{0,(l),\textrm{zero}})^\top\widetilde{\boldsymbol{W}} \right) -(\widetilde{\boldsymbol{Z}}^{0,(l),\textrm{zero}})^\top\widetilde{\boldsymbol{W}} \right\|.
	\end{split}
\end{equation}
% In order to bound the second term and third term, one want to apply Lemma \ref{abbe_lemma3}. 

% In order to make sure Lemma \ref{abbe_lemma3} can be applied here, one want to verify 
% \[
% \left\|\overline{\boldsymbol{M}}^{0,(l),\textrm{zero}} - \overline{\boldsymbol{M}}\right\|\leqslant \frac{1}{4}\sigma_r(\boldsymbol{M}).
% \]
By triangle inequality,
\begin{equation}\label{eq_052}
\begin{split}
	& \left\|\overline{\boldsymbol{M}}^{0,(l),\textrm{zero}} - \overline{\boldsymbol{M}}\right\| \\
	\leqslant & \left\| \overline{\boldsymbol{M}}^{0,(l),\textrm{zero}} - \overline{\boldsymbol{M}}^{(l),\textrm{zero}} \right\|  + \left\| \left[ \begin{array}{ccccc} \boldsymbol{0}&& \overline{M}_{1,l} &&\boldsymbol{0} \\ &&\vdots&&\\ \overline{M}_{l,1} &\cdots& \overline{M}_{l,l}&\cdots& \overline{M}_{l,n_1+n_2}\\  &&\vdots& &\\\boldsymbol{0}&& \overline{M}_{n_1+n_2,l} &&\boldsymbol{0} \\ \end{array} \right] \right\|,
\end{split}
\end{equation}
where here we define $\overline{\boldsymbol{M}}^{(l),\textrm{zero}}$ as $\overline{\boldsymbol{M}}$ zeroing out the $l$-th row and column of $\overline{\boldsymbol{M}}$. The first part we can again apply Lemma \ref{chen_lemma2} on $\overline{\boldsymbol{M}}^{(l),\textrm{zero}}$ to see
\[
	\left\| \overline{\boldsymbol{M}}^{0,(l),\textrm{zero}} - \overline{\boldsymbol{M}}^{(l),\textrm{zero}} \right\| \leqslant 4C_4\sqrt{\frac{\mu r \kappa \log (n_1\vee n_2)}{(n_1\wedge n_2)p}} \sigma_1(\boldsymbol{M})
\]
holds on an event $E_{Ch2}$ with probability $\mathbb{P}[E_{Ch2}]\geqslant 1-(n_1+n_2)^{-11}$. Therefore since
\[
	p \geqslant 1024C_4^2 \frac{\mu r \kappa^3 \log (n_1\vee n_2)}{n_1\wedge n_2},
\]
we have  
\begin{equation}\label{eq_035}
	\left\| \overline{\boldsymbol{M}}^{0,(l),\textrm{zero}} - \overline{\boldsymbol{M}}^{(l),\textrm{zero}} \right\|\leqslant \frac{1}{8} \sigma_r(\boldsymbol{M}).
\end{equation}
Moreover, for the second part of the right hand side of \eqref{eq_052}, we have

\begin{equation}\label{eq_003}
\begin{split}
& \left\| \left[ \begin{array}{ccccc} \boldsymbol{0}&& \overline{M}_{1,l} &&\boldsymbol{0} \\ &&\vdots&&\\ \overline{M}_{l,1} &\cdots& \overline{M}_{l,l}&\cdots& \overline{M}_{l,n_1+n_2}\\  &&\vdots& &\\\boldsymbol{0}&& \overline{M}_{n_1+n_2,l} &&\boldsymbol{0} \\ \end{array} \right] \right\|\\
 \leqslant& \left\| \left[ \begin{array}{ccccc}  &&& & \\  \overline{M}_{l,1} &\cdots& \overline{M}_{l,l}&\cdots& \overline{M}_{l,n_1+n_2}\\ &&   & &   \end{array} \right] \right\| +\left\| \left[ \begin{array}{ccc}  &\overline{M}_{1,l}&  \\&\vdots&  \\  & \overline{M}_{l-1,l}& \\&0& \\& \overline{M}_{l+1,l}& \\   &\vdots&\\&\overline{M}_{n_1+n_2,l}&  \end{array} \right] \right\|\\
	 \leqslant & \|\overline{\boldsymbol{M}}_{l,\cdot}\|_2+\|\overline{\boldsymbol{M}}_{\cdot,l}\|_2\\
	 \leqslant & 2\max\{\|\boldsymbol{U}\|\|\boldsymbol{V}\|_{2,\infty},\|\boldsymbol{V}\|\|\boldsymbol{U}\|_{2,\infty}\}\\
	 \leqslant & 2\sqrt{\frac{\mu r \kappa}{n_1\wedge n_2}}\sigma_1(\boldsymbol{M}).
\end{split}
\end{equation}

As long as 
\[
	 256 \frac{\mu r \kappa^3}{n_1\wedge n_2}\leqslant p \leqslant 1,
\]
plugging back to \eqref{eq_003} we have
\begin{equation}\label{eq_036}
\begin{split}
  \left\| \left[ \begin{array}{ccccc} \boldsymbol{0}&& \overline{M}_{1,l} &&\boldsymbol{0} \\ &&\vdots&&\\ \overline{M}_{l,1} &\cdots& \overline{M}_{l,l}&\cdots& \overline{M}_{l,n_1+n_2}\\  &&\vdots& &\\\boldsymbol{0}&& \overline{M}_{n_1+n_2,l} &&\boldsymbol{0} \\ \end{array} \right] \right\| \leqslant & \frac{1}{8}\sigma_r(\boldsymbol{M}).
\end{split}
\end{equation}
Combining the estimation \eqref{eq_035} and \eqref{eq_036} together we have
\begin{equation}\label{eq_038}
	\left\|\overline{\boldsymbol{M}}^{0,(l),\textrm{zero}} - \overline{\boldsymbol{M}}\right\|\leqslant \frac{1}{4}\sigma_r(\boldsymbol{M}).
\end{equation}
Applying Lemma \ref{abbe_lemma3} here, we have
\[
	\left\| \left( (\widetilde{\boldsymbol{Z}}^{0,(l),\textrm{zero}})^\top\widetilde{\boldsymbol{W}} \right)^{-1} \right\|\leqslant 2 \]
	and
	\[ \left\| \sgn\left( (\widetilde{\boldsymbol{Z}}^{0,(l),\textrm{zero}})^\top\widetilde{\boldsymbol{W}} \right) -(\widetilde{\boldsymbol{Z}}^{0,(l),\textrm{zero}})^\top\widetilde{\boldsymbol{W}} \right\|\leqslant \frac{1}{4}.
\]
Therefore from \eqref{eq_037} we have
\[
\begin{split}
	 \left\|\widetilde{\boldsymbol{Z}}^{0,(l),\textrm{zero}} \sgn\left((\widetilde{\boldsymbol{Z}}^{0,(l),\textrm{zero}})^\top \widetilde{\boldsymbol{W}} \right) - \widetilde{\boldsymbol{Z}}^{0,(l),\textrm{zero}}(\widetilde{\boldsymbol{Z}}^{0,(l),\textrm{zero}})^\top\widetilde{\boldsymbol{W}} \right\|_{2,\infty} \leqslant& \frac{1}{2} \left\| \widetilde{\boldsymbol{Z}}^{0,(l),\textrm{zero}}(\widetilde{\boldsymbol{Z}}^{0,(l),\textrm{zero}})^\top \widetilde{\boldsymbol{W}} \right\|_{2,\infty} 
\end{split}
\]
and
\[
	\begin{split}
		&\|\widetilde{\boldsymbol{Z}}^{0,(l),\textrm{zero}} \|_{2,\infty}\\
		 =& \left\| \widetilde{\boldsymbol{Z}}^{0,(l),\textrm{zero}} \sgn\left((\widetilde{\boldsymbol{Z}}^{0,(l),\textrm{zero}})^\top \widetilde{\boldsymbol{W}} \right) \right\|_{2,\infty}\\
		\leqslant& \left\| \widetilde{\boldsymbol{Z}}^{0,(l),\textrm{zero}}(\widetilde{\boldsymbol{Z}}^{0,(l),\textrm{zero}})^\top\widetilde{\boldsymbol{W}} \right\|_{2,\infty}   + \left\|\widetilde{\boldsymbol{Z}}^{0,(l),\textrm{zero}} \sgn\left((\widetilde{\boldsymbol{Z}}^{0,(l),\textrm{zero}})^\top \widetilde{\boldsymbol{W}} \right)  - \widetilde{\boldsymbol{Z}}^{0,(l),\textrm{zero}}(\widetilde{\boldsymbol{Z}}^{0,(l),\textrm{zero}})^\top\widetilde{\boldsymbol{W}} \right\|_{2,\infty} \\
		 \leqslant& 2 \left\| \widetilde{\boldsymbol{Z}}^{0,(l),\textrm{zero}}(\widetilde{\boldsymbol{Z}}^{0,(l),\textrm{zero}})^\top \widetilde{\boldsymbol{W}} \right\|_{2,\infty}.
	\end{split} 
\]

In order to give a control of 
\[
\left\| \widetilde{\boldsymbol{Z}}^{0,(l),\textrm{zero}}(\widetilde{\boldsymbol{Z}}^{0,(l),\textrm{zero}})^\top \widetilde{\boldsymbol{W}} \right\|_{2,\infty}, 
\]
we need Lemma 4 and Lemma 14 in \cite{abbe2017entrywise}. For the purpose of simplicity we combine those two lemmas together and only include those useful bounds in our case:
\begin{lemma}[{\citealt[Lemma 4 and Lemma 14 rewrited]{abbe2017entrywise}}]\label{abbe_lemma4}
	Under our setup, there is some absolute constant $C_5$, if $p\geqslant C_5\frac{\mu^2 r^2\kappa^6 \log(n_1\vee n_2)}{(n_1\wedge n_2)}$, then on an event $E_A$ with probability $\mathbb{P}[E_A]\geqslant 1-(n_1+n_2)^{-11}$,
	\[
	\begin{split}
		  \max_{l} \| \widetilde{\boldsymbol{Z}}^{0,(l),\textrm{zero}}( \widetilde{\boldsymbol{Z}}^{0,(l),\textrm{zero}})^\top \widetilde{\boldsymbol{W}} - \widetilde{\boldsymbol{W}}\|_{2,\infty}  \leqslant& 4\kappa \|\widetilde{\boldsymbol{Z}}^0(\widetilde{\boldsymbol{Z}}^0)^\top\widetilde{\boldsymbol{W}}\|_{2,\infty} + \|\widetilde{\boldsymbol{W}}\|_{2,\infty} 
	\end{split}
	\] 
	and
	\[
		\|\widetilde{\boldsymbol{Z}}^0\|_{2,\infty} \leqslant C_5\left(\kappa \|\widetilde{\boldsymbol{W}}\|_{2,\infty}+ \sqrt{\frac{n_1\wedge n_2}{p}}\frac{\|\overline{\boldsymbol{M}}\|_{\ell_{\infty}}\|\overline{\boldsymbol{M}}\|_{2,\infty}}{\sigma_r^2(\boldsymbol{M})}\right)
	\]
	holds.
\end{lemma}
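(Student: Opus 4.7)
The plan is to follow the leave-one-out framework of Abbe, Fan, Wang and Zhong (2017), adapting its two main steps to the symmetric dilation $\overline{\boldsymbol{M}}$. The key inputs I would use are already available: the spectral-gap bound \eqref{eq_038} showing $\|\overline{\boldsymbol{M}}^{0,(l),\mathrm{zero}}-\overline{\boldsymbol{M}}\|\leqslant \sigma_r(\boldsymbol{M})/4$, the Davis--Kahan-type estimates of Lemmas \ref{ma_lemma45}--\ref{ma_lemma47}, and Chen's spectral bound \eqref{eq_chen01}. Throughout, I use the eigenvalue identities
\[
	\widetilde{\boldsymbol{Z}}^{0,(l),\mathrm{zero}} = \overline{\boldsymbol{M}}^{0,(l),\mathrm{zero}}\,\widetilde{\boldsymbol{Z}}^{0,(l),\mathrm{zero}}\bigl(\boldsymbol{\Lambda}^{0,(l),\mathrm{zero}}\bigr)^{-1}, \qquad \widetilde{\boldsymbol{Z}}^{0} = \overline{\boldsymbol{M}}^{0}\,\widetilde{\boldsymbol{Z}}^{0}\bigl(\boldsymbol{\Lambda}^{0}\bigr)^{-1},
\]
which are the starting point of all row-wise perturbation arguments.

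For the first bound, I would write the target quantity as $(\widetilde{\boldsymbol{Z}}^{0,(l),\mathrm{zero}}(\widetilde{\boldsymbol{Z}}^{0,(l),\mathrm{zero}})^\top-\boldsymbol{I})\widetilde{\boldsymbol{W}}$ and decompose it by triangle inequality into an $\ell_{2,\infty}$ contribution that can be transferred to $\widetilde{\boldsymbol{Z}}^0$ (up to a $\kappa$ factor using Lemma \ref{ma_lemma45}) and a term $\|\widetilde{\boldsymbol{W}}\|_{2,\infty}$ coming from the rotation mismatch between the two sets of eigenvectors. Concretely, after inserting a sign alignment $\operatorname{sgn}((\widetilde{\boldsymbol{Z}}^{0,(l),\mathrm{zero}})^\top\widetilde{\boldsymbol{W}})$ as in \eqref{eq_037} and using $\|(\,\cdot\,)^{-1}\|\leqslant 2$ from Lemma \ref{abbe_lemma3}, the remaining work is to compare $\widetilde{\boldsymbol{Z}}^{0,(l),\mathrm{zero}}\operatorname{sgn}(\cdot)$ with $\widetilde{\boldsymbol{W}}$ row-by-row by pushing the equality through $\overline{\boldsymbol{M}}^{0,(l),\mathrm{zero}}$; exploiting that its $l$-th row vanishes kills the ``self-loop'' contribution and leaves a bound in terms of $\|\widetilde{\boldsymbol{Z}}^0(\widetilde{\boldsymbol{Z}}^0)^\top\widetilde{\boldsymbol{W}}\|_{2,\infty}$ after re-routing through $\widetilde{\boldsymbol{Z}}^0$ and Lemma \ref{ma_lemma45}.

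For the second bound, the identity $\widetilde{\boldsymbol{Z}}^0 = (\overline{\boldsymbol{M}}+(\overline{\boldsymbol{M}}^0-\overline{\boldsymbol{M}}))\widetilde{\boldsymbol{Z}}^0(\boldsymbol{\Lambda}^0)^{-1}$ splits the $(2,\infty)$-norm into a ``signal'' part $\overline{\boldsymbol{M}}\widetilde{\boldsymbol{Z}}^0(\boldsymbol{\Lambda}^0)^{-1} = \widetilde{\boldsymbol{W}}\boldsymbol{\Sigma}\widetilde{\boldsymbol{W}}^\top \widetilde{\boldsymbol{Z}}^0(\boldsymbol{\Lambda}^0)^{-1}$, whose $(2,\infty)$-norm is $\lesssim \kappa\|\widetilde{\boldsymbol{W}}\|_{2,\infty}$, and a ``noise'' part whose $l$-th row is $((\overline{\boldsymbol{M}}^0-\overline{\boldsymbol{M}})\widetilde{\boldsymbol{Z}}^0)_{l,\cdot}(\boldsymbol{\Lambda}^0)^{-1}$. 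To control the noise part I would swap $\widetilde{\boldsymbol{Z}}^0$ for the leave-one-out proxy $\widetilde{\boldsymbol{Z}}^{0,(l),\mathrm{zero}}\boldsymbol{B}$ (which is independent of the $l$-th row/column of $\overline{\boldsymbol{M}}^0$), pay the cost by \eqref{eq_038} and Lemma \ref{ma_lemma45}, then apply matrix Bernstein to the now-independent sum to get the $\sqrt{(n_1\wedge n_2)/p}\,\|\overline{\boldsymbol{M}}\|_{\ell_\infty}\|\overline{\boldsymbol{M}}\|_{2,\infty}/\sigma_r^2(\boldsymbol{M})$ term. Solving the resulting self-bounding inequality (the swap leaves a term proportional to $\|\widetilde{\boldsymbol{Z}}^0\|_{2,\infty}$ itself, absorbable once $p$ is above the stated threshold) closes the estimate.

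The main obstacle will be the self-bounding step for the second inequality: the noise-part bound naturally involves $\|\widetilde{\boldsymbol{Z}}^{0,(l),\mathrm{zero}}\|_{2,\infty}$, which must then be related back to $\|\widetilde{\boldsymbol{Z}}^0\|_{2,\infty}$ without losing more than an $O(\kappa)$ factor. Here the sampling-rate assumption $p\gtrsim \mu^2 r^2\kappa^6\log(n_1\vee n_2)/(n_1\wedge n_2)$ is exactly what makes the self-bounding coefficient strictly less than one, so all the thresholds on $p$ required upstream must be collected carefully to confirm this step goes through, and the union bound over $l\in[n_1+n_2]$ tolerated within the $1-(n_1+n_2)^{-11}$ probability budget of the event $E_A$.
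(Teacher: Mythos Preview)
The paper does not supply its own proof of this lemma: it is stated as a direct restatement of Lemma~4 and Lemma~14 of \cite{abbe2017entrywise} and is invoked as a black box inside the proof of Claim~\ref{claim_01}. So your proposal is not competing against a proof in this paper but is effectively sketching how the cited results are obtained in the dilated setting. The architecture you describe---the eigenvector identity $\widetilde{\boldsymbol{Z}}^0=\overline{\boldsymbol{M}}^0\widetilde{\boldsymbol{Z}}^0(\boldsymbol{\Lambda}^0)^{-1}$, the signal/noise split, the leave-one-out swap to decouple the $l$-th row before applying matrix Bernstein, and the self-bounding closure made possible by the sampling-rate assumption---is precisely the framework of the Abbe--Fan--Wang--Zhong argument, so your plan is on target.

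One correction worth flagging: in the signal term you write $\overline{\boldsymbol{M}}\widetilde{\boldsymbol{Z}}^0(\boldsymbol{\Lambda}^0)^{-1}=\widetilde{\boldsymbol{W}}\boldsymbol{\Sigma}\widetilde{\boldsymbol{W}}^\top\widetilde{\boldsymbol{Z}}^0(\boldsymbol{\Lambda}^0)^{-1}$, but $\overline{\boldsymbol{M}}$ has rank $2r$ and also carries the negative eigenspace $\frac{1}{\sqrt{2}}\bigl[\begin{smallmatrix}\widetilde{\boldsymbol{U}}\\-\widetilde{\boldsymbol{V}}\end{smallmatrix}\bigr]$, so the equality as written is false. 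This does not damage the conclusion---both eigenspaces have the same row-norm structure, so the $(2,\infty)$-bound $\lesssim\kappa\|\widetilde{\boldsymbol{W}}\|_{2,\infty}$ still holds---but you should either include both blocks or bound the signal term directly via $\|\overline{\boldsymbol{M}}\|_{2,\infty}\|(\boldsymbol{\Lambda}^0)^{-1}\|$ rather than asserting that identity.
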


By the lemma we have
\[
	\begin{split}
		 \|\widetilde{\boldsymbol{Z}}^{0,(l),\textrm{zero}} \|_{2,\infty} \leqslant& 2 \left\| \widetilde{\boldsymbol{Z}}^{0,(l),\textrm{zero}}(\widetilde{\boldsymbol{Z}}^{0,(l),\textrm{zero}})^\top \widetilde{\boldsymbol{W}} \right\|_{2,\infty}\\
		\leqslant & 4\|\widetilde{\boldsymbol{W}}\|_{2,\infty} + 8\kappa \|\widetilde{\boldsymbol{Z}}^0(\widetilde{\boldsymbol{Z}}^0)^\top\widetilde{\boldsymbol{W}}\|_{2,\infty} \\
		\leqslant & 4\|\widetilde{\boldsymbol{W}}\|_{2,\infty} + 8\kappa \|\widetilde{\boldsymbol{Z}}^0\|_{2,\infty}\|(\widetilde{\boldsymbol{Z}}^0)^\top\widetilde{\boldsymbol{W}}\| \\
		\leqslant & 4\|\widetilde{\boldsymbol{W}}\|_{2,\infty} + 8\kappa \|\widetilde{\boldsymbol{Z}}^0\|_{2,\infty} \\
		\leqslant & \left( 4+8C_5 \kappa^2 + 8\sqrt{2}C_5\sqrt{\frac{\mu^2 r^2\kappa^6}{(n_1\wedge n_2)p}} \right) \|\widetilde{\boldsymbol{W}}\|_{2,\infty}.
	\end{split}
\]
The fourth inequality uses the fact that $\|(\widetilde{\boldsymbol{Z}}^0)^\top\widetilde{\boldsymbol{W}}\|\leqslant 1$ since $\widetilde{\boldsymbol{Z}}^0$ and $\widetilde{\boldsymbol{W}}$ both have orthonormal columns, and the last inequality uses the fact that 
\[
\begin{split}
	\|\overline{\boldsymbol{M}}\|_{2,\infty} \leqslant & \max(\|\boldsymbol{U}\|\|\boldsymbol{V}\|_{2,\infty}, \|\boldsymbol{V}\|\|\boldsymbol{U}\|_{2,\infty})\\
	 \leqslant & \sqrt{\sigma_1(\boldsymbol{M})}\sqrt{2}\|\boldsymbol{W}\|_{2,\infty} \\
	 \leqslant & \sqrt{2}\sigma_1(\boldsymbol{M})\|\widetilde{\boldsymbol{W}}\|_{2,\infty}.
\end{split}
\]
So as long as 
\[
	p \geqslant 128\frac{\mu^2 r^2 \kappa^2}{n_1\wedge n_2},
\]
we have 
\begin{equation}\label{eq_040}
	\|\widetilde{\boldsymbol{Z}}^{0,(l),\textrm{zero}} \|_{2,\infty} \leqslant (4+9 C_5\kappa^2) \|\widetilde{\boldsymbol{W}}\|_{2,\infty}.
\end{equation}
Recall that in \eqref{eq_038} and \eqref{eq_039}, we have already shown 
\[
\left\|\overline{\boldsymbol{M}}^{0,(l),\textrm{zero}} - \overline{\boldsymbol{M}}\right\|\leqslant \frac{1}{4}\sigma_r(\boldsymbol{M})
\] 
and
\[
\left\|\overline{\boldsymbol{M}}^{0,(l)} - \overline{\boldsymbol{M}}\right\|\leqslant \frac{1}{4}\sigma_r(\boldsymbol{M}) 
\]
hold on the events $E_{Ch2}$ and $E_{Ch1}$, respectively. Therefore, by the Davis-Kahan Sin$\Theta$ theorem \citep{davis1970rotation}, we have
\[
\begin{split}
	  \left\| \widetilde{\boldsymbol{Z}}^{0,(l)}\sgn\left( (\widetilde{\boldsymbol{Z}}^{0,(l)})^\top \widetilde{\boldsymbol{Z}}^{0,(l),\textrm{zero}} \right) - \widetilde{\boldsymbol{Z}}^{0,(l),\textrm{zero}} \right\|_F \leqslant& \frac{2\sqrt{2}}{\sigma_r(\boldsymbol{M})} \left\| \left( \overline{\boldsymbol{M}}^{0,(l),\textrm{zero}} - \overline{\boldsymbol{M}}^{0,(l)} \right)\widetilde{\boldsymbol{Z}}^{0,(l),\textrm{zero}}    \right\|_F. 
\end{split}
\]
For $i\neq l$, we have
\[
\begin{split}
	 \left( \overline{\boldsymbol{M}}^{0,(l)} - \overline{\boldsymbol{M}}^{0,(l),\textrm{zero}} \right)_{i,\cdot}^\top \widetilde{\boldsymbol{Z}}^{0,(l),\textrm{zero}}  =& \left( \overline{\boldsymbol{M}}^{0,(l)} - \overline{\boldsymbol{M}}^{0,(l),\textrm{zero}} \right)_{i,l}(\widetilde{\boldsymbol{Z}}^{0,(l),\textrm{zero}}_{l,\cdot})^\top \\
	=& \boldsymbol{0}.
\end{split}
\]
The last equation holds since by construction we have $\widetilde{\boldsymbol{Z}}^{0,(l),\textrm{zero}}_{l,\cdot} = \boldsymbol{0}$. In order to see this, note the fact that by definition, entries on $l$-th row of $\overline{\boldsymbol{M}}^{0,(l),\textrm{zero}} $ are identical zeros, so if there is an eigenvector $\boldsymbol{v}$ with $v_l \neq 0$, the corresponding eigenvalue must be zero. Since $\widetilde{\boldsymbol{Z}}^{0,(l),\textrm{zero}}$ is the collection of top-$r$ eigenvectors. By Weyl's inequality and $\left\|\overline{\boldsymbol{M}}^{0,(l),\textrm{zero}} - \overline{\boldsymbol{M}}\right\|\leqslant \frac{1}{4}\sigma_r(\boldsymbol{M})$ we have the corresponding eigenvalues are all positive. Therefore we have $\widetilde{\boldsymbol{Z}}^{0,(l),\textrm{zero}}_{l,\cdot} = \boldsymbol{0}$.

So we have
\[
	\begin{split}
	 \left\| \left( \overline{\boldsymbol{M}}^{0,(l),\textrm{zero}} - \overline{\boldsymbol{M}}^{0,(l)} \right)\widetilde{\boldsymbol{Z}}^{0,(l),\textrm{zero}}    \right\|_F =& \left\|\left( \overline{\boldsymbol{M}}^{0,(l),\textrm{zero}} - \overline{\boldsymbol{M}}^{0,(l)} \right)_{l,\cdot}^\top \widetilde{\boldsymbol{Z}}^{0,(l),\textrm{zero}} \right\|_2\\
	 =& \left\|\overline{\boldsymbol{M}}_{l,\cdot}^\top \widetilde{\boldsymbol{Z}}^{0,(l),\textrm{zero}}\right\|_2\\
	 \leqslant & \|\overline{\boldsymbol{M}}\|_{2,\infty}\\
	 \leqslant & \sigma_1(\boldsymbol{M}) \max\{\|\widetilde{\boldsymbol{U}}\|_{2,\infty}, \|\widetilde{\boldsymbol{V}}\|_{2,\infty}\}\\
	 \leqslant & \sqrt{2}\sigma_1(\boldsymbol{M}) \|\widetilde{\boldsymbol{W}}\|_{2,\infty}.
	\end{split}
\]

Therefore,
\begin{equation}\label{eq_041}
\begin{split}
	 \left\| \widetilde{\boldsymbol{Z}}^{0,(l)}\sgn\left( (\widetilde{\boldsymbol{Z}}^{0,(l)})^\top \widetilde{\boldsymbol{Z}}^{0,(l),\textrm{zero}} \right) - \widetilde{\boldsymbol{Z}}^{0,(l),\textrm{zero}} \right\|_F  \leqslant & \frac{4}{\sigma_r(\boldsymbol{M})}\sigma_1(\boldsymbol{M}) \|\widetilde{\boldsymbol{W}}\|_{2,\infty} \\
	=& 4 \kappa\|\widetilde{\boldsymbol{W}}\|_{2,\infty}.
\end{split}
\end{equation}
Putting \eqref{eq_040} and \eqref{eq_041} together we have
\[
	\begin{split}
	  \left\| \widetilde{\boldsymbol{Z}}^{0,(l)}\right\|_{2,\infty} =&  \|\widetilde{\boldsymbol{Z}}^{0,(l)}\sgn\left( (\widetilde{\boldsymbol{Z}}^{0,(l)})^\top \widetilde{\boldsymbol{Z}}^{0,(l),\textrm{zero}} \right)\|_{2,\infty} \\
	\leqslant& \|\widetilde{\boldsymbol{Z}}^{0,(l),\textrm{zero}} \|_{2,\infty}  + \left\| \widetilde{\boldsymbol{Z}}^{0,(l)}\sgn\left( (\widetilde{\boldsymbol{Z}}^{0,(l)})^\top \widetilde{\boldsymbol{Z}}^{0,(l),\textrm{zero}} \right) - \widetilde{\boldsymbol{Z}}^{0,(l),\textrm{zero}} \right\|_F\\
	\leqslant&   (4+4\kappa+9C_5 \kappa^2) \|\widetilde{\boldsymbol{W}}\|_{2,\infty},
	\end{split}
\]
holds on an event $E_{Claim} =  E_{Ch1} \bigcap E_{Ch2} \bigcap E_A$, using union bound we have $\mathbb{P}[E_{Claim}]\geqslant 1-3(n_1+n_2)^{-11}$, which proves the claim.
\end{proof}

\section{Proof of Claim \ref{claim_03}}\label{sec_proof_claim_03}
\begin{proof}

Similar to what we did in the control of spectral norm, define the auxiliary iteration as
			\[ 
				\begin{split}
					 \widetilde{\boldsymbol{X}}^{t+1,(l)}  \coloneqq & \boldsymbol{X}^{t,(l)}\boldsymbol{R}^{t,(l)} - \frac{\eta}{p}\mathcal{P}_{\Omega_{-l,\cdot}}\left( \boldsymbol{X}^{t,(l)}\left( \boldsymbol{Y}^{t,(l)} \right)^\top - \boldsymbol{U}\boldsymbol{V}^\top \right)\boldsymbol{V} - \eta\mathcal{P}_{  l,\cdot}\left( \boldsymbol{X}^{t,(l)}\left( \boldsymbol{Y}^{t,(l)} \right)^\top - \boldsymbol{U}\boldsymbol{V}^\top \right)\boldsymbol{V}\\
					& -\frac{\eta}{2}\boldsymbol{U}(\boldsymbol{R}^{t,(l)})^\top\left( \left( \boldsymbol{X}^{t,(l)} \right)^\top\boldsymbol{X}^{t,(l)} - \left( \boldsymbol{Y}^{t,(l)}\right)^\top\boldsymbol{Y}^{t,(l)} \right) \boldsymbol{R}^{t,(l)},\\
				\end{split}
		\]
		\[ 
		\begin{split}
			  \widetilde{\boldsymbol{Y}}^{t+1,(l)} \coloneqq & \boldsymbol{Y}^{t,(l)}\boldsymbol{R}^{t,(l)}  - \frac{\eta}{p}\left[\mathcal{P}_{\Omega_{-l,\cdot}}\left( \boldsymbol{X}^{t,(l)} \left( \boldsymbol{Y}^{t,(l)} \right)^\top - \boldsymbol{U}\boldsymbol{V}^\top \right)\right]^\top \boldsymbol{U} -\eta \left[\mathcal{P}_{l,\cdot}\left( \boldsymbol{X}^{t,(l)} \left( \boldsymbol{Y}^{t,(l)} \right)^\top - \boldsymbol{U}\boldsymbol{V}^\top \right)\right]^\top \boldsymbol{U}\\
			&- \frac{\eta}{2}\boldsymbol{V}(\boldsymbol{R}^{t,(l)})^\top\left( \left( \boldsymbol{Y}^{t,(l)} \right)^\top\boldsymbol{Y}^{t,(l)}- \left(\boldsymbol{X}^{t,(l)}\right)^\top\boldsymbol{X}^{t,(l)} \right) \boldsymbol{R}^{t,(l)}.
				\end{split}
			\]

			 Here we want apply Lemma \ref{ma_lemma36} with
			 \[
					\boldsymbol{C} = \left[ \begin{array}{c} \widetilde{\boldsymbol{X}}^{t+1,(l)}\\ \widetilde{\boldsymbol{Y}}^{t+1,(l)}\end{array}\right]^\top\left[\begin{array}{c}
						\boldsymbol{U}\\
						\boldsymbol{V}
					\end{array}\right]   ,
					\]
					\[\boldsymbol{E} = \left[ \begin{array}{c} \boldsymbol{X}^{t+1,(l)}\boldsymbol{R}^{t,(l)}  - \widetilde{\boldsymbol{X}}^{t+1,(l)} \\ \boldsymbol{Y}^{t+1,(l)}\boldsymbol{R}^{t,(l)}  - \widetilde{\boldsymbol{Y}}^{t+1,(l)}\end{array} \right]^\top\left[\begin{array}{c}
						\boldsymbol{U}\\
						\boldsymbol{V}
					\end{array}\right].
			 \] 
			 By definition of $\boldsymbol{R}^{t+1,(l)}$ we have
			 \[
			 \begin{split}
				  (\boldsymbol{R}^{t,(l)})^{-1}\boldsymbol{R}^{t+1,(l)} =& \argmin_{\boldsymbol{R}}\left\| \left[ \begin{array}{c} \boldsymbol{X}^{t+1,(l)}\boldsymbol{R}^{t,(l)}   \\ \boldsymbol{Y}^{t+1,(l)}\boldsymbol{R}^{t,(l)}   \end{array} \right] \boldsymbol{R} - \left[ \begin{array}{c} \boldsymbol{U}\\\boldsymbol{V} \end{array}\right] \right\|_F\\
				 = & \operatorname{sgn}(\boldsymbol{C}+\boldsymbol{E}).
			\end{split}
			 \]
			 If $\boldsymbol{C}$ is a positive definite matrix, then $ \operatorname{sgn}(\boldsymbol{C})= \boldsymbol{I}$, and we have
			 \[
				\begin{split}
					  \|(\boldsymbol{R}^{t,(l)})^{-1}\boldsymbol{R}^{t+1,(l)} - \boldsymbol{I}\| =&\|\operatorname{sgn}(\boldsymbol{C}+\boldsymbol{E}) - \operatorname{sgn}(\boldsymbol{C})\|\\
					 \leqslant& \frac{1}{\sigma_r(\boldsymbol{P})} \left\|\left[\begin{array}{c}
						\boldsymbol{U}\\
						\boldsymbol{V}
					\end{array}\right]^\top\left[ \begin{array}{c} \boldsymbol{X}^{t+1,(l)}\boldsymbol{R}^{t,(l)}  - \widetilde{\boldsymbol{X}}^{t+1,(l)} \\ \boldsymbol{Y}^{t+1,(l)}\boldsymbol{R}^{t,(l)}  - \widetilde{\boldsymbol{Y}}^{t+1,(l)}\end{array} \right]\right\|.\\
%						\leqslant & 2\frac{\sqrt{\sigma_1(\boldsymbol{M})}}{\sigma_r(\boldsymbol{M})}\times 38C_I^2 \eta\rho^t \sqrt{\frac{\mu^2 r^2 \kappa^{12}\log^2 (n_1\vee n_2)}{(n_1\wedge n_2)^2p^2}}\sqrt{\sigma_1(\boldsymbol{M})}^3\\
%						\leqslant & 76C_I^2 \frac{\sigma_1^2(\boldsymbol{M})}{\sigma_r(\boldsymbol{M})}\eta\rho^t \sqrt{\frac{\mu^2 r^2 \kappa^{12}\log^2 (n_1\vee n_2)}{(n_1\wedge n_2)^2p^2}},
				\end{split}
			 \]
%			 where the second inequality uses the fact that $\sigma_r(\boldsymbol{P})\geqslant 1.5\sigma_r(\boldsymbol{M})$ and the third one uses \eqref{eq_024}.

The remaining part are devoted to verifying the required conditions of Lemma \ref{ma_lemma36}, $\boldsymbol{C}$ is a positive definite matrix and upper bounding
\[
	\left\|\left[\begin{array}{c}
						\boldsymbol{U}\\
						\boldsymbol{V}
					\end{array}\right]^\top\left[ \begin{array}{c} \boldsymbol{X}^{t+1,(l)}\boldsymbol{R}^{t,(l)}  - \widetilde{\boldsymbol{X}}^{t+1,(l)} \\ \boldsymbol{Y}^{t+1,(l)}\boldsymbol{R}^{t,(l)}  - \widetilde{\boldsymbol{Y}}^{t+1,(l)}\end{array} \right]\right\|.
\]

			Let $\boldsymbol{P} \coloneqq \left[ \begin{array}{c} \boldsymbol{U}\\\boldsymbol{V} \end{array}\right]^\top \left[ \begin{array}{c} \widetilde{\boldsymbol{X}}^{t+1,(l)}\\ \widetilde{\boldsymbol{Y}}^{t+1,(l)}\end{array}\right] $, we have
			\[ 
				\begin{split}
					 \boldsymbol{P}  = & \boldsymbol{U}^\top \boldsymbol{X}^{t,(l)}\boldsymbol{R}^{t,(l)}  - \frac{\eta}{p}\boldsymbol{U}^\top\mathcal{P}_{\Omega_{-l,\cdot}}\left( \boldsymbol{X}^{t,(l)}\left( \boldsymbol{Y}^{t,(l)} \right)^\top - \boldsymbol{U}\boldsymbol{V}^\top \right)\boldsymbol{V} - \eta\boldsymbol{U}^\top\mathcal{P}_{ l,\cdot}\left( \boldsymbol{X}^{t,(l)}\left( \boldsymbol{Y}^{t,(l)} \right)^\top - \boldsymbol{U}\boldsymbol{V}^\top \right)\boldsymbol{V}\\
					& -\frac{\eta}{2}\boldsymbol{U}^\top\boldsymbol{U}(\boldsymbol{R}^{t,(l)})^\top  \left( \left( \boldsymbol{X}^{t,(l)} \right)^\top\boldsymbol{X}^{t,(l)} - \left( \boldsymbol{Y}^{t,(l)}\right)^\top\boldsymbol{Y}^{t,(l)} \right)\boldsymbol{R}^{t,(l)}  + \boldsymbol{V}^\top \boldsymbol{Y}^{t,(l)}\boldsymbol{R}^{t,(l)} \\
					& - \frac{\eta}{p}\boldsymbol{V}^\top\left[\mathcal{P}_{\Omega_{-l,\cdot}}\left( \boldsymbol{X}^{t,(l)} \left( \boldsymbol{Y}^{t,(l)} \right)^\top - \boldsymbol{U}\boldsymbol{V}^\top \right)\right]^\top \boldsymbol{U} -\eta \boldsymbol{V}^\top\left[\mathcal{P}_{l,\cdot}\left( \boldsymbol{X}^{t,(l)} \left( \boldsymbol{Y}^{t,(l)} \right)^\top - \boldsymbol{U}\boldsymbol{V}^\top \right)\right]^\top \boldsymbol{U}\\
					&- \frac{\eta}{2}\boldsymbol{V}^\top\boldsymbol{V}(\boldsymbol{R}^{t,(l)})^\top \left( \left( \boldsymbol{Y}^{t,(l)} \right)^\top\boldsymbol{Y}^{t,(l)}- \left(\boldsymbol{X}^{t,(l)}\right)^\top\boldsymbol{X}^{t,(l)} \right)\boldsymbol{R}^{t,(l)}\\
					=& \boldsymbol{U}^\top \boldsymbol{X}^{t,(l)}\boldsymbol{R}^{t,(l)}  - \frac{\eta}{p}\boldsymbol{U}^\top\mathcal{P}_{\Omega_{-l,\cdot}}\left( \boldsymbol{X}^{t,(l)}\left( \boldsymbol{Y}^{t,(l)} \right)^\top - \boldsymbol{U}\boldsymbol{V}^\top \right)\boldsymbol{V}  - \eta\boldsymbol{U}^\top\mathcal{P}_{  l,\cdot}\left( \boldsymbol{X}^{t,(l)}\left( \boldsymbol{Y}^{t,(l)} \right)^\top - \boldsymbol{U}\boldsymbol{V}^\top \right)\boldsymbol{V}\\
					&  + \boldsymbol{V}^\top \boldsymbol{Y}^{t,(l)}\boldsymbol{R}^{t,(l)}   - \frac{\eta}{p}\boldsymbol{V}^\top\left[\mathcal{P}_{\Omega_{-l,\cdot}}\left( \boldsymbol{X}^{t,(l)} \left( \boldsymbol{Y}^{t,(l)} \right)^\top - \boldsymbol{U}\boldsymbol{V}^\top \right)\right]^\top \boldsymbol{U} \\
					& -\eta \boldsymbol{V}^\top\left[\mathcal{P}_{l,\cdot}\left( \boldsymbol{X}^{t,(l)} \left( \boldsymbol{Y}^{t,(l)} \right)^\top - \boldsymbol{U}\boldsymbol{V}^\top \right)\right]^\top \boldsymbol{U} ,
				\end{split}
			\]
                         here the last equality use the fact that $\boldsymbol{U}^\top\boldsymbol{U} = \boldsymbol{V}^\top \boldsymbol{V}$. By the choice of $\boldsymbol{R}^{t,(l)}$, we also have 
 $\boldsymbol{U}^\top \boldsymbol{X}^{t,(l)}\boldsymbol{R}^{t,(l)}+ \boldsymbol{V}^\top \boldsymbol{Y}^{t,(l)}\boldsymbol{R}^{t,(l)}$
is symmetric, therefore $\boldsymbol{P}$ is symmetric.

			Denote
			\[ 
				\begin{split}
					&\widetilde{\mathbb{E}} \widetilde{\boldsymbol{X}}^{t+1,(l)} \\
					\coloneqq & \boldsymbol{X}^{t,(l)}\boldsymbol{R}^{t,(l)} - \eta \left( \boldsymbol{X}^{t,(l)}\left( \boldsymbol{Y}^{t,(l)} \right)^\top - \boldsymbol{U}\boldsymbol{V}^\top \right)\boldsymbol{V} -\frac{\eta}{2}\boldsymbol{U}(\boldsymbol{R}^{t,(l)})^\top\left( \left( \boldsymbol{X}^{t,(l)} \right)^\top\boldsymbol{X}^{t,(l)} - \left( \boldsymbol{Y}^{t,(l)}\right)^\top\boldsymbol{Y}^{t,(l)} \right) \boldsymbol{R}^{t,(l)},\\
				\end{split}
			\]
			and
			\[  
		\begin{split}
			& \widetilde{\mathbb{E}} \widetilde{\boldsymbol{Y}}^{t+1,(l)}\\
			 \coloneqq & \boldsymbol{Y}^{t,(l)}\boldsymbol{R}^{t,(l)}  - \eta  \left( \boldsymbol{X}^{t,(l)} \left( \boldsymbol{Y}^{t,(l)} \right)^\top - \boldsymbol{U}\boldsymbol{V}^\top \right)^\top \boldsymbol{U} - \frac{\eta}{2}\boldsymbol{V}(\boldsymbol{R}^{t,(l)})^\top\left( \left( \boldsymbol{Y}^{t,(l)} \right)^\top\boldsymbol{Y}^{t,(l)}- \left(\boldsymbol{X}^{t,(l)}\right)^\top\boldsymbol{X}^{t,(l)} \right) \boldsymbol{R}^{t,(l)}.
				\end{split}
			\] 

			In order to see all the eigenvalues of $\boldsymbol{P}$ are positive, first by triangle inequality,
			\begin{equation}\label{eq_056}
			\begin{split}
				  \left\| \left[ \begin{array}{c}\widetilde{\boldsymbol{X}}^{t+1,(l)} -\boldsymbol{U}\\  \widetilde{\boldsymbol{Y}}^{t+1,(l)}-\boldsymbol{V}\end{array} \right] \right\|  \leqslant& \left\|  \left[ \begin{array}{c}\widetilde{\mathbb{E}} \widetilde{\boldsymbol{X}}^{t+1,(l)}\\  \widetilde{\mathbb{E}}\widetilde{\boldsymbol{Y}}^{t+1,(l)} \end{array} \right] -  \left[ \begin{array}{c}\widetilde{\boldsymbol{X}}^{t+1,(l)}\\  \widetilde{\boldsymbol{Y}}^{t+1,(l)} \end{array} \right]\right\|  + \left\| \left[ \begin{array}{c}\widetilde{\mathbb{E}}\widetilde{\boldsymbol{X}}^{t+1,(l)}\\  \widetilde{\mathbb{E}}\widetilde{\boldsymbol{Y}}^{t+1,(l)} \end{array} \right] - \left[ \begin{array}{c}
					\boldsymbol{U}\\\boldsymbol{V}
				\end{array}\right]  \right\|.	
			\end{split}
			\end{equation}

			For the first term of the right hand side of \eqref{eq_056}, note 
			\begin{equation}\label{eq_057}
				\begin{split}
				&\left\| \left[ \begin{array}{c}\widetilde{\mathbb{E}}\widetilde{\boldsymbol{X}}^{t+1,(l)}\\  \widetilde{\mathbb{E}}\widetilde{\boldsymbol{Y}}^{t+1,(l)} \end{array} \right] -  \left[ \begin{array}{c}\widetilde{\boldsymbol{X}}^{t+1,(l)}\\  \widetilde{\boldsymbol{Y}}^{t+1,(l)} \end{array} \right]\right\| \\
					 =& \eta \left\|\left[ \begin{array}{c}
					-\mathcal{P}_{-l,\cdot}\left( \boldsymbol{X}^{t,(l)}\left( \boldsymbol{Y}^{t,(l)} \right)^\top - \boldsymbol{U}\boldsymbol{V}^\top \right)\boldsymbol{V} + \frac{1}{p}\mathcal{P}_{\Omega_{-l,\cdot}}\left( \boldsymbol{X}^{t,(l)}\left( \boldsymbol{Y}^{t,(l)} \right)^\top - \boldsymbol{U}\boldsymbol{V}^\top \right)\boldsymbol{V}   \\
					-\left[\mathcal{P}_{-l,\cdot}\left( \boldsymbol{X}^{t,(l)}\left( \boldsymbol{Y}^{t,(l)} \right)^\top - \boldsymbol{U}\boldsymbol{V}^\top \right)\right]^\top\boldsymbol{U}  + \frac{1}{p}\left[\mathcal{P}_{\Omega_{-l,\cdot}}\left( \boldsymbol{X}^{t,(l)}\left( \boldsymbol{Y}^{t,(l)} \right)^\top - \boldsymbol{U}\boldsymbol{V}^\top \right)\right]^\top\boldsymbol{U}
				\end{array}   \right]\right\|\\
				\leqslant & 2\eta \|\boldsymbol{U}\| \left\| \frac{1}{p}\mathcal{P}_{\Omega_{-l,\cdot}}\left( \boldsymbol{X}^{t,(l)}\left( \boldsymbol{Y}^{t,(l)} \right)^\top - \boldsymbol{U}\boldsymbol{V}^\top \right) - \mathcal{P}_{-l,\cdot}\left( \boldsymbol{X}^{t,(l)}\left( \boldsymbol{Y}^{t,(l)} \right)^\top - \boldsymbol{U}\boldsymbol{V}^\top \right) \right\|\\
				\leqslant & 2\eta \|\boldsymbol{U}\| \left\| \frac{1}{p}\mathcal{P}_{\Omega}\left( \boldsymbol{X}^{t,(l)}\left( \boldsymbol{Y}^{t,(l)} \right)^\top- \boldsymbol{U}\boldsymbol{V}^\top \right) - \left( \boldsymbol{X}^{t,(l)}\left( \boldsymbol{Y}^{t,(l)} \right)^\top - \boldsymbol{U}\boldsymbol{V}^\top \right) \right\|.
				\end{split}
			\end{equation}
			The last line uses the fact that 
			\[
			\begin{split}
			&\frac{1}{p}\mathcal{P}_{\Omega_{-l,\cdot}}\left( \boldsymbol{X}^{t,(l)}\left( \boldsymbol{Y}^{t,(l)} \right)^\top - \boldsymbol{U}\boldsymbol{V}^\top \right) - \mathcal{P}_{-l,\cdot}\left( \boldsymbol{X}^{t,(l)}\left( \boldsymbol{Y}^{t,(l)} \right)^\top - \boldsymbol{U}\boldsymbol{V}^\top \right)
			\end{split}
			\]
			is a matrix with $l$-th row all zero and 
			\[
				\left\|\left[ \begin{array}{c}
					\boldsymbol{A}\\
					\boldsymbol{0}
				\end{array} \right]\right\|	\leqslant \left\|\left[ \begin{array}{c}
					\boldsymbol{A}\\
					\boldsymbol{b}^\top
				\end{array} \right]\right\|
			\]
			for any matrix $\boldsymbol{A}$ and vector $\boldsymbol{b}$ with suitable shape. Using Lemma \ref{lemma_spectral_gap}, we have
			\[
				\begin{split}
					& \left\| \frac{1}{p}\mathcal{P}_{\Omega}\left( \boldsymbol{X}^{t,(l)}\left( \boldsymbol{Y}^{t,(l)} \right)^\top - \boldsymbol{U}\boldsymbol{V}^\top \right)  - \left( \boldsymbol{X}^{t,(l)}\left( \boldsymbol{Y}^{t,(l)} \right)^\top - \boldsymbol{U}\boldsymbol{V}^\top \right) \right\|\\
					\leqslant & \left\| \frac{1}{p}\mathcal{P}_{\Omega}\left( \left(\boldsymbol{X}^{t,(l)}\boldsymbol{T}^{t,(l)}-\boldsymbol{U}\right)\boldsymbol{V}^\top  \right)  - \left(\boldsymbol{X}^{t,(l)}\boldsymbol{T}^{t,(l)}-\boldsymbol{U}\right)\boldsymbol{V}^\top   \right\|\\
					& + \left\| \frac{1}{p}\mathcal{P}_{\Omega}\left( \boldsymbol{U}\left(\boldsymbol{Y}^{t,(l)}\boldsymbol{T}^{t,(l)}-\boldsymbol{V}\right)^\top  \right)  - \boldsymbol{U}\left(\boldsymbol{Y}^{t,(l)}\boldsymbol{T}^{t,(l)}-\boldsymbol{V}\right)^\top  \right\|\\
					&+\left\| \frac{1}{p}\mathcal{P}_{\Omega}\left( \left(\boldsymbol{X}^{t,(l)}\boldsymbol{T}^{t,(l)}-\boldsymbol{U}\right)\left(\boldsymbol{Y}^{t,(l)}\boldsymbol{T}^{t,(l)}-\boldsymbol{V}\right)^\top  \right) - \left(\boldsymbol{X}^{t,(l)}\boldsymbol{T}^{t,(l)}-\boldsymbol{U}\right)\left(\boldsymbol{Y}^{t,(l)}\boldsymbol{T}^{t,(l)}-\boldsymbol{V}\right)^\top   \right\|\\
					\leqslant& \frac{\|\boldsymbol{\Omega}-p\boldsymbol{J}\|}{p} \left( \left\|\boldsymbol{X}^{t,(l)}\boldsymbol{T}^{t,(l)}-\boldsymbol{U}\right\|_{2,\infty}\|\boldsymbol{V}\|_{2,\infty}  + \|\boldsymbol{U}\|_{2,\infty}\left\|\boldsymbol{Y}^{t,(l)}\boldsymbol{T}^{t,(l)}-\boldsymbol{V}\right\|_{2,\infty} \right)\\
					&+ \frac{\|\boldsymbol{\Omega}-p\boldsymbol{J}\|}{p} \left\|\boldsymbol{X}^{t,(l)}\boldsymbol{T}^{t,(l)}-\boldsymbol{U}\right\|_{2,\infty}\left\|\boldsymbol{Y}^{t,(l)}\boldsymbol{T}^{t,(l)}-\boldsymbol{V}\right\|_{2,\infty}.
				\end{split}
			\]
			Here we use the fact that 
			\[
				\boldsymbol{X}^{t,(l)}\left( \boldsymbol{Y}^{t,(l)} \right)^\top  = \left(\boldsymbol{X}^{t,(l)}\boldsymbol{T}^{t,(l)}\right)\left( \boldsymbol{Y}^{t,(l)}\boldsymbol{T}^{t,(l)} \right)^\top. 
			\]
			On the event $E_{gd}^t$, from \eqref{eq_ind3} and \eqref{eq_ind5}, we have
			\begin{equation}\label{eq_015}
				\begin{split}
				 \left\| \left[ \begin{array}{c} \boldsymbol{X}^{t,(l)} \\ \boldsymbol{Y}^{t,(l)} \end{array} \right]\boldsymbol{T}^{t,(l)} - \left[ \begin{array}{c} \boldsymbol{U}\\\boldsymbol{V} \end{array}\right] \right\|_{2,\infty} \leqslant & \left\|\left[ \begin{array}{c} \boldsymbol{X}^t \\\boldsymbol{Y}^t \end{array}\right]\boldsymbol{R}^t -  \left[ \begin{array}{c} \boldsymbol{X}^{t,(l)} \\ \boldsymbol{Y}^{t,(l)} \end{array} \right]\boldsymbol{T}^{t,(l)}  \right\|_F  + \left\| \left[ \begin{array}{c} \boldsymbol{X}^t \\\boldsymbol{Y}^t \end{array}\right]\boldsymbol{R}^t  - \left[ \begin{array}{c} \boldsymbol{U}\\\boldsymbol{V} \end{array}\right] \right\|_{2,\infty}\\
				\leqslant & 111C_I \rho^t \sqrt{\frac{\mu^2 r^2 \kappa^{12}\log (n_1\vee n_2)}{(n_1\wedge n_2)^2p}}\sqrt{\sigma_1(\boldsymbol{M})}.
				\end{split}
			\end{equation} 	

			From Lemma \ref{vu_lemma2.2} and \eqref{eq_015},
			\begin{equation}\label{eq_058}
				\begin{split}
					& \left\| \frac{1}{p}\mathcal{P}_{\Omega}\left( \boldsymbol{X}^{t,(l)}\left( \boldsymbol{Y}^{t,(l)} \right)^\top - \boldsymbol{U}\boldsymbol{V}^\top \right) - \left( \boldsymbol{X}^{t,(l)}\left( \boldsymbol{Y}^{t,(l)} \right)^\top - \boldsymbol{U}\boldsymbol{V}^\top \right) \right\|\\
					\leqslant& \sqrt{\frac{n_1\wedge n_2}{p}}\times 111C_I\rho^t\sqrt{\frac{\mu^2 r^2\kappa^{12}\log(n_1\vee n_2)}{(n_1\wedge n_2)^2p}}\sqrt{\sigma_1(\boldsymbol{M})}\\
					&\times \left(2\sqrt{\frac{\mu r \kappa}{n_1\wedge n_2}}\sqrt{\sigma_1(\boldsymbol{M})} + 111C_I\rho^t\sqrt{\frac{\mu^2 r^2\kappa^{12}\log(n_1\vee n_2)}{(n_1\wedge n_2)^2p}}\sqrt{\sigma_1(\boldsymbol{M})}  \right).
				\end{split}	
			\end{equation}
			For the second term of the right hand side of \eqref{eq_056}, we deal with it very similar to the way we deal with $\alpha_2$ defined in \eqref{eq_046}: Note
			\[ 
				\begin{split}
				&  \left[ \begin{array}{c}\widetilde{\mathbb{E}}\widetilde{\boldsymbol{X}}^{t+1,(l)}\\ \widetilde{\mathbb{E}} \widetilde{\boldsymbol{Y}}^{t+1,(l)} \end{array} \right] - \left[ \begin{array}{c}
					\boldsymbol{U}\\\boldsymbol{V}
				\end{array}\right]\\
				=& \left[ \begin{array}{c}
					\boldsymbol{X}^{t,(l)}\boldsymbol{R}^{t,(l)} - \eta\left(\boldsymbol{X}^{t,(l)}(\boldsymbol{Y}^{t,(l)})^\top - \boldsymbol{U}\boldsymbol{V}^\top \right)\boldsymbol{V}  \\
				   \boldsymbol{Y}^{t,(l)}\boldsymbol{R}^{t,(l)} - \eta\left(\boldsymbol{X}^{t,(l)}(\boldsymbol{Y}^{t,(l)})^\top - \boldsymbol{U}\boldsymbol{V}^\top \right)^\top \boldsymbol{U}  
				   \end{array}\right]\\
				   &  -\left[ \begin{array}{c}
					 \frac{\eta}{2}\boldsymbol{U}(\boldsymbol{R}^{t,(l)})^\top\left( \left( \boldsymbol{X}^{t,(l)} \right)^\top\boldsymbol{X}^{t,(l)} - \left( \boldsymbol{Y}^{t,(l)} \right)^\top\boldsymbol{Y}^{t,(l)} \right)\boldsymbol{R}^{t,(l)}  \\
				 \frac{\eta}{2}\boldsymbol{V}(\boldsymbol{R}^{t,(l)})^\top\left(\left( \boldsymbol{Y}^{t,(l)} \right)^\top\boldsymbol{Y}^{t,(l)} -  \left( \boldsymbol{X}^{t,(l)} \right)^\top\boldsymbol{X}^{t,(l)}   \right)\boldsymbol{R}^{t,(l)}  
				   \end{array} \right]\\
				    & -\left[ \begin{array}{c}
					\boldsymbol{U}\\
				 \boldsymbol{V}
				   \end{array}\right]\\
				=& \left[ \begin{array}{c} \boldsymbol{\Delta}_{\boldsymbol{X}}^{t,(l)} - \eta \boldsymbol{\Delta}_{\boldsymbol{X}}^{t,(l)} \boldsymbol{V}^\top \boldsymbol{V} - \eta\boldsymbol{U}(\boldsymbol{\Delta}_{\boldsymbol{Y}}^{t,(l)})^\top\boldsymbol{V}   \\
					\boldsymbol{\Delta}_{\boldsymbol{Y}}^{t,(l)} - \eta \boldsymbol{V}(\boldsymbol{\Delta}_{\boldsymbol{X}}^{t,(l)})^\top \boldsymbol{U} - \eta \boldsymbol{\Delta}_{\boldsymbol{Y}}^{t,(l)} \boldsymbol{U}^\top\boldsymbol{U}  
					 \end{array} \right]\\
					 &+ \left[ \begin{array}{c}  -\frac{\eta}{2}\boldsymbol{U}(\boldsymbol{\Delta}_{\boldsymbol{X}}^{t,(l)})^\top \boldsymbol{U} - \frac{\eta}{2}\boldsymbol{U}\boldsymbol{U}^\top\boldsymbol{\Delta}_{\boldsymbol{X}}^{t,(l)} +\frac{\eta}{2} \boldsymbol{U} (\boldsymbol{\Delta}_{\boldsymbol{Y}}^{t,(l)})^\top \boldsymbol{V}   \\
					  - \frac{\eta}{2} \boldsymbol{V}(\boldsymbol{\Delta}_{\boldsymbol{Y}}^{t,(l)})^\top\boldsymbol{V} - \frac{\eta}{2}\boldsymbol{V}\boldsymbol{V}^\top\boldsymbol{\Delta}_{\boldsymbol{Y}}^{t,(l)} + \frac{\eta}{2} \boldsymbol{V}(\boldsymbol{\Delta}_{\boldsymbol{X}}^{t,(l)})^\top\boldsymbol{U}  
					 \end{array} \right]\\
					 &+ \left[ \begin{array}{c}    \frac{\eta}{2}\boldsymbol{U}\boldsymbol{V}^\top\boldsymbol{\Delta}_{\boldsymbol{Y}}^{t,(l)} +\eta \boldsymbol{\mathcal{E}}_1 \\
					      \frac{\eta}{2}\boldsymbol{V}\boldsymbol{U}^\top\boldsymbol{\Delta}_{\boldsymbol{X}}^{t,(l)} +\eta \boldsymbol{\mathcal{E}}_2
					 \end{array} \right],
				\end{split}	 
			\]
			where $\boldsymbol{\mathcal{E}}_1,\boldsymbol{\mathcal{E}}_2$ denote those terms with at least two $\boldsymbol{\Delta}_{\boldsymbol{X}}^{t,(l)}$'s and $\boldsymbol{\Delta}_{\boldsymbol{Y}}^{t,(l)}$'s (the expression of $\boldsymbol{\mathcal{E}}_1$ and $\boldsymbol{\mathcal{E}}_2$ one can refer to \eqref{eq_e001} and \eqref{eq_e002}, replacing $\boldsymbol{\Delta}_{\boldsymbol{X}}^{t }$ and $\boldsymbol{\Delta}_{\boldsymbol{Y}}^{t }$ by $\boldsymbol{\Delta}_{\boldsymbol{X}}^{t,(l)}$ and $\boldsymbol{\Delta}_{\boldsymbol{Y}}^{t,(l)}$). Again by the way we define $\boldsymbol{R}^{t,(l)}$,
			\[
				\left[ \begin{array}{c}
					\boldsymbol{\Delta}_{\boldsymbol{X}}^{t,(l)} \\ \boldsymbol{\Delta}_{\boldsymbol{Y}}^{t,(l)}
				\end{array} \right]^\top\left[ \begin{array}{c}
					\boldsymbol{U}\\
					\boldsymbol{V}
				\end{array} \right]	= (\boldsymbol{\Delta}_{\boldsymbol{X}}^{t,(l)})^\top\boldsymbol{U}+(\boldsymbol{\Delta}_{\boldsymbol{Y}}^{t,(l)})^\top\boldsymbol{V}
			\]
			is symmetric. Plugging back we have
			\[
				\begin{split}
				&  \left[ \begin{array}{c} \widetilde{\mathbb{E}}\widetilde{\boldsymbol{X}}^{t+1,(l)}\\  \widetilde{\mathbb{E}}\widetilde{\boldsymbol{Y}}^{t+1,(l)} \end{array} \right] - \left[ \begin{array}{c}
					\boldsymbol{U}\\\boldsymbol{V}
				\end{array}\right]\\
				 = & \left[ \begin{array}{c} \boldsymbol{\Delta}_{\boldsymbol{X}}^{t,(l)} - \eta \boldsymbol{\Delta}_{\boldsymbol{X}}^{t,(l)} \boldsymbol{V}^\top \boldsymbol{V}  -\eta \boldsymbol{U}\boldsymbol{U}^\top\boldsymbol{\Delta}_{\boldsymbol{X}}^{t,(l)}  +\eta \boldsymbol{\mathcal{E}}_1 \\
					\boldsymbol{\Delta}_{\boldsymbol{Y}}^{t,(l)}  - \eta \boldsymbol{\Delta}_{\boldsymbol{Y}}^{t,(l)} \boldsymbol{U}^\top\boldsymbol{U}  - \eta \boldsymbol{V}\boldsymbol{V}^\top\boldsymbol{\Delta}_{\boldsymbol{Y}}^{t,(l)} +\eta \boldsymbol{\mathcal{E}}_2
					 \end{array} \right]\\
				=& \frac{1}{2}\left[\begin{array}{c}
					\boldsymbol{\Delta}_{\boldsymbol{X}}^{t,(l)} \\
					\boldsymbol{\Delta}_{\boldsymbol{Y}}^{t,(l)} 
				\end{array}\right](\boldsymbol{I}-2\eta\boldsymbol{U}^\top\boldsymbol{U})  + \frac{1}{2}\left( \boldsymbol{I} - 2\eta \left[ \begin{array}{cc}
					\boldsymbol{U}\boldsymbol{U}^\top & \boldsymbol{0}\\
					\boldsymbol{0} & \boldsymbol{V}\boldsymbol{V}^\top
				\end{array} \right]\right)\left[\begin{array}{c}
					\boldsymbol{\Delta}_{\boldsymbol{X}}^{t,(l)} \\
					\boldsymbol{\Delta}_{\boldsymbol{Y}}^{t,(l)} 
				\end{array}\right] +\eta\boldsymbol{\mathcal{E}},
				\end{split}	
			\]
			where the last line we use the fact that $\boldsymbol{U}^\top\boldsymbol{U} = \boldsymbol{V}^\top\boldsymbol{V}$, and $\boldsymbol{\mathcal{E}}\coloneqq \left[ \begin{array}{c}
				\boldsymbol{\mathcal{E}}_1\\
				\boldsymbol{\mathcal{E}}_2
			\end{array} \right]$. Since $\boldsymbol{U}\boldsymbol{U}^\top$ and $\boldsymbol{V}\boldsymbol{V}^\top$ sharing the same eigenvalues, we have
			\[
				\begin{split}
				&\left\|  \left[ \begin{array}{c}\widetilde{\mathbb{E}}\widetilde{\boldsymbol{X}}^{t+1,(l)}\\  \widetilde{\mathbb{E}}\widetilde{\boldsymbol{Y}}^{t+1,(l)} \end{array} \right] - \left[ \begin{array}{c}
					\boldsymbol{U}\\\boldsymbol{V}
				\end{array}\right] \right\|\\
				\leqslant & \frac{1}{2}\|\boldsymbol{I} - 2\eta \boldsymbol{U}^\top\boldsymbol{U}\| \|\boldsymbol{\Delta}^{t,(l)}\|  + \frac{1}{2}\|\boldsymbol{\Delta}^{t,(l)}\| \left\| \boldsymbol{I} - 2\eta \left[ \begin{array}{cc}
					\boldsymbol{U}\boldsymbol{U}^\top & \boldsymbol{0}\\
					\boldsymbol{0} & \boldsymbol{V}\boldsymbol{V}^\top
				\end{array} \right] \right\| + \eta \|\boldsymbol{\mathcal{E}}\|\\
				\leqslant & (1-\eta \sigma_r(\boldsymbol{M})) \|\boldsymbol{\Delta}^{t,(l)}\| + \eta \|\boldsymbol{\mathcal{E}}\|.
				\end{split} 
			\]
			By the definition of $\boldsymbol{\mathcal{E}}$, we have
			\[
				\|\boldsymbol{\mathcal{E}}\| \leqslant 4\|\boldsymbol{\Delta}^{t,(l)}\|^2\|\boldsymbol{U}\|.	
			\]
			From \eqref{eq_019}, 
			\begin{equation}\label{eq_059}
				\begin{split}
				& \left\|  \left[ \begin{array}{c} \widetilde{\mathbb{E}}\widetilde{\boldsymbol{X}}^{t+1,(l)}\\  \widetilde{\mathbb{E}}\widetilde{\boldsymbol{Y}}^{t+1,(l)} \end{array} \right] - \left[ \begin{array}{c}
					\boldsymbol{U}\\\boldsymbol{V}
					\end{array}\right] \right\|\\
					\leqslant & (1-\eta\sigma_r(\boldsymbol{M}))\times 2C_I\rho^t \sqrt{\frac{\mu r \kappa^6\log(n_1\vee n_2)}{(n_1\wedge n_2)p}}\sqrt{\sigma_1(\boldsymbol{M})}  + 4\eta\left( 2C_I\rho^t \sqrt{\frac{\mu r \kappa^6\log(n_1\vee n_2)}{(n_1\wedge n_2)p}}\sqrt{\sigma_1(\boldsymbol{M})} \right)^2 \sqrt{\sigma_1(\boldsymbol{M})}
				\end{split}
			\end{equation}
			holds. Combining \eqref{eq_056}, \eqref{eq_057}, \eqref{eq_058} and \eqref{eq_059} together, we have
			\[
				\begin{split}
				&\left\| \left[ \begin{array}{c}\widetilde{\boldsymbol{X}}^{t+1,(l)} -\boldsymbol{U}\\  \widetilde{\boldsymbol{Y}}^{t+1,(l)}-\boldsymbol{V}\end{array} \right] \right\| \\
				\leqslant & 2\eta \sqrt{\sigma_1(\boldsymbol{M})} \sqrt{\frac{n_1\wedge n_2}{p}} \times 111C_I\rho^t\sqrt{\frac{\mu^2 r^2\kappa^{12}\log(n_1\vee n_2)}{(n_1\wedge n_2)^2p}}\sqrt{\sigma_1(\boldsymbol{M})} \\
				&~~\times \left(2\sqrt{\frac{\mu r \kappa}{n_1\wedge n_2}}\sqrt{\sigma_1(\boldsymbol{M})} + 111C_I\rho^t\sqrt{\frac{\mu^2 r^2\kappa^{12}\log(n_1\vee n_2)}{(n_1\wedge n_2)^2p}}\sqrt{\sigma_1(\boldsymbol{M})}  \right)\\
					&+(1-\eta\sigma_r(\boldsymbol{M}))  2C_I\rho^t \sqrt{\frac{\mu r \kappa^6\log(n_1\vee n_2)}{(n_1\wedge n_2)p}}\sqrt{\sigma_1(\boldsymbol{M})} + 4\eta \left( 2C_I\rho^t \sqrt{\frac{\mu r \kappa^6\log(n_1\vee n_2)}{(n_1\wedge n_2)p}}\sqrt{\sigma_1(\boldsymbol{M})} \right)^2  \sqrt{\sigma_1(\boldsymbol{M})}\\
					\leqslant & \eta \sigma_r(\boldsymbol{M})C_I\rho^t \sqrt{\frac{\mu r \kappa^6 \log (n_1\vee n_2)}{(n_1\wedge n_2)p}}\sqrt{\sigma_1(\boldsymbol{M})} +(1-\eta\sigma_r(\boldsymbol{M}))  2C_I\rho^t \sqrt{\frac{\mu r \kappa^6\log(n_1\vee n_2)}{(n_1\wedge n_2)p}}\sqrt{\sigma_1(\boldsymbol{M})}\\
					& +\eta \sigma_r(\boldsymbol{M})C_I\rho^t \sqrt{\frac{\mu r \kappa^6 \log (n_1\vee n_2)}{(n_1\wedge n_2)p}}\sqrt{\sigma_1(\boldsymbol{M})}\\
					= & 2C_I\rho^t \sqrt{\frac{\mu r \kappa^6\log(n_1\vee n_2)}{(n_1\wedge n_2)p}}\sqrt{\sigma_1(\boldsymbol{M})}\\
					\leqslant & \frac{1}{4\kappa}\sqrt{\sigma_1(\boldsymbol{M})},
				\end{split}
			\]
			where the second inequality holds since
			\[
				p \geqslant (666^2+111^2C_I^2)\frac{\mu^2 r^2\kappa^{11}\log(n_1\vee n_2)}{n_1\wedge n_2}	
			\]
			and the last line holds since
			\[
				p \geqslant 64C_I^2 \frac{\mu r \kappa^8\log(n_1\vee n_2)}{n_1\wedge n_2}.	
			\]
			Therefore, 
			\begin{equation}\label{eq_060}
			\begin{split}
		  \left\| \left[ \begin{array}{c} \boldsymbol{U}\\\boldsymbol{V} \end{array}\right]^\top\left[ \begin{array}{c} \boldsymbol{U}\\\boldsymbol{V} \end{array}\right]-\left[ \begin{array}{c} \boldsymbol{U}\\\boldsymbol{V} \end{array}\right]^\top \left[ \begin{array}{c} \widetilde{\boldsymbol{X}}^{t+1,(l)}\\ \widetilde{\boldsymbol{Y}}^{t+1,(l)}\end{array}\right] \right\| \leqslant &\left\| \left[ \begin{array}{c}
				\boldsymbol{U}\\
				\boldsymbol{V}
			\end{array} \right] \right\|\left\| \left[ \begin{array}{c}\widetilde{\boldsymbol{X}}^{t+1,(l)} -\boldsymbol{U}\\  \widetilde{\boldsymbol{Y}}^{t+1,(l)}-\boldsymbol{V}\end{array} \right] \right\| \leqslant 0.5\sigma_r(\boldsymbol{M}).
			\end{split}
			\end{equation}
			By Weyl's inequality, we see eigenvalues of $\boldsymbol{P}$ are all nonnegative. Combining with the fact that $\boldsymbol{P}$ is symmetric, we can see $\boldsymbol{P}$ is positive definite. And also from Weyl's inequality, $\sigma_r(\boldsymbol{P})\geqslant 1.5\sigma_r(\boldsymbol{M})$. 

			Moreover, by the definition of $\boldsymbol{X}^{t,(l)}$ and $\boldsymbol{Y}^{t,(l)}$, as well as the assumption that $1\leqslant l\leqslant n_1$,
			\[
				\begin{split}
					&\boldsymbol{X}^{t+1,(l)}\boldsymbol{R}^{t,(l)} \\
					 = & \boldsymbol{X}^{t,(l)}\boldsymbol{R}^{t,(l)} - \frac{\eta}{p}\mathcal{P}_{\Omega_{-l,\cdot}}\left( \boldsymbol{X}^{t,(l)} (\boldsymbol{Y}^{t,(l)})^\top- \boldsymbol{U}\boldsymbol{V}^\top \right) \boldsymbol{Y}^{t,(l)}\boldsymbol{R}^{t,(l)}  - \eta \mathcal{P}_{l,\cdot}\left( \boldsymbol{X}^{t,(l)} (\boldsymbol{Y}^{t,(l)})^\top- \boldsymbol{U}\boldsymbol{V}^\top \right)\boldsymbol{Y}^{t,(l)}\boldsymbol{R}^{t,(l)} \\
					& -\frac{\eta}{2}\boldsymbol{X}^{t,(l)}\boldsymbol{R}^{t,(l)}(\boldsymbol{R}^{t,(l)})^\top  \left( (\boldsymbol{X}^{t,(l)})^\top\boldsymbol{X}^{t,(l)} - (\boldsymbol{Y}^{t,(l)})^\top\boldsymbol{Y}^{t,(l)} \right)\boldsymbol{R}^{t,(l)},\\
			\end{split}
			\]
			\[
			\begin{split}
					& \boldsymbol{Y}^{t+1,(l)}\boldsymbol{R}^{t,(l)} \\
					=&  \boldsymbol{Y}^{t,(l)}\boldsymbol{R}^{t,(l)}  -\frac{\eta}{p} \left[ \mathcal{P}_{\Omega_{-l,\cdot}}\left( \boldsymbol{X}^{t,(l)} (\boldsymbol{Y}^{t,(l)})^\top- \boldsymbol{U}\boldsymbol{V}^\top \right) \right]^\top \boldsymbol{X}^{t,(l)}\boldsymbol{R}^{t,(l)} - \eta \left[ \mathcal{P}_{l,\cdot}\left( \boldsymbol{X}^{t,(l)} (\boldsymbol{Y}^{t,(l)})^\top- \boldsymbol{U}\boldsymbol{V}^\top \right) \right]^\top \boldsymbol{X}^{t,(l)}\boldsymbol{R}^{t,(l)}\\
		& -\frac{\eta}{2}\boldsymbol{Y}^{t,(l)}\boldsymbol{R}^{t,(l)}(\boldsymbol{R}^{t,(l)})^\top   \left( (\boldsymbol{Y}^{t,(l)})^\top\boldsymbol{Y}^{t,(l)} - (\boldsymbol{X}^{t,(l)})^\top\boldsymbol{X}^{t,(l)} \right)\boldsymbol{R}^{t,(l)}.
				\end{split}	
			\]
		Therefore,
			\begin{equation}\label{eq_021}
				\begin{split}
					 &\left[ \begin{array}{c} \boldsymbol{X}^{t+1,(l)}\boldsymbol{R}^{t,(l)}  - \widetilde{\boldsymbol{X}}^{t+1,(l)} \\ \boldsymbol{Y}^{t+1,(l)}\boldsymbol{R}^{t,(l)}  - \widetilde{\boldsymbol{Y}}^{t+1,(l)}\end{array} \right] \\
					 =& \left[ \begin{array}{cc} \boldsymbol{0} & \eta \boldsymbol{A} \\ \eta \boldsymbol{A}^\top & \boldsymbol{0}  \end{array} \right]\left[ \begin{array}{c} \boldsymbol{\Delta}_{\boldsymbol{X}}^{t,(l)} \\ \boldsymbol{\Delta}_{\boldsymbol{Y}}^{t,(l)}\end{array}\right]  + \left[ \begin{array}{c} -\frac{\eta}{2}\boldsymbol{\Delta}_{\boldsymbol{X}}^{t,(l)}(\boldsymbol{R}^{t,(l)})^\top\left( \left( \boldsymbol{X}^{t,(l)} \right)^\top\boldsymbol{X}^{t,(l)} - \left( \boldsymbol{Y}^{t,(l)}\right)^\top\boldsymbol{Y}^{t,(l)} \right)\boldsymbol{R}^{t,(l)}  \\ 
					 -\frac{\eta}{2}\boldsymbol{\Delta}_{\boldsymbol{Y}}^{t,(l)}(\boldsymbol{R}^{t,(l)})^\top  \left( \left( \boldsymbol{Y}^{t,(l)} \right)^\top\boldsymbol{Y}^{t,(l)}- \left(\boldsymbol{X}^{t,(l)}\right)^\top\boldsymbol{X}^{t,(l)} \right)\boldsymbol{R}^{t,(l)}  \end{array}\right]  
				\end{split}
			\end{equation}
			with 
			\[
			\begin{split}
			\boldsymbol{A} \coloneqq& -\frac{1}{p}\mathcal{P}_{\Omega_{-l,\cdot}}\left( \boldsymbol{X}^{t,(l)} \left( \boldsymbol{Y}^{t,(l)} \right)^\top - \boldsymbol{U}\boldsymbol{V}^\top \right)  -  \mathcal{P}_{l,\cdot}\left( \boldsymbol{X}^{t,(l)} \left( \boldsymbol{Y}^{t,(l)} \right)^\top - \boldsymbol{U}\boldsymbol{V}^\top \right).
			\end{split}
			\] 
	First in order to give a bound of $\|\boldsymbol{A}\|$,	 we can first decompose $\boldsymbol{A}$ as
			\begin{equation}\label{eq_095}
				\begin{split}
					\boldsymbol{A} =& \underbrace{-\frac{1}{p}\mathcal{P}_{\Omega}\left( \boldsymbol{X}^{t,(l)} \left( \boldsymbol{Y}^{t,(l)} \right)^\top - \boldsymbol{U}\boldsymbol{V}^\top \right)}_{\boldsymbol{A}_1}\\
					& + \underbrace{\frac{1}{p}\mathcal{P}_{\Omega_{l,\cdot}}\left( \boldsymbol{X}^{t,(l)} \left( \boldsymbol{Y}^{t,(l)} \right)^\top - \boldsymbol{U}\boldsymbol{V}^\top \right) -  \mathcal{P}_{l,\cdot}\left( \boldsymbol{X}^{t,(l)} \left( \boldsymbol{Y}^{t,(l)} \right)^\top - \boldsymbol{U}\boldsymbol{V}^\top \right)}_{\boldsymbol{A}_2} .
				\end{split}
			\end{equation}
			
			From Lemma \ref{lemma_spectral_gap} and Lemma \ref{vu_lemma2.2}, 
			\[
				\begin{split}
					\|\boldsymbol{A}_1\| \leqslant & \left\| \frac{1}{p}\mathcal{P}_{\Omega}\left( \boldsymbol{X}^{t,(l)} \left( \boldsymbol{Y}^{t,(l)} \right)^\top - \boldsymbol{U}\boldsymbol{V}^\top \right)  - \left( \boldsymbol{X}^{t,(l)} \left( \boldsymbol{Y}^{t,(l)} \right)^\top - \boldsymbol{U}\boldsymbol{V}^\top\right) \right\| + \left\|  \boldsymbol{X}^{t,(l)} \left( \boldsymbol{Y}^{t,(l)} \right)^\top - \boldsymbol{U}\boldsymbol{V}^\top  \right\|   \\
					\leqslant& C_3 \sqrt{\frac{n_1\wedge n_2}{p}} \left(\|\boldsymbol{X}^{t,(l)} \boldsymbol{T}^{t,(l)} - \boldsymbol{U}\|_{2,\infty}\|\boldsymbol{V}\|_{2,\infty}  + \|\boldsymbol{U}\|_{2,\infty}\|\boldsymbol{Y}^{t,(l)}\boldsymbol{T}^{t,(l)}-\boldsymbol{V}\|_{2,\infty} \right)\\
					& + C_3 \sqrt{\frac{n_1\wedge n_2}{p}} \|\boldsymbol{X}^{t,(l)} \boldsymbol{T}^{t,(l)} - \boldsymbol{U}\|_{2,\infty}\|\boldsymbol{Y}^{t,(l)}\boldsymbol{T}^{t,(l)}-\boldsymbol{V}\|_{2,\infty} \\
					&+ \|\boldsymbol{X}^{t,(l)}\boldsymbol{R}^{t,(l)} - \boldsymbol{U}\|\|\boldsymbol{V}\| + \|\boldsymbol{U}\|\|\boldsymbol{Y}^{t,(l)}\boldsymbol{R}^{t,(l)} - \boldsymbol{V}\| + \|\boldsymbol{X}^{t,(l)}\boldsymbol{R}^{t,(l)} - \boldsymbol{U}\|\|\boldsymbol{Y}^{t,(l)}\boldsymbol{R}^{t,(l)} - \boldsymbol{V}\|
				\end{split}
			\]
			holds on the event $E_{gd}^t\subset E_S$. 

			From \eqref{eq_015} and \eqref{eq_019},
			\begin{equation}\label{eq_093}
				 \begin{split}
					& \|\boldsymbol{A}_1\|\\
					\leqslant & C_3 \sqrt{\frac{n_1\wedge n_2}{p}}   \left[ 222C_I\rho^t\sqrt{\frac{\mu^2 r^2\kappa^{12}\log (n_1\vee n_2)}{(n_1\wedge n_2)^2p}}\sqrt{\sigma_1(\boldsymbol{M})}  \sqrt{\frac{\mu r \kappa}{n_1\wedge n_2}}\sqrt{\sigma_1(\boldsymbol{M})} \right.\\
					&~~~~~~~~~~~~~~~~~~~~ \left.+ \left( 111C_I\rho^t\sqrt{\frac{\mu^2 r^2\kappa^{12}\log (n_1\vee n_2)}{(n_1\wedge n_2)^2p}}\sqrt{\sigma_1(\boldsymbol{M})}  \right)^2 \right] \\
					&+ 4C_I \rho^t\sqrt{\frac{\mu r \kappa^6\log (n_1\vee n_2)}{(n_1\wedge n_2)p}}\sigma_1(\boldsymbol{M}) +\left( 2C_I \rho^t \sqrt{\frac{\mu r \kappa^6 \log(n_1\vee n_2)}{(n_1\wedge n_2)p}}\sqrt{\sigma_1(\boldsymbol{M})} \right)^2\\
					\leqslant & C_3 \sqrt{\frac{n_1\wedge n_2}{p}}  \times 333C_I \rho^t \sqrt{\frac{\mu^2 r^2 \kappa^{12}\log(n_1\vee n_2)}{(n_1\wedge n_2)^2p} }\sqrt{\sigma_1(\boldsymbol{M})}  \times \sqrt{\frac{\mu r \kappa}{n_1\wedge n_2}}\sqrt{\sigma_1(\boldsymbol{M})} \\
					&+ 5C_I \rho^t\sqrt{\frac{\mu r \kappa^6\log (n_1\vee n_2)}{(n_1\wedge n_2)p}}\sigma_1(\boldsymbol{M}) \\
					\leqslant& 6C_I \rho^t\sqrt{\frac{\mu r \kappa^6\log (n_1\vee n_2)}{(n_1\wedge n_2)p}}\sigma_1(\boldsymbol{M})
				\end{split}
			\end{equation}
			where the second inequality holds since 
			\[
				p \geqslant 111^2 C_I^2 \frac{\mu r \kappa^{11}\log (n_1\vee n_2)}{n_1\wedge n_2}
			\]
			and the last inequality holds since
			\[
				p \geqslant 333^2 C_3^2\frac{\mu^2 r^2 \kappa^7 }{n_1\wedge n_2}.
			\]

			% From \eqref{eq_015}, \eqref{eq_ind2} and
			% \[
			% 	p \geqslant C_p\frac{\mu r \kappa^{11}\log (n_1\vee n_2)}{n_1\wedge n_2},
			% \]
			% we have
			% \[
			% 	\|\boldsymbol{A}_1\|\leqslant C\rho^t \sqrt{\frac{\mu r \kappa^6 \log (n_1\vee n_2)}{(n_1\wedge n_2)p}}\sigma_1(\boldsymbol{M}).
			% \]
			Note by the definition of $\boldsymbol{A}_2$, we have $\|\boldsymbol{A}_2\| = \|(\boldsymbol{A}_2)_{l,\cdot}\|_2$, and note $(\boldsymbol{A}_2)_{l,\cdot}$ here is exactly $\boldsymbol{b}_2$ we define in \eqref{eq_020}, therefore we directly use the result \eqref{eq_055} and \eqref{eq_064}:
			\begin{equation}\label{eq_094}
			\begin{split}
				&\|\boldsymbol{A}_2\|\\
				 =& \|\boldsymbol{b}_2\|_2\\
				 \leqslant& 100\rho^t \left( 115C_I\sqrt{\frac{\mu^2 r^2 \kappa^{12} \log^2(n_1\vee n_2)}{(n_1\wedge n_2)^2 p^2}}  + 333C_I \sqrt{\frac{\mu^3 r^3 \kappa^{13}\log^3 (n_1\vee n_2)}{(n_1\wedge n_2)^3 p^3}}  \right) \sigma_1(\boldsymbol{M}) \\
				\leqslant & C_I \rho^t\sqrt{\frac{\mu r \kappa^6 \log (n_1\vee n_2)}{(n_1\wedge n_2)p}}\sigma_1(\boldsymbol{M})
			\end{split}
			\end{equation}
			holds on the event $E_{gd}^{t+1}$, where the last inequality holds since 
			\[
				p \geqslant 5.29\times 10^8 \frac{\mu r \kappa^6 \log (n_1\vee n_2)}{n_1\wedge n_2}.
			\]
			 
			 By putting \eqref{eq_093}, \eqref{eq_094} and \eqref{eq_095} together we have
			 \begin{equation}\label{eq_022}
				 \|\boldsymbol{A}\|\leqslant \|\boldsymbol{A}_1\|+\|\boldsymbol{A}_2\| \leqslant 7C_I \rho^t\sqrt{\frac{\mu r \kappa^6 \log (n_1\vee n_2)}{(n_1\wedge n_2)p}}\sigma_1(\boldsymbol{M})
			 \end{equation}
			 holds on the event $E_{gd}^{t+1}$. Moreover,
			 \begin{equation}\label{eq_023}
				 \begin{split}
					& \left\| \left[  \begin{array}{c}  -\frac{\eta}{2}\boldsymbol{\Delta}_{\boldsymbol{X}}^{t,(l)}(\boldsymbol{R}^{t,(l)})^\top \left( \left( \boldsymbol{X}^{t,(l)} \right)^\top\boldsymbol{X}^{t,(l)} - \left( \boldsymbol{Y}^{t,(l)}\right)^\top\boldsymbol{Y}^{t,(l)} \right)\boldsymbol{R}^{t,(l)} \\ 
					 -\frac{\eta}{2}\boldsymbol{\Delta}_{\boldsymbol{Y}}^{t,(l)}(\boldsymbol{R}^{t,(l)})^\top  \left( \left( \boldsymbol{Y}^{t,(l)} \right)^\top\boldsymbol{Y}^{t,(l)}- \left(\boldsymbol{X}^{t,(l)}\right)^\top\boldsymbol{X}^{t,(l)} \right)\boldsymbol{R}^{t,(l)}  \end{array}   \right]   \right\|\\
					\leqslant& \frac{\eta}{2} \left\|\boldsymbol{\Delta}_{\boldsymbol{X}}^{t,(l)}(\boldsymbol{R}^{t,(l)})^\top \vphantom{ \left( \left( \boldsymbol{X}^{t,(l)} \right)^\top\boldsymbol{X}^{t,(l)} - \left( \boldsymbol{Y}^{t,(l)}\right)^\top\boldsymbol{Y}^{t,(l)} \right)\boldsymbol{R}^{t,(l)} } \left( \left( \boldsymbol{X}^{t,(l)} \right)^\top\boldsymbol{X}^{t,(l)} - \left( \boldsymbol{Y}^{t,(l)}\right)^\top\boldsymbol{Y}^{t,(l)} \right)\boldsymbol{R}^{t,(l)} \right\|  \\
					&+ \frac{\eta}{2}\left\|\boldsymbol{\Delta}_{\boldsymbol{Y}}^{t,(l)}(\boldsymbol{R}^{t,(l)})^\top  \left( \left( \boldsymbol{Y}^{t,(l)} \right)^\top\boldsymbol{Y}^{t,(l)}- \left(\boldsymbol{X}^{t,(l)}\right)^\top\boldsymbol{X}^{t,(l)} \right)\boldsymbol{R}^{t,(l)}\right\|  \\
					\leqslant & \frac{\eta}{2} \left(\|\boldsymbol{\Delta}_{\boldsymbol{X}}^{t,(l)}\| + \|\boldsymbol{\Delta}_{\boldsymbol{Y}}^{t,(l)}\|\right)   \left(2\|\boldsymbol{\Delta}_{\boldsymbol{X}}^{t,(l)}\|\|\boldsymbol{U}\| + \|\boldsymbol{\Delta}_{\boldsymbol{X}}^{t,(l)}\|^2  + 2\|\boldsymbol{\Delta}_{\boldsymbol{Y}}^{t,(l)}\|\|\boldsymbol{V}\| + \|\boldsymbol{\Delta}_{\boldsymbol{Y}}^{t,(l)}\|^2\right)\\
					\leqslant & \frac{\eta}{2}\times 4 C_I \rho^t \sqrt{\frac{\mu r \kappa^6\log (n_1\vee n_2)}{(n_1\wedge n_2)p}}\sqrt{\sigma_1(\boldsymbol{M})}  \times 12  C_I \rho^t \sqrt{\frac{\mu r \kappa^6\log (n_1\vee n_2)}{(n_1\wedge n_2)p}}\sigma_1(\boldsymbol{M})\\
					\leqslant & 24 C_I^2 \eta \rho^{2t}\sqrt{\frac{\mu^2 r^2 \kappa^{12} \log^2 (n_1\vee n_2)}{(n_1\wedge n_2)^2p^2}}\sqrt{\sigma_1(\boldsymbol{M})}^3, 
				\end{split}
			 \end{equation}
			 where the third inequality uses \eqref{eq_019} and 
			 \[
				 p\geqslant 4 C_I^2 \frac{\mu r \kappa^6 \log (n_1\vee n_2)}{n_1\wedge n_2}.
			 \] 
			 
			 Now from \eqref{eq_021}, \eqref{eq_022}, \eqref{eq_023} and also \eqref{eq_019} we can see that
			 \begin{equation}\label{eq_024}
					\begin{split}
					& \left\|\left[ \begin{array}{c} \boldsymbol{X}^{t+1,(l)}\boldsymbol{R}^{t,(l)}  - \widetilde{\boldsymbol{X}}^{t+1,(l)} \\ \boldsymbol{Y}^{t+1,(l)}\boldsymbol{R}^{t,(l)}  - \widetilde{\boldsymbol{Y}}^{t+1,(l)}\end{array} \right]\right\|\\
						  \leqslant & \eta \times 7C_I \rho^t\sqrt{\frac{\mu r \kappa^6 \log (n_1\vee n_2)}{(n_1\wedge n_2)p}}\sigma_1(\boldsymbol{M}) \times  2C_I \rho^t\sqrt{\frac{\mu r \kappa^6 \log (n_1\vee n_2)}{(n_1\wedge n_2)p}}\sqrt{\sigma_1(\boldsymbol{M})}\\
						& + 24 C_I^2 \eta \rho^{2t}\sqrt{\frac{\mu^2 r^2 \kappa^{12} \log^2 (n_1\vee n_2)}{(n_1\wedge n_2)^2p^2}}\sqrt{\sigma_1(\boldsymbol{M})}^3\\
						\leqslant & 38C_I^2 \eta\rho^t \sqrt{\frac{\mu^2 r^2 \kappa^{12}\log^2 (n_1\vee n_2)}{(n_1\wedge n_2)^2p^2}}\sqrt{\sigma_1(\boldsymbol{M})}^3.
					\end{split}
			 \end{equation}

			 Therefore, 
			 \[
			 \begin{split}
				  \left\|\left[\begin{array}{c}
					\boldsymbol{U}\\
					\boldsymbol{V}
				\end{array}\right]^\top\left[ \begin{array}{c} \boldsymbol{X}^{t+1,(l)}\boldsymbol{R}^{t,(l)}  - \widetilde{\boldsymbol{X}}^{t+1,(l)} \\ \boldsymbol{Y}^{t+1,(l)}\boldsymbol{R}^{t,(l)}  - \widetilde{\boldsymbol{Y}}^{t+1,(l)}\end{array} \right]\right\| \leqslant &\left\|\left[\begin{array}{c}
					\boldsymbol{U}\\
					\boldsymbol{V}
				\end{array}\right]\right\| \left\|\left[ \begin{array}{c} \boldsymbol{X}^{t+1,(l)}\boldsymbol{R}^{t,(l)}  - \widetilde{\boldsymbol{X}}^{t+1,(l)} \\ \boldsymbol{Y}^{t+1,(l)}\boldsymbol{R}^{t,(l)}  - \widetilde{\boldsymbol{Y}}^{t+1,(l)}\end{array} \right]\right\|\\
				\leqslant&  \sigma_r(\boldsymbol{M})\\
				\leqslant& \sigma_r(\boldsymbol{P}),
			\end{split}
			 \]
			 where the second last inequality uses the fact that
			 \[
					p \geqslant 76C_I^2\frac{\mu r \kappa^6 \log (n_1\vee n_2)}{n_1\wedge n_2}
			 \]
			 and
			 \[
					\eta \leqslant \frac{\sigma_r(\boldsymbol{M})}{200\sigma_1^2(\boldsymbol{M})}.
			 \]

			 Therefore, we have
			 \[
				\begin{split}
					  \|(\boldsymbol{R}^{t,(l)})^{-1}\boldsymbol{R}^{t+1,(l)} - \boldsymbol{I}\| =&\|\operatorname{sgn}(\boldsymbol{C}+\boldsymbol{E}) - \operatorname{sgn}(\boldsymbol{C})\|\\
					 \leqslant& \frac{1}{\sigma_r(\boldsymbol{P})} \left\|\left[\begin{array}{c}
						\boldsymbol{U}\\
						\boldsymbol{V}
					\end{array}\right]^\top\left[ \begin{array}{c} \boldsymbol{X}^{t+1,(l)}\boldsymbol{R}^{t,(l)}  - \widetilde{\boldsymbol{X}}^{t+1,(l)} \\ \boldsymbol{Y}^{t+1,(l)}\boldsymbol{R}^{t,(l)}  - \widetilde{\boldsymbol{Y}}^{t+1,(l)}\end{array} \right]\right\|\\
						\leqslant & 2\frac{\sqrt{\sigma_1(\boldsymbol{M})}}{\sigma_r(\boldsymbol{M})}\times 38C_I^2 \eta\rho^t \sqrt{\frac{\mu^2 r^2 \kappa^{12}\log^2 (n_1\vee n_2)}{(n_1\wedge n_2)^2p^2}}\sqrt{\sigma_1(\boldsymbol{M})}^3\\
						\leqslant & 76C_I^2 \frac{\sigma_1^2(\boldsymbol{M})}{\sigma_r(\boldsymbol{M})}\eta\rho^t \sqrt{\frac{\mu^2 r^2 \kappa^{12}\log^2 (n_1\vee n_2)}{(n_1\wedge n_2)^2p^2}},
				\end{split}
			 \]
			 where the second inequality uses the fact that $\sigma_r(\boldsymbol{P})\geqslant 1.5\sigma_r(\boldsymbol{M})$ and the third one uses \eqref{eq_024}.
	
			\end{proof}

\end{document}